\theoremstyle{plain}
\newtheorem{thm}{\protect\theoremname}[section]
  \theoremstyle{plain}
  \newtheorem{prop}[thm]{\protect\propositionname}
  \theoremstyle{plain}
  \theoremstyle{remark}
  \newtheorem{rem}[thm]{\protect\remarkname}
  \theoremstyle{plain}
  \newtheorem{conjecture}[thm]{\protect\conjecturename}
  \theoremstyle{plain}
  \newtheorem{lem}[thm]{\protect\lemmaname}
  \theoremstyle{plain}
\newcommand{\saferef}[2]{%
#1~\texorpdfstring{\ref{#2}}{\ref*{#2}}}
\newtheorem{assumption}[thm]{Assumption}
\crefname{assumption}{Assumption}{Assumption}
\newtheorem{example}[thm]{Example}
\crefname{example}{Example}{Example}
\newtheorem{definition}[thm]{Definition}
\crefname{definition}{Definition}{Definition}
\numberwithin{equation}{section}
\renewcommand{\underline}[1]{%
  \uline{\phantom{#1}}%
  \llap{\contour{white}{#1}}%
}
\definecolor{OliveGreen}{rgb}{0,0.6,0}
\definecolor{MihaiBlue}{rgb}{0,0,0.6}
  \providecommand{\conjecturename}{Conjecture}
  \providecommand{\corollaryname}{Corollary}
  \providecommand{\lemmaname}{Lemma}
  \providecommand{\propositionname}{Proposition}
  \providecommand{\remarkname}{Remark}
\providecommand{\theoremname}{Theorem}
\global\long\def\cov{\mathbf{Cov}}
\global\long\def\var{\mathbf{Var}}
\global\long\def\norm#1{\left| #1\right| }
\global\long\def\abs#1{\left|#1\right|}
\global\long\def\floor#1{ {\left\lfloor #1\right\rfloor} }
\global\long\def\bN{\mathbb{N}}
\global\long\def\bR{\mathbb{R}}
\global\long\def\cN{\mathcal{N}}
\global\long\def\rh{\rho}
\global\long\def\ta{\tau}
\global\long\def\vp{\varphi}
\global\long\def\dequal{\stackrel{d}{=}}
\global\long\def\defequal{\coloneqq}
\global\long\def\nin{{n_\text{in}}}
\global\long\def\nout{{n_\text{out}}}
\global\long\def\zout{z_{\text{out}}}
\global\long\def\Wout{W_{\text{out}}}
\title{The Neural Covariance SDE: \\ 
Shaped Infinite Depth-and-Width Networks at Initialization}
\author{
  Mufan (Bill) Li\thanks{
    University of Toronto and Vector Institute, \texttt{mufan.li@mail.utoronto.ca}
  }
  \and 
  Mihai Nica\thanks{
    University of Guelph and Vector Institute, \texttt{nicam@uoguelph.ca}
  }
  \and 
  Daniel M. Roy\thanks{
  University of Toronto and Vector Institute, \texttt{daniel.roy@utoronto.ca}
  }
}
\begin{document}

\maketitle

\begin{abstract}

The logit outputs of a feedforward neural network at initialization are conditionally Gaussian, given a random covariance matrix defined by the penultimate layer. 
In this work, we study the distribution of this random matrix. 
Recent work has shown that shaping the activation function as network depth grows large is necessary for this covariance matrix to be non-degenerate. 
However, the current infinite-width-style understanding of this shaping method is unsatisfactory for large depth: 
infinite-width analyses ignore the microscopic fluctuations from layer to layer, but these fluctuations accumulate over many layers. 

To overcome this shortcoming, we study the random covariance matrix in the shaped infinite-depth-and-width limit. 
We identify the precise scaling of the activation function necessary to arrive at a non-trivial limit, and show that the random covariance matrix is governed by a stochastic differential equation (SDE) that we call the Neural Covariance SDE. 
Using simulations, we show that the SDE closely matches the distribution of the random covariance matrix of finite networks.  
Additionally, we recover an if-and-only-if condition for exploding and vanishing norms of large shaped networks based on the activation function. 
\end{abstract}

\section{Introduction}

Of the many milestones in deep learning theory,
the precise characterization of the infinite-width limit of neural networks at initialization as a Gaussian process with a non-random covariance matrix 
\citep{neal1995bayesian,lee2018deep}
was a turning point.
The so-called Neural Network Gaussian process (NNGP) theory laid the mathematical foundation to study various limiting training dynamics under gradient descent \citep{jacot2018neural,du2019gradient,allen2019convergence,zou2020gradient,chizat2019lazy,lee2019wide,yang2019scaling,yang2020tensor,arora2019exact,chen2021how}. 
The Neural Tangent Kernel (NTK) limit formed the foundation for a rush of theoretical work, including advances in our understanding of generalization for wide networks  \cite{ji2019polylogarithmic,ba2019generalization,bartlett2021deep}. 
Besides the NTK limit, the infinite-width mean-field limit was developed \cite{RVE18,cb18,SS18, MeiE7665}, where the different parameterization demonstrates benefits for feature learning and hyperparameter tuning \cite{featurelearning,yang2022tensor,ba2022high}.

\begin{figure}[t!]
\centering
\begin{tikzpicture}[scale=1, x=0.5\textwidth,y=0.5\textwidth, %
  nplot/.style={anchor=north west}, %
  ncap/.style={anchor=south west}, %
  caption/.style={align=left,font=\normalsize,scale=0.75}, %
  ]

\node[ncap] (figacap) at (0.2,0.8) [caption] {
(Left Column) \textbf{Unshaped ReLU DNNs}, see (\ref*{eq:defn_fcnet})
 };
\node[nplot] (figa) at (0,0.8)  {{\includegraphics[width=0.5\textwidth]{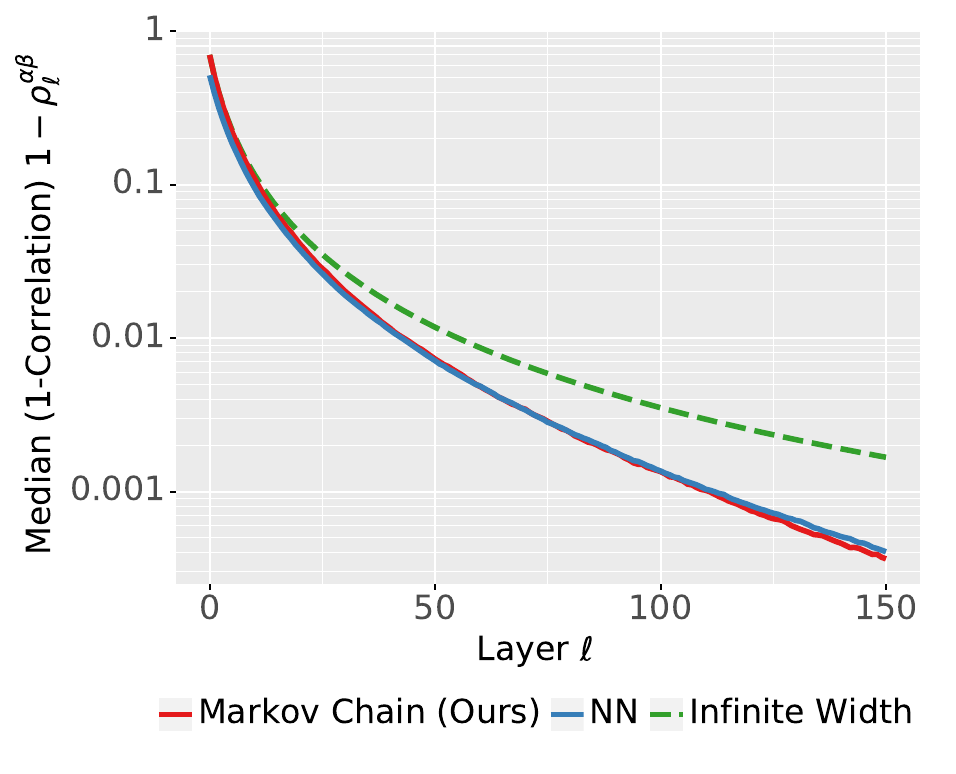}}};  %

\node[ncap] (figbcap) at (1.15,0.8) [caption] {
(Right Column) \textbf{Shaped ReLU DNNs}, see Definition \ref*{def:relu-shaping}
 };
\node[nplot] (figb) at (1,0.8) {{\includegraphics[width=0.5\textwidth]{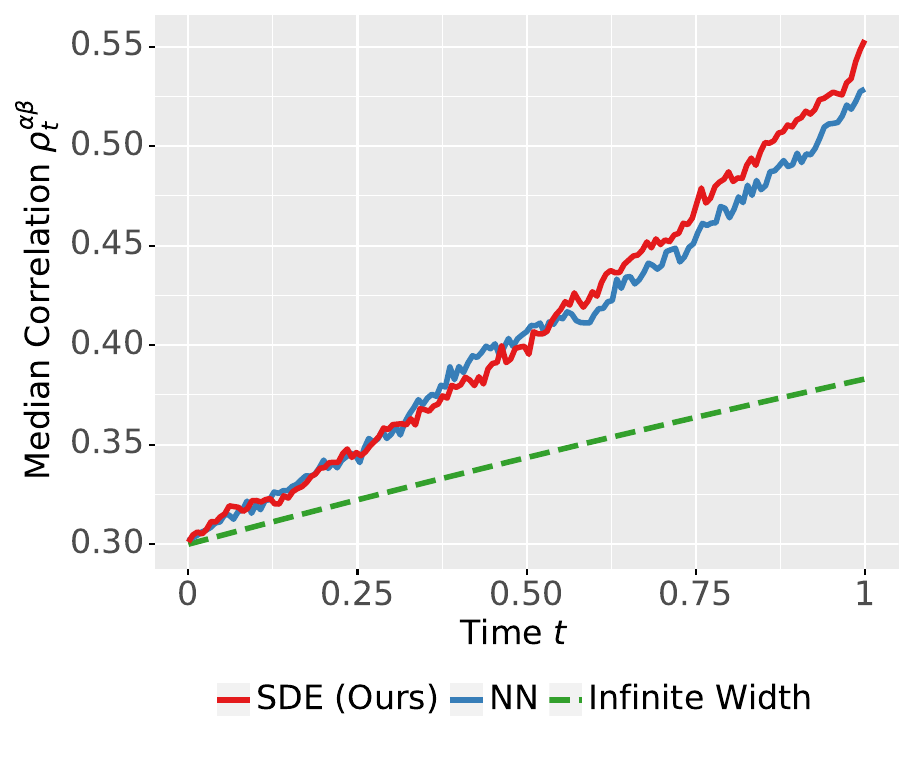}}};

\node[nplot] (figc) at (0,0) {{\includegraphics[width=0.5\textwidth]{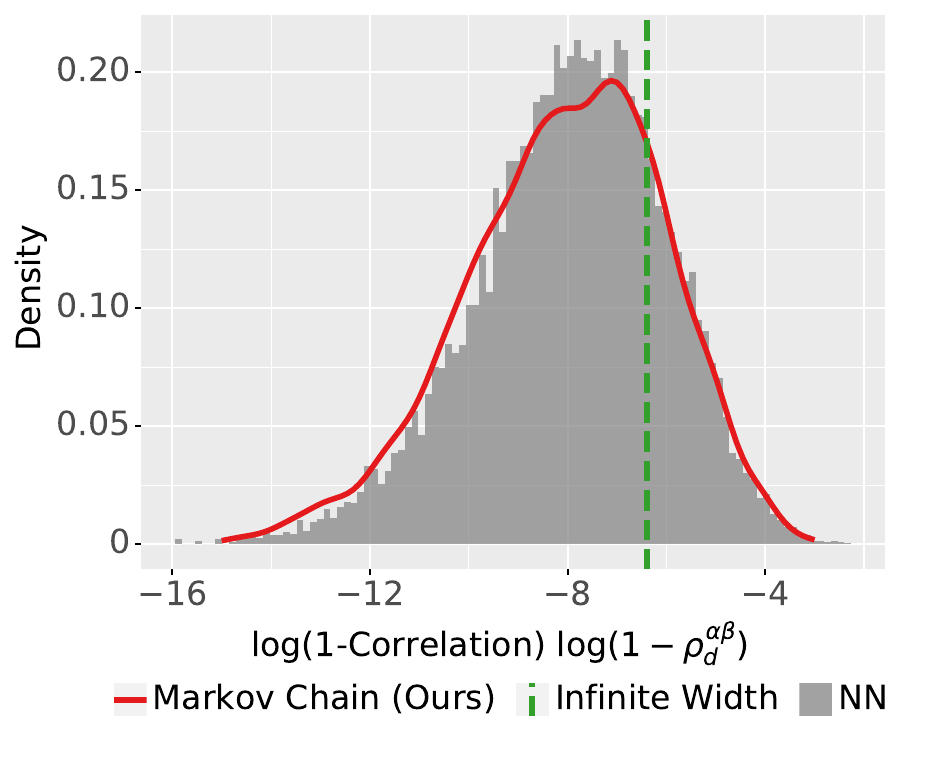}}};

\node[nplot] (figd) at (1.075,0) {{\includegraphics[width=0.455\textwidth]{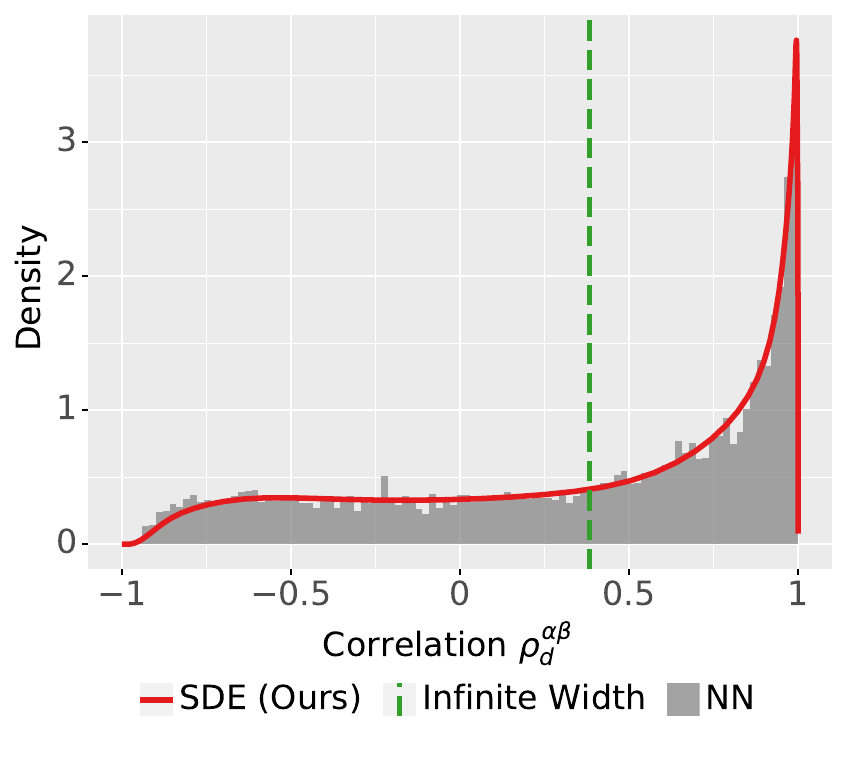}}};

\end{tikzpicture}

\caption{
Simulations of correlation $\rho_\ell^{\alpha \beta} = \frac{ \langle{ \vp^\alpha_\ell, \vp^\beta_\ell \rangle} }{ |\vp^\alpha_\ell||\vp^\beta_\ell| }$ between post-activation vectors in ReLU networks, comparing \emph{finite NNs} vs. our theoretical predictions vs. infinite-width paradigm. \textbf{Left Column:} $\rho^{\alpha \beta}_\ell$ vs. our Markov chain \eqref{eq:corr_update} vs. infinite-width update $\rho_{\ell+1}=c K_1(\rho_\ell)$ (see \eqref{eq:corr_update} and note the log scale and $1-\rho$ here). \textbf{Right Column:} $\rho^{\alpha \beta}_\floor{ t n}$ vs. our Neural Covariance SDE vs. ODE $d\rho_t = \nu(\rho_t)\,dt$ (see \cref{thm:relu_corr}). \textbf{Top Row:} \emph{Median} $\rho$ as a function of layer. \textbf{Bottom Row:} \emph{Full distribution} at final layer $\ell = d$.  \textbf{Simulation details:} $n=d=150,\rho_0=0.3$,  $2^{13}$ samples for each. In right column: $c_+=0,c_-=-1$, DE step size $1\mathrm{e}{-2}$. Densities from kernel density estimation. 
}
\label{fig:density_path_shape}
\end{figure}

Fundamentally, the infinite-width paradigm derives results from the assumption that the depth of the network is held \emph{fixed} while the widths of all layers grow to infinity. 
Unfortunately, this assumption can be problematic for modeling real-world networks, as the microscopic fluctuations from layer to layer are neglected in this limit (see \cref{fig:density_path_shape}). 
In particular, infinite-width predictions are shown to be poor approximations of real networks unless the depth is much less than the width \cite{hanin2019finite,seleznova2020analyzing}.

Impressive achievements of deep networks with billions of parameters 
crystallize %
the importance of understanding extremely large, deep neural networks (DNNs). 
An alternative to the infinite-width paradigm is the infinite-depth-and-width paradigm. 
In this setting, both the network depth $d$ and the width $n$ of each layer are simultaneously scaled to infinity, while their relative ratio $d/n$ remains fixed 
\cite{hanin2019finite,hanin2019products,hu2021random,li2021future,zavatone2021exact,noci2021precise}.
Recent work also explores using $d/n$ as an effective perturbation parameter \cite{yaida2020non,roberts2022principles,zavatone2021asymptotics,hanin2022correlation} or to study concentration bounds in terms of $d/n$ \cite{allen2019convergence,buchanan2021deep}. 
This limit has the distinct advantage of being incredibly accurate at predicting the output distribution for finite size networks at initialization \cite{li2021future} --- a significant improvement over the NNGP theory. 
Furthermore, it has also been shown that there is feature learning in this limit \cite{hanin2019finite}, in contrast to the linear regime of infinite-width limits \cite{lee2019wide}.
Considering the mathematical success of the NNGP techniques, the infinite-depth-and-width limit hints at the possibility of developing an accurate theory for training and generalization. 

An immediate issue of the infinite-depth limit is that this limit predicts that network output becomes degenerate as depth increases: on initialization the network becomes a constant function sending all inputs to the same (random) output \cite{yang2017mean,hayou2021stable,hanin2022correlation}.  
While degenerate outputs are not necessarily an issue in theory, it poses a more serious problem in practice: 
degenerate correlations imply a ``sharp'' input--output Jacobian, and therefore exploding gradients \cite{hanin2018start,hanin2019products}. 
Intuitively, the output is not very sensitive to changes in the input, hence the gradient must be very large in the earlier layers. 

A promising new attack on this problem is to
modify the activation function (``shaping'') to reduce to the effect of degeneracy \cite{martens2021rapid,zhang2022deep}.
In this prior work, extensive experiments show that shaping the activation significantly improves training speed \emph{without the need for normalization layers}.
This method has been proven effective for problems as large as standard ResNets on ImageNet data.
The authors designed several criteria including reducing estimated output correlation, and numerically optimized the shape of activation functions for improved training results. 
However, their deterministic estimation of output correlation using the infinite-width limit leads to a poor approximation of real networks, as the additional randomness has both non-zero mean and heavy skew (see \cref{fig:density_path_shape} right column). 
Furthermore, numerically searching for the activation shape obscures the picture on how shaping should depend on the network depth and width.

In this paper, we address these problems by providing a precise theory of shaped infinite-depth-and-width networks, extending  both the NNGP theories and the activation shaping techniques. 
In particular, we prescribe an exact scaling of the activation function shape as a function of network width $n$ that leads to a non-trivial nonlinear limit. 
By keeping track of microscopic $O(n^{-1/2})$ random fluctuations in each layer of the network, we show that the cumulative effect is described by a stochastic differential equation (SDE) in the limit. 
In contrast to existing infinite-width theory, we are able to characterize the random distribution of the output covariance, which matches closely to simulations of real networks. 
In a similar spirit to how the NNGP theory laid the foundation for studying training and generalization in the infinite-width limit, 
we also see this work as building the mathematical tools for an infinite-depth-and-width theory of training and generalization.

\subsection{Contributions}
Similar to the NNGP approach, we use the fact that the output is Gaussian conditional on the penultimate layer. 
However, unlike in the infinite-width paradigm, the covariance matrix is no longer deterministic in the infinite-depth-and-width limit. 
Our focus in this paper is to study this random covariance matrix. 
Our main contributions are as follows:
\begin{enumerate}[leftmargin=0.125in]
    \itemsep0em 
    \item We introduce the tool of stochastic $\sqrt{n}$-expansions and convergence to SDEs for analyzing the distribution of covariances in DNNs.
    \item For \emph{unshaped} ReLU-like activations, we show that the norm of each layer evolves according to geometric Brownian motion and correlations evolve according to a discrete Markov process. See left column of \cref{fig:density_path_shape} and \cref{sec:unshaped-limits}. 
    \item For both ReLU-like and a large class of smooth activation functions, we derive the Neural Covariance SDE characterizing the distribution of the shaped infinite-depth-and-width limit. See right column of \cref{fig:density_path_shape} and \cref{sec:cov-SDEs-section}. 
    \item We show our prescribed shape scaling is exact, as other rates of scaling leads to either degenerate or linear network limits. See \cref{prop:relu_critical_exponent} and \cref{prop:smooth_critical_exponent}.
    \item For smooth activations, we derive an if-and-only-if condition for exploding/vanishing norms based on properties of the activation function. See \cref{prop:finite_time_explosion} and \cref{sec:discussion}.
    \item We provide simulations to verify theoretical predictions and help interpret properties of real DNNs. See \cref{fig:density_path_shape,fig:blow_up_stable_paths} and supplemental simulations in \cref{sec:app_additional_simulations}.
\end{enumerate}

\section{Limits for Unshaped ReLU-Like Activations}
\label{sec:unshaped-limits}

\begin{table}[t]\label{tab:notations}
\begin{tikzpicture}%
\node[draw,anchor=north east,inner sep=0,outer sep=0,inner sep=3pt] (tab) at (0,0) {
\small\centering
\begin{tabular}{>{\raggedright}p{0.12\textwidth}>{\raggedright}p{0.245\textwidth}>{\raggedright}p{0.145\textwidth}>{\raggedright}p{0.38\textwidth}}
\textbf{Notation} & \textbf{Description} & \textbf{Notation} & \textbf{Description}\tabularnewline[2pt]
$\nin \in \bN$ & Input dimension & $\nout \in \bN$ & Output dimension \tabularnewline
$n \in \bN $ & Hidden layer width & $d \in \bN$ & Number of hidden layers (depth) \tabularnewline
$\vp(\cdot)$ & Base activation 
&
$\vp_{s}(\cdot)$ & Shaped activation
\tabularnewline
$x^\alpha \in \bR^\nin$ & Input for $1\leq \alpha \leq m$  & $W_0 \in \bR^{\nin \times n}$ & Weight matrix at layer 0
\tabularnewline
$\zout^\alpha \in \bR^\nout$ & Network output & \rlap{\smash{$\Wout \in \bR^{n \times \nout}$}} & Weight matrix at final layer 
\tabularnewline[-1.2em]
$z_\ell^\alpha \in \bR^n$ & Neurons (pre-activation) \\ \ \ for layer $1\leq \ell \leq d$
  &  $W_\ell \in \bR^{n \times n}$ & Weight matrix at layer $1 \leq \ell \leq d$\\ \ \ \textbf{All weights initialized iid}
$\sim \cN(0,1)$
\tabularnewline
$\vp_\ell^\alpha \in \bR^n$ & Neurons (post-activation) 
\\ \ \ for layer $1\leq \ell \leq d$
& $c \in \bR$ & Normalizing constant 
\\ \ \ $c \defequal \left( \mathbb{E} \, \varphi(g)^2 \right)^{-1}$ for $g\sim \mathcal{N}(0,1)$
\end{tabular}
};
\node[draw,align=right,anchor=north east,fill=white,outer sep=0] (tabcap) at (0,0) {\scalebox{0.85}{Table 1: Notation}};
\end{tikzpicture}
\end{table}

Using the \textbf{notation in Table~1}, the output of a fully connected feedforward network with $d$ hidden layers of width $n$ on input $x^\alpha$ is defined by vectors of \textbf{pre-activations} $z^\alpha_\ell$ and \textbf{post-activations} $\vp^\alpha_\ell$:
\begin{equation}\label{eq:defn_fcnet}
z_{1}^\alpha \defequal \frac{1}{\sqrt{\nin}}W_0 x^\alpha, \hspace{0.8em}
\vp^\alpha_\ell \defequal \vp(z^\alpha_\ell), \hspace{0.8em}
z_{\ell+1}^\alpha \defequal 
\sqrt{\frac{c}{n}} W_\ell \vp^\alpha_\ell, \hspace{0.8em} %
\zout^\alpha \defequal \sqrt{\frac{c}{n}} \Wout \vp^\alpha_d \,. 
\end{equation}
Note that factors of $\sqrt{c n^{-1}}$ are equivalent to intializing according to the so-called He initialization \cite{he2015delving}. 
We use Greek indices $\alpha,\beta,\ldots$ to denote multiple different inputs. Note that while our results are all stated for fixed width $n$ in each layer, they can be generalized to layer width $n_\ell$ in the limit where all $n_\ell \to \infty$ with $\sum_{\ell = 1}^d {n_\ell}^{-1}$ replacing the role of the depth-to-width ratio $d/n$ \cite{hanin2019products}.

In this section, we analyze \textbf{ReLU-like} activations by which we mean activations which are linear on the negative and positive numbers given respectively by two slopes $s_+$ and $s_-$:
\begin{equation}
\label{eq:ReLU-like}
    \varphi(x) \defequal s_+ \max(x,0) + s_- \min(x,0) = s_+ \varphi_\text{ReLU}(x) - s_- \varphi_\text{ReLU}(-x) \,. 
\end{equation}
These are precisely the  \textbf{positive homogeneous} functions: $\vp(a x)=\abs{a}\vp(x) \, \forall x,a \in \mathbb{R}$.

\subsection{SDE Limits of Markov Chains}

We briefly review the main type of SDE convergence principle used in our main results (see \cref{prop:conv_markov_chain_to_sde} for a more precise version). Let $X_t$, $t\in\mathbb{R^+}$, be a continuous time diffusion process obeying an SDE with drift $b$ and variance $\sigma^2$ as given in \eqref{eq:SDE_convergence}. Suppose that for each $n \in \mathbb{N}$, $Y^n_\ell$ is a discrete time Markov chain $\ell \in \mathbb{N}$ whose increments obey \eqref{eq:SDE_convergence} in terms of the same functions $b,\sigma^2$:
\begin{equation} 
\label{eq:SDE_convergence}
    d X_t = b(X_t) \, dt + \sigma(X_t) \, dB_t \,,
    \hspace{2em} 
    Y^n_{\ell+1} - Y^n_\ell= b(Y^n_\ell)\frac{1}{n} + \sigma(Y^n_\ell) \frac{\xi_\ell}{\sqrt{n}} 
    + O(n^{-3/2}),  
\end{equation}
where $\xi_\ell$ are independent variables with $\mathbb{E}(\xi_\ell)=0, \mathbf{Var}(\xi_\ell)=1$. 
With this setup, under technical conditions described precisely in  \cref{sec:app_markov_chain_convergence_sde}, we have convergence of $Y_\ell$ at $\ell = \floor{tn}$ to $X_t$, or more precisely: with $X^n_t \defequal Y^n_{\floor{tn}}$ we have $X^n \to X$ as $n \to \infty$ in the Skorohod topology. 
In our applications, $n$ is always the width (i.e., number of neurons in each layer) which may appear implicitly and $\ell$ is always the layer number.

\subsection{A Simple SDE: Geometric Brownian Motion Describes \texorpdfstring{$\norm{\vp^\alpha_\ell}^2$}{the Squared Norm of the $\ell$-th Layer}}
\label{sec:geometric_Brownian_motion}

To motivate our approach of SDE limits, we illustrate the method using the example of the squared norm of the $\ell$-th layer, $|\varphi^\alpha_\ell|^2$, where we recall $\varphi^\alpha_\ell = \varphi(z^\alpha_\ell)$. 
For a single fixed input $x^\alpha$ and a ReLU-like activation $\varphi$, the norm of the post-activation neurons $\norm{\vp^\alpha_\ell}^2$ forms a Markov chain in the layer number $\ell$. 
We use the fact that a matrix with iid Gaussian entries applied to any unit vector gives a Gaussian vector of iid $\mathcal{N}(0,1)$ entries. Hence, in each layer, we can define the Gaussian vector $g^\alpha$ as follows, and use \eqref{eq:defn_fcnet} with the positive homogeneity of $\vp$ to write the Markov chain update rule:
\begin{equation}
\label{eq:z_is_cond_Gaussian}
\norm{\vp^\alpha_{\ell+1}}^2 
= \norm{\vp^\alpha_{\ell}}^2  \frac{1}{n} \sum_{i=1}^{n} c \vp( g^\alpha_{i} )^2 
 , \,\,\,
\text{ where }
g^\alpha \defequal W_\ell  \frac{\varphi^\alpha_\ell}{ |\varphi^\alpha_\ell| } 
\overset{d}{=} \mathcal{N}( 0, I_n ) \,. 
\end{equation} 
At this point, the infinite-width approach applies the law of large numbers (LLN) to conclude $\displaystyle\lim_{n \to \infty} \norm{\vp^\alpha_{\ell+1}}^2 = \norm{\vp^\alpha_\ell}^2 \mathbb{E}[c \vp^2(g) ] = \norm{\vp^\alpha_\ell}^2 \cdot 1$ a.s. by definition of $c$. 
However, the LLN cannot be applied when depth $d$ is diverging with $n$, as the cumulative effect of the fluctuations over $d$ layers does not vanish! 
Instead, we keep track of the $O(1/\sqrt{n})$ fluctuations in each layer by introducing the zero mean finite variance random variable $R^{\alpha\alpha}_\ell \defequal \frac{1}{\sqrt{n}} \sum_{i=1}^n \left(c \varphi(g^\alpha_{i})^2 - 1\right)$. 
This allows us to rewrite this Markov chain update rule as 
\begin{equation}
\label{eq:norm_z_CLT}
\norm{ \vp^\alpha_{\ell+1} }^2 
= 
\norm{ \vp^\alpha_{\ell}}^2 \left( 1 + \frac{ 1 }{\sqrt{n}} R^{\alpha\alpha}_\ell \right) \,, 
\end{equation} 
\begin{figure}[t]
\centering
\includegraphics[width=0.95\textwidth]{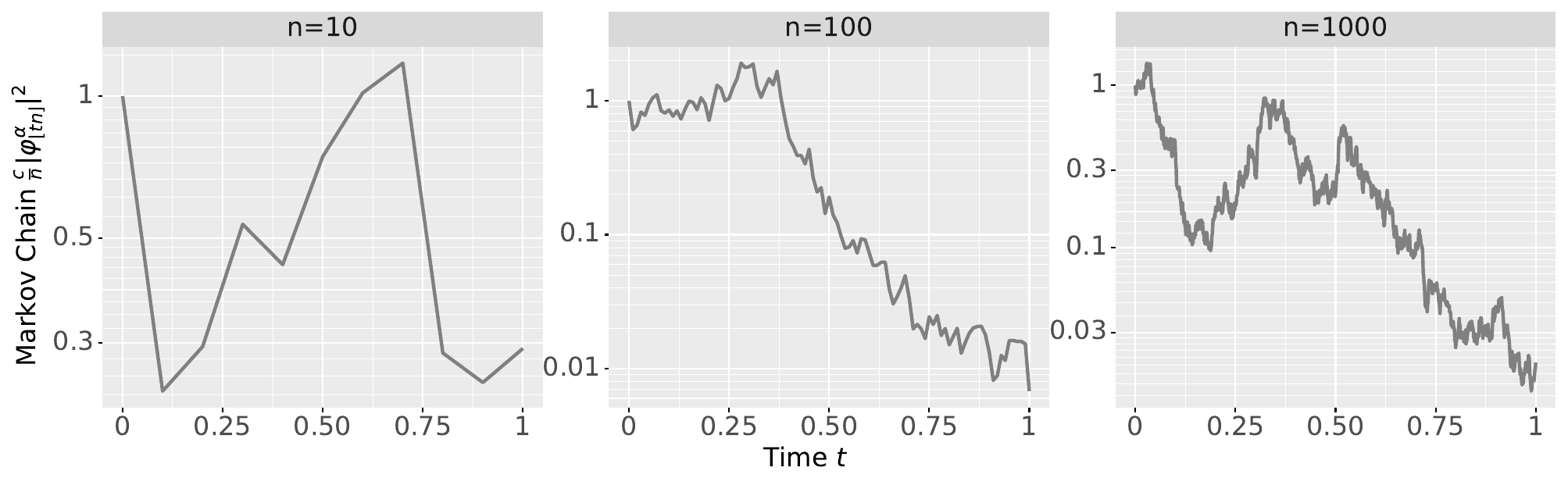}
\caption{
A sample path of the geometric random walk from \cref{eq:norm_z_CLT} converging to geometric Brownian motion as $n$ increases. 
}
\label{fig:rw}
\end{figure}
which allows us to see that the Markov chain $Y^n_\ell = \frac{c}{n} | \vp^\alpha_\ell |^2$ is now in the form of \eqref{eq:SDE_convergence} with $Y^n_0 = \frac{1}{\nin} |x^\alpha|^2$, 
$b(Y) \equiv 0, \sigma^2(Y) = \mathbf{Var}(R^{\alpha\alpha}_\ell) Y^2 = \mathbf{Var}( c \varphi(g)^2 ) Y^2$. 
Consequently, we have that the squared norm Markov chain converges to a geometric Brownian motion $d X_t = \sigma X_t dB_t$, or more precisely 
\begin{equation}
\label{eq:geometric_brownian_motion}
    \lim_{n \to \infty} 
    \frac{c}{n} \norm{ \vp^\alpha_{\floor{t n}} }^2 
    = X_t
    \dequal 
    e^{
    \mathcal{N}(-\frac{\sigma^2}{2}t, \sigma^2 t) } \,,
\end{equation}
where the convergence is in the Skorohod topology (see  \cref{sec:app_markov_chain_convergence_sde}).
When $\vp$ is the ReLU function ($s_+=1,s_-=0$), we have $c=2$ and $\sigma^2=5$, which recovers known results in \cite{hanin2019products,li2021future,zavatone2021exact,noci2021precise}. 
We remark again this simple Markov chain example illustrates the main technique we use in later sections to establish SDE convergence for shaped networks in \cref{sec:cov-SDEs-section}.

\subsection{Non-SDE Markov Chains: the Gram \texorpdfstring{Matrix $\left\langle \vp^\alpha_{\ell}, \vp^\beta_{\ell} \right\rangle$}{Matrix} and \texorpdfstring{Correlation $\rho^{\alpha \beta}_\ell 
$}{Correlation}}

We can generalize \cref{sec:geometric_Brownian_motion} to a collection of $m$ inputs $\{x^\alpha\}_{\alpha=1}^m$ by looking at the entire Gram matrix $[\langle \varphi^\alpha_\ell, \varphi^\beta_\ell \rangle]_{\alpha,\beta=1}^m$, where we again recall $\varphi^\alpha_\ell = \varphi(z^\alpha_\ell)$. 
We note that the convergence of Markov chains to SDEs in \cref{eq:SDE_convergence} can be generalized to $Y^n_\ell \in \mathbb{R}^N$ by considering $\cov (\xi_\ell) = I_N$, $b:\mathbb{R}^N\to\mathbb{R}^N$, and $\sigma: \mathbb{R}^N \to \mathbb{R}^{N\times N}$. 
The Gram matrix is of particular interest because the neurons in any layer are conditionally Gaussian \textbf{when conditioned on the previous layer}, with covariance matrix proportional to the Gram matrix:
\begin{equation}
\label{eq:conditional_gaussian}
\begin{aligned}
    \left. 
    [z^\alpha_{\ell+1}]_{\alpha=1}^m 
    \right| 
    \mathcal{F}_\ell
    &\overset{d}{=} 
    \cN \left( 0, 
    \frac{c}{n} 
    [\langle \varphi^\alpha_\ell, \varphi^\beta_\ell \rangle]_{\alpha,\beta=1}^m
    \otimes I_n  
    \right) \,,  
    \\ 
    \left. 
    [\zout^\alpha]_{\alpha=1}^m 
    \right| 
    \mathcal{F}_d
    &\overset{d}{=} 
    \cN \left( 0, 
    \frac{c}{n} 
    [\langle \varphi^\alpha_d, \varphi^\beta_d \rangle]_{\alpha,\beta=1}^m
    \otimes I_{\nout} 
    \right) \,, 
\end{aligned}
\end{equation}
where $\mathcal{F}_\ell$ denotes the sigma-algebra generated by the $\ell$-th layer $[z^\alpha_{\ell}]_{\alpha=1}^m$, 
and $\otimes$ denotes the Kronecker product (here indicating conditionally independent entries in each vector). %
With this property in mind, we will introduce $\mathbb{E}_\ell [\,\cdot\,] \defequal \mathbb{E}[\,\cdot\,|\mathcal{F}_\ell]$ to denote the conditional expectation, and $\var_\ell(\,\cdot\,) \,, \cov_\ell(\,\cdot\,)$ similarly to denote the conditional variance and covariance. 
If we define $g^\alpha$ as in \eqref{eq:z_is_cond_Gaussian}, we see that the $g^\alpha$ are all marginally $\cN(0,I_n)$. 
Similar to \eqref{eq:z_is_cond_Gaussian}, we can write the update rule for the $\alpha,\beta$-entry of the Gram matrix:
\begin{equation}
\langle \vp^\alpha_{\ell+1}, \vp^\beta_{\ell+1} \rangle = |\vp_\ell^\alpha|  |\vp_\ell^\beta| \frac{1}{n}\sum_{i=1}^{n} c \vp(g^\alpha_i) \vp(g^\beta_i) \,, 
\end{equation}
Just as we did in \eqref{eq:norm_z_CLT}, we can define $R^{\alpha\beta}_\ell \defequal \frac{1}{\sqrt{n}} \sum_{i=1}^n c \varphi(g^\alpha_{i})\varphi(g^\beta_{i}) - \mathbb{E}_\ell[c \varphi(g^\alpha_{i})\varphi(g^\beta_{i})]$ and write  
\begin{equation}
\label{eq:Gram_matrix_update}
\langle \vp^\alpha_{\ell+1}, \vp^\beta_{\ell+1} \rangle  = |\vp_\ell^\alpha|  |\vp_\ell^\beta| \left( \mathbb{E}_\ell\left[ c \vp(g^\alpha_i) \vp(g^\beta_i)\right] + \frac{1}{\sqrt{n}} R^{\alpha \beta}_\ell \right) \,, 
\end{equation}
where $R_\ell^{\alpha \beta}$ are mean zero with covariance $\cov_\ell[ R_\ell^{\alpha \beta}, R_\ell^{\gamma \delta} ] = \cov_\ell[c\vp(g^\alpha)\vp(g^\beta),c\vp(g^\gamma)\vp(g^\delta) ]$. 
(By the Central Limit Theorem, $R^{\alpha \beta}_\ell$ will be approximately Gaussian for large $n$.) 

However, unlike the simple single-data-point case from Section $\ref{sec:geometric_Brownian_motion}$, \textbf{we do not have convergence to a continuous time SDE.} 
This is because the differences $\langle \vp^\alpha_{\ell+1}, \vp^\beta_{\ell+1} \rangle - \langle \vp^\alpha_{\ell}, \vp^\beta_{\ell} \rangle \nrightarrow 0$ as $n \to \infty$. 
Instead, \eqref{eq:Gram_matrix_update} is a discrete recursion update with additive noise of the form $Y^n_{\ell+1} = f(Y^n_\ell) + \frac{1}{\sqrt{n}}\xi$ for some function $f$, 
and consequently $Y^n_{\ell+1}-Y^n_\ell$ does not vanish as $n\to\infty$.

For a clarifying example, we can consider the one-dimensional Markov chain of hidden layer correlations. 
More precisely, we can define 
$\rho^{\alpha \beta}_\ell = \langle \vp^\alpha_{\ell}, \vp^\beta_{\ell} \rangle / |\vp^\alpha_\ell| |\vp^\beta_\ell|$, 
which we observe can be extracted from the entries of the Gram matrix. 
In fact, we can write down an approximate recursion update for $\rh^{\alpha\beta}_\ell$ (see \cref{sec:app_unshaped_relu_markov_chain} and  \cref{prop:unshaped_relu_corr} for details):
\begin{equation}
\label{eq:corr_update}
	\rho^{\alpha\beta}_{\ell+1} 
	\approx c K_1(\rho^{\alpha\beta}_\ell) + \frac{1}{n} \mu_{\text{ReLU}}(\rho^{\alpha\beta}_\ell) 
		+ \frac{\xi_\ell}{\sqrt{n}} \sigma_{\text{ReLU}}(\rho^{\alpha\beta}_\ell) 
		\,, \quad 
    \rho^{\alpha\beta}_0 = \frac{\langle x^\alpha, x^\beta \rangle}{\nin} \,, 
\end{equation}
where $K_1(\rho) \defequal \mathbb{E} \, [ \varphi(g) \varphi(g \rho + w \sqrt{1-\rho^2})]$ for $g,w$ iid $\cN(0,1)$ random variables, 
and $\xi_\ell$ are iid $N(0,1)$. 
For the ReLU case, $c=2$ and $cK_1(\rho)= (\sqrt{1-\rho^2} + \rho \arccos(-\rho) ) / \pi $ was first calculated in \cite{cho2009kernel}. 
In fact, we can observe that as $n\to\infty$, $\rho^{\alpha\beta}_{\lfloor tn \rfloor}$ converges to the fixed point of $cK_1(\cdot)$ at $\rho=1$ for all $t>0$. 
\textbf{We note this limiting behaviour cannot be described by an SDE}, as the solution must jump from the initial condition to the fixed point at $t=0$. 

Despite not having an SDE limit, we observe that the approximate Markov chain \cref{eq:corr_update} already provides a much better approximation to finite size networks compared to the infinite-width theory (see left column of \cref{fig:density_path_shape}). 
This is because the infinite-width approach discards the terms in \eqref{eq:corr_update} that vanish as $n \to \infty$ and consider only the update $\rho^{\alpha\beta}_{\ell+1} = cK_1(\rho^{\alpha\beta}_{\ell})$. 
Analysis of this deterministic equation leads to the prediction that $\rho^{\alpha\beta}_\ell = 1 - O( \ell^{-2} )$ for $\ell \gg 1$ (see (4.8) in \cite{hanin2022correlation} and a new bound in Appendix~\ref{sec:app_independent_interest}). 

Furthermore, we observe that in this case, the microscopic $O(n^{-1})$ and $O(n^{-1/2})$ terms in \eqref{eq:corr_update} accumulate to macroscopic differences! 
For the examples in \cref{fig:density_path_shape}, we see their net effect is that $\rho^{\alpha\beta}_\ell \to 1$ {faster} than the infinite-width prediction. 
Heuristically, the reason for this discrepancy is due to $\sigma_\text{ReLU}(\rho) \to 0$ as $\rho \to 1$. 
This means that the randomness can push $\rho^{\alpha\beta}_\ell$ closer to $1$, but becomes ``trapped'' when $\rho^{\alpha\beta}_\ell$ is close to 1 because $\sigma_\text{ReLU}$ is so small here. 
In the next section, we will see that we are just one step away from achieving limiting SDEs.

\section{Neural Covariance SDEs: Shaped Infinite-Depth-and-Width Limit}
\label{sec:cov-SDEs-section}

\begin{figure}[t]
\centering
\includegraphics[width=0.95\textwidth]{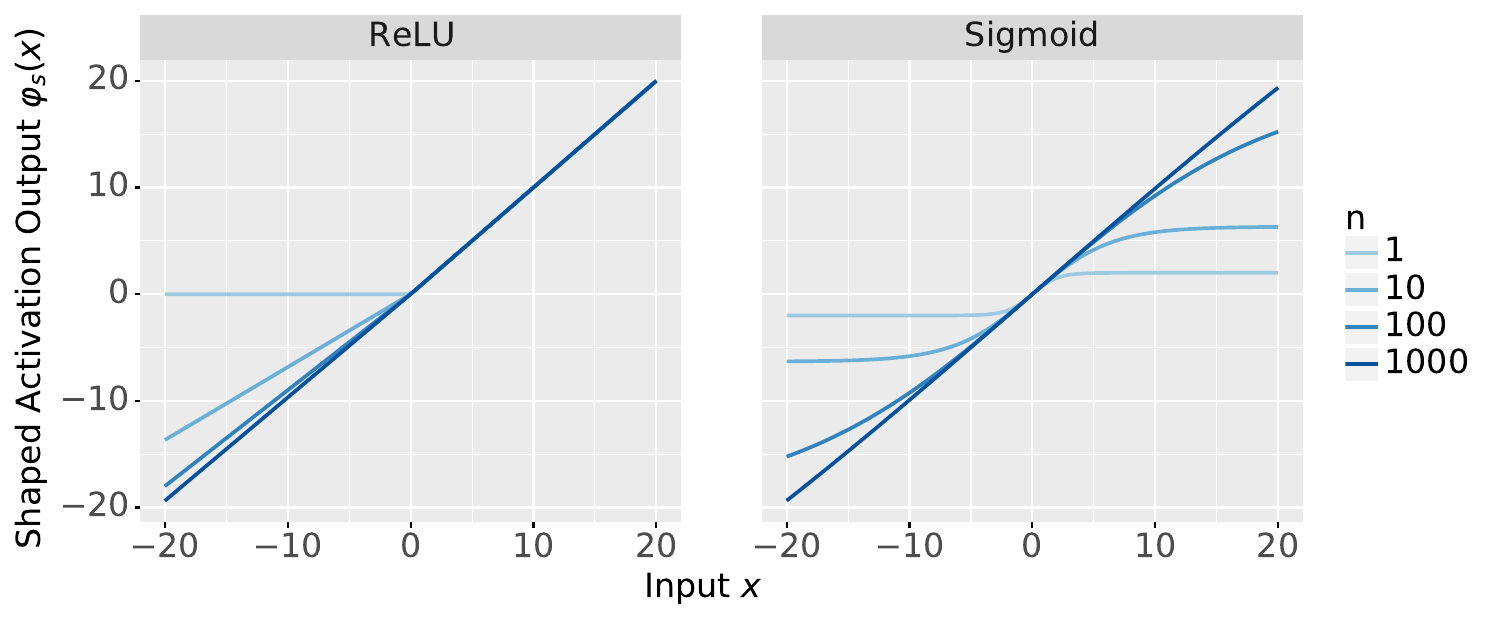}
\caption{
Shaping of activation functions from \cref{def:relu-shaping,def:smooth-shaping} as $n$ increases. 
Here we chose $c_+ = 0, c_- = -1, a = 1$, and sigmoid activation is centered at $x_0 = 0$, i.e. $\varphi(x) = \frac{4}{1 + e^{-x}} - 2$. 
}
\label{fig:shape}
\end{figure}

In this section, we follow the ideas of \cite{martens2021rapid,zhang2022deep} to \emph{reshape} the activation function $\varphi$. Reshaping means to replace the base activation function $\varphi$ in \eqref{eq:defn_fcnet}
with $\varphi_s$ that depends on width $n$. 
We will also replace the normalizing constant $c = \left( \mathbb{E} \, \varphi_s(g)^2 \right)^{-1}$ for $g\sim\cN(0,1)$. 
Specifically, we will choose $\varphi_s$ to depend on $n$ such that in the limit as $n\to\infty$, 
we have that $\varphi$ is approximately an identity function,  $\varphi_s \to \text{Id}$ 
(see \cref{fig:shape}). 
Recalling from \cref{eq:conditional_gaussian} that the output is conditionally Gaussian with covariance determined by the Gram matrix $[\langle \varphi^\alpha_\ell, \varphi^\beta_\ell \rangle]_{\alpha,\beta=1}^m$, %
therefore we recover a complete characterization by describing the random covariance matrix.

\subsection{Neural Covariance SDE for Shaped ReLU-Like Activations}

\begin{definition}
\label{def:relu-shaping}
We shape the ReLU-like activation $\varphi_s(x) \defequal s_+ \max(x,0) + s_- \min(x,0)$, 
by setting the slopes to depend on $n$ according to $s_\pm \defequal 1 + \frac{c_\pm}{\sqrt{n}}$ for some given constants $c_+,c_- \in \mathbb{R}$. 
We will also set $c = \left( \mathbb{E} \, \varphi_s(g)^2 \right)^{-1}$ for $g\sim\cN(0,1)$.
\end{definition}

We will show that with shaping of \cref{def:relu-shaping}, one gets non-trivial SDEs that describe the covariance (\cref{thm:joint_output_m}) and correlations (\cref{thm:relu_corr}) of the network. The precise scaling is shown to be the critical scaling for a non-trivial limit in \cref{prop:relu_critical_exponent}.
All proofs for results in this section appear in \cref{sec:app_relu_proofs}. 

\emph{Remark.} Note that in the statement of our theorems, we abuse notation and use the same letter to denote the pre-limit Markov chain and the limiting SDE. For example, in \cref{thm:joint_output_m} we use $V_\ell$ for the covariance at layer $\ell$ and $V_t$ to denote the limiting SDE at time $t$. %

\begin{thm}
[Covariance SDE, ReLU]
\label{thm:joint_output_m}
Let $V^{\alpha\beta}_\ell \defequal \frac{c}{n} \langle \varphi^\alpha_\ell, \varphi^\beta_\ell \rangle$, 
and define $V_\ell \defequal [ V^{\alpha\beta}_\ell ]_{1\leq \alpha \leq \beta=m}$ to be the upper triangular entries thought of as a vector in $\mathbb{R}^{m(m+1)/2}$.
Then, with $s_\pm = 1 + \frac{c_\pm}{\sqrt{n}}$ as in \cref{def:relu-shaping}, in the limit as $n\to\infty, \frac{d}{n} \to T$, the interpolated process $V_{\lfloor tn \rfloor}$ converges in distribution 
in the Skorohod topology of $D_{\bR_+, \bR^{m(m+1)/2}}$
to the solution of the SDE 
\begin{equation}
    d V_t = b( V_t ) \, dt 
    + \Sigma(V_t)^{1/2} \, dB_t \,, 
    \quad 
    V_0 = \left[ \frac{1}{\nin} 
        \langle x^\alpha, x^\beta \rangle 
    \right]_{1 \leq \alpha \leq \beta \leq m} \,, 
\end{equation}
where 
$\nu(\rho) \defequal \frac{(c_+ - c_-)^2}{2\pi} \left( \sqrt{1-\rho^2} - \rho \arccos \rho \right), 
\rho^{\alpha\beta}_t \defequal  \frac{V^{\alpha\beta}_t}{ \sqrt{ V^{\alpha\alpha}_t V^{\beta\beta}_t } }$ 
\begin{equation}
    b( V_t ) 
    = \left[ \nu\left( 
    \rho^{\alpha\beta}_t
    \right) 
    \sqrt{V^{\alpha\alpha}_t V^{\beta\beta}_t} \right]_{1\leq \alpha \leq \beta \leq m} 
    \text{,\quad and \quad }
    \Sigma( V_t )
    = \left[ 
        V^{\alpha\gamma}_t V^{\beta\delta}_t 
        + V^{\alpha\delta}_t V^{\beta\gamma}_t 
    \right]_{\alpha\leq\beta, \gamma\leq\delta}
    \,. 
\end{equation}
Furthermore, the output distribution can be described conditional on $V_T$ evaluated at final time $T$
\begin{equation}
    \left[ \zout^\alpha \right]_{\alpha=1}^m | {V_T}
    \overset{d}{=}
    \cN\left( 0 , [ V^{\alpha\beta}_T ]_{\alpha,\beta=1}^m 
    \right) \,. 
\end{equation}
\end{thm}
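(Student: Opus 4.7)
\textbf{Proof plan for Theorem \ref*{thm:joint_output_m}.}

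The plan is to cast the vector Markov chain $V_\ell=[V_\ell^{\alpha\beta}]_{\alpha\le\beta}$ in the form \eqref{eq:SDE_convergence} and then invoke the abstract Markov-chain-to-SDE convergence result cited there (the multidimensional version in \cref{sec:app_markov_chain_convergence_sde}). The first step is to exploit positive homogeneity. Since $\varphi_s$ is positive-homogeneous, writing $z^\alpha_{\ell+1,i}=\sqrt{V^{\alpha\alpha}_\ell}\,\tilde g^\alpha_i$ where, conditionally on $\mathcal F_\ell$, the vectors $(\tilde g^\alpha_i,\tilde g^\beta_i)$ are i.i.d.\ over $i$ and jointly Gaussian with unit variances and correlations equal to the current post-activation correlation $\rho^{\alpha\beta}_\ell=V^{\alpha\beta}_\ell/\sqrt{V^{\alpha\alpha}_\ell V^{\beta\beta}_\ell}$, gives the exact one-step update
\begin{equation*}
V^{\alpha\beta}_{\ell+1}=\sqrt{V^{\alpha\alpha}_\ell V^{\beta\beta}_\ell}\Bigl(cK_{s_+,s_-}(\rho^{\alpha\beta}_\ell)+\tfrac{1}{\sqrt n}R^{\alpha\beta}_\ell\Bigr),
\end{equation*}
with $K_{s_+,s_-}(\rho)\defequal \mathbb E[\varphi_s(g)\varphi_s(g\rho+w\sqrt{1-\rho^2})]$ and $R^{\alpha\beta}_\ell$ a zero-mean, finite-variance (conditional on $\mathcal F_\ell$) sum of i.i.d.\ terms.

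The second, computational step is to expand $cK_{s_+,s_-}(\rho)$ in powers of $1/\sqrt n$. Writing $\varphi_s(x)=ax+b|x|$ with $a=1+(c_++c_-)/(2\sqrt n)$ and $b=(c_+-c_-)/(2\sqrt n)$, a direct calculation using $\mathbb E[g|h|]=0$ for $(g,h)$ bivariate standard normal and the classical formula $\mathbb E[|g||h|]=\tfrac{2}{\pi}(\sqrt{1-\rho^2}+\rho\arcsin\rho)$ yields
\begin{equation*}
cK_{s_+,s_-}(\rho)=\rho+\frac{1}{n}\,\nu(\rho)+O(n^{-3/2}),\qquad \nu(\rho)=\frac{(c_+-c_-)^2}{2\pi}\bigl(\sqrt{1-\rho^2}-\rho\arccos\rho\bigr),
\end{equation*}
where I used $\arcsin\rho=\pi/2-\arccos\rho$ to rearrange. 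Multiplying by $\sqrt{V^{\alpha\alpha}_\ell V^{\beta\beta}_\ell}$ and using $\sqrt{V^{\alpha\alpha}V^{\beta\beta}}\rho^{\alpha\beta}=V^{\alpha\beta}$ gives exactly the drift $b(V_\ell)^{\alpha\beta}=\sqrt{V^{\alpha\alpha}_\ell V^{\beta\beta}_\ell}\,\nu(\rho^{\alpha\beta}_\ell)$ announced in the theorem.

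Third, I would identify the diffusion matrix by computing the conditional covariance of the noise term. Because $\varphi_s\to\mathrm{Id}$ and $c\to 1$ as $n\to\infty$, a dominated-convergence argument reduces $\cov_\ell(R^{\alpha\beta}_\ell,R^{\gamma\delta}_\ell)$ to $\cov(g^\alpha g^\beta,g^\gamma g^\delta)$ for a standard Gaussian vector with correlations $\rho^{\cdot\cdot}_\ell$; Isserlis/Wick then produces $\rho^{\alpha\gamma}\rho^{\beta\delta}+\rho^{\alpha\delta}\rho^{\beta\gamma}$. Multiplying by the factor $\sqrt{V^{\alpha\alpha}V^{\beta\beta}V^{\gamma\gamma}V^{\delta\delta}}$ that comes from the prefactor in the update and using $V^{\alpha\gamma}=\sqrt{V^{\alpha\alpha}V^{\gamma\gamma}}\rho^{\alpha\gamma}$ converts this to $V^{\alpha\gamma}V^{\beta\delta}+V^{\alpha\delta}V^{\beta\gamma}$, matching $\Sigma(V_t)$.

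Finally, I plug the drift and diffusion into the abstract convergence criterion and finish the three auxiliary tasks: (i) verify the $O(n^{-3/2})$ remainder in the drift, as well as the $n^{-3/2}$-bound on the higher conditional moments of the prefactor $\sqrt{V^{\alpha\alpha}_\ell V^{\beta\beta}_\ell}R^{\alpha\beta}_\ell$ (Lyapunov/Lindeberg style), which are both uniform on compact sets of $V$; (ii) argue tightness and nonexplosion, by Cauchy-Schwarz $|V^{\alpha\beta}_\ell|\le\sqrt{V^{\alpha\alpha}_\ell V^{\beta\beta}_\ell}$ and a stopping-time argument controlling the diagonal entries through the single-input SDE of \cref{sec:geometric_Brownian_motion} (whose moments are exponential but a.s.\ finite on $[0,T]$); (iii) since the SDE coefficients are locally Lipschitz on the open cone of positive-semidefinite matrices (which $V_t$ stays in by construction, as Gram matrices are PSD), existence, uniqueness, and convergence follow from the cited convergence theorem. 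The conditional Gaussianity of $\zout^\alpha$ given $V_T$ is immediate from \eqref{eq:conditional_gaussian} with $\ell=d$.

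The main obstacle is the third task's non-explosion/tightness verification, since $\nu$ and $\Sigma$ grow with $|V|$ and the state space is an unbounded cone; the clean way to handle it is the stopping-time localization alluded to above combined with a priori moment bounds inherited from the diagonal geometric-Brownian-motion SDE of \cref{sec:geometric_Brownian_motion}.
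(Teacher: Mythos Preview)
Your proposal is correct and follows essentially the same architecture as the paper's proof: positive homogeneity to get the one-step update, the expansion $cK_1(\rho)=\rho+\nu(\rho)/n+O(n^{-3/2})$ for the drift (the paper computes this via the Cho--Saul $\bar J_1$ formula rather than your $ax+b|x|$ decomposition, but the two are equivalent), the Isserlis-based computation of the diffusion matrix, and the appeal to \cref{prop:conv_markov_chain_to_sde}. You are in fact more explicit than the paper about the non-explosion/tightness issue, which the paper handles only implicitly through the geometric-Brownian-motion structure of the diagonal entries.
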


Here we remark that $\nu(1) = 0$, and therefore the drift component of diagonal entries ($V^{\alpha\alpha}_t$) are zero, as they are geometric Brownian motion. 
However, we emphasize that the $m$-point joint output distribution is \emph{not} characterized by the marginal for each of the pairs, as the output $\zout^\alpha$ is \emph{not} Gaussian. 
In particular, we observe the diffusion matrix entry corresponding to $V^{\alpha\beta}_t,V^{\gamma\delta}_t$ involves other processes $V^{\alpha\gamma}_t,V^{\beta\delta}_t,V^{\alpha\delta}_t,V^{\beta\gamma}_t$! 
\emph{This implies that the Neural Covariance SDE limit cannot be described by a kernel, unlike stacking random features or NNGP.}

That being said, it is still instructive to study the marginal for a pair of data points. 
More specifically, it turns out in the generalized ReLU case, we can derive the marginal SDE for the correlation process.

\begin{thm}
[Correlation SDE, ReLU]
\label{thm:relu_corr}
Let $\rho^{\alpha\beta}_\ell \defequal 
\frac{\langle \varphi_\ell^\alpha, \varphi_\ell^\beta \rangle}{
|\varphi_\ell^\alpha| \, |\varphi_\ell^\beta|}$, 
where $\varphi^\alpha_\ell \defequal \varphi_s(z^\alpha_\ell)$. 
In the limit as $n\to\infty$ and $s_\pm = 1 + \frac{c_\pm}{\sqrt{n}}$, the interpolated process $\rho^{\alpha\beta}_{\lfloor tn \rfloor}$ converges in distribution to the solution of the following SDE in the Skorohod topology of $D_{\bR_+, \bR}$ 
\begin{equation}
\label{eq:relu_corr_sde}
    d \rho^{\alpha\beta}_t 
    = \left[ \nu( \rho^{\alpha\beta}_t ) 
        + \mu( \rho^{\alpha\beta}_t ) \right] 
        \, dt 
        + \sigma( \rho^{\alpha\beta}_t ) \, dB_t \,, 
    \quad 
    \rho^{\alpha\beta}_0 = \frac{\langle x^\alpha, x^\beta \rangle}{ |x^\alpha| \, |x^\beta| } \,, 
\end{equation}
where 
\begin{equation}
\label{eq:greeks}
	\nu(\rho) 
	= 
		\frac{ (c_+ - c_-)^2 }{ 2\pi } 
		\left[ 
		\sqrt{1 - \rho^2} - \arccos(\rho) \rho 
		\right] 
		\,, \quad 
	\mu(\rho) 
	= - \frac{1}{2} \rho (1 - \rho^2)
		\,, \quad 
	\sigma(\rho) 
	= 
		1-\rho^2 \,. 
\end{equation}
\end{thm}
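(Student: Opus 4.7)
The plan is to derive Theorem~\ref{thm:relu_corr} directly from Theorem~\ref{thm:joint_output_m} via Itô's formula applied to the map $f(v_1,v_2,v_{12}) \defequal v_{12}/\sqrt{v_1 v_2}$, rather than re-running a full $1/\sqrt{n}$-expansion. Abbreviate $V_1 \defequal V^{\alpha\alpha}$, $V_2 \defequal V^{\beta\beta}$, $V_{12} \defequal V^{\alpha\beta}$, so that $\rho^{\alpha\beta}_t = f(V_1, V_2, V_{12})$. Since $\nu(1)=0$, Theorem~\ref{thm:joint_output_m} reduces on the diagonal to the geometric Brownian motion of Section~\ref{sec:geometric_Brownian_motion}, so $V_1, V_2 > 0$ almost surely and $f$ is smooth on the support of the limit. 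Skorohod convergence of $(V_1,V_2,V_{12})_{\lfloor tn\rfloor}$ then transfers by the continuous mapping theorem (localised away from the singular locus $v_1 v_2 = 0$) to Skorohod convergence of $\rho^{\alpha\beta}_{\lfloor tn\rfloor}$ to $f(V_\cdot)$; the task is then to identify the SDE that this image process satisfies.

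To carry this out I would extract the drift and covariation data of the three-dimensional sub-vector from Theorem~\ref{thm:joint_output_m}: the diagonals $V_1, V_2$ have zero drift, $V_{12}$ has drift $\nu(\rho)\sqrt{V_1 V_2}$, and the relevant entries of $\Sigma$ reduce to $\Sigma_{11,11}=2V_1^2$, $\Sigma_{22,22}=2V_2^2$, $\Sigma_{12,12}=V_1V_2+V_{12}^2$, $\Sigma_{11,22}=2V_{12}^2$, $\Sigma_{11,12}=2V_1V_{12}$, $\Sigma_{22,12}=2V_2V_{12}$. The first-order Itô contribution from $dV_{12}$ is $(\partial_{v_{12}} f)\cdot\nu(\rho)\sqrt{V_1V_2} = \nu(\rho)$, and after plugging in the Hessian of $f$ the second-order Itô term collapses:
\[
 \tfrac{1}{2}\sum_{i,j}(\partial_i\partial_j f)\,\Sigma_{ij} = \tfrac{3\rho}{4} + \tfrac{3\rho}{4} + \tfrac{\rho^3}{2} - \rho - \rho = -\tfrac12\rho(1-\rho^2) = \mu(\rho).
\]
The martingale part has quadratic variation $\sum_{i,j}(\partial_i f)(\partial_j f)\,\Sigma_{ij} = (1-\rho^2)^2\,dt$, so by Dambis--Dubins--Schwarz one can write it as $(1-\rho^2)\,dB_t$ for a standard Brownian motion $B$ (on a possibly enlarged space), matching $\sigma(\rho) = 1-\rho^2$. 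The initial condition follows from $V_0$ under $f$.

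The main obstacle is the algebraic collapse of the six Itô second-order terms to the simple form $\mu(\rho)$: several terms are of the same order in $\rho$ and must combine exactly, which is bookkeeping-heavy but not conceptually deep. As an independent sanity check, the same SDE can be obtained directly from the Markov recursion for $\rho^{\alpha\beta}_\ell$ via Proposition~\ref{prop:conv_markov_chain_to_sde}. The key observation on this route is that, under the shaping $s_\pm = 1 + c_\pm/\sqrt{n}$, the $1/\sqrt{n}$ corrections in $c = 2/(s_+^2+s_-^2)$ and in the ReLU-like kernel $K_1(\rho) = (s_+^2+s_-^2) K_1^{\mathrm{ReLU}}(\rho) - 2 s_+ s_- K_1^{\mathrm{ReLU}}(-\rho)$ cancel exactly, so that $cK_1(\rho) = \rho + O(1/n)$. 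The $O(1/n)$ remainder then supplies $\nu(\rho)+\mu(\rho)$ and the conditional one-step variance supplies $\sigma(\rho)^2=(1-\rho^2)^2$, reproducing the same SDE. This cross-check confirms that the $\mu$ correction is intrinsic to the shaping rather than an artifact of the change of variable $V \mapsto \rho$.
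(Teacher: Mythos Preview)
Your primary route via It\^o's formula is correct, and the computations of the drift $\nu(\rho)+\mu(\rho)$ and the quadratic variation $(1-\rho^2)^2$ are accurate. The paper explicitly notes that this derivation from Theorem~\ref{thm:joint_output_m} via It\^o's Lemma is possible, but then deliberately chooses \emph{not} to follow it: instead, the paper proves Theorem~\ref{thm:relu_corr} directly by the Markov-chain expansion of Proposition~\ref{prop:unshaped_relu_corr}, using the shaping expansion $cK_1(\rho)=\rho+\nu(\rho)/n+O(n^{-3/2})$ from Lemma~\ref{lm:shape_nu_expansion} and then simplifying the exact $\mu_{\text{ReLU}},\sigma_{\text{ReLU}}$ formulas via the approximations $K_1\approx\rho$, $K_2\approx 2\rho^2+1$, $K_{3,1}\approx 3\rho$, $M_2\approx 2$. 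In other words, what you present as a ``sanity check'' is in fact the paper's main argument. Your It\^o route is shorter and makes the dependence on Theorem~\ref{thm:joint_output_m} transparent, at the cost of a Hessian bookkeeping exercise and a localisation argument near $V_1V_2=0$ that you gesture at but do not spell out; the paper's direct route is more self-contained (it does not rely on having first established the joint $V$-SDE) and reuses the machinery already built for the unshaped case, but requires carrying the explicit $K_1,K_2,K_{3,1},M_2$ formulas through the shaping limit.
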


To help interpret the SDE, we observe that $\mu$ and $\sigma$ are entirely independent of the activation function. 
In other words, these terms will be present in this limit even for linear networks. 
At the same time, $\nu$ describes the influence of the shaped activation function in this limit. 
\cite{zhang2022deep} has derived a related ordinary differential equation (ODE) of $d\rho_t = \nu(\rho_t) \, dt$ in the sequential limit of $n\to\infty$ then $d\to\infty$, where the activation is shaped depending on depth. 
Here we also note that $\nu(\rho)$ is closely related to the $J_1$ function derived in \cite{cho2009kernel}. 
See \cref{subsec:app_joint_corr_sde} for the $m$-point joint version of the correlation SDE, 
and \cref{sec:app_additional_simulations} for an empirical measure of convergence in the Kolmogorov--Smirnov distance.

It is also possible to transform this SDE via It\^o's Lemma for potentially more interpretability, 
such as the angle form $\theta^{\alpha\beta}_t = \arccos(\rho^{\alpha\beta}_t)$ 
\begin{equation*}
    d \theta_t^{\alpha\beta} 
	= \frac{(c_+ - c_-)^2}{2\pi} \left[ 
		\theta_t^{\alpha\beta} 
		\cot \theta_t^{\alpha\beta} - 1 
		\right] \, dt 
	+ \sin \theta_t^{\alpha\beta} \, dB_t \,, 
\end{equation*}
where for $\theta \approx 0$ we have that $\theta \cot \theta - 1 \approx \frac{-\theta^2}{3}$ and $\sin \theta \approx \theta$, which converges rapidly to $0$. 

One immediate consequence of the correlation SDE is that we can show the $n^{-1/2}$ scaling in \cref{def:relu-shaping} is the only case where the limit is neither degenerate nor a linear network. 

\begin{prop}
[Critical Exponent, ReLU]
\label{prop:relu_critical_exponent}
Let $\rho^{\alpha\beta}_\ell \defequal 
\frac{\langle \varphi_\ell^\alpha, \varphi_\ell^\beta \rangle}{
|\varphi_\ell^\alpha| \, |\varphi_\ell^\beta|}$, 
where $\varphi^\alpha_\ell \defequal \varphi_s(z^\alpha_\ell)$. 
Consider the limit $n\to\infty$ and $s_\pm = 1 + \frac{c_\pm}{n^p}$ for some $p\geq 0$. 
Then depending on the value of $p$, the interpolated process $\rho^{\alpha\beta}_{\lfloor tn \rfloor}$ converges in distribution w.r.t. the Skorohod topology of $D_{\bR_+, \bR}$ to 
\begin{enumerate}[label=(\roman*)]
    \item {the degenerate limit:} $\rho^{\alpha\beta}_t = 1$ for all $t>0$, if $0\leq p < \frac{1}{2}$, and $c_+\neq c_-$, 
    \item the critical limit: the SDE from \cref{thm:relu_corr}, if $p=\frac{1}{2}$, 
    \item {the linear network limit:} if $p > \frac{1}{2}$ , the following SDE, with $\mu,\sigma$ as defined in \eqref{eq:greeks},
    
    \begin{equation}
    d \rho^{\alpha\beta}_t 
    = 
        \mu( \rho^{\alpha\beta}_t ) 
        \, dt 
        + \sigma( \rho^{\alpha\beta}_t ) \, dB_t \,, 
    \quad 
    \rho^{\alpha\beta}_0 = \frac{\langle x^\alpha, x^\beta \rangle}{ |x^\alpha| \, |x^\beta| } \,. 
    \end{equation} 
\end{enumerate}
\end{prop}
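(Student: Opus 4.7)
} The plan is to revisit the Markov chain update for $\rho_\ell^{\alpha\beta}$ from the derivation of \cref{thm:relu_corr}, but keeping the exponent $p$ explicit, and then apply the SDE convergence principle (as described around \cref{eq:SDE_convergence}) separately in each regime. Writing $\varphi_s(x) = a x + b|x|$ with $a = \tfrac{s_++s_-}{2} = 1 + O(n^{-p})$ and $b = \tfrac{s_+-s_-}{2} = \tfrac{c_+-c_-}{2n^p}$, we have $c = 1/(a^2+b^2)$, and using the standard Gaussian moment identities $\ee{gg'} = \rho$, $\ee{g|g'|} = 0$, and $\ee{|g||g'|} = \tfrac{2}{\pi}(\sqrt{1-\rho^2} + \rho \arcsin\rho)$ for jointly standard Gaussians with correlation $\rho$, a direct calculation gives
\begin{equation*}
cK_1(\rho) \;=\; \rho + \frac{b^2}{a^2+b^2} \cdot \frac{2}{\pi}\bigl[\sqrt{1-\rho^2} - \rho\arccos\rho\bigr]
\;=\; \rho + \frac{1}{n^{2p}}\,\tilde\nu(\rho) + O(n^{-3p}),
\end{equation*}
where $\tilde\nu(\rho) = \tfrac{(c_+ - c_-)^2}{2\pi}[\sqrt{1-\rho^2} - \rho\arccos\rho]$ coincides with the function $\nu$ of \cref{thm:relu_corr} when $p = \tfrac12$.

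Next, I would argue that the rest of the expansion leading to \cref{thm:relu_corr} is unchanged: the noise coefficient $\sigma(\rho)$ and the It\^o correction $\mu(\rho)$ both arise from the diffusive part of the Gram matrix recursion \cref{eq:Gram_matrix_update} and from applying It\^o's lemma to $\rho = V^{\alpha\beta}/\sqrt{V^{\alpha\alpha}V^{\beta\beta}}$, using only that $\varphi_s \to \mathrm{Id}$ and $c\to 1$ as $n\to\infty$ (which holds for all $p > 0$). Combining these pieces, the one-step update for the correlation Markov chain takes the form
\begin{equation*}
\rho^{\alpha\beta}_{\ell+1} - \rho^{\alpha\beta}_\ell \;=\; \frac{\tilde\nu(\rho^{\alpha\beta}_\ell)}{n^{2p}} + \frac{\mu(\rho^{\alpha\beta}_\ell)}{n} + \frac{\sigma(\rho^{\alpha\beta}_\ell)\,\xi_\ell}{\sqrt{n}} + \text{lower order},
\end{equation*}
with $\xi_\ell$ approximately i.i.d.\ mean zero, unit variance.

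For cases (ii) and (iii), this is a direct application of the Markov-chain-to-SDE convergence result of \cref{sec:app_markov_chain_convergence_sde}: when $p = \tfrac12$, the first two terms combine to give the drift $\nu + \mu$ of \cref{thm:relu_corr}; when $p > \tfrac12$, the activation drift is $o(1/n)$ per step and so vanishes in the limit, leaving the linear-network SDE with drift $\mu$ and diffusion $\sigma$. For these two regimes, tightness and identification of the limit are essentially the content of \cref{thm:relu_corr}, so only bookkeeping of the new $p$-dependent remainder terms is required.

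The main obstacle is case (i), $p < \tfrac12$ with $c_+ \neq c_-$, where the activation drift is now $\omega(1/n)$ per step, so no SDE limit on the natural time scale $\ell = \lfloor tn \rfloor$ exists. Here the approach is a fast-time-scale argument: on the rescaled clock $\tau = t\cdot n^{1-2p}$, the increments become $\tilde\nu(\rho)\,d\tau$ plus $O(n^{p-1/2})$ noise and $O(n^{-2p})$ Itô drift, both vanishing. Since $\tilde\nu \ge 0$ with the unique zero at $\rho = 1$ on $(-1,1]$ (because $\arccos$ is strictly decreasing, so $\tilde\nu$ is strictly decreasing and positive on $[-1,1)$), the fast-scale ODE $d\rho/d\tau = \tilde\nu(\rho)$ drives $\rho$ into every neighbourhood of $1$ in finite $\tau$-time. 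For any $t > 0$, the original chain therefore runs for $\tau = t n^{1-2p} \to \infty$ fast-scale time, so $\rho^{\alpha\beta}_{\lfloor tn\rfloor} \to 1$ in probability. To upgrade this to convergence in the Skorohod topology to the constant path $\rho \equiv 1$ on $(0,T]$, I would control the exit probability from a shrinking neighborhood of $1$ via the supermartingale $1-\rho$ combined with the strict positivity $\tilde\nu \ge c_\epsilon > 0$ on $[-1, 1-\epsilon]$; the Gaussian $O(n^{-1/2})$ fluctuations cannot overcome an $O(n^{-2p})$ deterministic pull for $p < 1/2$, which rules out long excursions away from $1$ and yields the required tightness.
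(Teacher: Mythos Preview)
Your proposal is correct and follows essentially the same route as the paper: expand $cK_1(\rho)$ with the general exponent $p$, read off the Markov chain increment $\rho_{\ell+1}-\rho_\ell = \nu(\rho_\ell)n^{-2p} + \mu(\rho_\ell)n^{-1} + \sigma(\rho_\ell)\xi_\ell n^{-1/2} + \cdots$, and then treat the three regimes via \cref{prop:conv_markov_chain_to_sde}, using a fast time scale $h_n=n^{-2p}$ (equivalently your clock $\tau=tn^{1-2p}$) to reduce case~(i) to the ODE $\dot\rho=\nu(\rho)$ with fixed point $1$. Your decomposition $\varphi_s(x)=ax+b|x|$ yields the exact identity $cK_1(\rho)=\rho+\tfrac{b^2}{a^2+b^2}\cdot\tfrac{2}{\pi}\bigl[\sqrt{1-\rho^2}-\rho\arccos\rho\bigr]$, which is a cleaner derivation than the paper's SymPy expansion in \cref{lm:shape_nu_expansion}, and your explicit supermartingale/exit-probability sketch for the Skorohod upgrade in case~(i) is slightly more detailed than the paper's own argument.
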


Here we remark that the unshaped network case ($p=0$) is contained by the above in case (i). 
At the same time, we observe that case (iii) is equivalent to the correlation SDE in \cref{thm:relu_corr} except with $\nu = 0$. 
In particular, we observe this limit is also reached when $c_+ = c_-$, which implies $\varphi_s(x) = s_+ x$ is linear, which is the reason we call this the linear network limit. 
Furthermore, without much additional work, the same argument also implies the joint covariance SDE also loses the drift component, i.e., $d V_t = \Sigma( V_t )^{1/2} \, dB_t$.

\subsection{Neural Covariance SDE for Shaped Smooth Activations}

In this section, we consider smooth activation functions and derive a similar covariance SDE. 
All the proofs for results in this section can be found in \cref{sec:app_smooth_proofs}. 

\begin{assumption}
\label{asm:reg}
$\varphi \in C^4(\bR)$, 
$\varphi(0) = 0, \varphi'(0) = 1$, and 
$|\varphi^{(4)}(x)| \leq C(1 + |x|^p)$ for some $C,p > 0$. 
\end{assumption}
We note that for any non-constant function $\sigma \in C^1(\bR)$ and $x_0 \in \bR$ such that $\sigma'(x_0) \neq 0$, we can always define $\varphi(x) \defequal \frac{\sigma(x + x_0) - \sigma(x_0)}{ \sigma'(x_0) }$ such that it satisfies $\varphi(0) = 0, \varphi'(0)=1$. 
The choice of $x_0$ will be discussed further in \cref{sec:discussion}. 
The fourth derivative growth condition is used to control the Taylor remainder term in expectation, but any control over the remainder will suffice. 

Following the ideas of \cite{martens2021rapid}, we consider the following shaping of a smooth activation function.

\begin{definition} 
\label{def:smooth-shaping}
For some constant $a>0$, we set $\varphi_s(x) \defequal s \varphi \left( \frac{x}{s} \right)$ with $s = a \sqrt{n}$, 
and $c = \left( \mathbb{E} \, \varphi_s(g)^2 \right)^{-1}$ for $g\sim\cN(0,1)$.
\end{definition}

Observe that in the limit $n\to\infty$, we will achieve that $\varphi_s \to \text{Id}$ as desired. 
We also observe that the shaping factor $s$ outside the activation cancels out with the next layer's $\frac{1}{s}$ factor, therefore it is equivalent shape the entire network. 
More precisely, if we view $\zout$ as an input-output map $f:\mathbb{R}^{\nin} \to \mathbb{R}^{\nout}$ of an unshaped network, then shaping the smooth activation functions is equivalent to the modification $s f\left( \frac{x}{s} \right)$.\footnote[1]{We want to thank Boris Hanin for observing this equivalent parameterization.}

In this regime, we can similarly characterize the joint output distribution, \emph{however the limiting SDEs are not always well behaved.} 
In particular, they can have finite time explosions as described by the Feller test for explosions \cite[Theorem 5.5.29]{karatzas2012brownian}. 
Here the SDE in \cref{prop:finite_time_explosion} is exactly the $V^{\alpha\alpha}_t$ marginal of the Neural Covariance SDE, with the parameter $b$ determined by the activation function $\vp$ and controls whether or not finite time explosions happen (see \cref{eq:smooth_V_diag_marginal}).

\begin{prop}
[Finite Time Explosion]
\label{prop:finite_time_explosion}
Let $X_t \in \bR_+$ be a solution to the following SDE 
\begin{equation}
    dX_t = b X_t (X_t - 1) \, dt + \sqrt{2} X_t \, dB_t \,, \quad 
    X_0 = x_0 > 0 \,, b \in \mathbb{R} \,. 
\end{equation}

Let $\ta^\ast = \sup_{M > 0} \inf\{t :X_t \geq M \text{ or } X_t \leq M^{-1} \}$ be the explosion time, and we say $X_t$ has a finite time explosion if $\ta^\ast < \infty$. 
For this equation, $\mathbb{P}[\ta^\ast = \infty] = 1$ if and only if $b \leq 0$. 
\end{prop}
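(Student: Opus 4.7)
The plan is to apply the Feller test for explosions \cite[Theorem 5.5.29]{karatzas2012brownian} to the one-dimensional diffusion on $(0,\infty)$ with drift $\mu(x)=bx(x-1)$ and dispersion $\sigma(x)=\sqrt{2}\,x$. The coefficients are locally Lipschitz and $\sigma^2>0$ on $(0,\infty)$, so a unique strong solution exists up to the explosion time $\tau^\ast$, and the criterion says $\mathbb{P}[\tau^\ast=\infty]=1$ iff both $v(0^+) = \infty$ and $v(\infty^-) = \infty$, where
\begin{equation*}
v(x) \defequal \int_1^x p'(y) \int_1^y \frac{2\,d\xi}{p'(\xi)\,\sigma^2(\xi)}\,dy, \qquad p'(x) \defequal \exp\left(-\int_1^x \frac{2\mu(\xi)}{\sigma^2(\xi)}\,d\xi\right).
\end{equation*}
A direct computation of $2\mu(\xi)/\sigma^2(\xi) = b - b/\xi$ gives $p'(x) \propto x^b e^{-bx}$ and $1/(p'(\xi)\sigma^2(\xi)) \propto e^{b\xi}/\xi^{b+2}$, reducing the problem to checking integrability of $F(y) \defequal y^b e^{-by} \int_1^y e^{b\xi}/\xi^{b+2}\,d\xi$ at the two endpoints as a function of $b \in \mathbb{R}$.

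At $y \to \infty$ I would split into three cases. For $b>0$, a single integration by parts gives the leading-order asymptotic $\int_1^y e^{b\xi}/\xi^{b+2}\,d\xi \sim e^{by}/(by^{b+2})$, hence $F(y) \sim 1/(by^2)$ which is integrable, yielding $v(\infty^-) < \infty$ -- thus explosion to $+\infty$ has positive probability. For $b=0$, the inner integral equals $(1-1/y)/2 \to 1/2$ and $F$ stays bounded away from $0$, so $v(\infty^-) = \infty$. For $b<0$ the inner integrand decays exponentially, so its integral converges to a finite positive constant, while $p'(y) = y^b e^{-by}$ grows exponentially in $y$, making $F$ blow up exponentially and forcing $v(\infty^-) = \infty$. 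Hence explosion at $+\infty$ occurs iff $b>0$.

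At $y \to 0^+$, since $e^{-by}, e^{b\xi}\to 1$ one has $p'(y) \sim y^b$ and the inner integral $\int_y^1 \xi^{-b-2}\,d\xi$ behaves as $y^{-b-1}/(b+1)$ for $b>-1$, as $\log(1/y)$ for $b=-1$, and as a finite positive constant for $b<-1$. Multiplying by $p'(y) \sim y^b$ yields outer integrands of size $1/[(b+1)y]$, $y^{-1}\log(1/y)$, and $y^b$ (with $b<-1$) respectively, each non-integrable near $0$. Therefore $v(0^+) = \infty$ for every $b \in \mathbb{R}$, ruling out explosion at the lower endpoint in every case.

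Combining the two endpoints via the Feller criterion yields $\mathbb{P}[\tau^\ast = \infty] = 1$ iff $b \leq 0$, which is the claim. The main technical step I expect to have to be careful with is the integration-by-parts asymptotic at infinity for $b>0$, and the three-way case split at $0^+$ that establishes non-integrability for every real $b$ (particularly the regime $b<-1$, where the inner integral converges but $y^b$ itself is non-integrable); once these asymptotics are in hand the rest is bookkeeping, made easier by the fact that the drift and diffusion are polynomial in $x$ so every integral is elementary.
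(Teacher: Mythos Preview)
Your proposal is correct and follows the same overall strategy as the paper: apply Feller's test for explosions from \cite[Theorem 5.5.29]{karatzas2012brownian}, compute $p'(x)\propto x^b e^{-bx}$, and analyze the behaviour of $v$ at the two endpoints of $(0,\infty)$. Where you diverge from the paper is in the actual estimation of $v$. The paper first studies the limits of the scale function $p(x)$ itself and invokes the auxiliary implication $p(x)\to\pm\infty\Rightarrow v(x)\to\infty$ (Problem 5.5.27 in Karatzas--Shreve) to dispose of the cases $b\le -1$ and of the right endpoint when $b\le 0$; for the remaining case $b>-1$ it expands $e^{bz}$ as a power series and sums the resulting integrals term by term, which is somewhat laborious. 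Your route is more uniform and more elementary: a single integration by parts gives the $F(y)\sim 1/(by^2)$ asymptotic at $+\infty$ when $b>0$, and the three-way split at $0^+$ (inner integral $\sim y^{-b-1}/(b+1)$, $\log(1/y)$, or finite) immediately yields a non-integrable outer integrand for every real $b$, with no series manipulation or incomplete-gamma machinery needed. Both arguments reach the same table of $v(0^+)$ and $v(\infty^-)$ values; yours just gets there with less computation.
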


Technically speaking, the main culprit behind finite time explosions is the non-Lipschitzness of the drift coefficient. This issue requires us to weaken the sense of convergence in this section; the ordinary convergence in the Skorohod topology is in general not true when the diffusion has finite time explosions. A weakened type of convergence is the best we can hope for.  
To this goal, we introduce the following definition. 

\begin{definition}
\label{def:local_convergence}
We say a sequence of processes $X^n$ \textbf{converge locally} to $X$ in the Skorohod topology if for any $r>0$, we define the following stopping times 
\begin{equation}
\label{eq:stopping_time_defn}
\tau^n \defequal \left\{ t \geq 0 : | X^n_t | \geq r \right\} \,, \quad 
\tau \defequal \left\{ t \geq 0 : | X_t | \geq r \right\} \,, 
\end{equation}
and we have that $X^n_{t \wedge \tau^n}$ converge to $X_{t \wedge \tau}$ in the Skorohod topology. 
\end{definition}

This weakened sense of convergence essentially constrains the processes $X^n,X$ in a bounded set by adding an absorbing boundary condition. 
Not only do these stopping times rule out explosions, the drift coefficient is now also Lipschitz on a compact set. 
With this notion of convergence, we can now state a precise Neural Covariance SDE result for general smooth activation functions.

\begin{thm}
[Covariance SDE, Smooth]
\label{thm:smooth_joint_output_m}
Let $\varphi$ satisfy \cref{asm:reg}, 
$V^{\alpha\beta}_\ell \defequal \frac{c}{n} \langle \varphi^\alpha_\ell, \varphi^\beta_\ell \rangle$ 
where $\varphi^\alpha_\ell = \varphi_s(z^\alpha_\ell)$, 
and define $V_\ell \defequal [ V^{\alpha\beta}_\ell ]_{1\leq \alpha \leq \beta=m}$ to be the upper triangular entries thought of as a vector in $\mathbb{R}^{m(m+1)/2}$. 
Then, with $s = a\sqrt{n}$ as in \cref{def:smooth-shaping}, in the limit as $n\to\infty, \frac{d}{n} \to T$, the interpolated process $V_{\lfloor tn \rfloor}$ converges locally in distribution to the solution of the following SDE in the Skorohod topology of $D_{\bR_+, \bR^{m(m+1)/2}}$
\begin{equation}
\label{eq:smooth_cov_sde}
    d V_t = b( V_t ) \, dt 
    + \Sigma(V_t)^{1/2} \, dB_t \,, 
    \quad 
    V_0 = \left[ \frac{1}{\nin} 
        \langle x^\alpha, x^\beta \rangle 
    \right]_{1 \leq \alpha \leq \beta \leq m} \,, 
\end{equation}
where $\Sigma(V_t)$ is the same as \cref{thm:joint_output_m} and 
\begin{equation}
    b^{\alpha \beta}( V_t ) 
    = %
    \frac{\varphi''(0)^2}{4a^2} 
    \left( 
    V^{\alpha\alpha}_t V^{\beta\beta}_t 
    + V^{\alpha\beta}_t ( 2 V^{\alpha\beta}_t - 3 )
    \right) 
    + 
    \frac{\varphi'''(0)}{2a^2} V^{\alpha\beta}_t 
    ( V^{\alpha\alpha}_t + V^{\beta\beta}_t - 2 ) 
    \,. 
\end{equation}

Furthermore, if $V_T$ is finite, then the output distribution can be described conditional on $V_T$ as 
\begin{equation}
    \left[ \zout^\alpha \right]_{\alpha=1}^m | {V_T}
    \overset{d}{=}
    \cN\left( 0  , [ V^{\alpha\beta}_T ]_{\alpha,\beta=1}^m 
    \right) \,, 
\end{equation}
and otherwise the distribution of $[\zout^\alpha]_{\alpha=1}^m$ is undefined. 
\end{thm}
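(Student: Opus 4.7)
The plan is to combine the conditional Gaussian structure from \eqref{eq:conditional_gaussian} with a Taylor expansion of $\varphi_s$ in powers of $s^{-1} = 1/(a\sqrt n)$, and then invoke the general Markov-chain-to-SDE convergence principle (\cref{prop:conv_markov_chain_to_sde}). Because $s$ is large, $\varphi_s$ is a small perturbation of the identity, so both $b$ and $\Sigma$ will emerge from a handful of low-order Gaussian moment computations via Isserlis' theorem. The outline mirrors \cref{thm:joint_output_m}: (i) derive a one-step expansion of $V_{\ell+1}-V_\ell$ in powers of $1/\sqrt n$; (ii) match it to the format of \eqref{eq:SDE_convergence}; (iii) apply \cref{prop:conv_markov_chain_to_sde} to a suitably stopped version of the chain; (iv) read off the output distribution from \eqref{eq:conditional_gaussian} at layer $d$.

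Concretely, conditioning on $\cF_\ell$ and recalling \eqref{eq:conditional_gaussian}, I would write $V^{\alpha\beta}_{\ell+1} = \tfrac{c}{n}\sum_{i=1}^{n} \varphi_s(Z^\alpha_i)\varphi_s(Z^\beta_i)$, where $(Z^\alpha_i)_{\alpha=1}^{m}$ is centered Gaussian with covariance $[V^{\alpha\beta}_\ell]$ and i.i.d.\ across $i$. The next step is to Taylor expand
\begin{equation*}
\varphi_s(x) = x + \tfrac{\varphi''(0)}{2s}x^2 + \tfrac{\varphi'''(0)}{6s^2}x^3 + R_4(x), \qquad |R_4(x)|\le \tfrac{C}{s^{3}}(1+|x|^{p+4}),
\end{equation*}
with remainder bound from \cref{asm:reg}, and expand the normalization in parallel as $c = 1 - \tfrac{1}{s^2}\bigl[\tfrac{3}{4}\varphi''(0)^2+\varphi'''(0)\bigr]+O(s^{-3})$. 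A handful of Gaussian moment identities (Isserlis/Wick), applied to $(Z^\alpha,Z^\beta,Z^\gamma,Z^\delta)$, then yield $\mathbb{E}_\ell[V^{\alpha\beta}_{\ell+1}]-V^{\alpha\beta}_\ell = \tfrac{1}{n}\,b^{\alpha\beta}(V_\ell)+O(n^{-3/2})$ and $\cov_\ell\bigl(V^{\alpha\beta}_{\ell+1},V^{\gamma\delta}_{\ell+1}\bigr) = \tfrac{1}{n}\,\Sigma^{(\alpha\beta)(\gamma\delta)}(V_\ell)+O(n^{-3/2})$, with exactly the $b$ and $\Sigma$ of the theorem; in particular the $-3 V^{\alpha\beta}$ and $-2V^{\alpha\beta}$ constants inside $b^{\alpha\beta}$ arise precisely from the $s^{-2}$ correction to $c$.

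This places $V_\ell$ in the format of \eqref{eq:SDE_convergence} with $\sigma = \Sigma^{1/2}$, and the conditional-Gaussian output statement then follows from \eqref{eq:conditional_gaussian} applied at $\ell = d$, combined with $V^{\alpha\beta}_d = \tfrac{c}{n}\langle\varphi^\alpha_d,\varphi^\beta_d\rangle$ and $d/n \to T$. The main obstacle is that $b$ has \emph{quadratic} and $\Sigma^{1/2}$ has \emph{linear} growth in $V$, so global Lipschitz hypotheses fail and the limiting SDE can genuinely explode in finite time (cf.\ \cref{prop:finite_time_explosion}); this is exactly why the theorem is phrased with the local convergence notion of \cref{def:local_convergence}. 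I would handle this by fixing $r>0$, introducing the stopping times \eqref{eq:stopping_time_defn}, and applying \cref{prop:conv_markov_chain_to_sde} to the stopped processes --- on which $b$ and $\Sigma^{1/2}$ become Lipschitz and the Taylor remainder contributes only $O(n^{-3/2})$ in $L^2$ thanks to the polynomial moment bound of \cref{asm:reg}. The standard Lindeberg-type tightness conditions required by that proposition are then routine on the compact ball $\{|V|\le r\}$.
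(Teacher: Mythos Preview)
Your proposal is correct and follows essentially the same route as the paper: Taylor expand $\varphi_s$ to third order, expand $c$ via $c = 1 - s^{-2}\bigl[\tfrac{3}{4}\varphi''(0)^2+\varphi'''(0)\bigr]+O(s^{-3})$, compute the needed fourth moments by Isserlis to extract $b$ and $\Sigma$, and then invoke \cref{prop:conv_markov_chain_to_sde} in its local (stopped) form because $b$ is only locally Lipschitz. The one cosmetic difference is that the paper normalizes to unit-variance Gaussians $g^\alpha_\ell \defequal W_\ell\,\varphi^\alpha_\ell/|\varphi^\alpha_\ell|$ and carries the factors $|\varphi^\alpha_\ell|$ explicitly, whereas you work directly with $Z^\alpha_i$ having covariance $[V^{\alpha\beta}_\ell]$; the computations are equivalent, and your identification of the $-3V^{\alpha\beta}$ and $-2V^{\alpha\beta}$ pieces as coming from the $s^{-2}$ correction to $c$ matches the paper exactly.
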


We also have a similar critical scaling result for general smooth activations. 

\begin{prop}
[Critical Exponent, Smooth]
\label{prop:smooth_critical_exponent}
Let $\varphi$ satisfy \cref{asm:reg}, 
$V^{\alpha\beta}_\ell \defequal \frac{c}{n} \langle \varphi^\alpha_\ell, \varphi^\beta_\ell \rangle$ 
where $\varphi^\alpha_\ell = \varphi_s(z^\alpha_\ell)$ with $s = a n^p$ for some $p > 0$,  
and define $V_\ell \defequal [ V^{\alpha\beta}_\ell ]_{1\leq \alpha \leq \beta=m}$ to be the upper triangular entries thought of as a vector. 
Then in the limit as $n\to\infty, \frac{d}{n} \to T$, the interpolated process $V_{\floor{tn}}$ converges locally in distribution w.r.t. the Skorohod topology of $D_{\bR_+, \bR^{m(m+1)/2}}$ to $V$, which depending on the value of $p$ is 
\begin{enumerate}[label=(\roman*)]
    \item {the degenerate limit:} if $0<p<\frac{1}{2}$ 
    \begin{equation}
        \begin{cases}
            V^{\alpha\alpha}_{t} = 0 \text{ or } \infty , 
            & 
            \text{ if } \frac{3}{4} \varphi''(0)^2 + \varphi'''(0) > 0 \text{ and } V^{\alpha\alpha}_0 \neq 0 \,, \\ 
            V^{\alpha\beta}_{t} = \text{const.} \,, 
            & 
            \text{ if } \frac{3}{4} \varphi''(0)^2 + \varphi'''(0) \leq 0 
            \,, \\ 
        \end{cases}
    \end{equation}
    for all $t > 0$ and $1\leq \alpha \leq \beta \leq m$, 
    \item the critical limit: the solution of the SDE from \cref{thm:smooth_joint_output_m}, if $p=\frac{1}{2}$, 
    \item {the linear network limit:} the stopped solution to the SDE $d V_t = \Sigma(V_t) \, dB_t$
    with coefficient $\Sigma$ defined in \cref{thm:relu_corr}, 
    if $p > \frac{1}{2}$. 
\end{enumerate}
\end{prop}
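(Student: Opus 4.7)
The plan is to parallel the proof of \cref{thm:smooth_joint_output_m}, keeping the exponent $p$ as a free parameter throughout, and then to identify in which regime the per-step drift matches, dominates, or vanishes against the natural SDE scaling $1/n$. The starting point is the Taylor expansion
\begin{equation*}
\varphi_s(z) = z + \frac{\varphi''(0)}{2s}\, z^2 + \frac{\varphi'''(0)}{6 s^2}\, z^3 + R(z,s), \qquad s = a n^p,
\end{equation*}
with remainder controlled by \cref{asm:reg}. Plugging this into the Gram-matrix Markov chain update \eqref{eq:Gram_matrix_update} and computing the conditional expectation of $c\,\varphi_s(g^\alpha_i)\varphi_s(g^\beta_i)$ over the Gaussian pair $(g^\alpha_i, g^\beta_i)$ whose covariance is determined by $V_\ell$ (as in \eqref{eq:conditional_gaussian}), the same Gaussian-moment argument used to prove \cref{thm:smooth_joint_output_m} yields the uniform expansion
\begin{equation*}
V^{\alpha\beta}_{\ell+1} - V^{\alpha\beta}_\ell = \frac{1}{n^{2p}}\, \widetilde{b}^{\alpha\beta}(V_\ell) + \frac{1}{\sqrt{n}}\, \Xi^{\alpha\beta}_\ell + O\bigl(n^{-\min(3p,\, 3/2)}\bigr),
\end{equation*}
where $\widetilde{b}$ coincides with $a^2 \cdot b$ from \cref{thm:smooth_joint_output_m} and $\Xi_\ell$ is a conditionally centred fluctuation with covariance $\Sigma(V_\ell) + o(1)$.

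With this uniform expansion, each case reduces to invoking the Markov-chain-to-SDE principle (\cref{prop:conv_markov_chain_to_sde}) with the appropriate effective drift. Case (ii), $p = 1/2$, is exactly \cref{thm:smooth_joint_output_m} applied to the above expansion. For case (iii), $p > 1/2$, one has $n \cdot n^{-2p} \to 0$, so the effective drift in \eqref{eq:SDE_convergence} vanishes uniformly on compacts while $\Sigma$ is unchanged; localizing at radius $r$ as in \cref{def:local_convergence} makes all coefficients Lipschitz, and the localized process converges to the solution of $d V_t = \Sigma(V_t)^{1/2}\, dB_t$.

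Case (i), $0 < p < 1/2$, is the delicate one: per-step drift is $n^{-2p} \gg n^{-1}$, so the natural SDE scaling is broken. Here the plan is to rescale time by $t' = t\, n^{1-2p}$, on which the drift per unit $t'$ is $O(1)$ while the noise variance per unit $t'$ is $n^{-(1-2p)} \to 0$; standard weak-convergence results then give convergence on compact $t'$-intervals to the solution of the deterministic ODE $\dot V = \widetilde{b}(V)/a^2$. Returning to the original scale, any fixed $t > 0$ corresponds to $t' \to \infty$, so the limit is determined by the forward asymptotics of this ODE. For the diagonal entries the ODE reduces to $\dot V^{\alpha\alpha} = (K/a^2)\, V^{\alpha\alpha}(V^{\alpha\alpha} - 1)$ with $K = \tfrac{3}{4}\varphi''(0)^2 + \varphi'''(0)$: if $K > 0$ then $V=0$ is stable, $V=1$ is unstable, and trajectories with $V_0 > 1$ explode in finite $t'$, yielding $V^{\alpha\alpha}_t \in \{0, \infty\}$ on the original scale; if $K \leq 0$ there is no explosion and each entry stays at its initial value up to the vanishing noise contribution, giving the constant alternative.

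The main obstacle is case (i): rigorously turning ``ODE run to $t' = \infty$'' into a statement at fixed $t > 0$ on the original time scale, and handling the $K > 0$ branch in the presence of finite-time explosions via the local convergence framework (\cref{def:local_convergence}). Both issues are addressed by stopping the Markov chain at the exit time $\tau^n$ from the ball of radius $r$ before taking $n \to \infty$, so that all coefficients are Lipschitz and explosion is excluded, and only then sending $r \to \infty$ to recover the degenerate dichotomy.
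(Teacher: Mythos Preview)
Your approach is essentially the paper's: Taylor-expand $\varphi_s$, read off a Markov-chain increment with drift of order $n^{-2p}$ and centred noise of order $n^{-1/2}$, and then split cases by comparing $2p$ to $1$. Case (ii) is \cref{thm:smooth_joint_output_m}; Case (iii) is handled exactly as you describe (drift vanishes on the $n^{-1}$ scale, diffusion $\Sigma$ survives); Case (i) is handled by passing to the fast time scale $t' \sim n^{1-2p}\, t$, where the noise disappears and one is left with the deterministic ODE $\dot U = b(U)$, so that $V_t = U_\infty$ for any $t>0$.

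There is one concrete error in your Case (i) analysis. For $K<0$ the ODE $\dot V^{\alpha\alpha} = (K/a^2)\,V^{\alpha\alpha}(V^{\alpha\alpha}-1)$ does \emph{not} leave entries at their initial values: the fixed point at $1$ is stable and the one at $0$ is unstable, so every trajectory with $V^{\alpha\alpha}_0>0$ converges to $1$ as $t'\to\infty$. Only the borderline case $K=0$ gives $V^{\alpha\alpha}_t = V^{\alpha\alpha}_0$. The paper makes this distinction explicitly, and then treats the off-diagonal entries for $K\leq 0$ by a separate argument (Cauchy--Schwarz gives $|U^{\alpha\beta}_{t'}|\leq \sqrt{U^{\alpha\alpha}_{t'}U^{\beta\beta}_{t'}}$, which is bounded, and a scalar first-order ODE with bounded trajectory cannot be periodic, hence $U^{\alpha\beta}_\infty$ exists). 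Your write-up skips this step; without it the claim ``$V^{\alpha\beta}_t=\text{const.}$'' for off-diagonals is unsupported.
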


Here we observe that in case (i) when $\frac{3}{4} \varphi''(0)^2 + \varphi'''(0) \leq 0$, we also have a constant (in time) correlation $\rho^{\alpha\beta}_t$ similar to the ReLU case in \cref{prop:relu_critical_exponent}, however in this case $\rho^{\alpha\beta}_t$ is not necessarily equal to $1$. 
At the same time, the linear network limit in case (iii) also has the same covariance SDE as \cref{prop:relu_critical_exponent}.

\section{Consequences, Discussion, and Future Directions}
\label{sec:discussion}

So far, we have derived the Neural Covariance SDE. 
Analysis of this SDE reveals important behaviour of the network on initialization. 
Here we lay out one concrete example and provide some discussion and future directions.

\begin{figure}[t]
\centering
\begin{tikzpicture}[x=0.5\textwidth,y=0.5\textwidth, %
  nplot/.style={anchor=north west}, %
  ncap/.style={anchor=south west}, %
  caption/.style={align=left,font=\normalsize,scale=0.8}, %
  ]

\node[ncap] (figacap) at (0.35,0) [caption] {
(a) Unstable Centre $x_0=0$
 };
\node[nplot] (figa) at (0,0)  {{\includegraphics[width=0.5\textwidth]{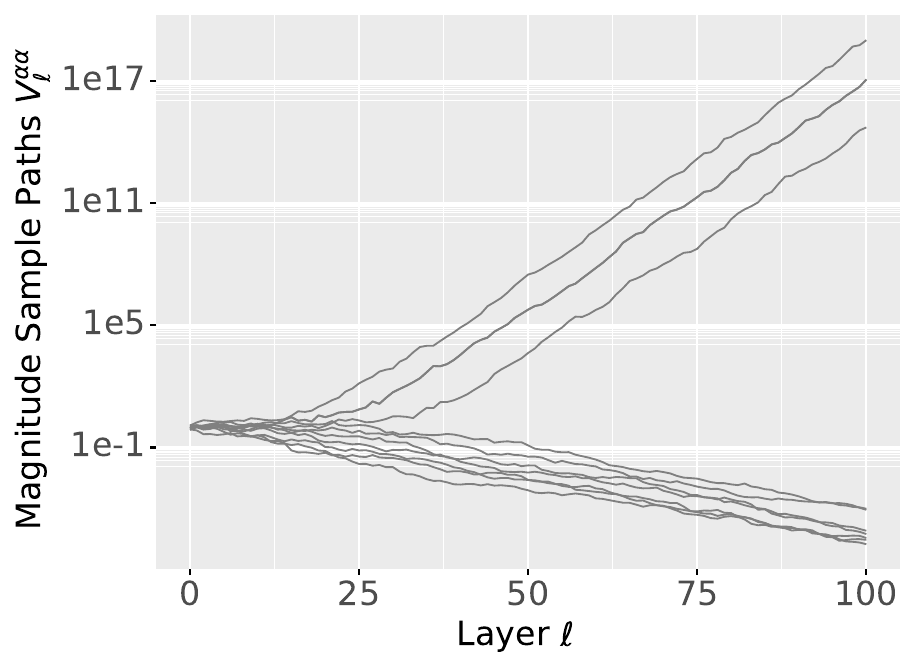}}};  %

\node[ncap] (figbcap) at (1.35,0) [caption] {
(b) Stable Center $x_0 = \log 2$
 };
\node[nplot] (figb) at (1,0) {{\includegraphics[width=0.5\textwidth]{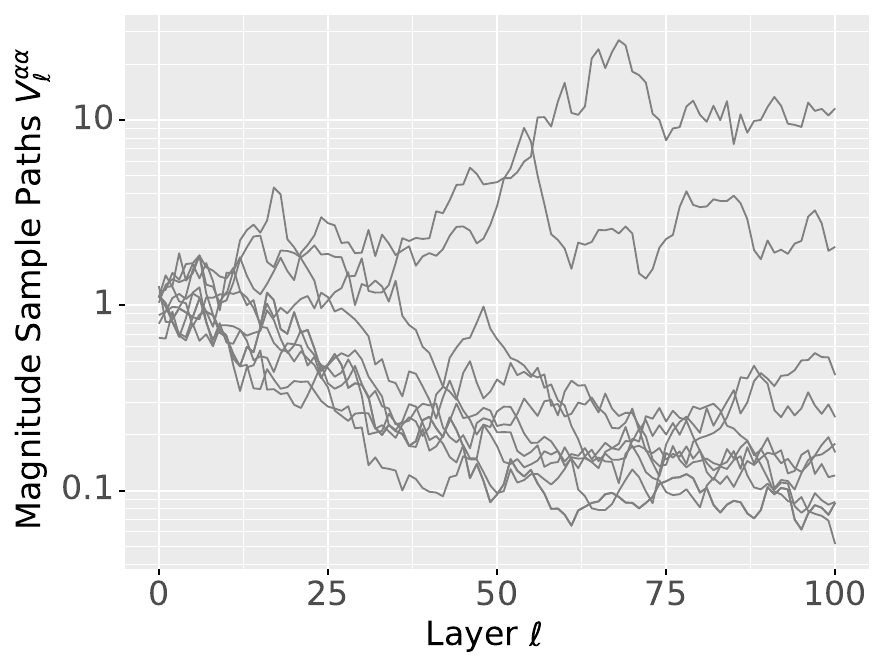}}};

\end{tikzpicture}

\caption{
Simulation of 10 shaped softplus networks as in \cref{ex:soft-plus} with 
$n=d=100, a=1, V^{\alpha\alpha}_0 = \frac{1}{\nin} |x^\alpha|^2 =  1$ 
centred at two different values. ``Stable'' here means the Neural Covariance SDE is guaranteed not to have finite time explosions; unstable networks can explode on initialization!
}
\label{fig:blow_up_stable_paths}
\end{figure}

\vspace{0.2cm}
\noindent
\textbf{Exploding and Vanishing Norms.}
Here we consider the behaviour of shaping smooth activation functions, as it is done in the experiments of \cite{martens2021rapid}.
While the authors here avoided exploding and vanishing norms by numerically optimizing shaping parameters, we can actually describe the precise behaviour a priori with the Neural Covariance SDE.
Recall the shaping parameter $a$ from \cref{def:smooth-shaping}. Let $V_t$ be the solution to the SDE in \cref{eq:smooth_cov_sde}.
We can write down the marginal SDE for $V^{\alpha\alpha}_t$ as 
\begin{equation}
\label{eq:smooth_V_diag_marginal}
    d V^{\alpha\alpha}_t 
    = 
    \left( \frac{3}{4} \varphi''(0)^2 + \varphi'''(0) \right) 
    \frac{ V^{\alpha\alpha}_t }{ a^2 } 
    ( V^{\alpha\alpha}_t - 1 ) \, dt 
    + \sqrt{2} V^{\alpha\alpha}_t \, dB_t \,, 
\end{equation}
which implies by \cref{prop:finite_time_explosion} that $V_t$ has a finite time explosion (with non-zero probability) \textbf{if and only if} 
$\frac{3}{4}  \varphi''(0)^2 + \varphi'''(0) > 0$. 
This criterion can be used to help choose how activation functions should be centered for shaping; below are two examples.

\begin{example}
[Sigmoid and $\tanh$ at $x_0 = 0$]
We start with the sigmoid activation $\sigma(x) = \frac{1}{1 + e^{-x}}$, then we can define $\varphi(x) \defequal 4 \sigma(x) - 2$ to satisfy \cref{asm:reg}, which leads to $\varphi''(0) = 0, \varphi'''(0) = - \frac{1}{2}$, and therefore leads to a stable network. 
It turns out $\varphi(x) \defequal \tanh(x)$ already satisfies \cref{asm:reg}, which leads to $\varphi''(0) = 0, \varphi'''(0) = -2$, and therefore is also stable. 
\end{example}

More generally, if $\sigma$ behaves like a cumulative distribution function for a symmetric unimodal density, we will have that $\varphi''(0)=0$ and $\varphi'''(0) < 0$ as desired. 

\begin{example}
[Soft Plus at General $x_0 \in \bR$]
\label{ex:soft-plus}
Let us consider $x_0 \in \bR$ and $\sigma(x) = \log ( 1 + e^{x+x_0} )$, which implies 
$\varphi(x) 
    \defequal 
    (1 + e^{-x_0})
    \log \frac{1 + e^{x+x_0}}{1 + e^{x_0}}$ 
satisfies \cref{asm:reg}. 
This gives us $\varphi''(0) = \frac{1}{1 + e^{x_0}}, \varphi'''(0) = \frac{1 - e^{x_0} }{ (1 + e^{x_0})^2 }$, and therefore 
$\frac{ 3 }{4} \varphi''(0)^2 + \varphi'''(0) 
    = \frac{1}{(1 + e^{x_0})^2 } 
    \left( \frac{5}{4} - e^{x_0}
    \right)$.
In other words, the shaped network is stable if and only if $x_0 \geq \log \frac{5}{4}$ (see \cref{fig:blow_up_stable_paths}). 
We note that the authors of \cite{martens2021rapid} numerically found a shift of $x_0 \approx 0.41$, which is in the stable regime of $x_0 \geq \log \frac{5}{4} \approx 0.097$.
\end{example}
\noindent
\textbf{Relationship to Edge of Chaos.}
The finite time explosion example above resembles the Edge of Chaos (EOC) analysis of gradient stability \cite{schoenholz2016deep,yang2017mean,hayou2019impact,murray2022activation}, where the weight and bias variance at initialization determines a stability criterion. 
However, we note that the EOC regime is sufficiently different that the results are not directly comparable. 
More precisely, the EOC analysis is in the sequential limit of infinite-width and then infinite-depth, which also leaves the activation function unchanged. 
Under very weak assumptions, the variance (diagonal of $V_t$) will not explode in this regime; instead, the gradient can explode due to the covariance (off diagonals). 
On the other hand, our finite explosion result is in the joint limit of depth and width, where the variance (diagonal of $V_t$) can explode instead.

\vspace{0.2cm}
\noindent
\textbf{Posterior Inference.} 
Similar to the NNGP setting, we can use the Neural Covariance SDE to generate a prior over functions $f:\mathbb{R}^{\nin} \to \mathbb{R}^{\nout}$. 
Consequently, an interesting future direction would be to study the posterior distribution, i.e. the output $\zout^{m+1}$ conditioned on $x^{m+1}$ and a training dataset $(x^\alpha, \zout^\alpha)_{\alpha=1}^m$. 
However, to our best knowledge, it is not straightforward to explicitly compute or sample from the conditional distributions for this SDE structure. 
It would be desirable to extend existing approaches in the perturbative regime \cite{yaida2020non,roberts2022principles} to our setting.

\vspace{0.2cm}
\noindent
\textbf{Extension to Other Architectures.}
The key step to deriving the covariance SDE is the conditional Gaussian distribution in \cref{eq:conditional_gaussian}, which directly leads to a Markov chain. 
It follows immediately that ResNets \cite{he2016deep} admit a similar conditional structure. 
With a bit more work for convolutional networks, we can obtain $z^\alpha_{\ell+1} | \mathcal{F}_\ell \sim \mathcal{N}( 0 , \mathcal{A}(V_\ell) \otimes I_n )$ where $\mathcal{A}$ is an affine transformation and $V_\ell$ is the previous layer's Gram matrix \cite{novak2018bayesian}. 
We note that recurrent networks will not lead to a Markov chain or SDE limit, as the weight matrix is reused from layer to layer.

\vspace{0.2cm}
\noindent
\textbf{Simulating SDEs.}
Both the Markov chains and SDEs predict neural networks at initialization very well (see \cref{fig:density_path_shape}), but the SDE is significantly faster to simulate. 
In particular, we can view the Markov chain as an approximate Euler discretization of the SDE, but with a very small step size $n^{-1}$. 
In contrast, to simulate the SDE we should only need a step size that is small on the scale of depth-to-width ratio $T = d/n$, which is \emph{independent of width} $n$. 
Therefore, practitioners using the shaping techniques of \cite{martens2021rapid,zhang2022deep} can now simulate the covariance SDEs at a low computational cost to significantly improve estimates of the output correlation (see \cref{fig:density_path_shape} and additional simulations in \cref{sec:app_additional_simulations}). 

\vspace{0.2cm}
\noindent
\textbf{Analytical Tractability of SDEs.}
Besides numerical tractability, the SDEs are also far more tractable to analyze. 
For example, in the one input case, we arrive at geometric Brownian motion \cref{eq:geometric_brownian_motion}, which is known to have a log-normal distribution at fixed times. 
Similarly, our finite time explosions hinge on the fact we identified an SDE limit. 
In the same way that NNGP theory played a major role in the infinite-width regime, the Neural Covariance SDEs and the techniques developed here also serve as a mathematical foundation for studying training and generalization.

\section*{Acknowledgement}

We would like to thank 
Sinho Chewi, 
James Foster, 
Boris Hanin, 
Cameron Jakub, 
Jeffrey Negrea, 
Nuri Mert Vural, 
Guodong Zhang, 
Matthew S. Zhang, 
and Yuchong Zhang 
for helpful discussions and draft feedback. 
We would like to thank Sam Buchanan and Soufiane Hayou for pointing out a gap in the proof of \cref{prop:unshaped_relu_corr}.
ML is supported by Ontario Graduate Scholarship 
and the Vector Institute. 
MN is supported by an NSERC Discovery Grant.
DMR is supported in part by Canada CIFAR AI Chair funding through the Vector Institute, an NSERC Discovery Grant, Ontario Early Researcher Award, a stipend provided by the Charles Simonyi Endowment, and a New Frontiers in Research Exploration Grant.

\appendix

\printbibliography

\section{Background on Markov Chain Convergence to SDEs}
\label{sec:app_markov_chain_convergence_sde}

In this section we briefly review the background and technical results required to characterize the convergence of a Markov chain to an SDE. 
Majority of the content in this section are based on \cite{kallenberg2021foundations,ethier2009markov,stroock1997multidimensional}. 

To start we first introduce the Skorohod $J_1$-topology  \cite[Appendix 5]{kallenberg2021foundations}.
Let $S$ be a complete separable metric space, and $D_{\bR_+, S}$ be the space of c\`adl\`ag functions (right continuous with left limits) from $\bR_+ \to S$. 
Here we write $x_n \xrightarrow{ul} x$ to denote locally uniform convergence (i.e., uniform on compact subsets of $\bR_+$). 
We also consider bijections $\lambda$ on $\bR_+$ so that $\lambda$ is strictly increasing with $\lambda_0 = 0$. 
We can now define \emph{Skorohod convergence} $x_n \xrightarrow{s} x$ on $D_{\bR_+, S}$ if there exists a sequence of bijections $\lambda_n$ satisfying the above conditions and 
\begin{equation}
    \lambda_n \xrightarrow{ul} \text{Id} \,, 
    \quad 
    x_n \circ \lambda_n \xrightarrow{ul} x \,. 
\end{equation}

The most important result is that $D_{\bR_+, S}$ 
equipped with the above sense of convergence is indeed a well behaved probability space, which we state below. 

\begin{thm}
[Theorem A5.3, \cite{kallenberg2021foundations}]
For any separable complete metric space $S$, there exists a topology $\mathcal{T}$ on $D_{\bR_+, S}$ such that 
\begin{enumerate}[label=(\roman*)]
    \item $\mathcal{T}$ induces the Skorohod convergence $x_n \xrightarrow{s} x$, 
    \item $D_{\bR_+, S}$ is Polish (separable completely metrizable topological space) under $\mathcal{T}$, 
    \item $\mathcal{T}$ generates the Borel $\sigma$-field generated by the evaluation maps $\pi_t$, $t \geq 0$, where $\pi_t(x) = x_t$.
\end{enumerate}
\end{thm}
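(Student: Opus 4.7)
The plan is to construct an explicit metric $d$ on $D_{\bR_+, S}$ that induces the Skorohod convergence, is complete, and is separable, from which the three listed properties follow. Fix a complete metric $\rho$ on $S$ generating its topology. Let $\Lambda$ denote the set of strictly increasing continuous bijections $\lambda : \bR_+ \to \bR_+$ with $\lambda(0) = 0$, and define the seminorm
\[
\|\lambda\| \defequal \sup_{0 \leq t < s} \left| \log \frac{\lambda(s) - \lambda(t)}{s - t} \right|,
\]
which measures the log-Lipschitz distortion of $\lambda$ from the identity. For each integer $k \geq 1$, define the restricted distance
\[
d_k(x, y) \defequal \inf_{\lambda \in \Lambda} \max\bigl(\|\lambda\|,\ \sup_{t \geq 0} \rho(x(t \wedge k), y(\lambda(t) \wedge k))\bigr),
\]
and aggregate these as $d(x, y) \defequal \sum_{k=1}^\infty 2^{-k} \bigl(1 \wedge d_k(x, y)\bigr)$.

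For (i), I would show that $d(x_n, x) \to 0$ is equivalent to the existence of $\lambda_n \in \Lambda$ with $\lambda_n \xrightarrow{ul} \text{Id}$ and $x_n \circ \lambda_n \xrightarrow{ul} x$. The forward direction requires a diagonal argument: for each $k$ choose a near-optimal $\lambda_{n,k}$ in the definition of $d_k$, then stitch these into a single sequence $\lambda_n = \lambda_{n, k_n}$ for an appropriate $k_n \to \infty$. The reverse direction follows by bounding each $d_k$ in terms of the pair $(\|\lambda_n\|, \sup_{t \leq k} \rho(\cdot, \cdot))$ and applying dominated convergence in the weighted sum.

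For (ii), separability follows by exhibiting a countable dense subset: piecewise-constant c\`adl\`ag functions with finitely many rational jump times, taking values in a countable dense subset of $S$ (which exists since $S$ is separable). For completeness, given a $d$-Cauchy sequence $(x_n)$, restriction to each compact $[0, k]$ yields a $d_k$-Cauchy sequence; I would construct a limit $x^{(k)}$ on $[0, k]$ using compactness of the restricted time-change family $\{\lambda : \|\lambda\| \leq C\}$ (which is equicontinuous by the log-Lipschitz bound and hence locally relatively compact via Arzel\`a--Ascoli), then glue the limits consistently across $k$ to obtain a single c\`adl\`ag limit on $\bR_+$. For (iii), measurability of $\pi_t$ w.r.t. $\mathcal{T}$ is classical via approximation, and conversely any c\`adl\`ag function is determined by its values on a countable dense subset of $\bR_+$, which suffices to reconstruct $d$ and hence the Borel sets of $\mathcal{T}$.

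The main obstacle will be the completeness proof, and specifically the gluing step: the $\inf$ defining $d_k$ is typically not attained, so one must work with $\varepsilon$-optimal time changes, extract locally uniformly convergent subsequences via Arzel\`a--Ascoli, and then verify that the limits $x^{(k)}$ on $[0, k]$ and $x^{(k+1)}$ on $[0, k+1]$ agree on their overlap. A related subtlety arises in (iii) at fixed jump times, where the Skorohod topology permits jumps to drift under $\lambda_n$; handling this requires showing that the evaluation maps are Borel (though not continuous) by writing them as limits of continuous functionals that smooth over a shrinking window.
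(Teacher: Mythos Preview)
The paper does not prove this theorem; it is stated as a background result with a direct citation to \cite[Theorem~A5.3]{kallenberg1997foundations} and no proof is given in the paper itself. Your proposal sketches the standard construction of the Skorohod $J_1$-metric (via the log-Lipschitz gauge on time changes and a weighted sum over compact intervals), which is essentially the approach taken in the cited reference and in Ethier--Kurtz, so there is nothing in the paper to compare it against.
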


We also need to define Feller semi-groups. 
To start we let $S$ be a locally compact separable metric space and $C_0 \defequal C_0(S)$ be the space of continuous functions that vanishes at infinity, and we equip $C_0$ with the sup norm to make it a Banach space. 
$T: C_0 \to C_0$ is a \emph{positive contraction operator} if for all $0\leq f \leq 1$ we have $0 \leq Tf \leq 1$. 
A semi-group of such operators $(T_t)$ on $C_0$ is called a \emph{Feller semi-group} if it additionally satisfies 
\begin{equation}
\begin{aligned}
    & T_t C_0 \subset C_0 \,, \quad t \geq 0 \,, \\ 
    & T_t f(x) \to x \text{ as } t \to 0\,, 
    \quad f \in C_0, x \in S \,. 
\end{aligned}
\end{equation}

Let $\mathcal{D} \subset C_0$ and $A:\mathcal{D} \to C_0$, and we say that $(A,\mathcal{D})$ is a \emph{generator} of $(T_t)$ if $\mathcal{D}$ is the maximal set such that for all $f \in \mathcal{D}$, we have that 
\begin{equation}
    \lim_{t\to 0} \frac{T_t f - f}{t} = A f \,. 
\end{equation}

An operator $A$ with domain $\mathcal{D}$ on a Banach space $B$ is said to be \emph{closed}, if its graph $G = \{ (f, Af) | f \in \mathcal{D} \}$ is a closed subset of $B\times B$. If the closure of $G$ is the graph of an operator $\bar A$, we say $\bar A$ is the \emph{closure} of $A$. 
Finally, we will define a linear subspace $D \subset \mathcal{D}$ as a \textbf{core} of $A$ if the closure of $A|_D$ is $A$. 
If $(A,\mathcal{D})$ is a generator of a Feller semigroup, every dense invariant subspace $D \subset {D}$ is a core of $A$ \cite[Proposition 17.9]{kallenberg2021foundations}. 
In particular, we will work with the core $C^\infty_0$ of smooth functions vanishing at infinity. 

We will state a sufficient condition required for an semi-group to be Feller based on its generator. 

\begin{thm}
[Section 8, Theorem 2.5, \cite{ethier2009markov}]
\label{thm:feller_generator}
Let $a^{ij} \in C^2(\mathbb{R}^d)$ with $\partial_k \partial_\ell a^{ij}$ be bounded for all $i,j,k,\ell \in [d]$. 
Further let $b:\mathbb{R}^d \to \mathbb{R}^d$ be Lipschitz. 
Then the generator defined by 
\begin{equation}
\label{eq:generator_defn}
    Af = \frac{1}{2} \sum_{i,j=1}^d a^{ij} \partial_i \partial_j f 
        + \sum_{i=1}^d b^i \partial_i f \,, 
\end{equation}
generates a Feller semi-group on $C_0$. 
\end{thm}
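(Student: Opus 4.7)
The plan is to construct the Feller semigroup as the transition semigroup of the associated diffusion, verify the three Feller axioms, and then identify its generator via It\^o's formula. This is a classical result --- essentially \cite[Section 8, Theorem 2.5]{ethier2009markov} --- which one may simply cite; below is how I would reprove it directly.

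First I would construct the underlying Markov process. The boundedness of $\partial_k \partial_\ell a^{ij}$ yields at most quadratic growth of $a$ and allows us to choose a locally Lipschitz square root $\sigma$ with $\sigma \sigma^T = a$; combined with Lipschitz $b$, standard SDE theory produces a unique strong non-exploding solution $X^x_t$ to $dX_t = \sigma(X_t)\,dB_t + b(X_t)\,dt$, with Gronwall-type moment bounds $\mathbb{E}|X^x_t|^p \leq C_{t,p}(1+|x|^p)$ and $L^p$-continuity in the initial condition. Setting $T_t f(x) := \mathbb{E}[f(X^x_t)]$, the semigroup identity $T_{s+t} = T_s T_t$ follows from the strong Markov property.

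Second I would verify the three Feller axioms. Continuity of $T_t f$ in $x$ for $f \in C_0$ follows from $L^p$-stability of $x \mapsto X^x_t$ and dominated convergence; strong continuity at $t = 0$ follows from path continuity at $t = 0$ combined with bounded convergence. The subtle axiom is that $T_t f$ vanishes at infinity, which is proved via moment estimates showing that the distribution of $X^x_t$ escapes every bounded ball in probability as $|x| \to \infty$ (a lower Gronwall estimate tied to the linear growth of $\sigma, b$). Finally I would identify $A$ as the generator: applying It\^o's formula to $f \in C_c^\infty$ gives $T_t f(x) - f(x) = \int_0^t T_s(Af)(x)\,ds$, so dividing by $t$ and sending $t \to 0$ yields $Af$ in $C_0$-norm; since $C_c^\infty$ is dense and forms a core (invariant under the resolvent $R_\lambda = \int_0^\infty e^{-\lambda t} T_t\,dt$), Hille--Yosida then identifies $A$ as the generator of the full semigroup.

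The main obstacle is the vanishing-at-infinity property of $T_t$, which is not purely formal: one must exclude the diffusion from concentrating mass in a bounded region when started from far away. For the bounded-$b$ case this is trivial, but here $b$ and $\sigma$ both grow linearly, and the verification requires Gronwall-type lower control on $|X^x_t|$ in terms of $|x|$. This is exactly the estimate that would fail for merely continuous (non-Lipschitz) coefficients, and is the content worked out in \cite{ethier2009markov}.
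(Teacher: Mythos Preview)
The paper does not prove this theorem; it is quoted as a black-box result from \cite{ethier2009markov} and used only as a tool inside the proof of Proposition~\ref{prop:conv_markov_chain_to_sde}. There is therefore nothing in the paper to compare your argument against.

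Your sketch goes beyond the paper by outlining an SDE-based proof, and the outline is broadly correct --- it is one of the two standard routes, the other being the martingale-problem approach that Ethier--Kurtz actually take in their Chapter~8. One caveat worth flagging: the existence of a \emph{globally} Lipschitz square root $\sigma$ with $\sigma\sigma^\top = a$ when $a$ is $C^2$ positive semidefinite with bounded second derivatives is itself a nontrivial theorem (Phillips--Sarason, Freidlin; see also Stroock--Varadhan), not a direct consequence of ``at most quadratic growth of $a$'' as your phrasing suggests. Without that lemma your strong-solution step does not go through, so if you keep this route you should cite it explicitly rather than treat it as obvious.
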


We will next state a set of equivalent criterion for convergence of Feller processes. 

\begin{thm}
[Theorem 17.25, \cite{kallenberg2021foundations}]
\label{thm:conv_feller_process}
Let $X, X^1, X^2, X^3, \cdots$ be Feller processes in $S$ with semi-groups $(T_t), (T_{n,t})$ and generators $(A, \mathcal{D}), (A_n, \mathcal{D}_n)$, respectively, and fix a core $D$ for $A$. 
Then these conditions are equivalent: 
\begin{enumerate}[label=(\roman*)]
    \item for any $f \in D$, there exists some $f_n \in \mathcal{D}_n$ with $f_n \to f$ and $A_n f_n \to A f$, 
    \item $T_{n,t} \to T_t$ strongly for each $t > 0$, 
    \item $T_{n,t} f \to T_t f $ for every $f \in C_0$, uniformly for bounded $t>0$, 
    \item $X^n_0 \xrightarrow{d} X_0$ in $S \Rightarrow X^n \xrightarrow{d} X$ in the Skohorod topology of $D_{\bR_+, S}$. 
\end{enumerate}
\end{thm}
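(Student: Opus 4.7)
The statement being proved is a classical Feller-process convergence theorem (essentially the Trotter-Kurtz theorem combined with Skorohod tightness for Feller processes). My plan would be to prove the equivalences through the cycle (i) $\Leftrightarrow$ (ii) $\Leftrightarrow$ (iii) $\Rightarrow$ (iv) $\Rightarrow$ (i), passing through resolvents rather than semigroups directly, since the Hille--Yosida theorem makes the resolvent $R_\lambda = (\lambda - A)^{-1}$ the natural object: it is a bounded contraction on $C_0$ with $\|R_\lambda\| \leq 1/\lambda$, and it completely characterizes the Feller generator.

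The first step is (i) $\Leftrightarrow$ (ii). The forward direction is essentially the first Trotter--Kurtz approximation theorem. From the resolvent identity
\begin{equation*}
(\lambda - A_n)(R_{n,\lambda} - R_\lambda) f = (A - A_n) R_\lambda f,
\end{equation*}
contractivity of $R_{n,\lambda}$ and the hypothesis that $A_n f_n \to A f$ on a core $D$ of $A$, one obtains $R_{n,\lambda} f \to R_\lambda f$ strongly on a dense subset, then on all of $C_0$ by uniform boundedness. Exponentiating (or going through Yosida approximants $A_n^\lambda = \lambda A_n R_{n,\lambda}$, whose bounded semigroups converge to $T_t$) yields $T_{n,t} \to T_t$ strongly. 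The equivalence (ii) $\Leftrightarrow$ (iii) is a straightforward upgrade: the contraction property $\|T_{n,t}\| \leq 1$ together with the equicontinuity $\|T_t f - T_s f\|_\infty \leq \int_s^t \|T_u A f\|_\infty\,du \leq (t-s)\|Af\|_\infty$ for $f \in D$ gives local uniformity in $t$ by a standard $3\varepsilon$ argument.

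Next, (iii) $\Rightarrow$ (iv) is the step I expect to be the main technical obstacle. Convergence of finite-dimensional distributions follows from the Markov property: for $f_1,\ldots,f_k \in C_0$ and $0 < t_1 < \cdots < t_k$,
\begin{equation*}
\ee{f_1(X^n_{t_1}) \cdots f_k(X^n_{t_k})} = (T_{n,t_1} M_{f_1} T_{n,t_2-t_1} M_{f_2} \cdots T_{n,t_k-t_{k-1}} f_k)(X^n_0),
\end{equation*}
where $M_{f}$ is multiplication by $f$; hypothesis (iii) and initial convergence give convergence of this quantity. Tightness in the Skorohod topology is the harder half: one uses Aldous' criterion, or equivalently a compact-containment bound plus a modulus-of-continuity estimate obtained from the fact that $M^f_t = f(X^n_t) - f(X^n_0) - \int_0^t A_n f(X^n_s)\,ds$ is a martingale for $f \in \mathcal{D}_n$, together with Doob's inequality applied to $f_n \approx f$ from a dense core. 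Compact containment is provided by the Feller property via localization of $f$.

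Finally, (iv) $\Rightarrow$ (i) proceeds by picking $f \in D$, applying convergence in the Skorohod topology together with the continuous-mapping principle at a continuity point $t > 0$, and reading off
\begin{equation*}
A f(x) = \lim_{t \to 0} \frac{T_t f(x) - f(x)}{t} = \lim_{t \to 0} \lim_{n \to \infty} \frac{T_{n,t} f(x) - f(x)}{t},
\end{equation*}
which, with a diagonal argument and an appropriate choice $f_n \in \mathcal{D}_n$ approximating $f$ (using density of cores), yields $A_n f_n \to A f$. The main difficulty throughout is ensuring that the tightness step in (iii) $\Rightarrow$ (iv) is compatible with possibly unbounded generators, which is precisely where the Feller hypothesis $T_t C_0 \subset C_0$ is used to control escape to infinity.
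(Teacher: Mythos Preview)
The paper does not prove this theorem. It is stated in the appendix as background material, with the attribution ``Theorem 17.25, \cite{kallenberg1997foundations}'' serving in lieu of a proof; the paper then invokes it (together with the companion Theorem 17.28 and Stroock--Varadhan's Lemma 11.2.1) to establish \cref{prop:conv_markov_chain_to_sde}. So there is no in-paper argument to compare your proposal against.

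Your sketch is a reasonable outline of the standard Trotter--Kurtz proof as found in Kallenberg or Ethier--Kurtz. The cycle (i)$\Leftrightarrow$(ii)$\Leftrightarrow$(iii) via resolvents and equicontinuity is the canonical route, and the (iii)$\Rightarrow$(iv) step via finite-dimensional distributions plus Aldous-type tightness is exactly how Kallenberg does it. Your (iv)$\Rightarrow$(i) sketch is the weakest link: the double limit $\lim_{t\to 0}\lim_{n\to\infty}$ does not directly produce an $f_n\in\mathcal{D}_n$ with $A_nf_n\to Af$, and the ``diagonal argument'' you allude to needs more care. The cleaner closure is (iv)$\Rightarrow$(ii) by reading off $T_{n,t}f(x)=\ee{f(X^n_t)\mid X^n_0=x}\to T_tf(x)$ from Skorohod convergence at continuity times, then (ii)$\Rightarrow$(i) by the converse half of Trotter--Kurtz. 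But since the paper simply cites the result, none of this is at odds with anything in the text.
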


Once again, we note that it is common to choose the core $D = C^\infty_0$, and that checking condition (i) is sufficient for convergence in the Skorohod topology. 
This is translated to the Markov chain setting by the next theorem. 

\begin{thm}
[Theorem 17.28, \cite{kallenberg2021foundations}]
\label{thm:conv_markov_chain}
Let $Y^1, Y^2, Y^3, \cdots$ be discrete time Markov chains in $S$ with transition operators $U_1, U_2, U_3, \cdots$, and let $X$ be a Feller process with semi-group $(T_t)$ and generator $A$. Fix a core $D$ for $A$, and let $0 < h_n \to 0$. 
Then conditions $(i)-(iv)$ of \cref{thm:conv_feller_process} remain equivalent for the operators and processes 
\begin{equation}
    A_n = h_n^{-1} ( U_n - I ) \,, \quad 
    T_{n,t} = U_n^{\lfloor t / h_n \rfloor} \,, \quad 
    X^n_t = Y^n_{ \lfloor t / h_n \rfloor } \,. 
\end{equation}
\end{thm}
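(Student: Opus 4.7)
The plan is to reduce this theorem to \cref{thm:conv_feller_process} by associating to each discrete-time chain $Y^n$ a genuine continuous-time Feller process with the prescribed generator $A_n = h_n^{-1}(U_n - I)$, applying the continuous-time equivalence there, and then transferring the statements back to the rescaled discrete objects $T_{n,t}$ and $X^n_t$. The key observation is that $A_n$ is a bounded operator on $C_0$ (because $U_n$ is a contraction), it satisfies the positive maximum principle inherited from $U_n$ being a substochastic kernel, and it is dissipative, so it generates a norm-convergent Feller semi-group $\tilde T_{n,t} \defequal e^{t A_n}$. Concretely, $\tilde T_{n,t}$ is realized by the Poissonization $\tilde X^n_t \defequal Y^n_{N^n_t}$, where $N^n$ is an independent Poisson process on $\bR_+$ of rate $h_n^{-1}$. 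With this identification, \cref{thm:conv_feller_process} applied to $\tilde X^n$ versus $X$ directly yields the equivalence of condition (i) as stated with the continuous-time versions of (ii), (iii), and (iv).

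The remaining step is to transfer the continuous-time conclusions back to $T_{n,t} = U_n^{\lfloor t/h_n\rfloor}$ and $X^n_t = Y^n_{\lfloor t/h_n\rfloor}$. For the semi-groups I would expand
\begin{equation}
\tilde T_{n,t} f \;=\; e^{-t/h_n}\sum_{k\geq 0} \frac{(t/h_n)^k}{k!}\, U_n^k f
\end{equation}
and compare to $T_{n,t} f = U_n^{\lfloor t/h_n\rfloor} f$ using Poisson concentration: $N^n_t$ lies within $O(\sqrt{t/h_n})$ of $\lfloor t/h_n\rfloor$ with high probability. Condition (i) produces $f_n \to f$ with $A_n f_n$ bounded, so the one-step increment $\|U_n f_n - f_n\| = h_n \|A_n f_n\| = O(h_n)$; a telescoping argument with contractivity of $U_n$ therefore turns the Poisson fluctuation into $\|\tilde T_{n,t} f - T_{n,t} f\| \to 0$, uniformly on compact time intervals. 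This identifies (ii) here with its continuous counterpart for $\tilde T_{n,t}$, and the same bound yields (iii). For the processes, the Poissonization coupling gives $\tilde X^n_t = Y^n_{N^n_t}$ while $X^n_t = Y^n_{\lfloor t/h_n\rfloor}$, related by the random time change $t \mapsto h_n N^n_t$, which converges locally uniformly to the identity by the functional law of large numbers for the Poisson process. Since composition with a uniformly converging time change is continuous on the Skorohod space, $\tilde X^n \Rightarrow X$ if and only if $X^n \Rightarrow X$, establishing (iv).

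The main obstacle is the Poisson--to--discrete approximation in the middle step: the bound $\|\tilde T_{n,t} f - T_{n,t} f\| \to 0$ is not formal, because without some equicontinuity on the evaluation $U_n^k f$ for $k$ ranging over an interval of length $O(\sqrt{t/h_n})$, the Poisson fluctuations could destroy strong convergence. This is exactly where the core $D$ and condition (i) are indispensable: they provide $f_n \in \mathcal{D}_n$ with $\|U_n f_n - f_n\| = O(h_n)$, which controls how much $U_n^k f_n$ drifts over $O(\sqrt{t/h_n})$ steps. Once this uniform estimate is in place on a core, it extends to all of $C_0$ by density and contractivity, closing the equivalence chain (i) $\Leftrightarrow$ (ii) $\Leftrightarrow$ (iii) $\Leftrightarrow$ (iv) for the rescaled discrete operators and processes.
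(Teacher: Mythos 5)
The paper does not prove this statement; it is cited verbatim from Kallenberg's textbook. Your proposal is correct and is essentially the standard proof given there: Poissonize $Y^n$ via an independent rate-$h_n^{-1}$ Poisson clock so that $A_n=h_n^{-1}(U_n-I)$ genuinely generates a Feller semi-group $e^{tA_n}$, invoke \cref{thm:conv_feller_process}, and close the gap between $e^{tA_n}$ and $U_n^{\lfloor t/h_n\rfloor}$ (respectively between $Y^n_{N^n_t}$ and $Y^n_{\lfloor t/h_n\rfloor}$) via the Chernoff-type estimate $\|U_n^k f - e^{k(U_n-I)}f\|\leq \sqrt{k}\,\|(U_n-I)f\|=\sqrt{k}\,h_n\|A_nf\|$, which is exactly the ``Poisson concentration plus contractive telescoping'' argument you describe, made effective by the $O(h_n)$ one-step drift supplied by condition (i).
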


It remains to check that the generators $A_n$ converges to $A$ with respect to the core $D = C^\infty_0$, and we will use a criterion from \cite{stroock1997multidimensional}. 
Here we will first let $\Pi_n(x,dy)$ be the Markov transition kernel of $Y^n$, and define 
\begin{equation}
\begin{aligned}
    a^{ij}_n(x) 
    &= \frac{1}{h_n} \int_{|y-x| \leq 1} 
        (y_i - x_i) (y_j - x_j) \, \Pi_n(x, dy) \,, \\ 
    b^i_n(x) 
    &= \frac{1}{h_n} \int_{|y-x| \leq 1} 
        (y_i - x_i) \, \Pi_n(x, dy) \,, \\ 
    \Delta^\epsilon_n(x) 
    &= \frac{1}{h_n} \Pi_n( x, \mathbb{R}^d \setminus B(x,\epsilon) ) \,. 
\end{aligned}
\end{equation}

\begin{lem}
[Lemma 11.2.1, \cite{stroock1997multidimensional}]
\label{lm:conv_generator}
The following two conditions are equivalent: 
\begin{enumerate}[label=(\roman*)]
    \item For any $R>0, \epsilon>0$ we have that 
    \begin{equation}
        \lim_{n\to\infty} \sup_{|x| \leq R} 
        \| a_n(x) - a(x) \|_{op} + |b_n(x) - b(x)| 
        + \Delta^\epsilon_n(x) 
        = 0 \,,
    \end{equation} 
    \item For each $f \in C^\infty_0(\mathbb{R}^d)$, 
    we have that 
    \begin{equation}
        \frac{1}{h_n} A_n f \to A f \,, 
    \end{equation}
    uniformly on compact sets of $\mathbb{R}^d$, 
    where $A$ is defined as \cref{eq:generator_defn}. 
\end{enumerate}
\end{lem}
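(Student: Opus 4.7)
The plan is to prove the two implications separately; both reduce to a second-order Taylor expansion of $f(y)-f(x)$ combined with carefully chosen test functions that localize near the diagonal. The starting observation is that since the Markov kernel $\Pi_n(x,\cdot)$ is a probability measure, $A_n f(x)=h_n^{-1}\int(f(y)-f(x))\,\Pi_n(x,dy)$, so $A_n f$ depends only on the behaviour of $f$ near $x$ plus a crude tail contribution controlled by $\Delta^\epsilon_n$. In one direction I use this to read off $a_n, b_n, \Delta^\epsilon_n$ from $A_n$ applied to polynomial-like or flat-at-a-point test functions; in the other direction I re-assemble $A_n f$ from Taylor coefficients controlled by (i).

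For $(i)\Rightarrow(ii)$, fix $f\in C^\infty_0$ and $\epsilon\in(0,1)$, and split the integral at $|y-x|\leq\epsilon$, $\epsilon<|y-x|\leq 1$, and $|y-x|>1$. On $|y-x|\leq 1$ the Taylor expansion $f(y)-f(x)=(y-x)\cdot\nabla f(x)+\tfrac12(y-x)^T\nabla^2 f(x)(y-x)+R(x,y)$ reproduces $b^i_n(x)\partial_i f(x)+\tfrac12 a^{ij}_n(x)\partial_i\partial_j f(x)$ when integrated against $h_n^{-1}\Pi_n(x,dy)$, with $|R(x,y)|\leq C\|f\|_{C^3}\epsilon|y-x|^2$ on the innermost region. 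The cubic remainder is then bounded by $C\|f\|_{C^3}\epsilon\,\mathrm{tr}(a_n(x))$, while the annulus and tail contribute at most $2\|f\|_\infty\Delta^\epsilon_n(x)$. Using (i) to replace $a_n,b_n,\Delta^\epsilon_n$ by their limits gives $\limsup_n\sup_{|x|\leq R}|A_n f(x)-Af(x)|\leq C_f\,\epsilon\sup_{|x|\leq R}\mathrm{tr}(a(x))$, and $\epsilon\to 0$ concludes.

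For $(ii)\Rightarrow(i)$, fix $R>0$ and a cutoff $\chi\in C^\infty_c(\bR^d)$ equal to $1$ on $B(0,R+2)$. Applying (ii) to $g_i(y)\defequal\chi(y)y_i$ and $h_{ij}(y)\defequal\chi(y)y_iy_j$ (which agree with $y_i$ and $y_iy_j$ on the $1$-neighbourhood of $\{|x|\leq R\}$) yields $A_n g_i(x)=b^i_n(x)+O(\Delta^1_n(x))$ and an analogous formula for $A_n h_{ij}$ that extracts $a^{ij}_n$ after subtracting known combinations of $b_n$. To control $\Delta^\epsilon_n(x)$ itself, choose a nonnegative $\psi_\epsilon\in C^\infty_c(\bR^d)$ vanishing to second order at the origin and satisfying $\psi_\epsilon(z)\geq 1$ for $|z|\geq\epsilon$; then at each base point $x_0$, the function $y\mapsto\chi(y)\psi_\epsilon(y-x_0)$ lies in $C^\infty_0$, dominates the indicator of $\{|y-x_0|\geq\epsilon\}$ under $A_n$, and has $A$-image zero at $x_0$ by the flatness of $\psi_\epsilon$.

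The main obstacle I anticipate is the last step, upgrading the pointwise convergence $\Delta^\epsilon_n(x_0)\to 0$ to uniform convergence on $\{|x|\leq R\}$: condition (ii) supplies uniformity only for a \emph{fixed} test function, while the natural bound uses the $x$-dependent family $\chi(\cdot)\psi_\epsilon(\cdot-x)$. I would resolve this by a compactness/equicontinuity argument, noting that $\{\chi(\cdot)\psi_\epsilon(\cdot-x):x\in K\}$ is pre-compact in $C^\infty_0$ in a sufficiently strong topology (translation is continuous on smooth compactly supported functions), so that uniform convergence of $A_n$ on a finite $\delta$-net of base points combined with uniform bounds on $a_n$ and $b_n$ (established in the previous step) interpolates to uniform convergence over the whole family.
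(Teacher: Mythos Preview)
The paper does not prove this lemma; it is simply quoted from Stroock--Varadhan. So there is no ``paper's proof'' to compare against, and I will assess your argument on its own terms.

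Your $(i)\Rightarrow(ii)$ direction is fine. The three-region split and Taylor expansion correctly reproduce $Af$ from $A_nf$ with error terms $C_f\,\epsilon\,\mathrm{tr}(a_n(x))$ and $C_f\,\Delta^\epsilon_n(x)$; the only slip is cosmetic (the annulus contribution to the remainder is bounded by $\|f\|_{C^3}$, not $\|f\|_\infty$), and the conclusion goes through.

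The $(ii)\Rightarrow(i)$ direction has a real circularity that your compactness sketch does not break. Two concrete problems. First, your $\psi_\epsilon$ is declared to lie in $C^\infty_c$ yet satisfy $\psi_\epsilon(z)\geq 1$ for all $|z|\geq\epsilon$; these are incompatible. If instead $\psi_\epsilon$ is merely smooth and bounded, then $\chi(\cdot)\psi_\epsilon(\cdot-x_0)$ dominates $\mathbf{1}_{\{|y-x_0|\geq\epsilon\}}$ only on $\{\chi=1\}$, so you recover at best $\Delta^\epsilon_n(x_0)-\Delta^{M}_n(x_0)$ for some large $M$, not $\Delta^\epsilon_n(x_0)$ itself. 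Second, and more seriously: to interpolate from a finite net $\{x_j\}$ to all of $K$ you need $|A_n(f_x-f_{x_j})(x)|$ small uniformly in $n$, and the only handle on this is the Taylor bound $|A_ng(x)|\leq |b_n(x)||\nabla g(x)|+\|a_n(x)\||\nabla^2 g(x)|+\cdots$, which requires uniform-in-$n$ bounds on $a_n,b_n$ over $K$. But your extraction of $a_n,b_n$ from $A_ng_i,A_nh_{ij}$ carries an $O(\Delta^1_n(x))$ error, so boundedness of $a_n,b_n$ already presupposes $\Delta^1_n\to 0$ uniformly --- exactly what the $\psi_\epsilon$ step is meant to deliver. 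The loop does not close.

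The standard fix is to first control the \emph{far} tail uniformly with a single, $x$-independent test function: take $f\in C^\infty_c$ with $f\equiv 1$ on $B(0,R)$ and $f\equiv 0$ outside $B(0,R+1)$; then for $|x|\leq R$ one has $-A_nf(x)\geq \Delta^{2R+1}_n(x)$ and $Af(x)=0$, so $(ii)$ gives $\sup_{|x|\leq R}\Delta^{2R+1}_n(x)\to 0$ directly. With the far tail gone, you can redefine the truncated moments over $|y-x|\leq 2R+1$ instead of $|y-x|\leq 1$, extract uniform boundedness of (the modified) $a_n,b_n$ from fixed test functions, and only then run your translation-compactness argument for the near tail $\epsilon\leq|y-x|\leq 2R+1$. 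Without this bootstrap step your sketch is incomplete.
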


Finally, we summarize the above results in a user friendly form for our applications. 

\begin{prop}
[Convergence of Markov Chains to SDE]
\label{prop:conv_markov_chain_to_sde}
Let $Y^n$ be a discrete time Markov chain on $\mathbb{R}^N$ defined by the following update 
for $p,\delta > 0$
\begin{equation}
    Y^n_{\ell+1} 
    = Y^n_\ell + \frac{\widehat b_n(Y^n_\ell, \omega^n_\ell )}{n^{2p}} 
    + \frac{\sigma_n(Y^n_\ell)}{n^p} \xi^n_\ell 
    + O(n^{-2p-\delta}) \,, 
\end{equation}
where $\xi^n_\ell \in \mathbb{R}^N$ are iid random variables with zero mean, identity covariance, and moments uniformly bounded in $n$. 
Furthermore, $\omega^n_\ell$ are also iid random variables such that 
$\mathbb{E}[ \widehat b_n(Y^n_\ell, \omega^n_\ell) | Y^n_\ell = y ] = b_n(y)$ 
and $\widehat b_n(y, \omega^n_\ell)$ has uniformly bounded moments in $n$. 
Finally, $\sigma_n$ is a deterministic function, and the remainder terms in $O(n^{-2p-\delta})$ have uniformly bounded moments in $n$. 

Suppose $b_n, \sigma_n$ are uniformly Lipschitz functions in $n$ and converges to $b, \sigma$ uniformly on compact sets, 
then in the limit as $n\to\infty$, the process $X^n_t = Y^n_{\lfloor t n^{2p} \rfloor}$ converges in distribution to the solution of the following SDE in the Skorohod topology of $D_{\bR_+, \bR^N}$ 
\begin{equation}
    d X_t = b(X_t) \, dt + \sigma(X_t) \, dB_t \,, 
    \quad 
    X_0 = \lim_{n\to\infty} Y_0^n \,. 
\end{equation}

Suppose otherwise $b_n, \sigma_n$ are only locally Lipschitz (but still uniform in $n$), then $X^n$ converges locally to $X$ in the same topology (see \cref{def:local_convergence}). 
More precisely, for any fixed $r > 0$, we consider the stopping times 
\begin{equation}
    \tau^n \defequal \inf \left\{ t \geq 0 : |X^n_t| \geq r \right\} 
    \,, \quad 
    \tau \defequal \inf \left\{ t \geq 0 : |X_t| \geq r
    \right\} \,, 
\end{equation}
then the stopped process $X^n_{t \wedge \tau^n}$ converges in distribution to the stopped solution $X_{t \wedge \tau}$ of the above SDE in the same topology. 
\end{prop}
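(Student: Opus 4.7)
The plan is to reduce the statement to the abstract machinery developed earlier in this appendix. I would set the mesh $h_n \defequal n^{-2p}$, so that $X^n_t = Y^n_{\lfloor t/h_n\rfloor}$ matches the rescaling hypothesis of \cref{thm:conv_markov_chain}. By that theorem, it suffices to show that the rescaled discrete generators $A_n = h_n^{-1}(U_n - I)$ converge on a core of the limit generator $A$, and I take this core to be $C_0^\infty(\mathbb{R}^N)$. \cref{lm:conv_generator} then further reduces this to uniform convergence, on compact subsets of $\mathbb{R}^N$, of the truncated moments $a_n \to \sigma\sigma^\top$ and $b_n \to b$, together with negligibility of large jumps $\Delta^\epsilon_n \to 0$.

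For the moment computations, using $\mathbb{E}[\widehat b_n(y,\omega^n_\ell)] = b_n(y)$, $\mathbb{E}[\xi^n_\ell]=0$, $\cov(\xi^n_\ell)=I$, and independence of the two noise sources, the conditional mean and covariance of the increment $\Delta Y^n_\ell \defequal Y^n_{\ell+1} - Y^n_\ell$ given $Y^n_\ell = y$ are
\begin{align*}
\mathbb{E}[\Delta Y^n_\ell \mid Y^n_\ell = y] &= h_n\, b_n(y) + O(n^{-2p-\delta}), \\
\cov(\Delta Y^n_\ell \mid Y^n_\ell = y) &= h_n\, \sigma_n(y)\sigma_n(y)^\top + O(n^{-4p}),
\end{align*}
where the covariance remainder absorbs $\cov(\widehat b_n)$, the cross terms, and the $O(n^{-2p-\delta})$ contribution, all controlled by the uniform bounded-moment hypotheses. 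The truncations $\mathbf{1}_{|y'-y|\leq 1}$ in $a_n, b_n$ and the tail quantity $\Delta^\epsilon_n$ are all dominated by Markov's inequality at sufficiently high moment order $k$: since $|\Delta Y^n_\ell|$ has size $O(n^{-p})$ in any $L^k$ norm uniformly on $|y|\leq R$, we get $\mathbb{P}(|\Delta Y^n_\ell|>\epsilon \mid Y^n_\ell=y) \leq C(\epsilon,k,R)\, n^{-kp}$, hence $\Delta^\epsilon_n(y) \leq C n^{2p - kp} \to 0$ once $k>2$, and the truncation corrections to the first and second moments are of the same order and absorbed into an $o(1)$ error. Combined with the hypothesized uniform convergence $b_n \to b$ and $\sigma_n \to \sigma$ on compacts, this gives the conditions of \cref{lm:conv_generator}.

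It remains to confirm that the limit operator $Af = \tfrac12 \sum (\sigma\sigma^\top)^{ij}\partial_i\partial_j f + \sum b^i\partial_i f$ on $C_0^\infty$ generates a Feller semigroup. Under the globally Lipschitz hypothesis this is standard: mollify $b,\sigma$ to smooth $b_\epsilon,\sigma_\epsilon$ with $\sigma_\epsilon\sigma_\epsilon^\top \in C^2$ and bounded second derivatives, apply \cref{thm:feller_generator} to each mollified generator, and pass $\epsilon\to 0$ using pathwise stability estimates for Lipschitz SDEs, which ensure the mollified Feller semigroups converge strongly to that of $A$. Invoking \cref{thm:conv_feller_process}(iv) then yields the claimed Skorohod convergence. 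For the locally Lipschitz case, fix $r>0$ and extend the coefficients from $\overline{B(0,r+1)}$ to globally Lipschitz $\tilde b,\tilde\sigma$ via radial cutoff. The globally Lipschitz case applies to the associated processes $\tilde X^n \to \tilde X$. Since the stopped processes satisfy $X^n_{\cdot\wedge\tau^n} = \tilde X^n_{\cdot\wedge\tilde\tau^n}$ and likewise for the limit, local convergence follows by the continuous-mapping theorem applied to the map $(\text{path}) \mapsto (\text{path stopped at first exit of } B(0,r))$.

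The main obstacle I anticipate is verifying the Feller property under only Lipschitz (rather than $C^2$) coefficients, because \cref{thm:feller_generator} as stated requires bounded second derivatives of the diffusion matrix; the mollification route above is the standard workaround but needs care in matching generators on the core $C_0^\infty$ and in transferring uniform convergence through the smoothing. A secondary subtlety in the locally Lipschitz reduction is almost-sure continuity of the exit-time functional at the limit path, which requires the limit SDE not to charge the event of touching $\partial B(0,r)$ without crossing; this can be arranged by choosing $r$ outside an at-most-countable exceptional set, a standard but worth-flagging detail.
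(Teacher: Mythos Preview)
Your proposal is correct and follows essentially the same route as the paper: set $h_n = n^{-2p}$, reduce via \cref{thm:conv_markov_chain} and \cref{thm:conv_feller_process} to generator convergence on the core $C_0^\infty$, then use \cref{lm:conv_generator} to reduce to checking $a_n \to a$, $b_n \to b$, and $\Delta^\epsilon_n \to 0$ uniformly on compacts, which you verify by the same moment computations and Markov inequality argument. The only real differences are that you are more explicit than the paper about two points the paper handles tersely: the paper invokes \cref{thm:feller_generator} directly for Lipschitz coefficients (glossing over the $C^2$ hypothesis on $a$) where you propose a mollification, and the paper handles the locally Lipschitz case by simply declaring the state space to be $S = B(0,r)$ rather than via your cutoff-extension plus continuous-mapping argument; your flagged subtlety about regularity of the exit-time map is real but not addressed in the paper either.
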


\begin{proof}

We will essentially check the criterion of \cref{thm:conv_markov_chain} directly for the metric space $S = \bR^N$ if $b,\sigma$ is globally Lipschitz, and $S = B(0,r)$ otherwise. 
In both of these cases, $b, \sigma$ are Lipschitz on $S$, therefore the limiting process (either $X_t$ or $X_{t\wedge \tau}$) is Feller in $S$ by \cref{thm:feller_generator}. 

In the equivalent criteria of \cref{thm:conv_feller_process}, we will use the implication of (i) $\Rightarrow$ (iv) to get convergence of $X^n$ to $X$ in the Skorohod topology of $D_{\bR_+, S}$. 
More precisely, it is sufficient to choose $h_n = \frac{1}{n^{2p}}$ as the natural time scale, and check $\frac{1}{h_n} A_n f \to A f$ for any $f \in C^\infty_0$. 
Given \cref{lm:conv_generator}, it is sufficient to check the convergence of the coefficients and $\Delta^\epsilon_n$. 

We start with $\Delta^\epsilon_n(x)$. Given that the randomness in the Markov chain have bounded moments (uniform in $n$), then by a Markov inequality we have that for any $q>0$ 
\begin{equation}
    \Pi_n(x, \mathbb{R}^d \setminus B(x,\epsilon))
    = \mathbb{P} \left[ \left| 
    \frac{\widehat b(x, \omega^n_\ell)}{n^{2p}} + \frac{\sigma}{n^p} \xi^n_\ell + O(n^{-2p-\delta})
    \right|^{2q} \geq \epsilon^{2q} 
    \right]
    \leq O\left( \epsilon^{-2q} n^{-2pq} \right) \,, 
\end{equation}
therefore choosing $q>1$ we have $\sup_{|x|\leq R} \Delta^\epsilon_n(x) = O(n^{-2p(q-1)}) \to 0$ for any fixed $\epsilon$. 

We can rewrite $b_n(x)$ as 
\begin{equation}
    b_n(x) 
    = n^{2p} \mathbb{E} [ Y^n_{\ell+1} - Y^n_\ell 
        | Y^n_\ell = x ] 
    + O( n^{-\delta} ) 
    \to b(x) \,, 
\end{equation}
since $\xi_\ell^n$ has zero mean and the remainder terms have bounded moments (uniform in $n$), which also gives the desired convergence of $\sup_{x\leq |R|} | b_n(x) - b(x) | \to 0$. 

Similarly we can rewrite $a_n(x)$ as 
\begin{equation}
    a_n(x) 
    = n^{2p} \mathbb{E} [ (Y^n_{\ell+1} - Y^n_\ell) (Y^n_{\ell+1} - Y^n_\ell)^\top | Y^n_\ell = x ] 
    + O(n^{-2\delta} + n^{-2p}) 
    \to \sigma(x) \sigma(x)^\top \,, 
\end{equation}
where we note the drift's randomness contributes the higher order $n^{-2p}$ term and therefore also vanishes in the limit. 
This implies $\sup_{x \leq |R|} \|a_n(x) - a(x)\|_{op}  \to 0$, which gives us the desired result. 

\end{proof}

\section{Unshaped ReLU Markov Chain}
\label{sec:app_unshaped_relu_markov_chain}

In this section, we will derive the Markov chain update \cref{eq:corr_update} with explicit coefficients. 
For the rest of this section, we will adopt the following notation. 
Let $\varphi(x) \defequal \max(x,0)$ be the ReLU activation function. 
Let $f(x) = \frac{1}{\sqrt{2\pi}} e^{-x^2/2}$ 
be the density of a standard Gaussian, 
and let $F(x) = \int_{-\infty}^x f(t) \, dt$ be the cumulative distribution function (CDF). 

\begin{lem}[Gaussian Integration-by-Parts with Indicator Function]
\label{lm:gip_ind}
For $g\sim \cN(0,1)$ and $h$ is weakly differentiable, we have that 
\begin{equation}
	\mathbb{E} \, g \mathds{1}_{\{ g > -a\}} h(g) 
	= h(-a) f(a) + \mathbb{E} \, \mathds{1}_{\{g>-a\}} h'(g) \,, 
\end{equation}
where $f$ is the standard Gaussian density. 
\end{lem}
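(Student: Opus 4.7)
The plan is to reduce the identity to a standard integration-by-parts, exploiting the fact that the standard Gaussian density $f$ satisfies the ODE $f'(x) = -x f(x)$. Writing out the expectation as an integral,
\begin{equation*}
    \mathbb{E}\bigl[g \mathds{1}_{\{g>-a\}} h(g)\bigr] = \int_{-a}^\infty x\, h(x)\, f(x)\, dx = -\int_{-a}^\infty h(x)\, f'(x)\, dx,
\end{equation*}
so the statement reduces to an integration by parts of $\int_{-a}^\infty h(x) f'(x)\, dx$.

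The next step is to apply integration by parts (valid for weakly differentiable $h$ by an approximation/mollification argument, since $h$ and $h'$ are locally integrable and $f \in C^\infty$),
\begin{equation*}
    -\int_{-a}^\infty h(x) f'(x)\, dx = -\bigl[h(x) f(x)\bigr]_{-a}^\infty + \int_{-a}^\infty h'(x) f(x)\, dx.
\end{equation*}
Assuming the implicit integrability condition $\mathbb{E}|h(g)| < \infty$ and $\mathbb{E}|h'(g)| < \infty$ (needed in any case for the two sides of the statement to be finite), one can arrange for the boundary term at $+\infty$ to vanish; concretely $h(x) f(x) \to 0$ as $x \to \infty$ follows from the Gaussian tail dominating the growth of $h$ allowed by integrability (or via a truncation argument choosing a diverging subsequence along which $h(x_k) f(x_k) \to 0$). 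The boundary term at $x = -a$ contributes $h(-a) f(-a) = h(-a) f(a)$ by evenness of the Gaussian density $f(-a) = f(a)$. Combining these yields precisely
\begin{equation*}
    h(-a) f(a) + \int_{-a}^\infty h'(x) f(x)\, dx = h(-a) f(a) + \mathbb{E}\bigl[\mathds{1}_{\{g>-a\}} h'(g)\bigr],
\end{equation*}
as claimed.

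The main subtlety is justifying integration by parts under only \emph{weak} differentiability together with vanishing of the boundary term at infinity. The standard approach is to mollify $h$ by $h_\varepsilon = h \ast \eta_\varepsilon$, apply classical integration by parts (where vanishing at $+\infty$ is easy via truncation combined with the Gaussian tail), and then pass to the limit $\varepsilon \to 0$ using that $h_\varepsilon \to h$ and $h_\varepsilon' \to h'$ in $L^1_{\mathrm{loc}}$, dominated by the Gaussian weight on both sides. This is the only nontrivial point; everything else is a direct two-line calculation once the boundary evaluation is in hand.
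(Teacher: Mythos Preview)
Your proof is correct and follows essentially the same approach as the paper: both write the expectation as an integral, use $f'(x) = -x f(x)$, integrate by parts, and invoke $f(-a) = f(a)$ to conclude. Your version is slightly more careful about the boundary term at infinity and the weak-differentiability assumption (via mollification), whereas the paper simply applies integration by parts directly without further comment.
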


\begin{proof}

We start by writing the expectation as an integral 
\begin{equation}
	\mathbb{E} \, g \mathds{1}_{\{ g > -a\}} h(g) 
	= \int_{-a}^\infty x h(x) f(x) \, dx \,. 
\end{equation}

Here by observing that $f'(x) = - x f(x)$, 
we can use integration by parts for 
$u = h(x), dv = x f(x) \, dx$ to get $du = h'(x) \, dx, v = - f(x)$, 
and therefore 
\begin{equation}
	\int_{-a}^\infty x h(x) f(x) \, dx
	= \left[ -h(x) f(x) \right]_{-a}^\infty 
		+ \int_{-a}^\infty h'(x) f(x) \, dx 
	= h(-a) f(-a) + \mathbb{E} \, \mathds{1}_{\{g>-a\}} h'(g) \,. 
\end{equation}

Finally we recover the desired result using symmetry of $f(-a)=f(a)$. 

\end{proof}

We will note the special case of $a=0$ to get 
\begin{equation}
    \mathbb{E} \, g \mathds{1}_{\{g > 0\}} h(g)
    = \frac{h(0)}{\sqrt{2\pi}} + \mathbb{E} \, \mathds{1}_{\{g>0\}} h'(g) \,. 
\end{equation}

\begin{lem}[Gaussian Density Substitution]
\label{lm:gaussian_density_sub}
Let $g \sim \cN(0,1), \rho \in [0,1], q = \sqrt{1 - \rho^2}$, then we have that 
\begin{equation}
	\mathbb{E} \, h(g) f\left( \frac{\rho g + a}{q} \right) 
	= q f(a) \mathbb{E} \, h( qg - \rho a ) \,. 
\end{equation}
\end{lem}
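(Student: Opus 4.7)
The plan is a direct density manipulation: write out both Gaussian densities, combine exponents, and complete the square in the integration variable.

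First I would substitute the definition $f(u) = \tfrac{1}{\sqrt{2\pi}} e^{-u^2/2}$ into the integrand and combine the exponents of $f(g) f\!\left(\tfrac{\rho g + a}{q}\right)$. Using the identity $\rho^2 + q^2 = 1$, the exponent simplifies cleanly:
\begin{equation*}
-\frac{g^2}{2} - \frac{(\rho g + a)^2}{2 q^2}
= -\frac{q^2 g^2 + (\rho g + a)^2}{2 q^2}
= -\frac{g^2 + 2\rho a g + a^2}{2 q^2}.
\end{equation*}
The second key step is completing the square in $g$: writing $g^2 + 2\rho a g + a^2 = (g + \rho a)^2 + a^2(1 - \rho^2) = (g + \rho a)^2 + a^2 q^2$. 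This separates the $a$-dependence from the $g$-dependence and produces a factor of $e^{-a^2/2} = \sqrt{2\pi}\, f(a)$, leaving the remaining $g$-dependent factor as $\tfrac{1}{\sqrt{2\pi}} \exp\!\bigl(-(g+\rho a)^2/(2q^2)\bigr)$, which is exactly $q$ times the $\mathcal{N}(-\rho a, q^2)$ density evaluated at $g$.

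Finally I would write the expectation as the integral against the standard Gaussian density of $g$, plug in the factorization above, and perform the linear change of variables $y = (g + \rho a)/q$, i.e.\ $g = q y - \rho a$, $dg = q\, dy$. The Jacobian produces one factor of $q$, the density prefactor gives another $\tfrac{1}{\sqrt{2\pi}}$ times $e^{-y^2/2}$, and re-interpreting the integral as an expectation over a standard Gaussian $g$ yields
\begin{equation*}
\mathbb{E}\, h(g)\, f\!\left( \frac{\rho g + a}{q} \right)
= q\, f(a)\, \mathbb{E}\, h(q g - \rho a),
\end{equation*}
as claimed.

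There is no real obstacle here — the identity is purely algebraic once one recognizes that the product of two Gaussian densities is a Gaussian density times a constant. The only thing to watch is the bookkeeping of the $q$ factors (one coming from the Jacobian, one absorbed into the normalization of the shifted Gaussian) and making sure $q > 0$, which holds since $\rho \in [0,1]$ (the boundary $\rho = 1$ gives $q = 0$ and both sides vanish, so one could restrict to $\rho \in [0,1)$ or interpret by continuity). No integrability subtlety arises provided $h$ is measurable and $\mathbb{E}|h(qg - \rho a)| < \infty$, which is the standing assumption whenever the right-hand side is written.
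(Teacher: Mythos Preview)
Your proposal is correct and follows essentially the same approach as the paper: both proofs combine the two Gaussian exponents, complete the square to factor out $f(a)$, and then apply the linear change of variables $y=(g+\rho a)/q$ to recognize the remaining integral as an expectation against a standard Gaussian. Your remarks on the $\rho=1$ boundary and integrability are not in the paper's proof but are harmless additions.
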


\begin{proof}

We will again write the expectation as an integral 
\begin{equation}
	\mathbb{E} \, h(g) f\left( \frac{\rho g + a}{q} \right) 
	= 
	\int h(x) f\left( \frac{\rho x + a}{q} \right) f(x) \, dx \,. 
\end{equation}

Here observe that 
\begin{equation}
	f\left( \frac{\rho x + a}{q} \right) f(x) 
	= \frac{1}{2\pi} \exp\left[ - \frac{(\rho x+a)^2}{ 2 q^2 } - \frac{x^2}{2} \right] 
	= \frac{1}{2\pi} \exp\left[ - \frac{ \rho^2 x^2 + a^2 + 2 a \rho x + q^2 x^2 }{2 q^2} \right] \,, 
\end{equation}
at this point, we can complete the square to write 
\begin{equation}
	\rho^2 x^2 + a^2 + 2 a \rho x + q^2 x^2 
	= (x+a\rho)^2 - a^2 \rho^2 + a 
	= (x+a\rho)^2 - a^2 q^2 \,. 
\end{equation}

This implies that we have 
\begin{equation}
	f\left( \frac{\rho x + a}{q} \right) f(x) 
	= \frac{1}{2\pi} \exp\left[ - \frac{ (x+a\rho)^2 }{2q^2} - \frac{a^2}{2} \right] 
	= f\left( \frac{x+a\rho}{q} \right) f(a) \,. 
\end{equation}

Finally, we can use the substitution 
$y = \frac{x+a\rho}{q}, dy = \frac{1}{q} dx$ to get 
\begin{equation}
	\int h(x) f\left( \frac{\rho x + a}{q} \right) f(x) \, dx 
	= 
		\int h(qy - \rho a) f(y) f(a) q \, dy 
	= 
		q f(a) \mathbb{E} \, h(qg - \rho a) \,, 
\end{equation}
which is the desired result. 

\end{proof}

We will start by calculating simpler quantities. 
\begin{lem}
[Moments]
Let $g \sim \mathcal{N}(0,1)$, then 
\begin{equation}
\begin{aligned}
    \mathbb{E} \, \varphi(g) 
    = \frac{1}{\sqrt{2\pi}} \,, \quad 
    \mathbb{E} \, \varphi(g)^2 
    = \frac{1}{2} \,, \quad 
    \mathbb{E} \, \varphi(g)^4 
    = \frac{3}{2} \,. 
\end{aligned}
\end{equation}
\end{lem}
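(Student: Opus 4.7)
\medskip

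\noindent\textbf{Proof Proposal.} All three identities are standard half-Gaussian moment calculations, and each can be obtained in one line by either symmetry or by applying \cref{lm:gip_ind}. The plan is to handle the first moment directly from the Gaussian integration-by-parts lemma, and then to exploit the fact that $\varphi(g)^k = g^k \mathds{1}_{\{g > 0\}}$ for $k \geq 1$ together with the symmetry $g \dequal -g$ to reduce the even-power integrals to half of the corresponding standard Gaussian moments.

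\medskip

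For the first moment, I would apply \cref{lm:gip_ind} with $a = 0$ and $h \equiv 1$, so that $h'\equiv 0$ and the boundary term gives
\begin{equation*}
\mathbb{E}\,\varphi(g) = \mathbb{E}\,g\mathds{1}_{\{g>0\}} = h(0) f(0) = \frac{1}{\sqrt{2\pi}}.
\end{equation*}
For the second moment, since $\varphi(g)^2 = g^2 \mathds{1}_{\{g>0\}}$ and $g^2$ is symmetric in $g$,
\begin{equation*}
\mathbb{E}\,\varphi(g)^2 = \mathbb{E}\,g^2 \mathds{1}_{\{g>0\}} = \tfrac{1}{2}\,\mathbb{E}\,g^2 = \tfrac{1}{2}.
\end{equation*}
Equivalently, one may apply \cref{lm:gip_ind} with $a=0$ and $h(x)=x$, obtaining $\mathbb{E}\,g \mathds{1}_{\{g>0\}}\cdot g = 0 + \mathbb{E}\,\mathds{1}_{\{g>0\}} = \tfrac{1}{2}$. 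For the fourth moment, the same symmetry argument yields
\begin{equation*}
\mathbb{E}\,\varphi(g)^4 = \mathbb{E}\,g^4 \mathds{1}_{\{g>0\}} = \tfrac{1}{2}\,\mathbb{E}\,g^4 = \tfrac{3}{2},
\end{equation*}
using the standard fact that $\mathbb{E}\,g^4 = 3$. (Alternatively, take $h(x) = x^3$ in \cref{lm:gip_ind}, so $\mathbb{E}\,g\mathds{1}_{\{g>0\}} g^3 = 0 + 3\,\mathbb{E}\,\mathds{1}_{\{g>0\}}\,g^2 = \tfrac{3}{2}$.)

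\medskip

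\noindent\textbf{Expected obstacle.} There is essentially no obstacle here: the only thing to verify is that the weak differentiability hypothesis of \cref{lm:gip_ind} is satisfied by $h(x) = 1, x, x^3$, which is trivial. The purpose of the lemma is to set up the more substantial computations that follow (namely evaluating $K_1(\rho)$ and the $\rho$-dependent integrals appearing in $\nu, \mu, \sigma$), where cancellation between the boundary term and the indicator-weighted expectation becomes nontrivial.
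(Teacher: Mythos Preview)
Your proposal is correct and matches the paper's approach essentially line for line: the paper uses \cref{lm:gip_ind} with $h\equiv 1$ for the first moment and the symmetry of $g^2$ (and $g^4$) for the even moments. Your additional alternative derivations of the even moments via \cref{lm:gip_ind} are fine but not needed.
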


\begin{proof}

For the second and fourth moments, we simply observe that $g^2$ is symmetric and $\varphi$ is exactly half of of the integral. 
For the first integral we will use Gaussian integration-by-parts with $h(g) = 1$ to get 
\begin{equation}
    \mathbb{E} \, \varphi(g) 
    = \mathbb{E} \, g \mathds{1}_{\{g > 0\}} 
    = \frac{1}{\sqrt{2\pi}} \,, 
\end{equation}
which is the desire result. 

\end{proof}

We will also recall the following result from 
\cite{cho2009kernel}

\begin{lem}
[$\bar J_0, \bar J_1, \bar J_2$]
Let $\rho \in [0,1], q = \sqrt{1-\rho^2}$ and let 
$\rho,w \sim \mathcal{N}(0,1)$ be independent. 
Then we have that 
\begin{equation}
\begin{aligned}
    \bar J_0(\rho) 
    &= \mathbb{E} \, \mathds{1}_{\{g>0\}} 
    \mathds{1}_{\{\rho g + qw > 0\}} 
    = \frac{\arccos(-\rho)}{2\pi} \,, \\
    \bar J_1(\rho) 
    &= \mathbb{E} \, \varphi(g) \varphi(\rho g+qw) 
    = \frac{ q + \rho \arccos(-\rho) }{ 2\pi } \,, \\ 
    \bar J_2(\rho) 
    &= \mathbb{E} \, \varphi(g)^2 \varphi(\rho g+qw)^2 
    = \frac{3 \rho q + \arccos(-\rho) (1 + 2 \rho^2 ) }{ 2 \pi } \,. 
\end{aligned}
\end{equation}
\end{lem}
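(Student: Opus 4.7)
The three identities are classical Gaussian orthant and ReLU-product integrals. My plan is to exploit the fact that $(g, \rho g + q w)$ is a centered bivariate Gaussian with unit marginal variances (since $\rho^2 + q^2 = 1$) and correlation $\rho$, and then derive $\bar J_1$ and $\bar J_2$ from $\bar J_0$ by differentiating in $\rho$.

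For $\bar J_0$, I would compute the orthant probability directly. Diagonalizing the covariance of $(g, \rho g + qw)$ and passing to polar coordinates reduces the two-dimensional Gaussian integral over the region $\{g > 0, \rho g + q w > 0\}$ to the angular mass of a wedge with opening $\arccos(-\rho)$, yielding $\bar J_0(\rho) = \arccos(-\rho)/(2\pi)$.

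Next, I would invoke Price's theorem: for jointly Gaussian $(X, Y)$ with unit variances and correlation $\rho$, $\frac{d}{d\rho} \mathbb{E}[F(X) G(Y)] = \mathbb{E}[F'(X) G'(Y)]$. Using $\vp'(x) = \mathds{1}_{x > 0}$ and $(\vp^2)'(x) = 2 \vp(x)$ (in the weak sense), this produces the ODEs $\bar J_1'(\rho) = \bar J_0(\rho)$ and $\bar J_2'(\rho) = 4 \bar J_1(\rho)$. I would then integrate them starting from $\bar J_1(0) = (\mathbb{E} \vp(g))^2 = 1/(2\pi)$ and $\bar J_2(0) = (\mathbb{E} \vp(g)^2)^2 = 1/4$; a single integration by parts applied to each integrand (using $\int \arccos(-s) \, ds = s \arccos(-s) + \sqrt{1 - s^2}$, and then repeating for the $\rho \arccos(-\rho)$ term in the $\bar J_2$ integral) delivers the closed forms stated in the lemma.

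The main technical hurdle is justifying Price's theorem for the non-smooth $\vp$. I plan to handle this by mollifying $\vp$ to a smooth $\vp_\epsilon$, applying Price in the smooth setting, and passing to the limit via dominated convergence, using uniform polynomial bounds on $\vp_\epsilon$ and $\vp'_\epsilon$ to control the Gaussian integrals. A more hands-on alternative, staying within the paper's existing toolkit, is to condition on $w$ and apply \cref{lm:gip_ind} (Gaussian IBP with indicator) to the resulting $g$-integral; this gives each identity directly, at the cost of longer algebra but no approximation step.
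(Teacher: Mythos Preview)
Your proposal is correct, and in fact goes beyond what the paper does: the paper does not prove this lemma at all but simply recalls it from \cite{cho2009kernel}. So any sound derivation you supply is more than the paper offers here.

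Your Price-theorem route is efficient and the ODEs $\bar J_1'=\bar J_0$, $\bar J_2'=4\bar J_1$ with the stated initial values at $\rho=0$ do integrate to the claimed formulas (the check goes through exactly as you sketch). The mollification argument to justify Price for the non-smooth $\vp$ is standard and poses no real difficulty. Your alternative---conditioning on one variable and repeatedly applying \cref{lm:gip_ind}---is precisely the technique the paper uses in the \emph{next} lemma to compute $\bar J_{3,1}$, so that route would be stylistically more consistent with the surrounding text; but the Price approach is shorter, avoids the case-by-case algebra, and makes the recursive structure $\bar J_k' = k^2 \bar J_{k-1}$ transparent. Either is fine.
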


We will need to compute the following quantity. 
\begin{lem}
[$\bar J_{3,1}$]
Let $\rho \in [0,1], q = \sqrt{1-\rho^2}$ and let 
$\rho,w \sim \mathcal{N}(0,1)$ be independent. 
Then we have that 
\begin{equation}
    \bar J_{3,1}(\rho) 
    = \mathbb{E} \, \varphi(g)^3 \varphi(\rho g+qw) 
    = \frac{q ( 2 + \rho^2) 
			+ 3 \arccos(-\rho) \rho}{2\pi} \,. 
\end{equation}
\end{lem}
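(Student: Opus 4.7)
The plan is to reduce the two-variable expectation to one-variable integrals by first conditioning on $g$, and then use Lemmas \ref{lm:gip_ind} and \ref{lm:gaussian_density_sub} exactly as was implicitly done for $\bar J_0,\bar J_1,\bar J_2$. Concretely, since $\rho g + qw \mid g \sim \cN(\rho g, q^2)$ and $\mathbb{E}[\max(X,0)] = \mu\,F(\mu/\si) + \si f(\mu/\si)$ for $X \sim \cN(\mu,\si^2)$, I would first write
\begin{equation*}
  \mathbb{E}\!\left[\varphi(\rho g + qw)\,\big|\, g\right] = \rho g\, F(\rho g/q) + q\, f(\rho g/q),
\end{equation*}
and therefore split
\begin{equation*}
  \bar J_{3,1}(\rho) = \rho\,\mathbb{E}\bigl[g^4 \mathds{1}_{\{g>0\}} F(\rho g/q)\bigr] + q\,\mathbb{E}\bigl[g^3 \mathds{1}_{\{g>0\}} f(\rho g/q)\bigr].
\end{equation*}

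Next, I would handle the $f$-piece by Lemma \ref{lm:gaussian_density_sub} with $a=0$ and $h(g) = g^3 \mathds{1}_{\{g>0\}}$, giving $q f(0)\,q^3\,\mathbb{E}\,g^3\mathds{1}_{\{g>0\}}$. A single application of Lemma \ref{lm:gip_ind} (with $a=0$, $h(g)=g^2$) computes $\mathbb{E}\,g^3\mathds{1}_{\{g>0\}} = 2/\sqrt{2\pi}$, so the $f$-piece contributes $q\cdot q^4/\pi = q^5/\pi$.

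For the $F$-piece, I would apply Lemma \ref{lm:gip_ind} twice to peel off factors of $g$. Writing $\mathbb{E}\,g^4 \mathds{1}_{\{g>0\}} F(\rho g/q) = \mathbb{E}\,g\cdot \mathds{1}_{\{g>0\}}\cdot [g^3 F(\rho g/q)]$ and using the derivative $\partial_g[g^3 F(\rho g/q)] = 3g^2 F(\rho g/q) + (\rho/q) g^3 f(\rho g/q)$, the integration-by-parts boundary term vanishes at $0$ and leaves
\begin{equation*}
  \mathbb{E}\,g^4 \mathds{1}_{\{g>0\}} F(\rho g/q) = 3\,\mathbb{E}\,g^2\mathds{1}_{\{g>0\}} F(\rho g/q) + \frac{\rho}{q}\,\mathbb{E}\,g^3\mathds{1}_{\{g>0\}} f(\rho g/q).
\end{equation*}
The second term on the right was already computed. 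For the first term, I would apply Lemma \ref{lm:gip_ind} once more with $h(g)=g F(\rho g/q)$; this produces $\mathbb{E}\,\mathds{1}_{\{g>0\}}F(\rho g/q)$, which I recognize as $\bar J_0(\rho) = \arccos(-\rho)/(2\pi)$ (after rewriting $F(\rho g/q) = \mathbb{P}(\rho g + qw > 0 \mid g)$ with $w\sim\cN(0,1)$), together with another $f$-term handled by Lemma \ref{lm:gaussian_density_sub}.

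The final step is arithmetic: collect everything over the common denominator $2\pi$ and simplify the $q$-polynomial inside using $\rho^2 + q^2 = 1$; in particular the combination $\rho^2(3+2q^2)+2q^4$ collapses to $2+\rho^2$, yielding the stated expression
$\bar J_{3,1}(\rho) = [q(2+\rho^2) + 3\arccos(-\rho)\rho]/(2\pi)$. I do not expect any real obstacle — the argument is essentially a careful ledger of the two lemmas already invoked for $\bar J_0, \bar J_1, \bar J_2$ — but the main thing to watch is consistently tracking the factor $\rho/q$ that appears whenever $F(\rho g/q)$ is differentiated and making sure the algebraic simplification at the end is carried out correctly.
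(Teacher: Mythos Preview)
Your proposal is correct and uses exactly the same two lemmas as the paper, but the order of operations differs. You first evaluate the inner conditional expectation completely, splitting into an $F$-piece and an $f$-piece, and then repeatedly apply Lemma~\ref{lm:gip_ind} and Lemma~\ref{lm:gaussian_density_sub} to each piece. The paper instead applies Lemma~\ref{lm:gip_ind} directly to $\mathbb{E}\,g\,\mathds{1}_{\{g>0\}}\,h(g)$ with $h(g)=\mathbb{E}_g[g^2\varphi(\rho g+qw)]$, differentiating \emph{through} the conditional expectation; this immediately produces $2\bar J_1(\rho)$ plus a remainder, and one more round of the same gives $\bar J_{3,1}(\rho)=2\bar J_1(\rho)+\rho\bar J_0(\rho)+\rho^2 q/(2\pi)$. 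The paper's route is shorter and more structured because it recycles the already-known $\bar J_0,\bar J_1$ rather than rebuilding them from scratch; your route is more mechanical and requires tracking more intermediate terms (e.g.\ the $q^5/\pi$ and $\rho q^3/\pi$ contributions), but it is perfectly valid and the final algebraic collapse $\rho^2(3+2q^2)+2q^4=2+\rho^2$ goes through as you indicate.
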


\begin{proof}

We start by using Gaussian integration-by-parts with 
$h(g) = \mathbb{E}_g \, g^2 \varphi( \rho g + qw )$ 
where we use $\mathbb{E}_g [\,\cdot\,] \defequal \mathbb{E} \, [ \,\cdot\, | g ]$ to denote conditional expectation 
\begin{equation}
\begin{aligned}
    \mathbb{E} \, \varphi(g)^3 \varphi(\rho g+qw) 
    &= 
    \mathbb{E} \, g \mathds{1}_{\{g>0\}} h(g) 
    \\ 
    &= \mathbb{E} \, \mathds{1}_{\{g > 0\}} \left[ 
        2g \mathbb{E}_g \, \varphi(\rho g+qw) 
        + g^2 \mathbb{E}_g \rho \mathds{1}_{\{\rho g+qw>0\}}
        \right] 
        \\ 
    &= 2 \bar J_1(\rho) 
        + \rho \mathbb{E}\, g \mathds{1}_{\{g>0\}} 
        \mathbb{E}_g g \mathds{1}_{\{\rho q+qw>0\}} \,. 
\end{aligned}
\end{equation}

Here we observe that $\mathbb{E}_g g \mathds{1}_{\rho q+qw>0} = g F(\rho g/q)$, and we can again set this to the new $h(g)$ and use integration-by-parts to write 
\begin{equation}
    \mathbb{E}\, g \mathds{1}_{\{g>0\}} 
        \mathbb{E}_g g \mathds{1}_{\{\rho q+qw>0\}}
    = 
    \mathbb{E} \, \mathds{1}_{\{g>0\}} \mathds{1}_{\{\rho g+qw>0\}} 
    + \mathbb{E} \, \mathds{1}_{\{g>0\}} \frac{\rho g}{q}
    f\left( \frac{\rho g}{q} \right) \,. 
\end{equation}

At this point we can use the substitution formula from \cref{lm:gaussian_density_sub} to write 
\begin{equation}
    \mathbb{E} \, \mathds{1}_{\{g>0\}} \frac{\rho g}{q}
    f\left( \frac{\rho g}{q} \right)
    = \frac{\rho}{q} f(0) \mathbb{E} \varphi(q g) 
    = \frac{\rho q}{2\pi} \,. 
\end{equation}

Putting this together, we have 
\begin{equation}
    \bar J_{3,1}(\rho) 
    = 2 \bar J_1(\rho) + \rho \bar J_0(\rho) + \frac{\rho^2 q}{2\pi} \,, 
\end{equation}
which is the desired result after simplifying. 

\end{proof}

We will now recall the ReLU-like activations 
for $s = (s_+, s_-) \in \mathbb{R}^2$ 
\begin{equation}
	\varphi_s(x) := s_+ \max(x,0) + s_- \min(x, 0) 
		= s_+ \varphi(x) - s_- \varphi(-x) \,, 
\end{equation}
where $\varphi(x) := \max(x,0)$ is the usual ReLU activation. 

We will compute several basic moments first. 

\begin{lem}[Moments, $c, M_2$]
\label{lm:moments_leaky}
Let $g \sim N(0,1)$, we have that 
\begin{equation}
\begin{aligned}
	\mathbb{E} \, \varphi_s(g) 
	= 
		\frac{s_+ - s_-}{\sqrt{2\pi}} \,, \quad
	\mathbb{E} \, \varphi_s(g)^2 
	= 
		\frac{s_+^2 + s_-^2}{2} \,, \quad 
	\mathbb{E} \, \varphi_s(g)^4 
	= 
		\frac{3}{2}( s_+^4 + s_-^4 ) \quad 
        \,. 
\end{aligned}
\end{equation}
Furthermore, this implies the normalizing constant is $c = \frac{2}{s_+^2 + s_-^2}$ and 
\begin{equation}
    M_2 \defequal \mathbb{E} \, [c \varphi_s(g)^2 - 1]^2 
    = 6 \frac{ s_+^4 + s_-^4 }{ (s_+^2 + s_-^2)^2 } - 1 \,. 
\end{equation}
\end{lem}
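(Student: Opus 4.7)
The plan is to reduce everything to the ordinary ReLU moments (which are either given in the preceding lemma or immediate) via the decomposition $\varphi_s(x) = s_+ \varphi(x) - s_- \varphi(-x)$ and the observation that $\varphi(g)$ and $\varphi(-g)$ have \emph{disjoint support} almost surely (they are nonzero on $\{g>0\}$ and $\{g<0\}$ respectively, which have the same probability and intersect in a null set).

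First I would expand $\varphi_s(g) = s_+ \varphi(g) - s_- \varphi(-g)$ and take expectation. Using the symmetry $g \stackrel{d}{=} -g$ we get $\mathbb{E}\,\varphi(-g) = \mathbb{E}\,\varphi(g) = 1/\sqrt{2\pi}$ from the earlier ReLU moment lemma, giving $\mathbb{E}\,\varphi_s(g) = (s_+ - s_-)/\sqrt{2\pi}$.

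Next, for the squared moment, expand
\begin{equation*}
\varphi_s(g)^2 = s_+^2 \varphi(g)^2 - 2 s_+ s_- \varphi(g)\varphi(-g) + s_-^2 \varphi(-g)^2.
\end{equation*}
The cross term vanishes pointwise because $\varphi(g)\varphi(-g)=0$ for all $g$, and by symmetry $\mathbb{E}\,\varphi(-g)^2 = \mathbb{E}\,\varphi(g)^2 = 1/2$, giving $\mathbb{E}\,\varphi_s(g)^2 = (s_+^2+s_-^2)/2$. The same disjoint-support argument kills \emph{every} mixed term in the fourth-power expansion
\begin{equation*}
\varphi_s(g)^4 = s_+^4 \varphi(g)^4 + \binom{4}{k}\text{(mixed terms)} + s_-^4 \varphi(-g)^4,
\end{equation*}
so that $\mathbb{E}\,\varphi_s(g)^4 = s_+^4 \mathbb{E}\,\varphi(g)^4 + s_-^4 \mathbb{E}\,\varphi(-g)^4 = \tfrac{3}{2}(s_+^4+s_-^4)$, again using symmetry and the previously stated $\mathbb{E}\,\varphi(g)^4 = 3/2$. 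Finally, $c = (\mathbb{E}\,\varphi_s(g)^2)^{-1} = 2/(s_+^2+s_-^2)$ by definition.

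There is essentially no obstacle here: the proof is a bookkeeping exercise, and the only thing to be careful about is to invoke the disjoint-support identity $\varphi(x)\varphi(-x)\equiv 0$ once, cleanly, so that all mixed terms in the binomial expansions of $\varphi_s(g)^2$ and $\varphi_s(g)^4$ can be discarded without any additional integration.
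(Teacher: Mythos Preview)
Your proposal is correct and matches the paper's own proof essentially line by line: decompose $\varphi_s$ into $s_+\varphi(g)-s_-\varphi(-g)$, use $\varphi(g)\varphi(-g)\equiv 0$ to kill mixed terms, and reduce to the ReLU moments $\mathbb{E}\,\varphi(g)=1/\sqrt{2\pi}$, $\mathbb{E}\,\varphi(g)^2=1/2$, $\mathbb{E}\,\varphi(g)^4=3/2$ together with the symmetry $g\stackrel{d}{=}-g$. The only cosmetic difference is that the paper re-derives those three ReLU moments inline via Gaussian integration by parts rather than citing the preceding lemma.
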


\begin{proof}

To start we first recall the Gaussian integration by parts calculation 
\begin{equation}
	\mathbb{E} \, \varphi(g) = f(0) = \frac{1}{\sqrt{2\pi}} \,, 
\end{equation}
then the first moment follows immediately from rewriting in terms of $\varphi$ 
\begin{equation}
	\mathbb{E} \, \varphi_s(g) 
	= s_+ \mathbb{E} \, \varphi(g) - s_- \mathbb{E} \, \varphi(-g) 
	= \frac{s_+ - s_-}{\sqrt{2\pi}} \,. 
\end{equation}

For the second moment, we will also rewrite in terms of $\varphi$ 
\begin{equation}
	\mathbb{E} \, \varphi_s(g)^2 
	= \mathbb{E} \, s_+^2 \varphi(g)^2 + s_-^2 \varphi(-g)^2 
		- 2 s_+ s_- \varphi(g) \varphi(-g) 
	= ( s_+^2 + s_-^2 ) \mathbb{E} \, \varphi(g)^2 \,, 
\end{equation}
where we used that $\varphi(g) \varphi(-g) = 0$ almost surely 
and $g \overset{d}{=} -g$, and the desire result follows from 
Gaussian integration by parts 
\begin{equation}
	\mathbb{E} \, \varphi(g)^2 
	= 0 f(0) + \mathbb{E} \, \mathds{1}_{\{g>0\}} 
	= \frac{1}{2} \,. 
\end{equation}

For the fourth moment, we will similarly observe that all mixed moments 
$\varphi(g)^p \varphi(-g)^r = 0$ almost surely whenever $p,r > 0$, 
which allows us to write 
\begin{equation}
	\mathbb{E} \, \varphi_s(g)^4 
	= \mathbb{E} \, s_+^4 \varphi(g)^4 + s_-^4 \varphi(-g)^4 
	= (s_+^4 + s_-^4) \mathbb{E} \, \varphi(g)^4 \,, 
\end{equation}
and the desire result follows from the Gaussian integration by parts calculation 
\begin{equation}
	\mathbb{E} \, \varphi(g)^4 
	= 0^3 f(0) + \mathbb{E} \, 3 g^2 \mathds{1}_{\{g>0\}} 
	= 3(0^3 f(0) + \mathbb{E} \, \mathds{1}_{\{g>0\}} ) 
	= \frac{3}{2} \,. 
\end{equation}

\end{proof}

We will also convert the $\bar J_{k,\ell}$ formulas to $K_{k,\ell}$ formulas, i.e. the following quantities 
\begin{equation}
\begin{aligned}
	K_{p,r}(\rho) &:= \mathbb{E} \, \varphi_s(g)^p \varphi_s(\hat g)^r \,, 
\end{aligned}
\end{equation}
where $g, w \sim N(0,1)$ and we define $\hat g = \rho g + qw$ with $q = \sqrt{1-\rho^2}$. 
We will also use the short hand notation to write 
$\bar J_p := \bar J_{p,p}, K_p := K_{p,p}$. 

\begin{lem}[$K_1, K_2, K_{3,1}$]
\label{lm:joint_moments_leaky}
Let $\rho \in [-1,1]$, $q = \sqrt{1-\rho^2}$, $g, w \sim \cN(0,1)$, 
and $\hat g = \rho g + qw$. 
Then we have the following formulas 
\begin{equation}
\begin{aligned}
	K_1(\rho) &= 
		(s_+^2 + s_-^2) \bar J_1(\rho) - 2 s_+ s_- \bar J_1(- \rho) \,, \\ 
	K_2(\rho) &= 
		(s_+^4 + s_-^4) \bar J_2(\rho) + 2 s_+^2 s_-^2 \bar J_2(- \rho) \,, \\ 
	K_{3,1}(\rho) &= 
		(s_+^4 + s_-^4) \bar J_{3,1}(\rho) 
		- s_+ s_- ( s_+^2 + s_-^2 ) \bar J_{3,1}(- \rho) \,. 
\end{aligned}
\end{equation}
\end{lem}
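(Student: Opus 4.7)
The plan is to directly expand $\varphi_s$ in terms of the base ReLU $\varphi$, exploit the disjoint-support identity $\varphi(x)\varphi(-x) = 0$ to collapse pure powers, and then identify the four surviving Gaussian expectations with $\bar J_{p,r}$ evaluated at $\pm\rho$ via the obvious sign symmetry of the joint Gaussian $(g,\hat g)$.

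First I would observe that for any integer $p \geq 1$, the binomial expansion of
\begin{equation*}
\varphi_s(x)^p = \bigl( s_+\varphi(x) - s_-\varphi(-x) \bigr)^p
\end{equation*}
has all intermediate terms involve a factor of $\varphi(x)^a\varphi(-x)^b$ with $a,b\geq 1$, which vanishes pointwise. Hence almost surely
\begin{equation*}
\varphi_s(g)^p = s_+^p \varphi(g)^p + (-s_-)^p \varphi(-g)^p,
\end{equation*}
and analogously for $\varphi_s(\hat g)^r$. Multiplying these two pure-power expansions and taking expectation expresses $K_{p,r}(\rho)$ as a sum of exactly four expectations of the form $\mathbb{E}\varphi(\pm g)^p \varphi(\pm \hat g)^r$, each weighted by a monomial in $s_\pm$ with a sign determined by the parity of $p$ and $r$.

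Next I would reduce these four expectations to $\bar J_{p,r}(\pm\rho)$ using the joint-Gaussian symmetry. Since $(g,\hat g)$ is mean-zero Gaussian with correlation $\rho$, we have $(-g,-\hat g) \stackrel{d}{=} (g,\hat g)$, so the ``like-sign'' pairs both yield $\bar J_{p,r}(\rho)$; meanwhile $(-g,\hat g)$ and $(g,-\hat g)$ are mean-zero Gaussian with correlation $-\rho$, so the ``opposite-sign'' pairs both yield $\bar J_{p,r}(-\rho)$.

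Regrouping the four terms then gives the stated formulas after simple arithmetic on the prefactors: for $K_1$, the like-sign coefficient is $s_+^2 + s_-^2$ and the cross coefficient is $-2 s_+ s_-$; for $K_2$, even powers erase the minus signs, producing $s_+^4 + s_-^4$ and $+2 s_+^2 s_-^2$; for $K_{3,1}$, the odd power on the first factor retains a sign, yielding $s_+^4 + s_-^4$ on the $\bar J_{3,1}(\rho)$ side and $-(s_+^3 s_- + s_+ s_-^3) = -s_+ s_-(s_+^2 + s_-^2)$ on the $\bar J_{3,1}(-\rho)$ side. There is no real obstacle here; the only place one must be careful is the sign bookkeeping in $K_{3,1}$, where the odd exponent means $(-s_-)^3 = -s_-^3$ and the four terms do not pair up as symmetrically as in the $K_2$ case.
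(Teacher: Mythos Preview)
Your proposal is correct and follows essentially the same approach as the paper: expand $\varphi_s$ in terms of $\varphi(\cdot)$ and $\varphi(-\cdot)$, use the disjoint-support identity to kill cross terms in powers, and then invoke the distributional symmetries $(g,\hat g)\stackrel{d}{=}(-g,-\hat g)$ and $(g,-\hat g)\stackrel{d}{=}(-g,\hat g)$ to identify the four surviving expectations with $\bar J_{p,r}(\pm\rho)$. The only cosmetic difference is that you state the pure-power identity $\varphi_s(x)^p = s_+^p\varphi(x)^p + (-s_-)^p\varphi(-x)^p$ uniformly for all $p\geq 1$, whereas the paper applies it case by case.
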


\begin{proof}

Before we start, we will make several observations. 
Using the fact that $(g,w) \overset{d}{=} (\pm g, \pm w)$, 
we have the following equality in distribution relations 
\begin{equation}
\begin{aligned}
	(g, \rho g+qw) &\overset{d}{=} ( -g, -\rho g-qw ) = (-g,-\hat g) \,, \\ 
	(g, -\hat g) &\overset{d}{=} (g, -\rho g + qw) 
	\overset{d}{=} (-g, \rho g + qw) = (-g, \hat g) \,. 
\end{aligned}
\end{equation}
In particular, we note that the two Gaussian random variable $(g, -\hat g)$ 
have correlation $-\rho$. 

This allows us to simplify $K_1$ 
\begin{equation}
\begin{aligned}
	K_1(\rho) 
	&= 
		\mathbb{E} \, \varphi_s(g) \varphi_s(\hat g) \\ 
	&= 
		\mathbb{E} \, s_+^2 \varphi(g) \varphi(\hat g) 
			+ s_-^2 \varphi(-g) \varphi(-\hat g) 
		- s_+ s_- \varphi(g) \varphi(-\hat g) 
		- s_+ s_- \varphi(-g) \varphi(\hat g) 
		\\ 
	&= ( s_+^2 + s_-^2 ) \bar J_1(\rho) 
		- 2 s_+ s_- \bar J_1(-\rho) \,, 
\end{aligned}
\end{equation}
which is the desired result. 

With $K_2$, we will additionally make use of the fact that 
$\varphi(g) \varphi(-g) = 0$ almost surely to write 
\begin{equation}
\begin{aligned}
	K_2(\rho) 
	&= 
		\mathbb{E} \, ( s_+^2 \varphi(g)^2 + s_-^2 \varphi(-g)^2 ) 
			( s_+^2 \varphi(\hat g)^2 + s_-^2 \varphi(-\hat g)^2 ) 
			\\ 
	&= 
		\mathbb{E} \, s_+^4 \varphi(g)^2 \varphi(\hat g)^2 
			+ s_-^4 \varphi(-g)^2 \varphi(-\hat g)^2 
			+ s_+^2 s_-^2 \varphi(g)^2 \varphi(-\hat g)^2 
			+ s_+^2 s_-^2 \varphi(-g)^2 \varphi(\hat g)^2 
			\\ 
	&= ( s_+^4 + s_-^4 ) \bar J_2(\rho) 
		+ 2 s_+^2 s_-^2 \bar J_2(-\rho) \,. 
\end{aligned}
\end{equation}

$K_{3,1}$ follows from a similar calculation 
\begin{equation}
\begin{aligned}
	K_{3,1}(\rho) 
	&= 
		\mathbb{E} \, ( s_+^3 \varphi(g)^3 - s_-^3 \varphi(-g)^3 ) 
			( s_+ \varphi(\hat g) - s_- \varphi(-\hat g) ) 
			\\ 
	&= 
		\mathbb{E} \, s_+^4 \varphi(g)^3 \varphi(\hat g)
			s_-^4 \varphi(-g)^3 \varphi(-\hat g) 
			- s_+^3 s_- \varphi(g)^3 \varphi(-\hat g) 
			- s_+ s_-^3 \varphi(-g)^3 \varphi(\hat g) 
			\\ 
	&= ( s_+^4 + s_-^4 ) \bar J_{3,1}(\rho) 
		- s_+ s_- ( s_+^2 + s_-^2 ) \bar J_{3,1}(- \rho) \,. 
\end{aligned}
\end{equation}

\end{proof}

Finally, we to get to state the desired formulas for the approximate Markov chain. 
Here we will make introduce several definitions first. 
In the event that $|\varphi^\alpha_\ell|=0$ or $|\varphi^\beta_\ell|=0$, 
the formula $\rho^{\alpha\beta}_\ell \defequal \frac{ \langle \varphi^\alpha_\ell, \varphi^\beta_\ell \rangle }{ |\varphi^\alpha_\ell| \, |\varphi^\alpha_\ell| }$ is undefined. 
We will remedy this by introducing an additional point $\mathbf{e}$ in the state space $\mathbb{R} \cup \{\mathbf{e}\}$, and set $\rho^{\alpha\beta}_\ell = \mathbf{e}$ in this event. 
We note that once $\rho^{\alpha\beta}_\ell = \mathbf{e}$, then the next step $\rho^{\alpha\beta}_{\ell+1} = \mathbf{e}$ as well since either $z^{\alpha}_{\ell+1},z^\beta_{\ell+1} = 0$. 
For all $x \in \mathbb{R}$ we will define the distance $|x-\mathbf{e}|=\infty$. 
Consequently, $\mathbb{R} \cup \{\mathbf{e}\}$ is a Polish space (complete separable metric space), and therefore it's a well behaved probability space (e.g. admits conditional densities). 
For a random variable $X$, we write $X=O(n^p)$ if all moments of $n^{-p} X$ are bounded by a constant independent of $n$.

We will also define the bounded Lipschitz function norm as 
\begin{equation}
    \|h\|_{BL} 
    \defequal 
        \|h\|_\infty + \sup_{x\neq y} \frac{|h(x) - h(y)|}{|x-y|} \,, 
\end{equation}
which induces the bounded Lipschitz distance for probability measures 
\begin{equation}
    d_{BL}(\mu, \nu) 
    \defequal 
    \sup_{ \|h\|_{BL} \leq 1 } 
    \int h \, d\mu 
    - \int h \, d\nu \,. 
\end{equation}

\begin{prop}
[Unshaped ReLU Correlation]
\label{prop:unshaped_relu_corr}
Let $\rho^{\alpha\beta}_\ell \defequal \frac{ \langle \varphi^\alpha_\ell, \varphi^\beta_\ell \rangle }{ |\varphi^\alpha_\ell| \, |\varphi^\alpha_\ell| }$ when defined, and $\mathbf{e}$ when either $|\varphi^\alpha_\ell| \,, |\varphi^\alpha_\ell| = 0$. 
Let us also define the approximate Markov chain 
\begin{equation}
\label{eq:unshaped_relu_corr}
    p_{\ell+1} 
    = c K_1( p_\ell ) 
    + \frac{ \mu_{\text{ReLU}}(p_\ell)}{n} 
    + \sigma_{\text{ReLU}}(p_\ell) 
    \frac{z_\ell}{\sqrt{n}} \,, 
\end{equation}
where $z_\ell$ are iid $\cN(0,1)$ and 
\begin{equation}
\begin{aligned}
    \mu_{\text{ReLU}}(\rho^{\alpha\beta}_\ell) 
    &= 
        \frac{c}{4} \left[ K_1 ( c^2 K_2 + 3 M_2 + 3 ) - 4c K_{3,1} 
		\right] \,, \\ 
    \sigma^2_{\text{ReLU}}(\rho^{\alpha\beta}_\ell) 
    &= 
        \frac{c^2}{2} \left[ 
		K_1^2 ( c^2 K_2 + M_2 + 1 ) - 4c K_1 K_{3,1} + 2K_2 
		\right] \,, 
\end{aligned}
\end{equation}
where we write $K_{\cdot} = K_{\cdot}(\rho^{\alpha\beta}_\ell)$, 
and the formulas for $K_1, K_2, K_{3,1}, c, M_2$ are calculated in \cref{lm:moments_leaky} and \cref{lm:joint_moments_leaky}. 

Let $\Pi(x, dy), P(x,dy)$ be the Markov transition kernels of $\rho^{\alpha\beta}_\ell$ and $p_\ell$ respectively, then 
\begin{equation}
    d_{BL}( \Pi(x, \cdot), P(x, \cdot) ) = O(n^{-1}) \,, 
    \quad \text{ for all } x \in \mathbb{R} \cup \{\mathbf{e}\} \,. 
\end{equation}
\end{prop}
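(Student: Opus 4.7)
The plan is to analyze the one-step transition of $\rho^{\alpha\beta}_\ell$ explicitly by conditioning on layer $\ell$. By the conditional Gaussian structure in \cref{eq:conditional_gaussian} and positive homogeneity of $\varphi$, we can write
\begin{equation*}
\rho^{\alpha\beta}_{\ell+1} \,=\, \frac{\tfrac{1}{n}\sum_{i=1}^n c\,\varphi(g^\alpha_i)\varphi(g^\beta_i)}{\sqrt{\tfrac{1}{n}\sum_i c\,\varphi(g^\alpha_i)^2}\;\sqrt{\tfrac{1}{n}\sum_i c\,\varphi(g^\beta_i)^2}},
\end{equation*}
where the pairs $(g^\alpha_i, g^\beta_i)$ are iid bivariate Gaussians with correlation $x = \rho^{\alpha\beta}_\ell$. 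Introducing the centered sample means $A,B,C$ of $X_i \defequal c\varphi(g^\alpha_i)\varphi(g^\beta_i)$, $Y_i \defequal c\varphi(g^\alpha_i)^2$, $Z_i \defequal c\varphi(g^\beta_i)^2$ (each of order $n^{-1/2}$, with $\mathbb{E}Y_i = \mathbb{E}Z_i = 1$ by the definition of $c$), the update becomes $\rho^{\alpha\beta}_{\ell+1} = (cK_1(x) + A)/\sqrt{(1+B)(1+C)}$.

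Next, I would Taylor expand $((1+B)(1+C))^{-1/2}$ to second order in $B,C$ and multiply through to obtain
\begin{equation*}
\rho^{\alpha\beta}_{\ell+1} \,=\, cK_1(x) + \Bigl[A - \tfrac{cK_1(x)}{2}(B+C)\Bigr] + Q(A,B,C) + O(n^{-3/2}),
\end{equation*}
where $Q(A,B,C) = \tfrac{3cK_1(x)}{8}(B^2+C^2) + \tfrac{cK_1(x)}{4}BC - \tfrac{1}{2}A(B+C)$ is an explicit quadratic form of order $n^{-1}$. The linear bracket has conditional mean zero, so the conditional mean of the update comes entirely from $\mathbb{E}[Q(A,B,C)]$, which reduces to the pairwise covariances $\cov(X,Y)$, $\cov(X,Z)$, $\cov(Y,Z)$ and variances $\var(Y), \var(Z)$; assembling these from the closed forms for $K_1, K_2, K_{3,1}, M_2$ in \cref{lm:moments_leaky,lm:joint_moments_leaky} should produce exactly $\mu_{\text{ReLU}}(x)/n$ up to $O(n^{-3/2})$. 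A parallel variance computation (noting that $Q$ contributes only $O(n^{-3/2})$ to the variance of the update) gives $\var(\rho^{\alpha\beta}_{\ell+1}) = \sigma^2_{\text{ReLU}}(x)/n + O(n^{-3/2})$, with the variance of the linear bracket equal to $\sigma^2_{\text{ReLU}}(x)/n$ via the iid sum $U_i \defequal (X_i - cK_1) - \tfrac{cK_1}{2}(Y_i + Z_i - 2)$.

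To pass from matched mean and variance to a bound on $d_{BL}$, I would couple the normalized iid sum $\tfrac{1}{\sqrt n}\sum_i U_i$ with its Gaussian limit $\sigma_{\text{ReLU}}(x)\, z$ via a quantitative CLT; combined with Lipschitzness of the test function and the $O(n^{-1})$ matching of the $Q$-contribution to $\mu_{\text{ReLU}}(x)/n$, this should yield $d_{BL}(\Pi(x,\cdot), P(x,\cdot)) = O(n^{-1})$. The main technical obstacle is the CLT step at the required rate: a naive Berry--Esseen argument gives only $O(n^{-1/2})$ closeness for Lipschitz test functions, so one must either smooth the test function at an optimized scale and exploit matching of the first two cumulants of $\tfrac{1}{\sqrt n}\sum_i U_i$ and $\sigma_{\text{ReLU}}(x) z$, or invoke a Stein-type argument tailored to the $d_{BL}$ metric. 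The absorbing state $\mathbf{e}$ requires no separate argument: when $x = \mathbf{e}$ both kernels equal $\delta_{\mathbf{e}}$, so $d_{BL} = 0$ there.
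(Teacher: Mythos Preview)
Your overall strategy---write the next-layer correlation as a ratio, Taylor expand the denominator to second order, identify the drift from $\mathbb{E}[Q]$ and the variance from the linear bracket, then couple the normalized fluctuation with a Gaussian---is exactly the route the paper takes. The expansion you write down and the covariance bookkeeping via $K_1,K_2,K_{3,1},M_2$ match the paper's computation.

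The one substantive point where you and the paper diverge is the CLT step, and here you are making the problem harder than it is. You worry that Berry--Esseen only gives $O(n^{-1/2})$ closeness between $\tfrac{1}{\sqrt n}\sum_i U_i$ and $\sigma_{\text{ReLU}}(x)z$, and propose smoothing or Stein methods to upgrade this to $O(n^{-1})$. But the linear bracket $[A-\tfrac{cK_1}{2}(B+C)]$ enters the update with an \emph{additional} factor of $n^{-1/2}$: it equals $n^{-1/2}\cdot\bigl(n^{-1/2}\sum_i U_i\bigr)$. So for a $1$-Lipschitz test function $h$,
\[
\bigl|\mathbb{E}_\ell h(\rho^{\alpha\beta}_{\ell+1}) - \mathbb{E}_\ell h(p_{\ell+1})\bigr|
\;\le\; \frac{1}{\sqrt n}\,\inf_{\text{coupling}}\mathbb{E}\Bigl|\tfrac{1}{\sqrt n}\sum_i U_i - \sigma_{\text{ReLU}}(x)z\Bigr| \;+\; \frac{1}{n}\,\mathbb{E}_\ell|\eta| \;+\; O(n^{-3/2}),
\]
where $\eta$ is the centered part of $Q$. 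The paper simply invokes the $L^1$ (Wasserstein-1) Berry--Esseen bound $\inf_{\text{coupling}}\mathbb{E}|\cdot| = O(n^{-1/2})$ for the first term, which combined with the $n^{-1/2}$ prefactor already yields $O(n^{-1})$. No higher-order CLT, smoothing, or Stein argument is needed.

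A minor gap: your Taylor expansion of $((1+B)(1+C))^{-1/2}$ tacitly assumes $1+B,\,1+C>0$. When $x\in\mathbb{R}$, the event $\{|\varphi^\alpha_{\ell+1}|=0\text{ or }|\varphi^\beta_{\ell+1}|=0\}$ (equivalently $1+B=0$ or $1+C=0$) sends $\rho^{\alpha\beta}_{\ell+1}$ to $\mathbf{e}$ and invalidates the expansion. The paper handles this by observing the event has conditional probability $O(2^{-n})$ (all entries of a Gaussian vector non-positive), conditioning on its complement to do the expansion, and then ``unconditioning'' at cost $O(2^{-n})$ since $h$ is bounded. This is routine but should be stated.
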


\begin{rem}
The infinite-width ($n\to\infty$) approximation of 
the Markov chain corresponds to the update 
$q_{\ell+1} = cK_1(q_\ell)$, 
and this is an $O(n^{-1/2})$ approximation to the chain $\{\rho^{\alpha\beta}_\ell\}$. 
On the other hand, the $\{p_\ell\}$ chain we propose is 
an \emph{improved approximation} up to 
the zero mean terms up to $O(n^{-1/2})$, 
and the expected value of non-zero mean terms up to $O(n^{-1})$. 
In the SDE limit of \cref{prop:conv_markov_chain_to_sde}, these are exactly the terms that do not vanish, which leads us to speculate that this approximation is sufficiently close when studying the infinite-depth-and-width limit. 

We will also note that $O(n^{-1})$ error in the result arise from replacing the $O(n^{-1/2})$ with a Gaussian due to Berry--Esseen, 
and the $O(n^{-1})$ term with its expectation, 
as these are the dominant error terms in the approximation. 
\end{rem}

\begin{proof}

We start by defining the notations 
\begin{equation}
    g^{\alpha}_\ell 
    \defequal W_\ell \frac{\varphi^\alpha_\ell}{ |\varphi^\alpha_\ell| } \,, \quad 
    R^{\alpha\beta}_\ell 
    \defequal \frac{1}{\sqrt{n}} \sum_{i=1}^n 
    c \varphi_s(g^\alpha_{\ell,i}) 
    \varphi_s( g^\beta_{\ell,i} ) 
    - c K_1( \rho^{\alpha\beta}_\ell ) \,, 
\end{equation}
and using positive homogeneity we can write 
$\varphi_s( \sqrt{\frac{c}{n}} W_\ell \varphi^\alpha_\ell ) = \sqrt{\frac{c}{n}} |\varphi^\alpha_\ell| \varphi_s(g^\alpha_\ell)$, which gives us 
\begin{equation}
    \langle \varphi^\alpha_{\ell+1}, \varphi^\beta_{\ell+1} \rangle
    = 
    |\varphi^\alpha_{\ell}| \, 
    |\varphi^\beta_{\ell}| 
    \frac{c}{n} \sum_{i=1}^n 
    \varphi_s( g^\alpha_{\ell,i} ) 
    \varphi_s( g^\beta_{\ell,i} ) 
    = 
    |\varphi^\alpha_{\ell}| \, 
    |\varphi^\beta_{\ell}| 
    \left( cK_1( \rho^{\alpha\beta}_\ell ) + \frac{1}{\sqrt{n}} R^{\alpha\beta}_\ell 
    \right) \,. 
\end{equation}

Now consider the same case for $R^{\alpha\alpha}_\ell$ and $R^{\beta\beta}_\ell$ with $K_1(1) = c^{-1}$, we also get 
\begin{equation}
    \rho^{\alpha\beta}_{\ell+1} 
    = 
    \begin{cases}
        \frac{ \langle \varphi^\alpha_{\ell+1}, \varphi^\beta_{\ell+1} \rangle }{ |\varphi^\alpha_{\ell+1}| \, |\varphi^\alpha_{\ell+1}| } 
        =
        \frac{cK_1(\rho^{\alpha\beta}_\ell) + \frac{1}{\sqrt{n} } R^{\alpha\beta}_\ell }{
        \sqrt{ (1 + \frac{1}{\sqrt{n}} R^{\alpha\alpha}_\ell)
        (1 + \frac{1}{\sqrt{n}} R^{\beta\beta}_\ell)} 
        } \,, 
        & \text{ if } |\varphi^\alpha_{\ell+1}| \,, |\varphi^\beta_{\ell+1}| > 0 \,, 
        \\ 
        \mathbf{e} \,, 
        & \text{ otherwise. }
    \end{cases}
\end{equation}

We observe that whenever $|\varphi^\ell_\alpha|>0$, we have that 
$1 + \frac{1}{\sqrt{n}} R^{\alpha\alpha}_\ell = \frac{|\varphi^\alpha_{\ell+1}|^2 }{ |\varphi^\alpha_\ell|^2} \geq 0$. 
Therefore the event $E \defequal \{R^{\alpha\alpha}_\ell,  R^{\beta\beta}_\ell \leq -\sqrt{n}\}$ 
is the same as $\{\rho^{\alpha\beta}_{\ell+1} = \mathbf{e}\}$, 
which is equivalent to when $z^\alpha_{\ell+1}$ or $z^{\beta}_{\ell+1}$ has only non-positive entries. 
When conditioned on the previous layer, all the entries are independent, this event has probability $\Pi(x, \{\mathbf{e}\}) = O(2^{-n})$. 
We will see later that modifying this Markov chain to remove this event will incur only a "minor cost" of $O(2^{-n})$.

Let us fix any realization of $R^{\alpha\alpha}, R^{\beta\beta}, R^{\alpha\beta}$ outside of the event $E$ (i.e. by viewing it as a map $R^{\alpha\alpha}:\Omega \to \mathbb{R}$ from the probability space for some fixed $\omega \in \Omega$), 
we can compute the 
Taylor expansion with respect to $1/\sqrt{n}$ about $0$ (Taylor expansion done using SymPy \cite{sympy} Python package)
\begin{equation}
\label{eq:unshaped_corr_markov_chain_raw}
\begin{aligned}
    \rho^{\alpha\beta}_{\ell+1} 
    &= c K_1(\rho^{\alpha\beta}_\ell) + \frac{1}{\sqrt{n}} \left[ 
		R^{\alpha\beta}_\ell - \frac{cK_1(\rho^{\alpha\beta}_\ell)}{2} 
			(R^{\alpha\alpha}_\ell + R^{\beta\beta}_\ell) 
		\right] 
		\\ 
	&\quad 
		+ \frac{1}{n} \left[ \frac{c K_1(\rho^{\alpha\beta}_\ell)}{8} 
			( 3 (R^{\alpha\alpha}_\ell + R^{\beta\beta}_\ell)^2 
				- 4 R^{\alpha\alpha}_\ell R^{\beta\beta}_\ell ) 
			- \frac{1}{2} R^{\alpha\beta}_\ell ( R^{\alpha\alpha}_\ell + R^{\beta\beta}_\ell )
		\right] 
		+ O(n^{-3/2}) 
		\,, 
\end{aligned}
\end{equation}
where we recall the $X = O(n^{-3/2})$ notation denotes a random variable (the Taylor remainder term) where all moments of $n^{3/2} X$ are bounded by a constant independent of $n$. 

We can simplify these terms further by computing the mean and variance of the expansion (without conditioning on $E^c$). 
More specifically, each of the $R^{\alpha\alpha}_\ell, R^{\beta\beta}_\ell, R^{\alpha\beta}_\ell$ have zero mean and covariance 
\begin{equation}
    \cov_\ell \left( 
    \begin{bmatrix}
        R^{\alpha\alpha}_\ell \\ 
        R^{\beta\beta}_\ell \\ 
        R^{\alpha\beta}_\ell 
    \end{bmatrix}
    \right)
    = 
    \begin{bmatrix}
		M_2 & c^2 K_2 - 1 & c^2 K_{3,1} - c K_1 \\ 
		c^2 K_2 - 1 & M_2 & c^2 K_{3,1} - c K_1 \\ 
		c^2 K_{3,1} - c K_1 & c^2 K_{3,1} - c K_1 & c^2 (K_2 - K_1^2)
	\end{bmatrix}
	\,, 
\end{equation}
where we recall $\cov_\ell$ is the conditional covariance given the sigma-algebra $\mathcal{F}_\ell$ generated by the $\ell$-th layer $[z^\alpha_\ell]_{\alpha=1}^m$. 
We can now recover the desired result by calculating the drift and variance coefficients using SymPy \cite{sympy} again 
\begin{equation}
\begin{aligned}
	\sigma_{\text{ReLU}}^2(\rho^{\alpha\beta}_\ell) 
	&:= \mathbb{E}_\ell \, \left[ 
		R^{\alpha\beta}_\ell - \frac{c K_1}{2} 
			(R^{\alpha\alpha}_\ell + R^{\beta\beta}_\ell) 
		\right]^2 
		\\ 
	&= 
		\frac{c^2}{2} \left[ 
		K_1^2 ( c^2 K_2 + M_2 + 1 ) - 4c K_1 K_{3,1} + 2K_2 
		\right] 
		\,, \\ 
	\mu_{\text{ReLU}}(\rho^{\alpha\beta}_\ell) 
	&:= \mathbb{E}_\ell \, \left[ \frac{c K_1}{8} 
			( 3 (R^{\alpha\alpha}_\ell + R^{\beta\beta}_\ell)^2 
				- 4 R^{\alpha\alpha}_\ell R^{\beta\beta}_\ell ) 
			- \frac{1}{2} R^{\alpha\beta}_\ell ( R^{\alpha\alpha}_\ell + R^{\beta\beta}_\ell )
		\right] 
		\\ 
	&= 
		\frac{c}{4} \left[ K_1 ( c^2 K_2 + 3 M_2 + 3 ) - 4c K_{3,1} 
		\right] \,, 
\end{aligned}
\end{equation}
where we recall $\mathbb{E}_\ell[\,\cdot\,] = \mathbb{E}[\,\cdot\,|\mathcal{F}_\ell]$ is the conditional expectation given the sigma-algebra $\mathcal{F}_\ell$ generated by the $\ell$-th layer $[z^\alpha_\ell]_{\alpha=1}^m$. 

This allows us to write (considering the well defined case) 
\begin{equation}
    \rho^{\alpha\beta}_{\ell+1} = c K_1(\rho^{\alpha\beta}_\ell) + \frac{\sigma_{\text{ReLU}}(\rho^{\alpha\beta}_\ell) }{\sqrt{n}} \xi_\ell 
    + 
    \frac{\mu_{\text{ReLU}}(\rho^{\alpha\beta}_\ell)
        + \eta(\rho^{\alpha\beta}_\ell) }{n} 
    + O(n^{-3/2}) \,, 
\end{equation}
where $\xi_\ell$ is has zero mean and unit variance (when not conditioned on $\rho^{\alpha\beta}_{\ell+1} = \mathbf{e}$), and $\eta(\rho^{\alpha\beta}_\ell)$ has zero mean. 
Observe that there are three differences between $\{\rho^{\alpha\beta}_\ell\}$ and the approximate chain $\{p_\ell\}$: 
\begin{enumerate}
    \item $\rho^{\alpha\beta}_{\ell+1} = \mathbf{e}$ with probability $O(2^{-n})$, 
    \item $\xi_\ell$ is replaced by $z_\ell \sim \cN(0,1)$, 
    \item $\eta(\rho^{\alpha\beta}_\ell)$ and the higher order $O(n^{-3/2})$ terms in the Taylor expansion are removed. 
\end{enumerate}

To complete the proof, we will need to control these differences in terms of the bounded Lipschitz distance on the Markov transition kernels. 
To this goal, we let $h$ be such that $\|h\|_{BL} \leq 1$, hence it must be both bounded by $1$ and at worst $1$-Lipschitz. 
We will first condition on $E^c$ to write the Taylor expansion, and then ``uncondition'' to recover the original distribution, both at a cost of an $O(2^{-n})$ error term. 
More precisely, we will write 
\begin{equation}
\begin{aligned}
    &\mathbb{E}_\ell \, h( \rho^{\alpha\beta}_{\ell+1} ) 
    \\ 
    &= 
    \mathbb{E}_\ell [
        h( \rho^{\alpha\beta}_{\ell+1} ) | E^c 
        ] \, 
    \mathbb{P}_\ell( E^c )
    + 
    \mathbb{E}_\ell [ h( \mathbf{e} ) | E ] \, 
    \mathbb{P}_\ell(E) 
    \\ 
    &= 
        \mathbb{E}_\ell [ 
        h( \rho^{\alpha\beta}_{\ell+1} ) | E^c ]
        \, 
        \mathbb{P}_\ell(E^c) 
        + 
        O(1) 
        O(2^{-n}) 
        \\ 
    &= 
        \mathbb{E}_\ell \left[ \left.  
        h\left( 
        c K_1(\rho^{\alpha\beta}_\ell) 
        + \frac{\sigma_{\text{ReLU}}(\rho^{\alpha\beta}_\ell) }{\sqrt{n}} \xi_\ell 
        + \frac{\mu_{\text{ReLU}}(\rho^{\alpha\beta}_\ell) + \eta(\rho^{\alpha\beta}_\ell) }{n} 
        + O(n^{-3/2})
        \right)
        \right| E^c \right] \, 
        \mathbb{P}_\ell(E^c) 
        + O(2^{-n}) \,, 
\end{aligned}
\end{equation}
where we recall $\mathbb{E}_\ell[\,\cdot\,] = \mathbb{E}[\,\cdot\,|\mathcal{F}_\ell]$, 
and we define $\mathbb{P}_\ell(E) \defequal \mathbb{E}_\ell \mathds{1}_E$. 

At this point we observe that we can now ``uncondition'' the Taylor expansion by essentially doing the same trick, or more precisely observe that 
\begin{equation}
    \mathbb{E}_\ell \left[ \left.  
        h\left( 
        c K_1(\rho^{\alpha\beta}_\ell) 
        + \frac{\sigma_{\text{ReLU}}(\rho^{\alpha\beta}_\ell) }{\sqrt{n}} \xi_\ell 
        + \frac{\mu_{\text{ReLU}}(\rho^{\alpha\beta}_\ell) + \eta(\rho^{\alpha\beta}_\ell) }{n} 
        + O(n^{-3/2})
        \right)
        \right| E \right] \, 
    \mathbb{P}_\ell(E) 
    = O(2^{-n}) \,, 
\end{equation}
therefore we can write 
\begin{equation}
\begin{aligned}
    &\mathbb{E}_\ell \, h( \rho^{\alpha\beta}_{\ell+1} ) 
    \\ 
    &= 
    \mathbb{E}_\ell \left[ \left.  
        h\left( \cdots \right)
        \right| E^c \right] \, 
    \mathbb{P}_\ell(E^c) 
    +
    \mathbb{E}_\ell \left[ \left.  
        h\left( \cdots \right)
        \right| E \right] \, 
    \mathbb{P}_\ell(E) 
    + 
    O(2^{-n}) 
    \\ 
    &= 
    \mathbb{E}_\ell \, h\left( 
    c K_1(\rho^{\alpha\beta}_\ell) 
        + \frac{\sigma_{\text{ReLU}}(\rho^{\alpha\beta}_\ell) }{\sqrt{n}} \xi_\ell 
        + \frac{\mu_{\text{ReLU}}(\rho^{\alpha\beta}_\ell) + \eta(\rho^{\alpha\beta}_\ell) }{n} 
        + O(n^{-3/2}) 
    \right) 
    + 
    O(2^{-n}) \,. 
\end{aligned}
\end{equation}

Since $h$ is $1$-Lipschitz, we have that $h(x+y) \leq h(x) + |y|$, and therefore we can write 
\begin{equation}
\begin{aligned}
    \mathbb{E}_\ell \, h( \rho^{\alpha\beta}_{\ell+1} ) 
    &\leq 
    \mathbb{E}_\ell \, h\left( 
        c K_1(\rho^{\alpha\beta}_\ell) 
        + \frac{\sigma_{\text{ReLU}}(\rho^{\alpha\beta}_\ell) }{\sqrt{n}} z_\ell 
        + \frac{\mu_{\text{ReLU}}(\rho^{\alpha\beta}_\ell)}{n} 
    \right) \\ 
    &\quad
    + 
    \mathbb{E}_\ell \, \frac{\sigma_{\text{ReLU}}(\rho^{\alpha\beta}_\ell) }{\sqrt{n}} | \xi_\ell - z_\ell | 
    + 
    \mathbb{E}_\ell \, 
    \frac{|\eta( \rho^{\alpha\beta}_\ell )|}{n} 
    + 
    O(n^{-3/2} + 2^{-n}) \,. 
\end{aligned}
\end{equation}

Observe that the first term is exactly the transition kernel of $p_\ell$ applied to $h$,  
i.e. $\mathbb{E}_\ell \, h(p_{\ell+1}) = \int h(y) \, P(p_\ell, dy)$, 
which means it's sufficient to control the leftover terms at order $O(n^{-1})$ 
for a chosen coupling of $\xi_\ell$ and $z_\ell$. 
Since clearly $\mathbb{E}_\ell \, \eta(\rho^{\alpha\beta}_\ell) = O(1)$ 
as it does not depend on $n$, 
we just need to show $\mathbb{E}_\ell \, |\xi_\ell - z_\ell| = O(n^{-1/2})$. 
Observe that by definition, we have 
\begin{equation}
    \xi_\ell 
    = 
    \frac{1}{\sqrt{n}} 
    \sum_{i=1}^n 
    \frac{1}{\sigma_{ \text{ReLU} }(\rho^{\alpha\beta}_\ell) } 
    \left[ 
    c \varphi_s(g^\alpha_{\ell,i}) 
    \varphi_s( g^\beta_{\ell,i} ) 
    - c K_1( \rho^{\alpha\beta}_\ell ) 
    - \frac{c K_1(\rho^{\alpha\beta}_\ell)}{2} 
    \left( 
    c \varphi_s(g^\alpha_{\ell,i})^2 
    + c \varphi_s(g^\beta_{\ell,i})^2 
    - 2 
    \right) 
    \right] \,, 
\end{equation}
where the terms of the sum are iid with zero mean and unit variance 
(since each neuron is independent conditioned on the previous layer). 
Therefore, we can invoke a standard $L^1$ Berry--Esseen bound, 
e.g. Theorem 4.2 of \cite{chen2011normal}. 
In this case, we let $F$ be the CDF of $\xi_\ell$ and $G$ be the CDF of $z_\ell$, 
and by duality of $L^1$ (equation 4.6 of \cite{chen2011normal}) we have that 
\begin{equation}
    \inf \mathbb{E} \, |\xi_\ell - z_\ell| 
    = 
    \| F - G \|_{L^1} 
    \leq 
        O( n^{-1/2} ) \,, 
\end{equation}
where the $\inf$ is over all couplings of $\xi_\ell, z_\ell$. 

Finally since the above results do not depend on the choice of the test function $h$, 
so we have that 
\begin{equation}
    d_{BL}( \Pi(x,\cdot), P(x,\cdot) ) 
    = 
    \sup_{\|h\|_{BL} \leq 1} 
    \mathbb{E}_\ell \, \left( 
        h( \rho^{\alpha\beta}_{\ell+1} ) 
        - h( p_{\ell+1} ) 
        \right) 
    \leq O(n^{-1}) \,, 
\end{equation}
which is the desired result. 

\end{proof}

\section{Proofs for ReLU Shaping Results}
\label{sec:app_relu_proofs}

In this section, we first recall the ReLU-like activation function for $s = (s_+, s_-) \in \mathbb{R}^2$ defined as 
\begin{equation}
	\varphi_s(x) \defequal s_+ \max(x,0) + s_- \min(x, 0) 
		= s_+ \varphi(x) - s_- \varphi(-x) \,, 
\end{equation}
where $\varphi(x) \defequal \max(x,0)$ is the usual ReLU activation. 

We will also recall the definitions  
\begin{equation}
\begin{aligned}
	\bar J_{p,r}(\rho) \defequal \mathbb{E} \, \varphi(g)^p \varphi(\hat g)^r \,, \quad 
	K_{p,r}(\rho) \defequal \mathbb{E} \, \varphi_s(g)^p \varphi_s(\hat g)^r \,, 
\end{aligned}
\end{equation}
where $g, w$ are iid $\cN(0,1)$ and we define $\hat g = \rho g + qw$ with $q = \sqrt{1-\rho^2}$.
We will also use the short hand notation to write 
$\bar J_p := \bar J_{p,p}, K_p := K_{p,p}$. 

Here we recall from \cite{cho2009kernel} 
\begin{equation}
    \bar J_1(\rho) = \frac{ \sqrt{1-\rho^2} + (\pi - \arccos \rho) \rho }{ 2\pi } \,. 
\end{equation}

We will also recall from \cref{lm:moments_leaky} the following moment calculations 
\begin{equation}
\label{eq:moments_relu}
\begin{aligned}
	c^{-1}
	&= \mathbb{E} \, \varphi_s(g)^2 
	= 
		\frac{s_+^2 + s_-^2}{2} \,, \\ 
	K_1(\rho) 
	&= \mathbb{E} \, \varphi_s(g) \, \varphi_s(\hat g) 
	= 
		(s_+^2 + s_-^2) \bar J_1(\rho) - 2 s_+ s_- \bar J_1(- \rho) \,. 
\end{aligned}
\end{equation}

In the shaped case, we will calculate a Taylor expansion for the function $c K_1(\rho)$. 

\begin{lem}
[Shaping Correlation Function Expansion]
\label{lm:shape_nu_expansion}
Let $s_\pm = 1 + \frac{c_\pm}{\sqrt{n}}$, 
then
\begin{equation}
    c K_1(\rho) = \rho + \frac{\nu(\rho)}{n} + O(n^{-3/2}) \,, 
\end{equation}
where $\nu(\rho) = \frac{(c_+ - c_-)^2}{2\pi} \left( \sqrt{1-\rho^2} + \rho \arccos \rho \right)$. 
\end{lem}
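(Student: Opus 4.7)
The plan is to combine the explicit formulas from \eqref{eq:moments_relu} with a careful Taylor expansion in $n^{-1/2}$, isolating the order at which the nonlinearity actually contributes. Starting from
\begin{equation*}
    cK_1(\rho) = \frac{2}{s_+^2 + s_-^2}\Bigl[(s_+^2 + s_-^2)\bar J_1(\rho) - 2 s_+ s_- \bar J_1(-\rho)\Bigr]
    = 2\bar J_1(\rho) - \frac{4 s_+ s_-}{s_+^2 + s_-^2}\bar J_1(-\rho),
\end{equation*}
the task reduces to expanding the coefficient $\frac{4 s_+ s_-}{s_+^2 + s_-^2}$ for $s_\pm = 1 + c_\pm/\sqrt{n}$, and then simplifying $2\bar J_1(\rho) - 2\bar J_1(-\rho)$ by the Cho--Saul formula.

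For the coefficient, I would compute $s_+^2 + s_-^2 = 2 + 2(c_+ + c_-)/\sqrt{n} + (c_+^2 + c_-^2)/n$ and $4 s_+ s_- = 4 + 4(c_+ + c_-)/\sqrt{n} + 4 c_+ c_-/n$, then divide using $(1+x)^{-1} = 1 - x + x^2 + O(x^3)$ with $x$ of order $n^{-1/2}$. The leading-order $1$'s cancel and so do the $n^{-1/2}$ contributions (because both numerator and denominator share the same $c_+ + c_-$ first-order perturbation), so the first nontrivial correction lives at order $n^{-1}$:
\begin{equation*}
    \frac{4 s_+ s_-}{s_+^2 + s_-^2} = 2 + \frac{2 c_+ c_- - (c_+^2 + c_-^2)}{n} + O(n^{-3/2}) = 2 - \frac{(c_+ - c_-)^2}{n} + O(n^{-3/2}).
\end{equation*}

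For the $\bar J_1$ piece, I would plug in the Cho--Saul formula $\bar J_1(\rho) = \tfrac{1}{2\pi}[\sqrt{1-\rho^2} + (\pi - \arccos\rho)\rho]$ and use $\arccos(-\rho) = \pi - \arccos \rho$ to get $\bar J_1(-\rho) = \tfrac{1}{2\pi}[\sqrt{1-\rho^2} - \rho \arccos \rho]$. Subtracting cancels the square-root terms and combines the arccos terms, leaving the clean identity $2\bar J_1(\rho) - 2\bar J_1(-\rho) = \rho$. This identity is the key mechanism that produces the linear (identity) leading term; it is exactly the statement that a linear network ($c_+ = c_-$) keeps $\rho$ fixed from layer to layer.

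Combining the two expansions yields
\begin{equation*}
    cK_1(\rho) = \rho + \frac{(c_+ - c_-)^2}{n}\,\bar J_1(-\rho) + O(n^{-3/2}),
\end{equation*}
and substituting the closed form for $\bar J_1(-\rho)$ gives the claim. The only real subtlety is bookkeeping: the $n^{-1/2}$ corrections must be tracked carefully through the division because the Taylor remainder terms only stay at order $n^{-3/2}$ if one retains the $n^{-1}$ contributions of both numerator and denominator before inverting; I expect this to be the main place an error could creep in. Everything else is mechanical use of the formulas already recorded in \cref{lm:moments_leaky,lm:joint_moments_leaky} and the standard Cho--Saul identity.
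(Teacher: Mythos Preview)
Your proposal is correct and in fact cleaner than the paper's own argument. The paper plugs the full expression for $cK_1(\rho)$ (with $\bar J_1(\pm\rho)$ expanded via Cho--Saul and $\arccos(-\rho)=\pi-\arccos\rho$) into a symbolic algebra system and Taylor-expands the whole thing in $n^{-1/2}$; the identity $2\bar J_1(\rho)-2\bar J_1(-\rho)=\rho$ is never made explicit and only emerges after SymPy simplifies the zeroth-order coefficient. Your decomposition $cK_1(\rho)=2\bar J_1(\rho)-\tfrac{4s_+s_-}{s_+^2+s_-^2}\,\bar J_1(-\rho)$ isolates the mechanism: the leading $\rho$ comes from that exact identity, and the absence of an $n^{-1/2}$ term is visible immediately because the single rational coefficient has matching first-order perturbations in numerator and denominator. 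This buys you a hand-checkable proof and a conceptual explanation (linear networks $c_+=c_-$ fix $\rho$), whereas the paper's route is purely mechanical. One small note: your final substitution yields $\nu(\rho)=\tfrac{(c_+-c_-)^2}{2\pi}\bigl(\sqrt{1-\rho^2}-\rho\arccos\rho\bigr)$, which matches the paper's proof and the statement of \cref{thm:joint_output_m}; the $+$ sign in the displayed lemma statement is a typo.
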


\begin{proof}

We start by consider plugging in the formula from \cref{eq:moments_relu} to get 
\begin{equation}
\begin{aligned}
    c K_1(\rho) 
    &= 
    \frac{2}{s_+^2 + s_-^2} 
    \left( 
    (s_+^2 + s_-^2) \bar J_1(\rho) - 2 s_+ s_- \bar J_1(- \rho) 
    \right) \\ 
    &= 
    \frac{2}{s_+^2 + s_-^2} 
    \frac{1}{2\pi} 
    \left( 
    (s_+^2 + s_-^2) 
    \left( \sqrt{1-\rho^2} + (\pi - \arccos \rho) \rho \right)
    - 2 s_+ s_- 
    \left( \sqrt{1-\rho^2} - (\pi - \arccos (-\rho)) \rho \right)
    \right) \\
    &= 
    \frac{2}{s_+^2 + s_-^2} 
    \frac{1}{2\pi} 
    \left( 
    (s_+^2 + s_-^2) 
    \left( \sqrt{1-\rho^2} + (\pi - \arccos \rho) \rho \right)
    - 2 s_+ s_- 
    \left( \sqrt{1-\rho^2} - (\arccos \rho) \rho \right)
    \right)
    \,. 
\end{aligned}
\end{equation}
where we used the fact that $\arccos(-\rho) = \pi - \arccos(\rho)$. 

After substituting $s_\pm = 1 + \frac{c_\pm}{\sqrt{n}}$, 
we can use SymPy \cite{sympy} to Taylor expand with respect to the variable $x = n^{-1/2}$ about $x_0=0$ and get 
\begin{equation}
\begin{aligned}
    c K_1(\rho) 
    &= 
    \frac{\rho \arccos{\left(\rho \right)}}{\pi} + \frac{\rho \left(\pi - \arccos{\left(\rho \right)}\right)}{\pi} 
    \\ 
    & \quad + 
    \left(n^{-1/2}\right)^{2} 
    \Bigg(
    \frac{- \rho c_{+}^{2} \arccos{\left(\rho \right)} + 2 \rho c_{+} c_{-} \arccos{\left(\rho \right)} - \rho c_{-}^{2} \arccos{\left(\rho \right)} }{2\pi} 
    \\ 
    &\qquad\qquad\qquad\qquad + 
    \frac{
    c_{+}^{2} \sqrt{1 - \rho^{2}} - 2 c_{+} c_{-} \sqrt{1 - \rho^{2}} + c_{-}^{2} \sqrt{1 - \rho^{2}}}{2 \pi}
    \Bigg) 
    \\ 
    & \quad + 
    O\left(\left(n^{-1/2}\right)^{3}\right)
    \,, 
\end{aligned}
\end{equation}
where we used the simplify function on the coefficients to reduce the size of the expression. 

We can further simplify to get 
\begin{equation}
    c K_1(\rho) 
    = 
    \rho 
    + \frac{1}{n} \frac{(c_+ - c_-)^2}{2\pi} 
    \left( \sqrt{1-\rho^2} - \rho \arccos \rho \right) 
    + 
    O( n^{-3/2} ) \,, 
\end{equation}
which is the desired result. 

\end{proof}

We will also need an approximation result for fourth moments. 

\begin{lem}
[Fourth Moment Approximation]
\label{lm:relu_fourth_moment}
Let $g^\alpha, g^\beta, g^\gamma, g^\delta \in \mathbb{R}$ be jointly Gaussian such that 
\begin{equation}
    \begin{bmatrix}
    g^\alpha \\ g^\beta 
    \end{bmatrix}
    \sim 
    \mathcal{N} \left( 
    0 \,, 
    \begin{bmatrix}
    1 & \rho^{\alpha\beta} \\ 
    \rho^{\alpha\beta} & 1 
    \end{bmatrix}
    \right) \,, 
\end{equation}
and similarly for other pairs of $\alpha,\beta,\gamma,\delta$. 
Then 
\begin{equation}
    \mathbb{E} \, \varphi_s(g^\alpha) 
    \varphi_s(g^\beta) 
    \varphi_s(g^\gamma) 
    \varphi_s(g^\delta) 
    = \mathbb{E} \, g^{\alpha} 
    g^{\beta} g^{\gamma} g^{\delta} 
    + O(n^{-1/2}) 
    = \rho^{\alpha\beta} \rho^{\gamma\delta} 
    + \rho^{\alpha\gamma} \rho^{\beta\delta} 
    + \rho^{\alpha\delta} \rho^{\beta\gamma} 
    + O(n^{-1/2}) 
    \,, 
\end{equation}
where the constant in the $O(\cdot)$ notation is universal. 
\end{lem}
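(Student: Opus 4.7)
The plan is to exploit the fact that the shaped activation is a small perturbation of the identity. Write
\begin{equation*}
    \varphi_s(x) = x + \tfrac{1}{\sqrt{n}}\, h(x), \qquad h(x) \defequal c_+ \max(x,0) + c_- \min(x,0),
\end{equation*}
which follows directly from $s_\pm = 1 + c_\pm/\sqrt{n}$ and $x = \max(x,0) + \min(x,0)$. Note that $h$ is positive homogeneous and $|h(x)| \leq (|c_+| \vee |c_-|)\, |x|$.

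Next, expand the four-fold product. Writing $i$ for the indices $\alpha,\beta,\gamma,\delta$, we have
\begin{equation*}
    \prod_i \varphi_s(g^i) = \prod_i g^i + \tfrac{1}{\sqrt{n}} \sum_i h(g^i) \prod_{j\neq i} g^j + \sum_{k=2}^4 n^{-k/2} E_k,
\end{equation*}
where $E_k$ is a sum of products, each a monomial of total degree $4$ in the $g^i$ and $h(g^j)$ factors. Taking expectations, the leading term $\mathbb{E}\prod_i g^i$ is exactly $\rho^{\alpha\beta}\rho^{\gamma\delta} + \rho^{\alpha\gamma}\rho^{\beta\delta} + \rho^{\alpha\delta}\rho^{\beta\gamma}$ by Wick's theorem (Isserlis). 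What remains is to show the error is uniformly $O(n^{-1/2})$.

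The key uniformity step is that each $g^i$ is marginally $\cN(0,1)$, so by repeated Cauchy--Schwarz (or H\"older) each expectation appearing in the correction terms is bounded by a product of $L^p$-norms of $|g^i|$ or $|h(g^i)| \leq C|g^i|$ under the standard Gaussian law. These norms are finite absolute constants, independent of the off-diagonal correlations $\rho^{\alpha\beta},\ldots,\rho^{\gamma\delta}$ (which may be arbitrary in $[-1,1]$). Hence
\begin{equation*}
    \left| \sum_i \mathbb{E}\bigl[ h(g^i) \prod_{j\neq i} g^j \bigr] \right| \leq C', \qquad \left| \mathbb{E} E_k \right| \leq C'_k,
\end{equation*}
for universal constants, and the error is $O(n^{-1/2})$ with a universal constant, as claimed.

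The only mild subtlety is making sure the constants do not implicitly depend on $n$ through $s_\pm$; but the inequality $|h(x)| \leq (|c_+| \vee |c_-|) |x|$ bounds $h$ by a fixed function independent of $n$, so this is automatic. No delicate estimate is needed beyond Gaussian moment bounds and Wick's theorem.
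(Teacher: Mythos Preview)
Your proof is correct and follows essentially the same approach as the paper: both write $\varphi_s(x) = x + n^{-1/2}(c_+\varphi(x) - c_-\varphi(-x))$, expand the product, and apply Isserlis' theorem to the leading term. Your version is in fact slightly more careful, explicitly justifying via H\"older and the fixed marginal law $g^i\sim\cN(0,1)$ that the $O(n^{-1/2})$ constant is uniform in the correlations, whereas the paper simply asserts this.
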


\begin{proof}

We start by writing 
\begin{equation}
    \varphi_s(x) 
    = x + \frac{1}{\sqrt{n}} \left( 
        c_+ \varphi(x) 
        - c_- \varphi(-x) 
    \right) \,, 
\end{equation}
and this allows us to write 
\begin{equation}
    \mathbb{E} \, \varphi_s(g^\alpha) 
    \varphi_s(g^\beta) 
    \varphi_s(g^\gamma) 
    \varphi_s(g^\delta) 
    = \mathbb{E} \, g^{\alpha} 
    g^{\beta} g^{\gamma} g^{\delta} 
    + O(n^{-1/2}) \,. 
\end{equation}

Then by Isserlis' Theorem, we can write 
\begin{equation}
    \mathbb{E} \, g^{\alpha} 
    g^{\beta} g^{\gamma} g^{\delta} 
    = 
    \mathbb{E} \, g^{\alpha} 
    g^{\beta} 
    \mathbb{E} \, g^{\gamma} 
    g^{\delta} 
    + \mathbb{E} \, g^{\alpha} 
    g^{\gamma} 
    \mathbb{E} \, g^{\beta} 
    g^{\delta} 
    + \mathbb{E} \, g^{\alpha} 
    g^{\delta} 
    \mathbb{E} \, g^{\beta} 
    g^{\gamma} \,, 
\end{equation}
which gives us the desired result.

\end{proof}

We will also calculate a useful covariance. 

\begin{lem}
[Covariance of $R^{\alpha\beta}$]
\label{lm:cov_R_ab}
Let $g^\alpha, g^\beta, g^\gamma, g^\delta \in \mathbb{R}^n$ be jointly Gaussian vectors such that 
\begin{equation}
    \begin{bmatrix}
    g^\alpha \\ g^\beta 
    \end{bmatrix}
    \sim 
    \mathcal{N} \left( 
    0 \,, 
    \begin{bmatrix}
    1 & \rho^{\alpha\beta} \\ 
    \rho^{\alpha\beta} & 1 
    \end{bmatrix}
    \otimes I_n 
    \right) \,, 
\end{equation}
and similarly for other pairs of $\alpha,\beta,\gamma,\delta$. 
If we also define 
\begin{equation}
    R^{\alpha\beta} 
    \defequal 
    \frac{1}{\sqrt{n}} 
    \sum_{i=1}^{n} 
    \left[
    c \varphi_s( g^\alpha_{i} ) 
    \varphi_s( g^\beta_{i} ) 
    - c K_1( \rho^{\alpha\beta} )
    \right] \,, 
\end{equation}
then we have the following covariance formula: 
\begin{equation}
    \mathbb{E} \, R^{\alpha\beta} R^{\gamma\delta} 
    = 
    \rho^{\alpha\gamma} \rho^{\beta\delta} 
    + \rho^{\alpha\delta} \rho^{\beta\gamma} 
    + O( n^{-1/2} ) \,. 
\end{equation}
\end{lem}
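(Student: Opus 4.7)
The plan is to reduce the covariance to a single-coordinate computation and then apply the fourth-moment approximation already established in \cref{lm:relu_fourth_moment} together with the expansion of $cK_1$ from \cref{lm:shape_nu_expansion}.

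First, I would exploit the product structure of $R^{\alpha\beta}$. Because the $n$ coordinates $(g^\alpha_i, g^\beta_i, g^\gamma_i, g^\delta_i)_{i=1}^n$ are iid across $i$ (each is jointly Gaussian with the prescribed covariance, independent across $i$), the cross terms $i \neq j$ in the double sum
\begin{equation*}
\mathbb{E}\, R^{\alpha\beta} R^{\gamma\delta}
= \frac{1}{n}\sum_{i,j=1}^n \mathrm{Cov}\!\left( c\vp_s(g^\alpha_i)\vp_s(g^\beta_i),\, c\vp_s(g^\gamma_j)\vp_s(g^\delta_j)\right)
\end{equation*}
vanish. Hence the $n$ diagonal terms collapse the prefactor $1/n$, leaving a single-coordinate covariance that I denote by $\mathrm{Cov}_1$, independent of $n$'s averaging:
\begin{equation*}
\mathbb{E}\, R^{\alpha\beta} R^{\gamma\delta}
= c^2\, \mathbb{E}\!\left[\vp_s(g^\alpha)\vp_s(g^\beta)\vp_s(g^\gamma)\vp_s(g^\delta)\right]
- c^2 K_1(\rho^{\alpha\beta})\, K_1(\rho^{\gamma\delta}).
\end{equation*}

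Next, I would handle the two terms separately. For the fourth moment, \cref{lm:relu_fourth_moment} gives
\begin{equation*}
\mathbb{E}\!\left[\vp_s(g^\alpha)\vp_s(g^\beta)\vp_s(g^\gamma)\vp_s(g^\delta)\right]
= \rho^{\alpha\beta}\rho^{\gamma\delta} + \rho^{\alpha\gamma}\rho^{\beta\delta} + \rho^{\alpha\delta}\rho^{\beta\gamma} + O(n^{-1/2}).
\end{equation*}
For the second term, \cref{lm:shape_nu_expansion} yields $cK_1(\rho) = \rho + O(n^{-1})$, so $c^2 K_1(\rho^{\alpha\beta})\,K_1(\rho^{\gamma\delta}) = \rho^{\alpha\beta}\rho^{\gamma\delta} + O(n^{-1})$. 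Also, from \cref{eq:moments_relu} together with $s_\pm = 1 + c_\pm/\sqrt{n}$, one has $c = 1 + O(n^{-1/2})$, so $c^2 = 1 + O(n^{-1/2})$. Multiplying the fourth-moment expansion by $c^2$ contributes an extra $O(n^{-1/2})$ error (since the leading coefficient in the expansion is bounded), which is absorbed. Subtracting the two expressions, the $\rho^{\alpha\beta}\rho^{\gamma\delta}$ terms cancel and one obtains
\begin{equation*}
\mathbb{E}\, R^{\alpha\beta}R^{\gamma\delta}
= \rho^{\alpha\gamma}\rho^{\beta\delta} + \rho^{\alpha\delta}\rho^{\beta\gamma} + O(n^{-1/2}),
\end{equation*}
which is the claim.

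The only delicate point is bookkeeping the error rates: the $cK_1$ expansion is accurate to $O(n^{-1})$, but the fourth moment is only known to $O(n^{-1/2})$, so the weaker $O(n^{-1/2})$ rate dictates the final bound. I would double-check that the hidden constants in \cref{lm:relu_fourth_moment} are uniform in the correlation parameters $\rho^{\alpha\beta},\rho^{\alpha\gamma},\ldots \in [-1,1]$ (they are, since the six correlations live in a compact parameter region and $\vp_s$ is linearly bounded), which legitimizes absorbing the extra $c^2 - 1 = O(n^{-1/2})$ factor without creating new $\rho$-dependent terms at leading order. No further obstacles arise.
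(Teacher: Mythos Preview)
Your proposal is correct and follows essentially the same route as the paper: reduce to a single-coordinate covariance via the iid structure across $i$, then apply \cref{lm:relu_fourth_moment} for the fourth moment and \cref{lm:shape_nu_expansion} (together with $c = 1 + O(n^{-1/2})$) for the centering term. The paper keeps the factored form $c^2\,\mathbb{E}[(\vp_s g^\alpha \vp_s g^\beta - K_1)(\vp_s g^\gamma \vp_s g^\delta - K_1)]$ before substituting approximations, but this is only a cosmetic difference from your expansion into the fourth moment minus the product of means.
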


\begin{proof}

We first observe that since each entry of the sum in $R^{\alpha\beta}$ are iid and zero mean, it is sufficient to just compute the covariance a single term. 
In other words 
\begin{equation}
    \mathbb{E} \, R^{\alpha\beta} R^{\gamma\delta} 
    = \mathbb{E} \, c^2 
    \left( \varphi_s( g^\alpha_{i} ) 
    \varphi_s( g^\beta_{i} ) 
    - K_1( \rho^{\alpha\beta} )
    \right) \, 
    \left( \varphi_s( g^\gamma_{i} ) 
    \varphi_s( g^\delta_{i} ) 
    - K_1( \rho^{\gamma\delta} )
    \right) \,. 
\end{equation}

Since $c = 1 + O(n^{-1/2})$ and 
$K_1(\rho) = \rho + O(n^{-1})$ from \cref{lm:shape_nu_expansion}, 
we can further write this as 
\begin{equation}
    \mathbb{E} \, R^{\alpha\beta} R^{\gamma\delta} 
    = 
    \mathbb{E} 
    \left( \varphi_s( g^\alpha_{i} ) 
    \varphi_s( g^\beta_{i} ) 
    - \rho^{\alpha\beta} 
    \right) \, 
    \left( \varphi_s( g^\gamma_{i} ) 
    \varphi_s( g^\delta_{i} ) 
    - \rho^{\gamma\delta} 
    \right) 
    + 
    O(n^{-1/2}) \,, 
\end{equation}
and we can use the fourth moment approximation \cref{lm:relu_fourth_moment} to get 
\begin{equation}
\begin{aligned}
    \mathbb{E} \, R^{\alpha\beta} R^{\gamma\delta} 
    &= 
    \rho^{\alpha\beta} \rho^{\gamma\delta} 
    + \rho^{\alpha\gamma} \rho^{\beta\delta} 
    + \rho^{\alpha\delta} \rho^{\beta\gamma} 
    - \rho^{\alpha\beta} \rho^{\gamma\delta} 
    - \rho^{\alpha\beta} \rho^{\gamma\delta} 
    + \rho^{\alpha\beta} \rho^{\gamma\delta} 
    + O(n^{-1/2}) 
    \\ 
    &= \rho^{\alpha\gamma} \rho^{\beta\delta} 
    + \rho^{\alpha\delta} \rho^{\beta\gamma} 
    + O( n^{-1/2} ) \,, 
\end{aligned}
\end{equation}
which is the desired result.

\end{proof}

\subsection{Proof of \saferef{Theorem}{thm:joint_output_m} (Covariance SDE, ReLU)}

\label{subsec:proof_relu_joint_output}

We start by restating the theorem. 

\begin{thm}
[Covariance SDE, ReLU]
Let $V^{\alpha\beta}_\ell \defequal \frac{c}{n} \langle \varphi^\alpha_\ell, \varphi^\beta_\ell \rangle$, 
and define $V_\ell \defequal [ V^{\alpha\beta}_\ell ]_{1\leq \alpha \leq \beta=m}$ to be the upper triangular entries thought of as a vector in $\mathbb{R}^{m(m+1)/2}$.
Then, with $s_\pm = 1 + \frac{c_\pm}{\sqrt{n}}$ as in \cref{def:relu-shaping}, in the limit as $n\to\infty, \frac{d}{n} \to T$, the interpolated process $V_{\lfloor tn \rfloor}$ converges in distribution to the solution of the following SDE in the Skorohod topology of $D_{\bR_+, \bR^{m(m+1)/2}}$
\begin{equation}
    d V_t = b( V_t ) \, dt 
    + \Sigma(V_t)^{1/2} \, dB_t \,, 
    \quad 
    V_0 = \left[ \frac{1}{\nin} 
        \langle x^\alpha, x^\beta \rangle 
    \right]_{1 \leq \alpha \leq \beta \leq m} \,, 
\end{equation}
where we denote $\nu(\rho) \defequal \frac{(c_+ - c_-)^2}{2\pi} \left( \sqrt{1-\rho^2} - \rho \arccos \rho \right), 
\rho^{\alpha\beta}_t \defequal  \frac{V^{\alpha\beta}_t}{ \sqrt{ V^{\alpha\alpha}_t V^{\beta\beta}_t } }$ and write 
\begin{equation}
    b( V_t ) 
    = \left[ \nu\left( 
    \rho^{\alpha\beta}_t
    \right) 
    \sqrt{V^{\alpha\alpha}_t V^{\beta\beta}_t} \right]_{1\leq \alpha \leq \beta \leq m} \,, 
    \quad 
    \Sigma( V_t )
    = \left[ 
        V^{\alpha\gamma}_t V^{\beta\delta}_t 
        + V^{\alpha\delta}_t V^{\beta\gamma}_t 
    \right]_{\alpha\leq\beta, \gamma\leq\delta}
    \,. 
\end{equation}
Furthermore, the output distribution can be described conditional on $V_T$ evaluated at final time $T$
\begin{equation}
    \left[ \zout^\alpha \right]_{\alpha=1}^m | {V_T}
    \overset{d}{=}
    \cN\left( 0 , [ V^{\alpha\beta}_T ]_{\alpha,\beta=1}^m 
    \right) \,. 
\end{equation}
\end{thm}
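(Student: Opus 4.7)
The plan is to recognize $V_\ell$ as a Markov chain of exactly the form required by \cref{prop:conv_markov_chain_to_sde}, identify the drift and diffusion from a one-step expansion, and then read off the limiting SDE. The conditional output distribution is then immediate from \eqref{eq:conditional_gaussian}.

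\textbf{Step 1 (one-step update).} Starting from the conditional Gaussian structure \eqref{eq:conditional_gaussian} and the positive homogeneity of $\varphi_s$, I would write, just as in \eqref{eq:Gram_matrix_update},
\begin{equation*}
V^{\alpha\beta}_{\ell+1} \;=\; \sqrt{V^{\alpha\alpha}_\ell V^{\beta\beta}_\ell}\left( c K_1(\rho^{\alpha\beta}_\ell) + \frac{1}{\sqrt n}\, R^{\alpha\beta}_\ell\right),
\end{equation*}
where $R^{\alpha\beta}_\ell \defequal \frac{1}{\sqrt n}\sum_{i=1}^n\!\left(c\varphi_s(g^\alpha_i)\varphi_s(g^\beta_i) - c K_1(\rho^{\alpha\beta}_\ell)\right)$ is centered conditional on $\mathcal{F}_\ell$, and $\rho^{\alpha\beta}_\ell = V^{\alpha\beta}_\ell/\sqrt{V^{\alpha\alpha}_\ell V^{\beta\beta}_\ell}$.

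\textbf{Step 2 (identify drift).} Apply \cref{lm:shape_nu_expansion} to get $cK_1(\rho) = \rho + \frac{1}{n}\nu(\rho) + O(n^{-3/2})$, so
\begin{equation*}
V^{\alpha\beta}_{\ell+1} - V^{\alpha\beta}_\ell \;=\; \frac{1}{n}\,\nu(\rho^{\alpha\beta}_\ell)\sqrt{V^{\alpha\alpha}_\ell V^{\beta\beta}_\ell} \;+\; \frac{1}{\sqrt n}\sqrt{V^{\alpha\alpha}_\ell V^{\beta\beta}_\ell}\,R^{\alpha\beta}_\ell \;+\; O(n^{-3/2}),
\end{equation*}
which is exactly the form $\Delta Y^n_\ell = \tfrac{1}{n}b_n(Y^n_\ell) + \tfrac{1}{\sqrt n}\sigma_n(Y^n_\ell)\xi_\ell + O(n^{-3/2})$ of \cref{prop:conv_markov_chain_to_sde} with $p=1/2$, drift $b^{\alpha\beta}(V) = \nu(\rho^{\alpha\beta})\sqrt{V^{\alpha\alpha}V^{\beta\beta}}$, and noise $\sqrt{V^{\alpha\alpha}V^{\beta\beta}}R^{\alpha\beta}_\ell$.

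\textbf{Step 3 (identify diffusion).} The diffusion coefficient is fixed by the conditional covariance of the vector of noises $(\sqrt{V^{\alpha\alpha}V^{\beta\beta}}R^{\alpha\beta}_\ell)_{\alpha\leq\beta}$. By \cref{lm:cov_R_ab},
\begin{equation*}
\sqrt{V^{\alpha\alpha}V^{\beta\beta}V^{\gamma\gamma}V^{\delta\delta}}\,\cov_\ell(R^{\alpha\beta}_\ell,R^{\gamma\delta}_\ell) \;=\; V^{\alpha\gamma}V^{\beta\delta} + V^{\alpha\delta}V^{\beta\gamma} + O(n^{-1/2}),
\end{equation*}
after substituting $\rho^{\alpha\gamma} = V^{\alpha\gamma}/\sqrt{V^{\alpha\alpha}V^{\gamma\gamma}}$ etc. This recovers exactly the matrix $\Sigma(V)$ stated in the theorem, and its PSDness follows automatically since it is a covariance matrix.

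\textbf{Step 4 (apply the convergence principle and output distribution).} The coefficients $b$ and $\Sigma^{1/2}$ are smooth on the interior of the state space; the noise increments $R^{\alpha\beta}_\ell$ have uniformly bounded moments in $n$ (being averages of products of shaped Gaussians with $s_\pm \to 1$), and the $O(n^{-3/2})$ Taylor remainders are controlled since $cK_1$ is analytic. Invoking \cref{prop:conv_markov_chain_to_sde} yields the claimed SDE convergence in the Skorohod topology. The output distribution statement is a direct restatement of the second line of \eqref{eq:conditional_gaussian} with $V^{\alpha\beta}_T$ in place of $\frac{c}{n}\langle\varphi^\alpha_d,\varphi^\beta_d\rangle$, obtained by conditioning on $\mathcal{F}_d$ and taking the limit.

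\textbf{Main obstacle.} The real subtlety is that $b(V)$ depends on $\rho^{\alpha\beta} = V^{\alpha\beta}/\sqrt{V^{\alpha\alpha}V^{\beta\beta}}$, which is singular on $\{V^{\alpha\alpha}=0\}$, and $\Sigma(V)$ has quadratic growth, so the coefficients are only locally Lipschitz. I would handle this by working on the localization sets $\{V^{\alpha\alpha}\in[r^{-1},r]\text{ for all }\alpha\}$ from \cref{def:local_convergence} to apply the locally-Lipschitz branch of \cref{prop:conv_markov_chain_to_sde}, then remove the localization by noting that each diagonal $V^{\alpha\alpha}_t$ is a geometric Brownian motion (\cref{sec:geometric_Brownian_motion}, since $\nu(1)=0$ makes its drift vanish), so none of the diagonals hits $0$ or $\infty$ in finite time; the Cauchy--Schwarz bound $|V^{\alpha\beta}_t|\leq\sqrt{V^{\alpha\alpha}_t V^{\beta\beta}_t}$ then keeps all off-diagonals finite as well, upgrading the local convergence to honest Skorohod convergence on $D_{\bR_+,\bR^{m(m+1)/2}}$.
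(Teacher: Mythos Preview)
Your proposal is correct and follows essentially the same route as the paper: the same one-step update via positive homogeneity, \cref{lm:shape_nu_expansion} for the drift, \cref{lm:cov_R_ab} for the diffusion, and then \cref{prop:conv_markov_chain_to_sde}. If anything you are more careful than the paper, which simply invokes \cref{prop:conv_markov_chain_to_sde} without discussing the local-Lipschitz/singularity issue you raise in your ``Main obstacle'' paragraph; your localization-plus-geometric-Brownian-motion argument is a reasonable way to close that gap.
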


\begin{proof}

We start by recalling the definitions 
\begin{equation}
    V^{\alpha\beta}_{\ell+1} 
    \defequal 
    \frac{c}{n} \langle \varphi^\alpha_{\ell+1} , 
    \varphi^\beta_{\ell+1} \rangle 
    = \frac{c}{n} \left\langle 
    \varphi_s \left( \sqrt{\frac{c}{n}} W_\ell \varphi^\alpha_\ell 
    \right) , 
    \varphi_s \left( \sqrt{\frac{c}{n}} W_\ell \varphi^\beta_\ell 
    \right)
    \right\rangle \,. 
\end{equation}

At this point, we can define 
\begin{equation}
    g^\alpha_\ell \defequal W_\ell \frac{\varphi^\alpha_\ell}{ |\varphi^\alpha_\ell| } \,, 
\end{equation}
and observe that 
\begin{equation}
    \left. \begin{bmatrix}
    g^\alpha_\ell \\ g^\beta_\ell
    \end{bmatrix} 
    \right| \mathcal{F}_\ell 
    \overset{d}{=} 
    N \left( 
    0 \,, 
    \begin{bmatrix}
    1 & \rho^{\alpha\beta}_\ell \\ 
    \rho^{\alpha\beta}_\ell & 1  
    \end{bmatrix} 
    \otimes 
    I_n 
    \right) \,, 
\end{equation}
where $\mathcal{F}_\ell$ is the sigma-algebra generated by $[z^{\alpha}_\ell]_{\alpha=1}^m$, 
$\rho^{\alpha\beta}_\ell \defequal \frac{ \langle \varphi^\alpha_\ell , \varphi^\beta_\ell \rangle }{ |\varphi^\alpha_\ell| \, |\varphi^\beta_\ell| }$, 
and $\otimes$ denotes the Kronecker product. 
Then we can use positive homogeneity (i.e. $\varphi_s(cx) = |c| \varphi_s(x)$) to write 
\begin{equation}
\begin{aligned}
    V^{\alpha\beta}_{\ell+1} 
    &= 
    \frac{c}{n} |\varphi^\alpha_\ell| \, 
    |\varphi^\beta_\ell| \, 
    \frac{c}{n} 
    \langle \varphi_s( g^\alpha_\ell ) , 
    \varphi_s( g^\beta_\ell ) \rangle 
    \\ 
    &= 
    \sqrt{ V^{\alpha\alpha}_\ell V^{\beta\beta}_\ell } \, 
    \left( c K_1( \rho^{\alpha\beta}_\ell ) 
    + \frac{1}{\sqrt{n}} \frac{1}{\sqrt{n}} 
    \sum_{i=1}^{n} 
    \left[ c \varphi_s( g^\alpha_{\ell,i} ) 
    \varphi_s( g^\beta_{\ell,i} ) 
    - c K_1( \rho^{\alpha\beta}_\ell ) 
    \right] 
    \right) \\ 
    &=: 
    \sqrt{ V^{\alpha\alpha}_\ell V^{\beta\beta}_\ell }
    \left( c K_1( \rho^{\alpha\beta}_\ell ) 
    + \frac{1}{\sqrt{n}} 
    R^{\alpha\beta}_\ell 
    \right) \,, 
\end{aligned}
\end{equation}
where we defined $R^{\alpha\beta}_\ell \defequal 
\frac{1}{\sqrt{n}} 
    \sum_{i=1}^{n} 
    \left[
    c \varphi_s( g^\alpha_{\ell,i} ) 
    \varphi_s( g^\beta_{\ell,i} ) 
    - c K_1( \rho^{\alpha\beta}_\ell )
    \right]$. 

Next we use the expansion of $c K_1(\rho^{\alpha\beta}_\ell)$ from \cref{lm:shape_nu_expansion} to write 
\begin{equation}
\begin{aligned}
    V^{\alpha\beta}_{\ell+1} 
    &= 
    \sqrt{ V^{\alpha\alpha}_\ell V^{\beta\beta}_\ell }
    \left( \rho^{\alpha\beta}_\ell 
    + \frac{ \nu(\rho^{\alpha\beta}_\ell) }{n} 
    + \frac{1}{\sqrt{n}} 
    R^{\alpha\beta}_\ell 
    \right) 
    + O(n^{-3/2}) 
    \\ 
    &= 
    V^{\alpha\beta}_\ell 
    + 
    \frac{1}{n} 
    \nu(\rho^{\alpha\beta}_\ell) 
    \sqrt{ V^{\alpha\alpha}_\ell V^{\beta\beta}_\ell } 
    + 
    \frac{1}{\sqrt{n}} 
    \sqrt{ V^{\alpha\alpha}_\ell V^{\beta\beta}_\ell } R^{\alpha\beta}_\ell 
    + 
    O(n^{-3/2}) 
    \,, 
\end{aligned}
\end{equation}
which essentially recovers the Markov chain form we want from \cref{prop:conv_markov_chain_to_sde}, where the drift is 
\begin{equation}
    b( V ) 
    = \nu\left(\rho^{\alpha\beta}_\ell \right) 
    \sqrt{ V^{\alpha\alpha}_\ell V^{\beta\beta}_\ell }
    \,, 
\end{equation}
as desired. 

It remains to simply compute the covariance conditioned on previous layer. 
To this end, we will use \cref{lm:cov_R_ab} to write 
\begin{equation}
\begin{aligned}
    \Sigma(V_\ell)_{\alpha\beta,\gamma\delta} 
    &= 
    \mathbb{E}_\ell \left[ 
    \sqrt{ V^{\alpha\alpha}_\ell V^{\beta\beta}_\ell } R^{\alpha\beta}_\ell 
    \sqrt{ V^{\gamma\gamma}_\ell V^{\delta\delta}_\ell } R^{\gamma\delta}_\ell 
    \right]
    \\ 
    &= 
    \sqrt{ V^{\alpha\alpha}_\ell V^{\beta\beta}_\ell
    V^{\gamma\gamma}_\ell V^{\delta\delta}_\ell } 
    \left( \rho^{\alpha\gamma}_\ell \rho^{\beta\delta}_\ell 
    + \rho^{\alpha\delta}_\ell \rho^{\beta\gamma}_\ell
    + O(n^{-1/2}) 
    \right) 
    \\ 
    &= 
    V^{\alpha\gamma}_\ell V^{\beta\delta}_\ell 
    + V^{\alpha\delta}_\ell V^{\beta\gamma}_\ell 
    + O(n^{-1/2}) \,, 
\end{aligned}
\end{equation}
where we recall $\mathbb{E}_\ell[\,\cdot\,] = \mathbb{E}[\,\cdot\,|\mathcal{F}_\ell]$ is the conditional expectation given the sigma-algebra generated by $\{z^\alpha_\ell\}_{\alpha=1}^m$. 
By setting $\sigma = \Sigma^{1/2}$, we then recover the desired SDE via 
\cref{prop:conv_markov_chain_to_sde} 
on the Markov chain of $V^{\alpha\beta}_\ell$. 

\end{proof}

\subsection{Proof of \saferef{Theorem}{thm:relu_corr} (Correlation SDE, ReLU)}
\label{subsec:proof_relu_corr}

We start by restating the theorem. 

\begin{thm}
[Correlation SDE, ReLU]
Let $\rho^{\alpha\beta}_\ell \defequal 
\frac{\langle \varphi_\ell^\alpha, \varphi_\ell^\beta \rangle}{
|\varphi_\ell^\alpha| \, |\varphi_\ell^\beta|}$, 
where $\varphi^\alpha_\ell \defequal \varphi_s(z^\alpha_\ell)$. 
In the limit as $n\to\infty$ and $s_\pm = 1 + \frac{c_\pm}{\sqrt{n}}$, the interpolated process $\rho^{\alpha\beta}_{\lfloor tn \rfloor}$ converges in distribution to the solution of the following SDE in the Skorohod topology of $D_{\bR_+, \bR}$ 
\begin{equation}
    d \rho^{\alpha\beta}_t 
    = \left[ \nu( \rho^{\alpha\beta}_t ) 
        + \mu( \rho^{\alpha\beta}_t ) \right] 
        \, dt 
        + \sigma( \rho^{\alpha\beta}_t ) \, dB_t \,, 
    \quad 
    \rho^{\alpha\beta}_0 = \frac{\langle x^\alpha, x^\beta \rangle}{ |x^\alpha| \, |x^\beta| } \,, 
\end{equation}
where 
\begin{equation}
	\nu(\rho) 
	= 
		\frac{ (c_+ - c_-)^2 }{ 2\pi } 
		\left[ 
		\sqrt{1 - \rho^2} - \arccos(\rho) \rho 
		\right] 
		\,, \quad 
	\mu(\rho) 
	= - \frac{1}{2} \rho (1 - \rho^2)
		\,, \quad 
	\sigma(\rho) 
	= 
		1-\rho^2 \,. 
\end{equation}
\end{thm}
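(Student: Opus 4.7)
The cleanest route is to derive this one-dimensional SDE as a consequence of the covariance SDE in \cref{thm:joint_output_m} by applying It\^o's formula to the function $f(x,y,z) = z/\sqrt{xy}$, which expresses the correlation $\rho^{\alpha\beta}_t$ in terms of the three-component marginal $(V^{\alpha\alpha}_t, V^{\beta\beta}_t, V^{\alpha\beta}_t)$. The plan is: restrict the SDE in \cref{thm:joint_output_m} to this triple, apply It\^o to $f$, and read off the drift and diffusion coefficients, checking that they simplify to $\nu+\mu$ and $\sigma$ respectively.

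First, I would extract the restricted drift and diffusion. From \cref{thm:joint_output_m}, since $\nu(1)=0$, the diagonal drifts $b^{\alpha\alpha}$ and $b^{\beta\beta}$ vanish, while $b^{\alpha\beta}(V_t) = \nu(\rho^{\alpha\beta}_t)\sqrt{V^{\alpha\alpha}_t V^{\beta\beta}_t}$. The relevant entries of $\Sigma(V_t)$ are $\Sigma_{\alpha\alpha,\alpha\alpha} = 2(V^{\alpha\alpha})^2$, $\Sigma_{\alpha\beta,\alpha\beta} = V^{\alpha\alpha}V^{\beta\beta}(1+\rho^2)$, $\Sigma_{\alpha\alpha,\beta\beta} = 2\rho^2 V^{\alpha\alpha}V^{\beta\beta}$, $\Sigma_{\alpha\alpha,\alpha\beta} = 2 V^{\alpha\alpha}V^{\alpha\beta}$, and the symmetric analogues. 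Computing the partial derivatives $\partial_x f = -f/(2x)$, $\partial_y f = -f/(2y)$, $\partial_z f = 1/\sqrt{xy}$ and the relevant second derivatives, the drift contribution $\partial_z f \cdot b^{\alpha\beta}$ collapses to exactly $\nu(\rho^{\alpha\beta}_t)$.

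Next, I would compute the It\^o second-order correction $\tfrac{1}{2}\mathrm{tr}(\nabla^2 f \cdot \Sigma)$ evaluated along the process. Substituting the derivatives and covariances and repeatedly using $V^{\alpha\beta}/\sqrt{V^{\alpha\alpha}V^{\beta\beta}} = \rho$, the expression should reduce to $\tfrac{1}{2}(3\rho + \rho^3 - 4\rho) = -\tfrac{1}{2}\rho(1-\rho^2) = \mu(\rho^{\alpha\beta}_t)$. For the martingale part, the quadratic variation $(\nabla f)^\top \Sigma (\nabla f)$ is a scalar computation that I expect to simplify (after cancellations of the form $\rho^2 + \rho^2 + 1 + \rho^2 + \rho^4 - 2\rho^2 - 2\rho^2$) to $(1-\rho^2)^2$. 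By L\'evy's characterization applied to a suitable rescaling of the driving Brownian motion, the martingale part admits the representation $\sigma(\rho^{\alpha\beta}_t)\,dB_t$ with $\sigma(\rho) = 1 - \rho^2$ for a one-dimensional Brownian motion $B_t$ (since the quadratic variation is rank-one, this is just a time change of a standard BM).

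The main obstacle is that $f$ is singular on $\{V^{\alpha\alpha}=0\}\cup\{V^{\beta\beta}=0\}$, so I need to verify that the trajectory stays in the domain of $f$. The diagonal marginals satisfy $dV^{\alpha\alpha}_t = \sqrt{2}\,V^{\alpha\alpha}_t\,dB_t^{\alpha\alpha}$ (geometric Brownian motion), which stay strictly positive almost surely whenever initialized positive, so the transformation is well-defined throughout. A secondary technicality is the passage from the discrete Markov chain convergence to the transformed process: either one invokes the continuous mapping theorem on compact subsets where $f$ is uniformly continuous, or one reruns the argument of \cref{prop:conv_markov_chain_to_sde} directly on the Markov chain for $\rho^{\alpha\beta}_\ell$ by Taylor expanding the ratio update (in the spirit of \cref{prop:unshaped_relu_corr}) and verifying that the leading-order drift and variance coefficients match $\nu+\mu$ and $\sigma^2$. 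Either route should work; I would take the first since \cref{thm:joint_output_m} already provides the heavy lifting.
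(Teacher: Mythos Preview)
Your approach is correct and is precisely the route the paper acknowledges as available but deliberately sets aside: the paper's proof opens with ``While it is possible to obtain this result as a consequence of \cref{thm:joint_output_m} via It\^o's Lemma, we will show an alternative derivation by extending the steps of \cref{prop:unshaped_relu_corr}.'' You take the It\^o route; the paper instead derives the one-step update for $\rho^{\alpha\beta}_\ell$ directly by Taylor expanding the ratio $\langle\varphi^\alpha_{\ell+1},\varphi^\beta_{\ell+1}\rangle / (|\varphi^\alpha_{\ell+1}|\,|\varphi^\beta_{\ell+1}|)$ in powers of $n^{-1/2}$, identifies the drift and variance coefficients in terms of $K_1,K_2,K_{3,1},M_2$ (reusing the machinery of \cref{prop:unshaped_relu_corr}), simplifies these via the shaped-limit approximations $cK_1(\rho)=\rho+\nu(\rho)/n+O(n^{-3/2})$, $K_2\to 1+2\rho^2$, $K_{3,1}\to 3\rho$, $M_2\to 2$, and then invokes \cref{prop:conv_markov_chain_to_sde} on the resulting one-dimensional chain. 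Your approach is cleaner once \cref{thm:joint_output_m} is in hand and avoids recomputing moment formulas; the paper's direct approach is self-contained, yields explicit finite-$n$ coefficients $\mu_{\text{ReLU}},\sigma_{\text{ReLU}}$ that tie back to the unshaped Markov chain in \cref{eq:corr_update}, and gives the form reused in the proof of \cref{prop:relu_critical_exponent}. Your handling of the boundary singularity via the positivity of the geometric Brownian motion $V^{\alpha\alpha}_t$ is sound, as is the one-dimensional martingale representation yielding $\sigma(\rho_t)\,dB_t$.
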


\begin{proof}

While it is possible to obtain this result as a consequence of \cref{thm:joint_output_m} via It\^o's Lemma, we will show an alternative derivation by extending the steps of \cref{prop:unshaped_relu_corr}, where we can directly compute the Taylor expansion in the event $E \defequal \{ |\varphi_{\ell+1}^\alpha| , |\varphi_{\ell+1}^\beta| > 0 \}$ 
\begin{equation}
\label{eq:corr_markov_chain}
    \rho^{\alpha\beta}_{\ell+1} 
    = 
        \frac{\langle \varphi_{\ell+1}^\alpha, \varphi_{\ell+1}^\beta \rangle}{
        |\varphi_{\ell+1}^\alpha| \, |\varphi_{\ell+1}^\beta|} 
    = 
    c K_1( \rho^{\alpha\beta}_\ell ) 
    + \frac{ \widetilde \mu(\rho^{\alpha\beta}_\ell)}{n} 
    + \sigma(\rho^{\alpha\beta}_\ell) 
    \frac{\xi_\ell}{\sqrt{n}} 
    + O(n^{-3/2})
    \,, 
\end{equation}
where (unconditioned on $E$) $\xi_\ell$ are iid with mean zero variance one and 
\begin{equation}
\begin{aligned}
    \mu(\rho^{\alpha\beta}_\ell) 
    &\defequal 
        \mathbb{E}_\ell \, \widetilde \mu(\rho^{\alpha\beta}_\ell) 
    = 
        \frac{c}{4} \left[ K_1 ( c^2 K_2 + 3 M_2 + 3 ) - 4c K_{3,1} 
		\right] \,, \\ 
    \sigma^2(\rho^{\alpha\beta}_\ell) 
    &\defequal 
        \frac{c^2}{2} \left[ 
		K_1^2 ( c^2 K_2 + M_2 + 1 ) - 4c K_1 K_{3,1} + 2K_2 
		\right] \,, 
\end{aligned}
\end{equation}
where we replaced $\mu_{\text{ReLU}}, \sigma_{\text{ReLU}}$ with $\mu, \sigma$ as we will be shaping the activation function, 
and we recall $\mathbb{E}_\ell[\,\cdot\,] = \mathbb{E}[\,\cdot\,|\mathcal{F}_\ell]$ is the conditional expectation given the sigma-algebra generated by $\{z^\alpha_\ell\}_{\alpha=1}^m$.

We note that the undefined event $E$ occurs only when $z^\alpha_{\ell+1}$ or $z^{\beta}_{\ell+1}$ has all negative entries, which occurs with probability $O(2^{-n})$. 
Since all the terms of interest have finite moments, we can proceed by removing this event $E$ in a similar fashion as \cref{prop:unshaped_relu_corr}. 

Using the expansion of $c K_1(\rho)$ from \cref{lm:shape_nu_expansion}, we can now write 
\begin{equation}
    \rho^{\alpha\beta}_{\ell+1} 
    = 
    \rho^{\alpha\beta}_\ell 
    + \frac{\nu(\rho^{\alpha\beta}_\ell) + \widetilde \mu(\rho^{\alpha\beta}_\ell)}{n} 
    + \sigma(\rho^{\alpha\beta}_\ell) 
    \frac{\xi_\ell}{\sqrt{n}} 
    + O(n^{-3/2}) \,. 
\end{equation}

Furthermore, we also have that by \cref{lm:shape_nu_expansion} and \cref{lm:relu_fourth_moment} 
\begin{equation}
\begin{aligned}
 	K_1 
 	& 
	= \rho^{\alpha\beta}_\ell + O(n^{-1}) 
		\,, \quad
	K_2 
	= 2 (\rho^{\alpha\beta}_\ell)^2 + 1 
	    + O(n^{-1/2}) 
		\,, \\ 
	K_{3,1} 
	&
	= 3 \rho^{\alpha\beta}_\ell 
	    + O(n^{-1/2}) 
		\,, \quad 
	M_2 
	= 2 + O(n^{-1/2}) \,, 
\end{aligned} 
\end{equation}
which gives us the desired formula of 
\begin{equation}
    \mu(\rho) 
	= - \frac{1}{2} \rho (1 - \rho^2)
		\,, \quad 
	\sigma(\rho) 
	= 
		1-\rho^2 \,. 
\end{equation}

Finally, we can recover the desired SDE via \cref{prop:conv_markov_chain_to_sde}.

\end{proof}

\subsection{Joint Correlation SDE}
\label{subsec:app_joint_corr_sde}

In this section, we will extend \cref{thm:relu_corr} to a general joint process over all the possible pairs of correlations. 

\begin{thm}
[Joint Correlation SDE]
\label{thm:joint_corr_sde_m}
Let $\rho^{\alpha\beta}_\ell \defequal \frac{\langle \varphi^\alpha_\ell, \varphi^\beta_\ell \rangle}{ |\varphi^\alpha_\ell| \, |\varphi^\beta_\ell| }$, 
and define $\rho_\ell \defequal [ \rho^{\alpha\beta}_\ell ]_{1\leq \alpha \leq \beta=m}$ to be the upper triangular entries thought of as a vector in $\mathbb{R}^{m(m+1)/2}$.
Then, with $s_\pm = 1 + \frac{c_\pm}{\sqrt{n}}$ as in \cref{def:relu-shaping}, in the limit as $n\to\infty, \frac{d}{n} \to T$, the interpolated process $\rho_{\lfloor tn \rfloor}$ converges in distribution to the solution of the following SDE in the Skorohod topology of $D_{\bR_+, \bR^{m(m+1)/2}}$
\begin{equation}
    d \rho_t = b( \rho_t ) \, dt 
    + \Sigma(\rho_t)^{1/2} \, dB_t \,, 
    \quad 
    \rho_0 = \left[ 
        \frac{ \langle x^\alpha, x^\beta \rangle }{ |x^\alpha| \, |x^\beta| } 
    \right]_{1 \leq \alpha \leq \beta \leq m} \,, 
\end{equation}
where the coefficients are defined by 
\begin{equation}
\begin{aligned}
    b( \rho_t ) 
    &= \left[ 
    \nu( \rho^{\alpha\beta}_t ) 
    + \mu( \rho^{\alpha\beta}_t ) 
    \right]_{1\leq \alpha \leq \beta \leq m} \,, 
    \quad \\ 
    \Sigma( \rho_t )
    &= 
        \bigg[ \rho^{\alpha\gamma} \rho^{\beta\delta} 
		+ \rho^{\alpha\delta} \rho^{\beta\gamma} 
		+ \frac{1}{2} \rho^{\alpha\beta} \rho^{\gamma\delta} 
		\left( ( \rho^{\alpha\gamma} )^2 
			+ ( \rho^{\beta\gamma} )^2 
			+ ( \rho^{\alpha\delta} )^2 
			+ ( \rho^{\beta\delta} )^2 
		\right) 
		\\ 
	&\qquad\qquad\qquad\qquad
		- \rho^{\alpha\beta} 
			\left( \rho^{\alpha\gamma} \rho^{\alpha\delta} 
			+ \rho^{\beta\gamma} \rho^{\beta\delta} \right) 
		- \rho^{\gamma\delta} 
			\left( \rho^{\alpha\gamma} \rho^{\beta\gamma} 
			+ \rho^{\alpha\delta} \rho^{\beta\delta} \right) 
	\bigg]_{ \alpha\leq\beta, \gamma\leq\delta } 
	\,, 
\end{aligned}
\end{equation}
with $\nu,\mu$ defined as in \cref{thm:relu_corr}. 
\end{thm}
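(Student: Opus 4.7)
The cleanest route is to apply the multivariate It\^o formula to the joint covariance SDE from \cref{thm:joint_output_m}, since $\rho^{\alpha\beta}_t = V^{\alpha\beta}_t/\sqrt{V^{\alpha\alpha}_t V^{\beta\beta}_t}$ is a smooth function of $V_t$ wherever the diagonals are strictly positive. The diagonals $V^{\alpha\alpha}_t$ are geometric Brownian motions (their drift vanishes because $\nu(1)=0$), so they remain strictly positive almost surely and the coordinate change is well-defined along the whole sample path. The Skorohod convergence $\rho_{\lfloor tn\rfloor}\Rightarrow \rho_t$ then follows from the corresponding convergence of $V_{\lfloor tn\rfloor}$ via the continuous mapping theorem: the limit is continuous with $V^{\alpha\alpha}_t > 0$, so the path map $V_\cdot \mapsto \rho_\cdot$ is continuous at the limit. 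An equivalent but less elegant alternative would be to redo the joint Taylor expansion from the proof of \cref{thm:relu_corr} for the vector-valued Markov chain $\rho_\ell$ and invoke \cref{prop:conv_markov_chain_to_sde} directly.

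For the drift, I first compute the three non-vanishing partials $\partial_{V^{\alpha\beta}} f^{\alpha\beta} = (V^{\alpha\alpha}V^{\beta\beta})^{-1/2}$, $\partial_{V^{\alpha\alpha}} f^{\alpha\beta} = -\tfrac12 \rho^{\alpha\beta}/V^{\alpha\alpha}$, and symmetrically for $V^{\beta\beta}$. In the first-order contribution $\sum \partial_{ij}f^{\alpha\beta}\,b^{ij}(V)$, only the $(\alpha,\beta)$ term survives (the $V^{\alpha\alpha}$ and $V^{\beta\beta}$ drifts vanish), and it collapses to $\nu(\rho^{\alpha\beta})\sqrt{V^{\alpha\alpha}V^{\beta\beta}}\cdot(V^{\alpha\alpha}V^{\beta\beta})^{-1/2} = \nu(\rho^{\alpha\beta})$. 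The It\^o correction $\tfrac12\sum \partial^2_{ij,kl}f^{\alpha\beta}\,\Sigma_{ij,kl}$ involves only the Hessian block in the three coordinates $(V^{\alpha\alpha}, V^{\beta\beta}, V^{\alpha\beta})$; substituting the second derivatives and $\Sigma_{ij,kl}=V^{ik}V^{jl}+V^{il}V^{jk}$ and simplifying yields exactly $-\tfrac12\rho^{\alpha\beta}(1-(\rho^{\alpha\beta})^2) = \mu(\rho^{\alpha\beta})$, consistent with the marginal drift of \cref{thm:relu_corr}.

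For the diffusion matrix, the covariation is
\[
\Sigma^{\rho}_{\alpha\beta,\gamma\delta}(V_t)\,dt = d\langle \rho^{\alpha\beta}, \rho^{\gamma\delta}\rangle_t = \sum_{ij,kl}\partial_{ij}f^{\alpha\beta}\,\partial_{kl}f^{\gamma\delta}\,\Sigma_{ij,kl}(V_t)\,dt,
\]
a bilinear form in the three nonzero partials of each factor, giving nine elementary contributions. The $(V^{\alpha\beta}, V^{\gamma\delta})$ term immediately produces $\rho^{\alpha\gamma}\rho^{\beta\delta}+\rho^{\alpha\delta}\rho^{\beta\gamma}$ after canceling square roots. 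The four diagonal-diagonal cross terms each take the form $\tfrac12\rho^{\alpha\beta}\rho^{\gamma\delta}(\rho^{\mu\nu})^2$ and assemble into the bracketed symmetric sum. The four diagonal-off-diagonal terms each collapse to a negative triple product such as $-\rho^{\gamma\delta}\rho^{\alpha\gamma}\rho^{\beta\gamma}$, together producing $-\rho^{\alpha\beta}(\rho^{\alpha\gamma}\rho^{\alpha\delta}+\rho^{\beta\gamma}\rho^{\beta\delta})-\rho^{\gamma\delta}(\rho^{\alpha\gamma}\rho^{\beta\gamma}+\rho^{\alpha\delta}\rho^{\beta\delta})$. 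Summing the nine contributions recovers the claimed $\Sigma^{\rho}_{\alpha\beta,\gamma\delta}$.

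The main obstacle is purely bookkeeping rather than conceptual: one must carefully handle the symmetry conventions on indices so that off-diagonal coordinates of $V$ carry the right combinatorial factor of $2$ in both the Hessian and in $\Sigma$. I would sanity-check the final formula by specializing to $(\gamma,\delta)=(\alpha,\beta)$, which must collapse to the marginal diffusion $\sigma(\rho^{\alpha\beta})^2 = (1-(\rho^{\alpha\beta})^2)^2$ of \cref{thm:relu_corr}, and by restricting to the three relevant coordinates to recover the marginal drift $\mu$. Once these two consistency checks agree, the general joint formula follows from the same algebraic identity.
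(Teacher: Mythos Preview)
Your proposal is correct but takes a genuinely different route from the paper. You derive the joint correlation SDE \emph{top-down} by applying the multivariate It\^o formula to the covariance SDE of \cref{thm:joint_output_m}, treating $\rho^{\alpha\beta} = V^{\alpha\beta}/\sqrt{V^{\alpha\alpha}V^{\beta\beta}}$ as a smooth change of coordinates. The paper instead works \emph{bottom-up}: it reuses the Markov chain expansion \eqref{eq:unshaped_corr_markov_chain_raw} already established for each $\rho^{\alpha\beta}_\ell$ (so the drift entries are automatically $\nu+\mu$ from \cref{thm:relu_corr}), and the only new computation is the cross-covariance
\[
\Sigma(\rho_\ell)_{\alpha\beta,\gamma\delta} = \mathbb{E}_\ell\Big(R^{\alpha\beta}_\ell - \tfrac{c}{2}K_1^{\alpha\beta}(R^{\alpha\alpha}_\ell+R^{\beta\beta}_\ell)\Big)\Big(R^{\gamma\delta}_\ell - \tfrac{c}{2}K_1^{\gamma\delta}(R^{\gamma\gamma}_\ell+R^{\delta\delta}_\ell)\Big),
\]
evaluated via \cref{lm:shape_nu_expansion} and \cref{lm:cov_R_ab}. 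Amusingly, this is exactly the ``less elegant alternative'' you mention and dismiss; the paper even remarks in the proof of \cref{thm:relu_corr} that the It\^o route is possible but opts for the direct expansion. Your route buys conceptual cleanliness (the joint correlation SDE is a corollary of the covariance SDE via continuous mapping) at the cost of the bookkeeping you flag: tracking the Hessian block, the It\^o correction, and the factor-of-2 conventions on the upper-triangular vectorization. The paper's route sidesteps the It\^o correction entirely, since the drift is inherited entrywise from the marginal case, and reduces the whole proof to expanding a single bilinear form in the $R$'s.
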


\begin{proof}

It's sufficient to just compute the covariance matrix $\Sigma$ for the random terms of the Markov chain \cref{eq:unshaped_corr_markov_chain_raw}, which reduces down to 
\begin{equation}
	\Sigma(\rho_\ell)_{\alpha\beta, \gamma\delta} 
	= \mathbb{E}_\ell \, 
		\left( R^{\alpha\beta}_\ell - \frac{c}{2} K_1^{\alpha\beta} 
			( R^{\alpha\alpha}_\ell + R^{\beta\beta}_\ell )
	 	\right) 
	 	\left( R^{\gamma\delta}_\ell - \frac{c}{2} K_1^{\gamma\delta} 
			( R^{\gamma\gamma}_\ell + R^{\delta\delta}_\ell )
	 	\right) \,, 
\end{equation}
where we recall $\mathbb{E}_\ell[\,\cdot\,] = \mathbb{E}[\,\cdot\,|\mathcal{F}_\ell]$ is the conditional expectation given the sigma-algebra generated by $\{z^\alpha_\ell\}_{\alpha=1}^m$, 
and we write $K_1^{\alpha\beta} := K_1( \rho^{\alpha\beta}_\ell )$. 

Using \cref{lm:shape_nu_expansion} and \cref{lm:cov_R_ab}, we can calculate this explicitly as 
\begin{equation}
\begin{aligned}
	\Sigma(\rho_\ell)_{\alpha\beta, \gamma\delta} 
	&= 
		\mathbb{E}_\ell \, R_{\alpha\beta} R_{\gamma\delta} 
		+ \frac{c^2}{4} K_1^{\alpha\beta} K_1^{\gamma\delta} 
		\mathbb{E}_\ell \, ( R_{\alpha\alpha} + R_{\beta\beta} ) 
			( R_{\gamma\gamma} + R_{\delta\delta} ) 
			\\ 
	&\quad 
		- \frac{c}{2} K^{\alpha\beta} 
		\mathbb{E}_\ell \, R_{\gamma\delta} ( R_{\alpha\alpha} + R_{\beta\beta} ) 
		- \frac{c}{2} K^{\gamma\delta} 
		\mathbb{E}_\ell \, R_{\alpha\beta} ( R_{\gamma\gamma} + R_{\delta\delta} ) 
		\\ 
	&= 
		\rho^{\alpha\gamma} \rho^{\beta\delta} 
		+ \rho^{\alpha\delta} \rho^{\beta\gamma} 
		+ \frac{1}{2} \rho^{\alpha\beta} \rho^{\gamma\delta} 
		\left( ( \rho^{\alpha\gamma} )^2 
			+ ( \rho^{\beta\gamma} )^2 
			+ ( \rho^{\alpha\delta} )^2 
			+ ( \rho^{\beta\delta} )^2 
		\right) 
		\\ 
	&\quad 
		- \rho^{\alpha\beta} 
			\left( \rho^{\alpha\gamma} \rho^{\alpha\delta} 
			+ \rho^{\beta\gamma} \rho^{\beta\delta} \right) 
		- \rho^{\gamma\delta} 
			\left( \rho^{\alpha\gamma} \rho^{\beta\gamma} 
			+ \rho^{\alpha\delta} \rho^{\beta\delta} \right) 
	+ O(n^{-1/2}) 
			\,, 
\end{aligned}
\end{equation}
which is the desired result.

\end{proof}

\subsection{Proof for 
\saferef{Proposition}{prop:relu_critical_exponent} (Critical Exponent, ReLU)}
\label{subsec:proof_relu_critical_exponent}

We start by restating the proposition. 

\begin{prop}
[Critical Exponent, ReLU]
Let $\rho^{\alpha\beta}_\ell \defequal 
\frac{\langle \varphi_\ell^\alpha, \varphi_\ell^\beta \rangle}{
|\varphi_\ell^\alpha| \, |\varphi_\ell^\beta|}$, 
where $\varphi^\alpha_\ell \defequal \varphi_s(z^\alpha_\ell)$. 
Consider the limit $n\to\infty$ and $s_\pm = 1 + \frac{c_\pm}{n^p}$ for some $p\geq 0$. 
Then depending on the value of $p$, the interpolated process $\rho^{\alpha\beta}_{\lfloor tn \rfloor}$ converges in distribution w.r.t. the Skorohod topology of $D_{\bR_+, \bR}$ to 
\begin{enumerate}[label=(\roman*)]
    \item {the degenerate limit:} $\rho^{\alpha\beta}_t = 1$ for all $t>0$, if $0\leq p < \frac{1}{2}$, and $c_+\neq c_-$, 
    \item the critical limit: the SDE from \cref{thm:relu_corr}, if $p=\frac{1}{2}$, 
    \item {the linear network limit:} if $p > \frac{1}{2}$ , the following SDE, with $\mu,\sigma$ as defined in \eqref{eq:greeks},
    
    \begin{equation}
    d \rho^{\alpha\beta}_t 
    = 
        \mu( \rho^{\alpha\beta}_t ) 
        \, dt 
        + \sigma( \rho^{\alpha\beta}_t ) \, dB_t \,, 
    \quad 
    \rho^{\alpha\beta}_0 = \frac{\langle x^\alpha, x^\beta \rangle}{ |x^\alpha| \, |x^\beta| } \,,
    \end{equation} 
    
\end{enumerate}
\end{prop}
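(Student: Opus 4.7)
\textbf{The plan is to} reuse the derivation of \cref{thm:relu_corr} in \cref{subsec:proof_relu_corr} essentially verbatim, modifying only the scaling-dependent piece---\cref{lm:shape_nu_expansion}. The $\mu,\sigma$ coefficients in the correlation Markov chain \cref{eq:corr_markov_chain} arise from the second-order expansion in the $R^{\cdot\cdot}$-variables and do not depend on $p$; only the expansion of $cK_1$ shifts with the exponent.

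Substituting $s_\pm = 1 + c_\pm/n^p$ and Taylor expanding in $n^{-p}$ rather than $n^{-1/2}$ yields $cK_1(\rho) = \rho + \nu(\rho)/n^{2p} + O(n^{-3p})$ via the identical calculation as in \cref{lm:shape_nu_expansion}, with the same $\nu$. The correlation chain update then becomes
\begin{equation*}
    \rho^{\alpha\beta}_{\ell+1} - \rho^{\alpha\beta}_\ell = \frac{\nu(\rho^{\alpha\beta}_\ell)}{n^{2p}} + \frac{\mu(\rho^{\alpha\beta}_\ell)}{n} + \frac{\sigma(\rho^{\alpha\beta}_\ell)\,\xi_\ell}{\sqrt{n}} + \text{remainder}.
\end{equation*}
Case (ii), $p = 1/2$, recovers \cref{thm:relu_corr} directly. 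Case (iii), $p > 1/2$, makes the $\nu$-term contribute a vanishing drift of size $n^{1-2p}\nu(\rho) \to 0$ per unit $t = \ell/n$, so it is absorbed into the remainder, and \cref{prop:conv_markov_chain_to_sde} delivers the claimed limit SDE $d\rho = \mu\,dt + \sigma\,dB$.

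Case (i), $0 \leq p < 1/2$ with $c_+\neq c_-$, is the substantive one. Here the $\nu$-drift dominates, and the correct timescale is the fast one, $s = \ell/n^{2p}$. On this timescale the per-unit drift is $\nu(\tilde\rho) + O(n^{2p-1})$ and the per-unit noise variance is $O(n^{2p-1}) \to 0$, so a degenerate form of \cref{prop:conv_markov_chain_to_sde} (generator convergence to an ODE generator, using $\sigma_n \to 0$) gives convergence of $\tilde\rho^n_s \defequal \rho^{\alpha\beta}_{\lfloor s n^{2p}\rfloor}$ on compact $s$-intervals to the solution of the ODE $\dot{\tilde\rho}_s = \nu(\tilde\rho_s)$. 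A direct computation using $\nu'(\rho) = -\frac{(c_+-c_-)^2}{2\pi}\arccos\rho$ shows that $\nu > 0$ on $[-1, 1)$ and $\nu(\rho) \asymp (1-\rho)^{3/2}$ near $\rho = 1$, so the ODE drives $\tilde\rho_s \nearrow 1$ as $s \to \infty$.

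\textbf{The main obstacle is transferring this ODE convergence to the slow timescale $t = \ell/n$}, because for any fixed $t > 0$ the corresponding fast time is $s = tn^{1-2p} \to \infty$, and compact-interval Skorohod convergence does not apply directly. My plan is a two-step argument. Given $\epsilon > 0$, first pick a deterministic $s_\epsilon$ with $\tilde\rho_{s_\epsilon} \geq 1 - \epsilon/2$ and use the compact-interval statement on $[0, s_\epsilon]$ to conclude $\rho^{\alpha\beta}_{\lfloor s_\epsilon n^{2p}\rfloor} \geq 1 - \epsilon$ with probability $\to 1$. Then run a Lyapunov estimate on $V_\ell \defequal 1 - \rho^{\alpha\beta}_\ell$ for the remaining layers up to $\lfloor tn\rfloor$: using $\nu(\rho) \asymp (1-\rho)^{3/2}$ and $|\mu(\rho)|, \sigma^2(\rho) = O(1-\rho)$ near $\rho = 1$, the expected per-step decrease in $V$ is of order $V^{3/2}/n^{2p}$ while the per-step noise standard deviation is of order $V/\sqrt{n}$; for constant $\epsilon$ and $p < 1/2$ the drift dominates the diffusion, and a standard maximal inequality yields $\sup_{s_\epsilon n^{2p} \leq \ell \leq tn} V_\ell \leq 2\epsilon$ with probability $\to 1$. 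Since $\epsilon$ was arbitrary, this delivers $\rho^{\alpha\beta}_{\lfloor tn\rfloor} \to 1$ in probability for each $t > 0$, giving the claimed degenerate Skorohod limit.
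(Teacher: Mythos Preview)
Your approach matches the paper's essentially line for line: the modified expansion $cK_1(\rho)=\rho+\nu(\rho)/n^{2p}+O(n^{-3p})$, the identical handling of cases (ii) and (iii) via \cref{prop:conv_markov_chain_to_sde}, and for case (i) the passage to the fast timescale $s=\ell/n^{2p}$ with limiting ODE $\dot{\tilde\rho}_s=\nu(\tilde\rho_s)$ driving $\tilde\rho_s\to 1$.

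Two remarks. First, your two-step Lyapunov/maximal-inequality plan for transferring the fast-timescale ODE convergence to the slow timescale $t=\ell/n$ is actually \emph{more} than the paper provides; the paper simply asserts that ``converting back to the time scale of $\rho^{\alpha\beta}_{\lfloor tn\rfloor}$ implies $\rho^{\alpha\beta}_t=\hat\rho^{\alpha\beta}_\infty$ for all $t>0$'' without further justification, so you have correctly identified the nontrivial step and sketched a reasonable route through it. Second, one small point: the expansion in $n^{-p}$ is vacuous at $p=0$ (the remainder is $O(1)$), and the paper handles $p=0$ separately by observing that the limit is the deterministic infinite-width recursion $\rho_{\ell+1}=cK_1(\rho_\ell)$, which is known to converge to $1$; you should do likewise rather than folding $p=0$ into the general $0\le p<1/2$ argument.
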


\begin{proof}

Case (ii) follows from \cref{thm:relu_corr}, therefore it is sufficient to only consider cases (i) and (iii). 
In the case that $p=0$, we can recover the following recursion in the limit as $n\to\infty$
\begin{equation}
    \rho^{\alpha\beta}_{\ell+1} 
    = c K_1(\rho^{\alpha\beta}_\ell) \,, 
\end{equation}
which matches the infinite-width limit, and it is known that $\rho^{\alpha\beta}_\ell \to 1$ as $\ell \to 1$ (see also \cref{sec:app_independent_interest} for an upper bound). 

Next we will recall the result of \cref{lm:shape_nu_expansion} and observe that we can simply replace $\sqrt{n}$ with $n^p$ to recover the expansion 
\begin{equation}
    c K_1(\rho) 
    = \rho + \frac{ \nu(\rho) }{n^{2p}} 
    + O( n^{-3p} ) \,. 
\end{equation}

This gives us the following Markov chain from the proof of \cref{thm:relu_corr} 
\begin{equation}
    \rho^{\alpha\beta}_{\ell+1} 
    = 
    \rho^{\alpha\beta}_\ell 
    + \frac{\nu(\rho^{\alpha\beta}_\ell)}{n^{2p}}
    + \frac{\mu(\rho^{\alpha\beta}_\ell)}{n} 
    + \sigma(\rho^{\alpha\beta}_\ell) 
    \frac{\xi_\ell}{\sqrt{n}} 
    + O(n^{-3p} + n^{-3/2}) \,. 
\end{equation}

In the case that $0 < p < 1/2$, we can consider the time step size $h_n = n^{-2p}$ instead of $n^{-1}$ and apply \cref{prop:conv_markov_chain_to_sde}, where we recover the ODE 
\begin{equation}
    \partial_s \hat \rho^{\alpha\beta}_s = 
    \nu( \hat \rho^{\alpha\beta}_s) \,, 
\end{equation}
but on the time scale of $\hat \rho^{\alpha\beta,n}_s = \hat \rho^{\alpha\beta}_{\floor{ s n^{2p} }}$. 
Converting it back to the time scale of $\rho^{\alpha\beta,n}_t = \rho^{\alpha\beta}_{\floor{ t n }}$ implies that we have 
\begin{equation}
    \rho^{\alpha\beta}_t = \hat \rho^{\alpha\beta}_\infty \,, 
    \quad 
    \text{ for all } t > 0 \,. 
\end{equation}
And since $\nu(\rho) > 0$ for all $\rho < 1$ and that $\nu(\rho) = C (1-\rho)^{3/2} + O((1-\rho)^{5/2})$ as $\rho\to 1$, we have that $\hat \rho^{\alpha\beta}_\infty = 1$ as desired. 

In the case $p > \frac{1}{2}$, we have that since $\nu$ is deterministic, we observe the drift term used in \cref{prop:conv_markov_chain_to_sde} in the limit as $n\to\infty$ is 
\begin{equation}
    b_n(\rho) = \nu(\rho) n^{1-2p} + \mu(\rho) \to b(\rho) = \mu(\rho) \,, 
\end{equation}
which would simply recover the desired SDE with drift $\mu$ only.

\end{proof}

\section{Proofs for Smooth Shaping Results}
\label{sec:app_smooth_proofs}

In this section, we consider smooth activation functions $\varphi$ satisfying \cref{asm:reg}, that is $\varphi \in C^4, \varphi(0) = 0, \varphi'(0) = 1$, and that $| \varphi^{(4)}(x) | \leq C(1+|x|^p)$ for some $C,p > 0$. 
We recall the shaping we consider for activations of this type is via the following definition for $s > 0$ 
\begin{equation}
	\varphi_s(x) \defequal s \varphi\left( \frac{x}{s} \right) \,, 
\end{equation}
so that $\lim_{s\to\infty} \varphi_s(x) = x$. 

Before we start, 
we will calculate the behaviour of the normalizing constant $c$ 
up an error order of $s^{-3}$. 

\begin{lem}
\label{lm:c_expansion}
Let $\varphi_s$ be defined as above with $\varphi$ satisfying \cref{asm:reg}. 
Then if $g\sim N(0,1)$, we have that 
\begin{equation}
	c = 1 - \frac{1}{s^2} \left( \frac{3}{4} \varphi''(0)^2 
		+ \varphi'''(0) \right) + O(s^{-3}) \,. 
\end{equation}
\end{lem}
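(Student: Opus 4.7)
The plan is to Taylor-expand $\varphi_s$ around zero, compute $\mathbb{E}\,\varphi_s(g)^2$ to order $s^{-2}$, and then invert the resulting series to obtain the expansion of $c$.

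First I would apply Taylor's theorem to $\varphi$ itself. Using $\varphi(0)=0$ and $\varphi'(0)=1$, the integral form of the remainder combined with the fourth-derivative growth bound $|\varphi^{(4)}(x)|\leq C(1+|x|^p)$ gives
\begin{equation*}
\varphi(y) = y + \tfrac{\varphi''(0)}{2}y^2 + \tfrac{\varphi'''(0)}{6}y^3 + R(y), \qquad |R(y)| \leq C'(1+|y|^p)\,|y|^4 .
\end{equation*}
Substituting $y = x/s$ and multiplying by $s$ yields
\begin{equation*}
\varphi_s(x) = x + \tfrac{\varphi''(0)}{2s}x^2 + \tfrac{\varphi'''(0)}{6s^2}x^3 + s\,R(x/s),
\end{equation*}
where the remainder satisfies $|s\,R(x/s)| \leq C'\,s^{-3}(1+s^{-p}|x|^p)|x|^4$.

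Next I would square this expansion and keep only terms up to order $s^{-2}$. The constant-in-$s$ term is $x^2$; the $s^{-1}$ contribution is $\varphi''(0)\,x^3$; the $s^{-2}$ contribution comes from the cross product $2x\cdot\tfrac{\varphi'''(0)}{6s^2}x^3$ together with the square of $\tfrac{\varphi''(0)}{2s}x^2$, giving coefficient $\tfrac{\varphi'''(0)}{3}x^4 + \tfrac{\varphi''(0)^2}{4}x^4$. Taking expectation against $g\sim\cN(0,1)$ and using $\mathbb{E}\,g^2=1$, $\mathbb{E}\,g^3=0$, $\mathbb{E}\,g^4=3$ gives
\begin{equation*}
\mathbb{E}\,\varphi_s(g)^2 = 1 + \frac{1}{s^2}\Big(\tfrac{3}{4}\varphi''(0)^2 + \varphi'''(0)\Big) + O(s^{-3}),
\end{equation*}
where the cross terms involving $s\,R(g/s)$ are controlled in expectation by the bound on $|sR(x/s)|$ together with finiteness of $\mathbb{E}\,|g|^{4+p}$, which is where \cref{asm:reg} is essential.

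Finally I would invert using $(1+\epsilon)^{-1} = 1 - \epsilon + O(\epsilon^2)$ at $\epsilon = O(s^{-2})$ to obtain the claimed expansion for $c = (\mathbb{E}\,\varphi_s(g)^2)^{-1}$. There is no genuinely hard step here; the only thing requiring care is verifying that the cross terms between the polynomial part of the Taylor expansion and the remainder $s\,R(x/s)$, as well as the square of $s\,R(x/s)$ itself, all integrate to $O(s^{-3})$ under the Gaussian. This follows routinely from the polynomial growth bound on $\varphi^{(4)}$ and the finiteness of all absolute moments of $g$.
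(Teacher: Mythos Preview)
Your proposal is correct and follows essentially the same route as the paper: Taylor-expand $\varphi_s$ to third order, compute $\mathbb{E}\,\varphi_s(g)^2$ using the Gaussian moments $\mathbb{E}\,g^2=1$, $\mathbb{E}\,g^3=0$, $\mathbb{E}\,g^4=3$, and then invert the series. You are in fact slightly more careful than the paper in explicitly invoking the polynomial growth bound on $\varphi^{(4)}$ from \cref{asm:reg} to control the remainder in expectation; note also that your inversion step yields $c = 1 - s^{-2}\big(\tfrac{3}{4}\varphi''(0)^2 + \varphi'''(0)\big) + O(s^{-3})$, which is what the paper's own proof actually derives and uses downstream (the plus sign in the lemma statement appears to be a typo).
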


\begin{proof}

We will first Taylor expand $\varphi_s(g)$ about $g=0$ 
\begin{equation}
	\varphi_s(g) 
	= 0 + g + \frac{\varphi''(0)}{2s} g^2 
		+ \frac{ \varphi'''(0) }{ 6s^2 } g^3 
		+ O(s^{-3}) \,, 
\end{equation}
where we note by \cref{asm:reg} the remainder term is at most polynomial in $g$. 

Therefore the second moment satisfies 
\begin{equation}
\begin{aligned}
	\mathbb{E} \, \varphi_s(g)^2 
	&= \mathbb{E} \, g^2 + \frac{\varphi''(0)}{s} g^3 
		+ \frac{1}{s^2} \left( \frac{1}{4} \varphi''(0)^2 
		+ \frac{2}{6} \varphi'''(0) \right) g^4 
		+ O(s^{-3}) 
		\\ 
	&= 1 + \frac{1}{s^2} \left( \frac{3}{4} \varphi''(0)^2 
		+ \varphi'''(0) \right) 
		+ O(s^{-3}) \,, 
\end{aligned}
\end{equation}
where $O(s^{-3})$ is bounded due to Gaussians have all bounded moments. 

Therefore, for $s>0$ sufficiently small, we have the following expansion 
\begin{equation}
	c 
	= \frac{1}{ \mathbb{E} \, \varphi_s(g)^2 } 
	= \frac{1}{1 - (-bs^{-2} + O(s^{-3}) )} 
	= 1 - \frac{b}{s^2} + O(s^{-3}) \,, 
\end{equation}
where $ b = \frac{3}{4} \varphi''(0)^2 + \varphi'''(0)$, 
which is the desired result. 

\end{proof}

\subsection{Proof of
\saferef{Theorem}{thm:smooth_joint_output_m} (Covariance SDE, Smooth)}
\label{subsec:proof_smooth_joint_output_m}

We start by restating the theorem. 

\begin{thm}
[Covariance SDE, Smooth]
Let $\varphi$ satisfy \cref{asm:reg}, 
$V^{\alpha\beta}_\ell \defequal \frac{c}{n} \langle \varphi^\alpha_\ell, \varphi^\beta_\ell \rangle$ 
where $\varphi^\alpha_\ell = \varphi_s(z^\alpha_\ell)$, 
and define $V_\ell \defequal [ V^{\alpha\beta}_\ell ]_{1\leq \alpha \leq \beta=m}$ to be the upper triangular entries thought of as a vector in $\mathbb{R}^{m(m+1)/2}$. 
Then, with $s = a\sqrt{n}$ as in \cref{def:smooth-shaping}, in the limit as $n\to\infty, \frac{d}{n} \to T$, the interpolated process $V_{\lfloor tn \rfloor}$ converges locally in distribution to the solution of the following SDE in the Skorohod topology of $D_{\bR_+, \bR^{m(m+1)/2}}$
\begin{equation}
    d V_t = b( V_t ) \, dt 
    + \Sigma(V_t)^{1/2} \, dB_t \,, 
    \quad 
    V_0 = \left[ \frac{1}{\nin} 
        \langle x^\alpha, x^\beta \rangle 
    \right]_{1 \leq \alpha \leq \beta \leq m} \,, 
\end{equation}
where $\Sigma(V_t)$ is the same as \cref{thm:joint_output_m} and 
\begin{equation}
    b^{\alpha \beta}( V_t ) 
    = %
    \frac{\varphi''(0)^2}{4a^2} 
    \left( 
    V^{\alpha\alpha}_t V^{\beta\beta}_t 
    + V^{\alpha\beta}_t ( 2 V^{\alpha\beta}_t - 3 )
    \right) 
    + 
    \frac{\varphi'''(0)}{2a^2} V^{\alpha\beta}_t 
    ( V^{\alpha\alpha}_t + V^{\beta\beta}_t - 2 ) 
    \,. 
\end{equation}

Furthermore, if $V_T$ is finite, then the output distribution can be described conditional on $V_T$ as 
\begin{equation}
    \left[ \zout^\alpha \right]_{\alpha=1}^m | {V_T}
    \overset{d}{=}
    \cN\left( 0  , [ V^{\alpha\beta}_T ]_{\alpha,\beta=1}^m 
    \right) \,, 
\end{equation}
and otherwise the distribution of $[\zout^\alpha]_{\alpha=1}^m$ is undefined. 
\end{thm}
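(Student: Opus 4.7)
The plan is to follow the same strategy used in \cref{subsec:proof_relu_joint_output} for the ReLU case, with positive homogeneity replaced by a Taylor expansion of the shaped smooth activation $\varphi_s(x) = s\varphi(x/s)$ in powers of $s^{-1} = (a\sqrt{n})^{-1}$. By the conditional Gaussianity in \cref{eq:conditional_gaussian}, conditional on $\mathcal{F}_\ell$ the vectors $[z^\alpha_{\ell+1,i}]_{\alpha=1}^m$ are iid across $i$ with covariance matrix $[V^{\alpha\beta}_\ell]_{\alpha,\beta=1}^m$, so
\begin{equation*}
    V^{\alpha\beta}_{\ell+1}
    = \frac{c}{n}\sum_{i=1}^n \varphi_s(z^\alpha_{\ell+1,i})\varphi_s(z^\beta_{\ell+1,i})
\end{equation*}
is an empirical mean of iid random variables, and the limiting SDE will be identified via the appropriate Markov chain convergence result \cref{prop:conv_markov_chain_to_sde}.

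The key computation is Taylor expansion together with Isserlis' theorem. Under \cref{asm:reg}, $\varphi_s(x) = x + \frac{\varphi''(0)}{2s} x^2 + \frac{\varphi'''(0)}{6 s^2} x^3 + O(s^{-3})$, where the polynomial-in-$x$ remainder is controlled by $\varphi^{(4)}$. Expanding the product $\varphi_s(z^\alpha)\varphi_s(z^\beta)$ to order $s^{-2}$ and taking conditional expectation, odd Gaussian moments vanish and Isserlis gives $\mathbb{E}_\ell[(z^\alpha)^2(z^\beta)^2] = V^{\alpha\alpha}V^{\beta\beta} + 2(V^{\alpha\beta})^2$ and $\mathbb{E}_\ell[(z^\alpha)^3 z^\beta] = 3 V^{\alpha\alpha}V^{\alpha\beta}$. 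Combining the result with the expansion of $c$ from \cref{lm:c_expansion} and substituting $s^2 = a^2 n$ yields
\begin{equation*}
    c\,\mathbb{E}_\ell[\varphi_s(z^\alpha)\varphi_s(z^\beta)]
    = V^{\alpha\beta}_\ell + \frac{1}{n}\, b^{\alpha\beta}(V_\ell) + O(n^{-3/2}),
\end{equation*}
with $b^{\alpha\beta}$ exactly the drift from the theorem statement (the $-3 V^{\alpha\beta}$ and $-2$ shifts come from cancellation with the $1/s^2$ correction to $c$). For the diffusion coefficient, the conditional covariance of the empirical mean equals $1/n$ times the single-term covariance of $c\varphi_s(z^\alpha)\varphi_s(z^\beta)$; to leading order $\varphi_s(z)=z+O(n^{-1/2})$, so Isserlis on the four-tuple $(z^\alpha,z^\beta,z^\gamma,z^\delta)$ produces $V^{\alpha\gamma}V^{\beta\delta} + V^{\alpha\delta}V^{\beta\gamma}$ up to $O(n^{-1/2})$, matching $\Sigma(V_t)$.

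The main obstacle, and the reason we must state the theorem in terms of the local convergence of \cref{def:local_convergence}, is that both $b$ and $\Sigma$ are quadratic in $V$ and hence only locally (not globally) Lipschitz. Accordingly, we invoke the local version of \cref{prop:conv_markov_chain_to_sde} with the stopping times defined in \cref{eq:stopping_time_defn}. Before the stopping time, $V_\ell$ is bounded, which in turn bounds all conditional moments of the Gaussian vectors $z^\alpha_\ell$; together with the polynomial growth condition $|\varphi^{(4)}(x)| \leq C(1 + |x|^p)$ from \cref{asm:reg}, this provides uniform-in-$n$ moment control of both the random drift contributions $\widehat b_n(V_\ell, \omega_\ell)$ and the Taylor remainders, which is exactly what \cref{prop:conv_markov_chain_to_sde} requires.

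Finally, the output distribution claim follows from the second line of \cref{eq:conditional_gaussian}: conditional on $\mathcal{F}_d$, $[\zout^\alpha]$ is Gaussian with covariance $[V^{\alpha\beta}_d]$, and passing to the limit gives conditioning on $V_T$. On the event of finite time explosion $\{\tau^\ast \leq T\}$ from \cref{prop:finite_time_explosion}, $V_T$ is not well-defined, so the limiting output distribution is only characterized on $\{V_T < \infty\}$, as stated.
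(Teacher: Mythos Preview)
Your proposal is correct and follows essentially the same approach as the paper: Taylor expand $\varphi_s$ in powers of $s^{-1}$, use Isserlis' theorem to compute the conditional mean (drift) and conditional covariance (diffusion) of the resulting empirical average, combine with the expansion of $c$ from \cref{lm:c_expansion}, and invoke the local version of \cref{prop:conv_markov_chain_to_sde}. The only cosmetic difference is that the paper normalizes to unit-variance Gaussians $g^\alpha_\ell = W_\ell \varphi^\alpha_\ell / |\varphi^\alpha_\ell|$ and carries the factor $\sqrt{V^{\alpha\alpha}_\ell}$ explicitly, whereas you work directly with the pre-activations $z^\alpha_{\ell+1,i}$ having covariance $V^{\alpha\beta}_\ell$; your parametrization is arguably a bit more direct and avoids the normalization bookkeeping.
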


\begin{proof}

We start by defining $g^\alpha_\ell \defequal W_\ell \frac{ \varphi^\alpha_\ell }{ |\varphi^\alpha_\ell| }$, and observe that 
\begin{equation}
    \left. \begin{bmatrix}
    g^\alpha_\ell \\ g^\beta_\ell 
    \end{bmatrix}
    \right| 
    \mathcal{F}_\ell 
    \overset{d}{=} 
    \mathcal{N} 
    \left( 0\,, 
    \begin{bmatrix}
    1 & \rho^{\alpha\beta}_\ell \\ 
    \rho^{\alpha\beta}_\ell & 1 
    \end{bmatrix} 
    \otimes I_n 
    \right) \,, 
\end{equation}
where $\mathcal{F}_\ell$ is the sigma-algebra generated by the $\ell$-th layer $[z^\alpha_\ell]_{\alpha=1}^m$, 
$\rho^{\alpha\beta}_\ell \defequal 
\frac{ \langle \varphi^\alpha_\ell, \varphi^\beta_\ell \rangle }{ 
|\varphi^\alpha_\ell| \, 
|\varphi^\beta_\ell| 
}$, 
and $\otimes$ denotes the Kronecker product. 
We can then write the Taylor expansion for $\varphi_s$ about $0$ as 
\begin{equation}
\begin{aligned}
    \varphi^\alpha_{\ell+1,i} 
    &= 
    \varphi_s \left( \sqrt{\frac{c}{n}} |\varphi^\alpha_\ell| g^\alpha_{\ell,i} 
    \right) \\ 
    &= 
    \varphi_s(0) 
    + \varphi_s'(0) \sqrt{\frac{c}{n}} |\varphi^\alpha_\ell| g^\alpha_{\ell,i} 
    + \frac{\varphi_s''(0)}{2} 
    \left( \sqrt{\frac{c}{n}} |\varphi^\alpha_\ell| g^\alpha_{\ell,i} \right)^2 
    + \frac{\varphi_s'''(0)}{6} 
    \left( \sqrt{\frac{c}{n}} |\varphi^\alpha_\ell| g^\alpha_{\ell,i} \right)^3 
    \\ 
    &\quad + R_3 \left( \sqrt{\frac{c}{n}} |\varphi^\alpha_\ell| g^\alpha_{\ell,i} \right)
    \,, 
\end{aligned}
\end{equation}
where $R_3(\cdot)$ is the Taylor remainder term, which has polynomial growth by \cref{asm:reg}. 

By using the fact that $\varphi(0)=0,\varphi'(0)=1$ and observing that the derivatives of $\varphi_s$ satisfies 
$\varphi^{(k)}_s(0) = \frac{ \varphi^{(k)}(0) }{s^{k-1}}$, 
we can further write 
\begin{equation}
    \varphi^\alpha_{\ell+1,i} 
    = \sqrt{\frac{c}{n}} |\varphi^\alpha_\ell| g^\alpha_{\ell,i} 
    + \frac{\varphi''(0)}{2s} 
    \left( \sqrt{\frac{c}{n}} |\varphi^\alpha_\ell| g^\alpha_{\ell,i} \right)^2 
    + \frac{\varphi'''(0)}{6s^2} 
    \left( \sqrt{\frac{c}{n}} |\varphi^\alpha_\ell| g^\alpha_{\ell,i} \right)^3 
    + O(s^{-3}) \,, 
\end{equation}
where the remainder term is at most polynomial in $g^\alpha_{\ell,i}$. 

Then we can compute the inner product with the same expansion as 
\begin{equation}
\begin{aligned}
	&\frac{c}{n} \langle \varphi_{\ell+1}^\alpha, \varphi_{\ell+1}^\beta \rangle 
	\\ 
	&= \frac{c}{n} \sum_{i=1}^n 
	\left( 
	\sqrt{\frac{c}{n}} |\varphi_\ell^\alpha| g^\alpha_{\ell,i} 
	+ \frac{\varphi''(0)}{2s} \frac{c}{n} |\varphi_\ell^\alpha|^2 ( g^\alpha_{\ell,i} )^2 
	+ \frac{\varphi'''(0)}{6s^2} 
		\left( \frac{c}{n} |\varphi_\ell^\alpha|^2 \right)^{3/2} 
		(g^\alpha_{\ell,i})^3 
	+ O(s^{-3}) 
	\right) 
	\\ 
	&\quad\quad\quad\quad 
	\left( 
	\sqrt{\frac{c}{n}} |\varphi_\ell^\beta| g^\beta_{\ell,i} 
	+ \frac{\varphi''(0)}{2s} \frac{c}{n} |\varphi_\ell^\beta|^2 (g^\beta_{\ell,i})^2 
	+ \frac{\varphi'''(0)}{6s^2} 
		\left( \frac{c}{n} |\varphi_\ell^\beta|^2 \right)^{3/2} 
		(g^\beta_{\ell,i})^3 
	+ O(s^{-3}) 
	\right) 
	\,, 
\end{aligned}
\end{equation}
and we will proceed by analyzing the product terms separately. 
We start with the terms of order $O(s^0)$ first, which are 
\begin{equation}
\begin{aligned}
    \frac{c}{n} \sum_{i=1}^n 
    \frac{c}{n} |\varphi^\alpha_\ell| 
    |\varphi^\beta_\ell| 
    g^\alpha_{\ell,i} 
    g^\beta_{\ell,i} 
    &= 
    \frac{c}{n} |\varphi^\alpha_\ell| 
    |\varphi^\beta_\ell| 
    c \left( \rho^{\alpha\beta}_\ell 
    + \frac{1}{\sqrt{n}} \frac{1}{\sqrt{n}} 
    \sum_{i=1}^n 
    g^\alpha_{\ell,i} 
    g^\beta_{\ell,i} - \rho^{\alpha\beta}_\ell 
    \right) 
    \\ 
    &= 
    \sqrt{V^{\alpha\alpha}_\ell V^{\beta\beta}_\ell} 
    c \left( 
    \rho^{\alpha\beta}_\ell 
    + \frac{1}{\sqrt{n}} R^{\alpha\beta}_\ell 
    \right) \\ 
    &= 
    c V^{\alpha\beta}_\ell 
    + 
    c \sqrt{V^{\alpha\alpha}_\ell V^{\beta\beta}_\ell} 
    \frac{R^{\alpha\beta}_\ell }{\sqrt{n}} 
    \,, 
\end{aligned}
\end{equation}
where we used the definitions 
$V^{\alpha\beta}_\ell \defequal \frac{c}{n} \langle \varphi^\alpha_\ell, \varphi^\beta_\ell \rangle$ and 
$R^{\alpha\beta}_\ell \defequal \frac{1}{\sqrt{n}} \sum_{i=1}^n 
g^\alpha_{\ell,i} 
g^\beta_{\ell,i} - \rho^{\alpha\beta}_\ell$. 

For the first order terms, i.e., terms of order $O(s^{-1})$, we have the terms 
\begin{equation}
\begin{aligned}
    &\frac{c}{n} \sum_{i=1}^n 
    \sqrt{\frac{c}{n}} |\varphi_\ell^\alpha| g^\alpha_{\ell,i} 
	\frac{\varphi''(0)}{2s} \frac{c}{n} |\varphi_\ell^\beta|^2 ( g^\beta_{\ell,i} )^2
    + 
    \sqrt{\frac{c}{n}} |\varphi_\ell^\beta| g^\beta_{\ell,i} 
	\frac{\varphi''(0)}{2s} \frac{c}{n} |\varphi_\ell^\alpha|^2 ( g^\alpha_{\ell,i} )^2
    \\ 
    &=
    \frac{\varphi''(0)}{2s} 
    \sqrt{ V^{\alpha\alpha}_\ell 
    V^{\beta\beta}_\ell } 
    \frac{c}{n} \sum_{i=1}^n 
    g^\alpha_{\ell,i} 
    g^\beta_{\ell,i} 
    \left( 
    \sqrt{V^{\alpha\alpha}_\ell} g^\alpha_{\ell,i}
    + \sqrt{V^{\beta\beta}_\ell} g^\beta_{\ell,i} 
    \right) 
    \\ 
    &= 
    \frac{\varphi''(0)}{2s} 
    \sqrt{ V^{\alpha\alpha}_\ell 
    V^{\beta\beta}_\ell } 
    \frac{c}{\sqrt{n}} \widehat R^{\alpha\beta}_\ell \,, 
\end{aligned}
\end{equation}
where we define 
$\widehat R^{\alpha\beta}_\ell \defequal 
\frac{1}{\sqrt{n}}
\sum_{i=1}^n 
    g^\alpha_{\ell,i} 
    g^\beta_{\ell,i} 
    \left( 
    \sqrt{V^{\alpha\alpha}_\ell} g^\alpha_{\ell,i}
    + \sqrt{V^{\beta\beta}_\ell} g^\beta_{\ell,i} 
    \right)$ 
and observe this random variable has zero mean and a finite variance. 
Therefore in view of \cref{prop:conv_markov_chain_to_sde}, this term cannot contribute to the drift due to having zero mean, nor can this term contribute to the diffusion term due to $s = a \sqrt{n}$ leading to the term being order $\frac{1}{n}$. 
In other words, the effect of this term will vanish in the limit as $n\to\infty$. 

We then turn our attention to the second order terms, i.e., terms of order $O(s^{-2})$ 
\begin{equation} 
\begin{aligned}
    &\frac{c}{n} \sum_{i=1}^n 
	\frac{\varphi''(0)^2}{4s^2} 
	\frac{c}{n} |\varphi_\ell^\alpha|^2 
	\frac{c}{n} |\varphi_\ell^\beta|^2 
	(g^\alpha_{\ell,i})^2 (g^\beta_{\ell,i})^2 
	\\ 
	&+ 
	\frac{\varphi'''(0)}{6s^2} \left( 
		\sqrt{\frac{c}{n}} |\varphi_\ell^\alpha|
		\left( \sqrt{\frac{c}{n}} |\varphi_\ell^\beta| \right)^3 
		g^\alpha_{\ell,i} (g^\beta_{\ell,i})^3 
		+ 
		\left( \sqrt{\frac{c}{n}} |\varphi_\ell^\alpha| \right)^3
		\sqrt{\frac{c}{n}} |\varphi_\ell^\beta| 
		(g^\alpha_{\ell,i})^3 g^\beta_{\ell,i}
	\right) \,. 
\end{aligned}
\end{equation}
Since this term is order $s^{-2} = \frac{1}{a^2 n}$, it can only contribute to the drift term, and in view of \cref{prop:conv_markov_chain_to_sde}, we only need to compute its mean. 
To this goal, we will simply invoke Isserlis' Theorem and calculate 
\begin{equation}
    \mathbb{E}_\ell \, (g^\alpha_{\ell,i})^2 (g^\beta_{\ell,i})^2 
    = 1 + 2 (\rho^{\alpha\beta}_\ell)^2 \,, 
    \quad 
    \mathbb{E}_\ell \, g^\alpha_{\ell,i} (g^\beta_{\ell,i})^3 
    = \mathbb{E}_\ell \, (g^\alpha_{\ell,i})^3 g^\beta_{\ell,i} 
    = 3 \rho^{\alpha\beta}_\ell \,, 
\end{equation}
where we recall $\mathbb{E}_\ell[\,\cdot\,] = \mathbb{E}[\,\cdot\,|\mathcal{F}_\ell]$ is the conditional expectation given the sigma-algebra generated by $\{z^\alpha_\ell\}_{\alpha=1}^m$. 
This allows us to compute the conditional expectation $\mathbb{E}_\ell$ for the terms of order $s^{-2}$ as 
\begin{equation}
\begin{aligned}
    & c \left[
		\frac{\varphi''(0)^2}{4s^2} 
		\frac{c}{n} |\varphi^\ell_\alpha|^2 
		\frac{c}{n} |\varphi^\ell_\beta|^2 
		(1 + 2 (\rho^{\alpha\beta}_\ell)^2 ) 
		+ 
		\frac{\varphi'''(0)}{6s^2} 
		3 \rho^{\alpha\beta} 
		\sqrt{\frac{c}{n}} |\varphi^\ell_\alpha| 
		\sqrt{\frac{c}{n}} |\varphi^\ell_\beta|
		\left( 
			\frac{c}{n} |\varphi^\ell_\alpha|^2 
			+ \frac{c}{n} |\varphi^\ell_\beta|^2 
		\right) 
	\right]
	\\ 
    &= 
    c \left[ 
    \frac{\varphi''(0)^2}{4s^2} 
    (V^{\alpha\alpha}_\ell 
    V^{\beta\beta}_\ell 
    2 (V^{\alpha\beta}_\ell)^2 ) 
    + 
    \frac{\varphi'''(0)}{2s^2} 
    V^{\alpha\beta}_\ell 
    ( V^{\alpha\alpha}_\ell + V^{\beta\beta}_\ell )
    \right] \,, 
\end{aligned}
\end{equation}

Putting these terms together with the fact that 
$c = 1 - \frac{b}{s^2} + O(s^{-3})$ with 
$b = \frac{3}{4} \varphi''(0)^2 + \varphi'''(0)$, we can write the update rule for $V^{\alpha\beta}_\ell$ as 
\begin{equation}
\label{eq:smooth_cov_markov_chain}
\begin{aligned}
    V^{\alpha\beta}_{\ell+1} 
    &= 
    V^{\alpha\beta}_\ell 
    + 
    \frac{1}{n} 
    \left[ 
    \frac{\varphi''(0)^2}{4a^2} 
    \left( 
    V^{\alpha\alpha}_\ell V^{\beta\beta}_\ell 
    + V^{\alpha\beta}_\ell ( 2 V^{\alpha\beta}_\ell - 3 )
    \right) 
    + 
    \frac{\varphi'''(0)}{2a^2} V^{\alpha\beta}_\ell 
    ( V^{\alpha\alpha}_\ell + V^{\beta\beta}_\ell - 2 ) 
    \right] 
    \\ 
    &\quad 
    + 
    c \sqrt{ V^{\alpha\alpha}_\ell V^{\beta\beta}_\ell } 
    \frac{R^{\alpha\beta}_\ell}{\sqrt{n}} 
    + O(n^{-3/2}) \,. 
\end{aligned}
\end{equation}

At this point, we have fully recovered the drift term, and we observe the covariance structure is the same as \cref{lm:cov_R_ab} in the limit as $n\to\infty$. 
Therefore we can invoke \cref{prop:conv_markov_chain_to_sde} to recover the desired SDE.

\end{proof}

\subsection{Proof of \saferef{Proposition}{prop:smooth_critical_exponent} (Critical Exponent, Smooth)}
\label{subsec:proof_smooth_critical_exponent}

We will restate and prove the proposition. 

\begin{prop}
[Critical Exponent, Smooth]
Let $\varphi$ satisfy \cref{asm:reg}, 
$V^{\alpha\beta}_\ell \defequal \frac{c}{n} \langle \varphi^\alpha_\ell, \varphi^\beta_\ell \rangle$ 
where $\varphi^\alpha_\ell = \varphi_s(z^\alpha_\ell)$ with $s = a n^p$ for some $p> 0$,  
and define $V_\ell \defequal [ V^{\alpha\beta}_\ell ]_{1\leq \alpha \leq \beta=m}$ to be the upper triangular entries thought of as a vector.
Then in the limit as $n\to\infty, \frac{d}{n} \to T$, the interpolated process $V_{\floor{tn}}$ converges locally in distribution w.r.t. the Skorohod topology of $D_{\bR_+, \bR^{m(m+1)/2}}$ to $V$, which depending on the value of $p$ is 
\begin{enumerate}[label=(\roman*)]
    \item {the degenerate limit:} if $0<p<\frac{1}{2}$ 
    \begin{equation}
        \begin{cases}
            V^{\alpha\alpha}_{t} = 0 \text{ or } \infty , 
            & 
            \text{ if } \frac{3}{4} \varphi''(0)^2 + \varphi'''(0) > 0 \text{ and } V^{\alpha\alpha}_0 \neq 0 \,, \\ 
            V^{\alpha\beta}_{t} = \text{const.} \,, 
            & 
            \text{ if } \frac{3}{4} \varphi''(0)^2 + \varphi'''(0) \leq 0 
            \,, \\ 
        \end{cases}
    \end{equation}
    for all $t > 0$ and $1\leq \alpha \leq \beta \leq m$, 
    \item the critical limit: the solution of the SDE from \cref{thm:smooth_joint_output_m}, if $p=\frac{1}{2}$, 
    \item {the linear network limit:} the stopped solution to the SDE $d V_t = \Sigma(V_t) \, dB_t$
    with coefficient $\Sigma$ defined in \cref{thm:relu_corr}, 
    if $p > \frac{1}{2}$. 
\end{enumerate}
\end{prop}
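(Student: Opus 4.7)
The plan is to reuse the Markov chain derivation inside the proof of \cref{thm:smooth_joint_output_m} verbatim, but with $s = a n^p$ in place of $a\sqrt{n}$. Every step of that Taylor expansion keeps its form: the drift contribution came from the $s^{-2}$ terms (both from the expansion of $\varphi_s$ around $0$ and from $c = 1 - b s^{-2} + O(s^{-3})$ via \cref{lm:c_expansion}), so it now scales as $n^{-2p}$; the order-$s^{-1}$ terms still have conditional mean $0$ and contribute negligibly; the diffusion term $\tfrac{1}{\sqrt n} c \sqrt{V^{\alpha\alpha}_\ell V^{\beta\beta}_\ell}\, R^{\alpha\beta}_\ell$ is unchanged; and the Taylor remainder becomes $O(n^{-3p} + n^{-3/2})$. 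This yields the pre-limit update
\begin{equation*}
V^{\alpha\beta}_{\ell+1} - V^{\alpha\beta}_\ell \;=\; \frac{b^{\alpha\beta}(V_\ell)}{n^{2p}} \;+\; \frac{1}{\sqrt n}\, c \sqrt{V^{\alpha\alpha}_\ell V^{\beta\beta}_\ell}\, R^{\alpha\beta}_\ell \;+\; O(n^{-3p} + n^{-3/2}),
\end{equation*}
with $b^{\alpha\beta}$ and $R^{\alpha\beta}_\ell$ exactly as in the proof of \cref{thm:smooth_joint_output_m}. Case (ii) is then \cref{thm:smooth_joint_output_m} itself.

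For case (iii) with $p > \tfrac12$, apply \cref{prop:conv_markov_chain_to_sde} on the natural time step $h_n = 1/n$. The drift coefficient on this scale is $n^{1-2p}\, b^{\alpha\beta}(V_\ell) \to 0$ uniformly on compact sets, while \cref{lm:cov_R_ab} shows that the conditional covariance of the diffusion term converges to the same $\Sigma(V)$ as in \cref{thm:smooth_joint_output_m}. Since $\Sigma^{1/2}$ is locally Lipschitz, \cref{prop:conv_markov_chain_to_sde} yields local convergence (in the sense of \cref{def:local_convergence}) to the driftless SDE $dV_t = \Sigma(V_t)^{1/2}\, dB_t$.

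For case (i) with $0 < p < \tfrac12$, speed up time to $h_n = n^{-2p}$. The drift coefficient is then exactly $b^{\alpha\beta}(V)$ from \cref{thm:smooth_joint_output_m}, while the noise washes out: summing $n^{2p}$ conditionally independent increments of typical size $n^{-1/2}$ produces a total standard deviation of $n^{p-1/2} \to 0$. Applying \cref{prop:conv_markov_chain_to_sde} with $\sigma \equiv 0$ gives local convergence on the $s$-scale to the deterministic flow $\partial_s V_s = b(V_s)$. Since $t > 0$ corresponds to $s = t\, n^{1-2p} \to \infty$, the $t$-scale limit is the $s \to \infty$ limit of this ODE.

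The sub-cases in (i) are then a direct analysis of this ODE. The diagonal closes on itself:
\begin{equation*}
\partial_s V^{\alpha\alpha}_s \;=\; \frac{1}{a^2}\Bigl(\tfrac{3}{4}\varphi''(0)^2 + \varphi'''(0)\Bigr) V^{\alpha\alpha}_s\bigl(V^{\alpha\alpha}_s - 1\bigr),
\end{equation*}
a scalar logistic equation. When $\tfrac{3}{4}\varphi''(0)^2 + \varphi'''(0) > 0$ the fixed point $V^{\alpha\alpha} = 1$ is unstable and trajectories with $V^{\alpha\alpha}_0 \neq 0,1$ escape to $0$ or $\infty$; when this quantity is $\le 0$ the diagonals instead converge to $1$, and substituting $V^{\alpha\alpha} = V^{\beta\beta} = 1$ into $b^{\alpha\beta}$ produces the reduced equation $\partial_s V^{\alpha\beta}_s = \tfrac{\varphi''(0)^2}{4a^2}(2V^{\alpha\beta}_s - 1)(V^{\alpha\beta}_s - 1)$, whose stable fixed point $V^{\alpha\beta} = \tfrac12$ is the asserted constant value. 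The main subtlety is interpreting convergence in case (i) with $b>0$ once trajectories explode: this is exactly what \cref{def:local_convergence} and the local version of \cref{prop:conv_markov_chain_to_sde} were set up to handle, by stopping both the Markov chain and the ODE upon first exit from any ball $B(0,r)$ on which $b$ is Lipschitz.
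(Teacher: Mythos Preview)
Your approach is essentially the paper's: recycle the Markov-chain expansion from the proof of \cref{thm:smooth_joint_output_m} with $s=an^p$, then in each regime invoke \cref{prop:conv_markov_chain_to_sde} at the natural time scale ($h_n=n^{-1}$ for (iii), $h_n=n^{-2p}$ for (i)). Cases (ii) and (iii) match the paper exactly.

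The gap is in your final paragraph, the $b\le 0$ sub-case of (i). First, when $b=0$ the diagonal ODE is $\partial_s V^{\alpha\alpha}_s=0$, so the diagonals stay at their initial values rather than converging to $1$; your substitution $V^{\alpha\alpha}=V^{\beta\beta}=1$ is unjustified there. Second, even for $b<0$, plugging the diagonal limits into the off-diagonal equation is only a formal heuristic: the off-diagonal ODE is non-autonomous until the diagonals have settled, and you need an extra step to pass from ``fixed point of the reduced equation'' to ``limit of the actual trajectory''. Third, the statement only asserts $V^{\alpha\beta}_t=\text{const.}$, not that the constant equals $\tfrac12$; your specific value fails, for example, whenever $\varphi''(0)=0$ (the reduced equation then vanishes identically and the limit depends on the transient) or when $\rho^{\alpha\beta}_0=1$. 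The paper bypasses all of this with a softer argument: by Cauchy--Schwarz $|U^{\alpha\beta}_s|\le\sqrt{U^{\alpha\alpha}_s U^{\beta\beta}_s}$ stays bounded, and a bounded solution of a scalar first-order ODE whose coefficients converge cannot be periodic, hence must tend to some constant.
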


\begin{proof}

Similar to the proof of \cref{thm:smooth_joint_output_m}, we will borrow the same notation and write down the Markov chain update and consider the time scale depending on the value of $p$. 
In case (i) where $0 < p < \frac{1}{2}$, we will consider the time scale $h_n = \frac{1}{s^2} = \frac{1}{a^2 n^{2p}}$ and observe that based on the Taylor expansion of $\varphi_s$ about $0$, we can write 
\begin{equation}
\begin{aligned}
    V^{\alpha\alpha}_{\ell+1} 
    &= \frac{c}{n} \sum_{i=1}^n 
        \left( 
        \sqrt{ V^{\alpha\alpha}_\ell } g^{\alpha}_{\ell,i} 
        + 
        \frac{\varphi''(0)}{2s} V^{\alpha\alpha}_\ell 
        ( g^{\alpha}_{\ell,i} )^2 
        + \frac{\varphi'''(0)}{6s^2} 
        ( V^{\alpha\alpha}_\ell )^{3/2} 
        ( g^{\alpha}_{\ell,i} )^3 
        + O(s^{-3}) 
        \right)^2 
        \\ 
    &= 
        c V^{\alpha\alpha}_\ell \frac{1}{n} 
        \sum_{i=1}^n ( g^\alpha_{\ell,i} )^2 
        + (V^{\alpha\alpha}_\ell)^{3/2} \frac{\varphi''(0)}{2s} 
        \frac{c}{n} \sum_{i=1}^n 2 ( g^\alpha_{\ell,i} )^3 \\ 
    &\quad 
        + (V^{\alpha\alpha}_\ell)^2 
        \left( \frac{\varphi'''(0)}{3s^2} + \frac{\varphi''(0)^2}{4s^2} \right) 
        \frac{c}{n} \sum_{i=1}^n ( g^\alpha_{\ell,i} )^4 
        + O(s^{-3}) 
        \\ 
    &= 
        c V^{\alpha\alpha}_\ell 
        + 
        c V^{\alpha\alpha}_\ell \frac{1}{\sqrt{n}} R^{\alpha\alpha}_\ell 
        + 
        c (V^{\alpha\alpha}_\ell)^{3/2} \frac{\varphi''(0)}{s} 
        \frac{1}{\sqrt{n}} \widehat R^{\alpha\alpha}_\ell 
        + 
        c (V^{\alpha\alpha}_\ell)^2 
        \left( \frac{\varphi'''(0)}{s^2} + \frac{3\varphi''(0)^2}{4s^2} \right) 
        \\ 
    &\quad 
        + 
        c (V^{\alpha\alpha}_\ell)^2 
        \left( \frac{\varphi'''(0)}{3s^2} + \frac{\varphi''(0)^2}{4s^2} \right) 
        \frac{1}{\sqrt{n}} \widetilde R^{\alpha\alpha}_\ell 
        + O(s^{-3}) \,, 
\end{aligned}
\end{equation}
where we define $R^{\alpha\alpha}_\ell \defequal \frac{1}{\sqrt{n}} \sum_{i=1}^n ( g^\alpha_{\ell,i} )^2 - 1, 
\widehat R^{\alpha\alpha}_\ell \defequal \frac{1}{\sqrt{n}} \sum_{i=1}^n ( g^\alpha_{\ell,i} )^3, 
\widetilde R^{\alpha\alpha}_\ell \defequal \frac{1}{\sqrt{n}} \sum_{i=1}^n ( g^\alpha_{\ell,i} )^4 - 3$ and observe they all have zero mean and finite variance. 

In view of the time scale $s^{-2}$ for \cref{prop:conv_markov_chain_to_sde}, 
it is then only important to keep track of the expected value of the $s^{-2}$ terms and the covariance of the $s^{-1}$ terms. 
However, since there is no terms on the order of $s^{-1}$, we essentially have 
\begin{equation}
    V^{\alpha\alpha}_{\ell+1} 
    = V^{\alpha\alpha}_\ell 
    + \frac{1}{s^2} 
    \left( \varphi'''(0) + \frac{3}{4} \varphi''(0)^2 \right) V^{\alpha\alpha}_\ell ( V^{\alpha\alpha}_\ell  - 1 ) 
    + O(s^{-3} + n^{-1}) \,, 
\end{equation}
where we used the fact that $c = 1 - \frac{b}{s^2} + O(s^{-3})$ for $b = \varphi'''(0) + \frac{3}{4} \varphi''(0)^2$ from \cref{lm:c_expansion}. 

Hence, we have that 
$U^{\alpha\alpha,n}_{t} \defequal  V^{\alpha\alpha}_{\floor{t s^2}}$ 
converging to the ODE via \cref{prop:conv_markov_chain_to_sde} 
\begin{equation}
    \partial_{t} 
    U^{\alpha\alpha}_{t} 
    = 
    b 
    U^{\alpha\alpha}_{t} 
    ( U^{\alpha\alpha}_{t} - 1 ) \,, 
\end{equation}
where we observe if $b>0$ this ODE is ``mean avoiding'' as it will drift towards $0$ or $\infty$. 
And since the $V_t$ time scale is on the order of $\frac{1}{n}$, for all $t>0$ we have that 
\begin{equation}
    V^{\alpha\alpha}_{t} 
    = U^{\alpha\alpha}_{\infty} \,, 
\end{equation}
therefore if $b > 0$ we have that 
$V^{\alpha\alpha}_{t} = 0$ or $\infty$ as desired in the first case of (i). 
When $b=0$ we observe that $V^{\alpha\alpha}_t = V^{\alpha\alpha}_0$ since the time derivative is zero. 
Furthermore if $b < 0$ we also have that 
$V^{\alpha\alpha}_{t} = 1$ in the second case of (i).

When $b \leq 0$, we can also write down the ODE for $U^{\alpha\beta}_{t}$ 
using a similar argument and keeping only the $s^{-2}$ terms. 
More precisely, we can modify \cref{eq:smooth_cov_markov_chain} to get 
\begin{equation}
\begin{aligned}
    V^{\alpha\beta}_{\ell+1} 
    &= 
    V^{\alpha\beta}_\ell 
    + 
    \frac{1}{s^2} 
    \left[ 
    \frac{\varphi''(0)^2 }{4}
    \left( 
    V^{\alpha\alpha}_\ell V^{\beta\beta}_\ell 
    + V^{\alpha\beta}_\ell ( 2 V^{\alpha\beta}_\ell - 3 )
    \right) 
    + 
    \frac{\varphi'''(0)}{2} V^{\alpha\beta}_\ell 
    ( V^{\alpha\alpha}_\ell + V^{\beta\beta}_\ell - 2 ) 
    \right] 
    \\ 
    &\quad 
    + 
    c \sqrt{ V^{\alpha\alpha}_\ell V^{\beta\beta}_\ell } 
    \frac{R^{\alpha\beta}_\ell}{\sqrt{n}} 
    + O(n^{-3/2}) \,, 
\end{aligned}
\end{equation}
which leads to the following ODE 
\begin{equation}
    \partial_{t} 
    U^{\alpha\beta}_{t} 
    = 
    \frac{\varphi''(0)^2}{4}
    \left( 
    U^{\alpha\alpha}_{t} 
    U^{\beta\beta}_{t} 
    + 
    U^{\alpha\beta}_{t} 
    \left( 2 U^{\alpha\beta}_{t} - 3 \right)
    \right) 
    + 
    \frac{\varphi'''(0)}{2} 
    U^{\alpha\beta}_{t} 
    \left( 
    U^{\alpha\alpha}_{t} 
    + 
    U^{\beta\beta}_{t} 
    - 2 
    \right) 
    \,. 
\end{equation}

Since $U^{\alpha\alpha}_{t}, U^{\beta\beta}_{t}$ converge to constants as $t \to \infty$, 
$|U^{\alpha\beta}_t| \leq \sqrt{ U^{\alpha\alpha}_{t} U^{\beta\beta}_{t} }$ by definition and Cauchy--Schwarz inequality, 
and that $U^{\alpha\beta}_t$ satisfies a first order ODE (so it cannot have a periodic solution), we must also have that $\lim_{t\to\infty} U^{\alpha\beta}_{t} = \text{const.}$
This completes the proof for case (i). 

Case (ii) follows directly from \cref{thm:smooth_joint_output_m}, therefore we can then consider case (iii) with the same Taylor expansion, however this time on the time scale of $n^{-1}$ instead. 
We will again follow \cref{prop:conv_markov_chain_to_sde} to only track the mean of the order $n^{-1}$ term and the variance of the $n^{-1/2}$ term. 
Since $p>\frac{1}{2}$, 
the only term that remains is the diffusion on the order of $n^{-1/2}$ 
\begin{equation}
    V^{\alpha\beta}_{\ell+1} 
    = V^{\alpha\beta}_\ell 
    + V^{\alpha\beta}_\ell \frac{1}{\sqrt{n}} R^{\alpha\beta}_\ell \,, 
\end{equation}
which gives us the desired SDE from calculating the covariance from \cref{thm:smooth_joint_output_m}. 

\end{proof}

\subsection{Proof of \saferef{Proposition}{prop:finite_time_explosion} (Finite Time Explosion Criterion)}
\label{subsec:proof_finite_time_explosion}

We will start by recalling several definitions from \cite[Section 5.5]{karatzas2012brownian}. 
Firstly, we consider the one dimensional It\^o diffusion on $I \defequal (0,\infty)$
\begin{equation}
\label{eq:app_finite_time_explosion_sde}
    dX_t = b(X_t) \, dt + \sigma(X_t) \, dB_t \,, 
\end{equation}
where the drift and diffusion coefficients satisfy the following conditions 
\begin{equation}
\label{eq:nd_li}
\begin{aligned}
    &\sigma^2(x) > 0 \,, \forall x \in I \,, \\ 
    &\forall x \in I \,, \exists \epsilon > 0 : 
    \int_{x-\epsilon}^{x+\epsilon} 
    \frac{ |b(y)| }{ \sigma^2(y) } \, dy < \infty \,. 
\end{aligned}
\end{equation}

We will also define the following functions 
for some fixed $c \in I$ 
\begin{equation}
\begin{aligned}
    p(x) &\defequal 
    \int_c^x \exp\left( 
    -\int_c^\xi \frac{2b(z)}{\sigma^2(z)} dz 
    \right) d\xi \,, \\ 
    m(dx) &\defequal \frac{ 2 \, dx }{ p'(x) \sigma^2(x) } \,, \\ 
    v(x) &\defequal 
    \int_c^x p'(x) \int_c^y \frac{2 \, dz}{ p'(z) \sigma^2(z) } \, dy 
    = \int_c^x p(x) - p(y) \, m(dx) \,. 
\end{aligned}
\end{equation}

We will also define the following sequence of stopping times for $M > 0$ 
\begin{equation}
    \tau_M \defequal 
    \inf \left\{ t\geq 0 : X_t \geq M \text{ or } X_t \leq M^{-1} \right\} \,, 
\end{equation}
and let $\tau^\ast \defequal \sup_{M>0} \tau_M$. 
Now we will state the main results we need for finite time explosions. 

\begin{lem}
[{\cite[Problem 5.5.27]{karatzas2012brownian}}]
\label{lm:feller_p_implies_v}
We have the following implications 
\begin{equation}
\begin{aligned}
    &\lim_{x\to 0} p(x) = -\infty \implies 
    \lim_{x\to 0} v(x) = \infty \,, \\ 
    &\lim_{x\to \infty} p(x) = \infty \implies 
    \lim_{x\to \infty} v(x) = \infty \,. 
\end{aligned}
\end{equation}
\end{lem}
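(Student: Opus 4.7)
The plan is to exploit the alternative representation $v(x) = \int_c^x \big( p(x) - p(y) \big) \, m(dy)$ (noted right after the definitions in the excerpt), which makes the integrand depend only on $p$ and reduces the problem to a simple lower bound argument. Since $p'(\xi) = \exp\!\big(-\int_c^\xi 2b(z)/\sigma^2(z)\,dz\big) > 0$, the function $p$ is strictly increasing, so for any $y$ on the side of $c$ opposite to $x$, the factor $p(x) - p(y)$ has a definite sign and its magnitude is controlled directly by $|p(x)|$.

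For the limit at infinity, I would fix an auxiliary point $y_0 > c$ and restrict the domain of integration to $[c, y_0]$:
\begin{equation*}
    v(x) \;=\; \int_c^{x} \big( p(x) - p(y) \big)\, m(dy)
    \;\geq\; \int_c^{y_0} \big( p(x) - p(y_0) \big)\, m(dy)
    \;=\; \big( p(x) - p(y_0) \big) \, m\big([c, y_0]\big),
\end{equation*}
for all $x > y_0$, where I have used monotonicity of $p$ to replace $p(y)$ by $p(y_0)$. Since $p(y_0)$ is a finite constant, the hypothesis $p(x) \to \infty$ forces $v(x) \to \infty$, provided one knows $m([c, y_0]) > 0$.

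For the limit at $0$, the orientation reverses: for $x < c$ we rewrite $\int_c^x = -\int_x^c$ and use that $p(y) > p(x)$ for $y \in (x, c)$, giving $v(x) = \int_x^c \big( p(y) - p(x) \big)\, m(dy) > 0$. Fixing $y_0 \in (0, c)$ and restricting to $[y_0, c]$ for $x < y_0$ yields
\begin{equation*}
    v(x) \;\geq\; \int_{y_0}^{c} \big( p(y_0) - p(x) \big)\, m(dy)
    \;=\; \big( p(y_0) - p(x) \big)\, m\big([y_0, c]\big),
\end{equation*}
which diverges as $x \to 0^+$ under the assumption $p(x) \to -\infty$.

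The only real content beyond symbol-pushing is the positivity of $m$ on compact subintervals, which is where I expect to be careful. This follows from the local integrability assumption \eqref{eq:nd_li}: on any compact subinterval $K \subset (0, \infty)$, the density $2/(p'(y)\sigma^2(y))$ of $m$ is strictly positive (as $\sigma^2 > 0$ and $p' > 0$ pointwise), and the non-degeneracy condition on $b/\sigma^2$ ensures $p'$ stays bounded away from $0$ and $\infty$ on $K$, so $m(K) \in (0, \infty)$. With this lemma in hand, both bounds above produce the desired divergence, completing the argument.
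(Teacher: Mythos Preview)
Your argument is correct and is the standard approach to this exercise. The paper does not supply its own proof of this lemma --- it is simply cited as Problem~5.5.27 in Karatzas--Shreve --- so there is nothing to compare against beyond noting that your derivation is exactly the intended one: use the representation $v(x)=\int_c^x (p(x)-p(y))\,m(dy)$, restrict to a fixed compact subinterval, and let $|p(x)|\to\infty$ carry the bound. One very minor remark: for the conclusion you only need $m([c,y_0])>0$ (immediate from strict positivity of the density), not finiteness, so the last paragraph about boundedness of $p'$ on compacts is more than you actually require.
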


\begin{thm}
[Feller's Test for Explosions {\cite[Theorem 5.5.29]{karatzas2012brownian}}]
\label{prop:feller_test_ver1}
Assume the conditions in \cref{eq:nd_li} are satisfied. 
Then $\mathbb{P}[\tau^\ast = \infty] = 1$ if and only if 
\begin{equation}
    \lim_{x\to 0} v(x) 
    = \lim_{x\to\infty} v(x) = \infty \,. 
\end{equation}
\end{thm}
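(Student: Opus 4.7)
The plan is to treat $v$ as a Lyapunov function via It\^{o}'s formula. A direct differentiation shows that $p$ is the natural scale, i.e., $\mathcal{L} p \equiv 0$ where $\mathcal{L} \defequal \tfrac{1}{2}\sigma^2 \partial_x^2 + b\, \partial_x$, and $v$ is then built from $p$ and the speed measure $m$ precisely so that $\mathcal{L} v \equiv 1$ on $I$. For any $0 < a < x_0 < b < \infty$, applying It\^{o}'s formula to $v(X_t)$ on $[0, t \wedge \tau_{a,b}]$, where $\tau_{a,b} \defequal \inf\{s \geq 0 : X_s \notin (a,b)\}$, yields
\begin{equation*}
v(X_{t \wedge \tau_{a,b}}) = v(x_0) + (t \wedge \tau_{a,b}) + \int_0^{t \wedge \tau_{a,b}} v'(X_s) \sigma(X_s) \, dB_s,
\end{equation*}
and the stochastic integral is a genuine martingale since $v' \sigma$ is continuous, hence bounded, on $[a,b]$.

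For the sufficient direction, assume $\lim_{x \to 0} v(x) = \lim_{x \to \infty} v(x) = \infty$. Since $v \geq 0$ (as $v(c) = 0$ and $v$ is monotone away from $c$), taking expectations in the display above and applying Markov's inequality gives
\begin{equation*}
\mathbb{P}[\tau_{a,b} \leq t] \cdot \min(v(a), v(b)) \;\leq\; \mathbb{E}[v(X_{t \wedge \tau_{a,b}})] \;=\; v(x_0) + \mathbb{E}[t \wedge \tau_{a,b}] \;\leq\; v(x_0) + t.
\end{equation*}
Since $\tau_{a,b} \uparrow \tau^\ast$ as $a \downarrow 0$ and $b \uparrow \infty$, sending $a,b$ to the boundary forces $\mathbb{P}[\tau^\ast \leq t] = 0$ for every finite $t$, so $\tau^\ast = \infty$ almost surely.

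For the necessary direction, I would argue by contrapositive: assume without loss of generality that $v(\infty) \defequal \lim_{x \to \infty} v(x) < \infty$ (the case $v(0+) < \infty$ is symmetric), and show $\mathbb{P}[\tau^\ast < \infty] > 0$. The function $v$ may diverge to $-\infty$ at the opposite boundary and so cannot serve as a global Lyapunov function; instead the plan is to work on a truncated interval: (i) invoke \cref{lm:feller_p_implies_v} in contrapositive form to deduce $p(\infty) < \infty$; (ii) define $u(x) \defequal v(\infty) - v(x) \geq 0$ on $[c, \infty)$, which satisfies $\mathcal{L} u = -1$, so that $u(X_{t \wedge \sigma_b}) + (t \wedge \sigma_b)$ is a true martingale starting from $u(x_0)$, where $\sigma_b \defequal \inf\{s : X_s \notin [c,b]\}$ for $b > x_0 > c$; (iii) take expectations to obtain $\mathbb{E}[t \wedge \sigma_b] \leq u(x_0)$ uniformly in $t$ and $b$; and (iv) combine with the standard scale-function formula $\mathbb{P}[X_{\sigma_b} = b] = \frac{p(x_0) - p(c)}{p(b) - p(c)} \to \frac{p(x_0) - p(c)}{p(\infty) - p(c)} > 0$ and monotone convergence to conclude that $\{\tau^\ast < \infty\}$ has positive probability.

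The main obstacle will be carrying out step (iv) rigorously in the asymmetric case $v(0+) = \infty$, $v(\infty) < \infty$: one must show that the event $\{X_{\sigma_b} = b,\ \sigma_b \leq t\}$ can be passed to the limit $b \to \infty$ (corresponding to hitting $\infty$ in finite time) while simultaneously controlling the hitting probability via $p$ and the expected hitting time via $u$. This is the classical coupling between scale function and speed measure that underlies Feller's analysis of one-dimensional diffusions, and the bound $\mathbb{E}[\sigma_b \, \mathbf{1}_{\{X_{\sigma_b} = b\}}] \leq u(x_0)$ is uniform enough in $b$ to push through a Fatou/monotone convergence argument and deliver $\mathbb{P}[\tau^\ast < \infty] > 0$.
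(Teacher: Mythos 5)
The paper does not prove this theorem: it is a named textbook result, cited verbatim from Karatzas--Shreve (Theorem 5.5.29) and used as a black box. There is therefore no ``paper's proof'' to compare against; you have reconstructed the textbook argument.

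That said, your reconstruction is essentially correct. You correctly identify that $\mathcal{L}p \equiv 0$ and $\mathcal{L}v \equiv 1$ (where $\mathcal{L} = \tfrac{1}{2}\sigma^2 \partial_x^2 + b\,\partial_x$), and the sufficiency direction is clean: on $[a,b]$ the stopped local martingale is a true martingale, $v \ge 0$ with $v(c)=0$, so the bound $\mathbb{P}[\tau_{a,b} \le t]\,\min(v(a),v(b)) \le v(x_0)+t$ gives $\mathbb{P}[\tau^\ast \le t] = 0$ for all $t$ once you note that the events $\{\tau_{a,b} \le t\}$ decrease to $\{\tau^\ast \le t\}$ as $a \downarrow 0$, $b \uparrow \infty$. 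For the necessary direction, the outline is also sound and closes up: with $p(\infty) < \infty$ from the contrapositive of \cref{lm:feller_p_implies_v}, the scale-function hitting probabilities $\mathbb{P}[X_{\sigma_b}=b] = \frac{p(x_0)-p(c)}{p(b)-p(c)}$ decrease to a positive limit as $b \uparrow \infty$, and since $\mathbb{E}[\sigma_b] \le u(x_0) = v(\infty)-v(x_0)$ uniformly in $b$, monotone convergence gives $\sigma_\infty \defequal \lim_b \sigma_b < \infty$ a.s.\ with $\mathbb{E}[\sigma_\infty] \le u(x_0)$. On the positive-probability event $A = \bigcap_b \{X_{\sigma_b}=b\}$ the process hits every level $M$ at time $\sigma_M < \sigma_\infty$ and never hits $c$, so for $M$ large ($M^{-1} < c$) we have $\tau_M \le \sigma_M \le \sigma_\infty < \infty$ and hence $\tau^\ast \le \sigma_\infty < \infty$ on $A$. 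This is the piece you flagged as the ``main obstacle''; it does go through along the lines you describe, and this is precisely the argument in Karatzas--Shreve. One small caveat worth recording: the local-boundedness of $v'\sigma$ (and the well-definedness of $p$, $m$, $v$) implicitly relies on $1/\sigma^2$ being locally integrable, which is not literally a consequence of \cref{eq:nd_li} alone; it is standard to assume it and it is satisfied by the SDE in \cref{prop:finite_time_explosion} where $\sigma(x) = \sqrt{2}\,x$.
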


We will begin our derivations for the SDE 
\cref{eq:app_finite_time_explosion_sde}. 

\begin{lem}
[Geometric Brownian Motion, the $b=0$ Case]
\label{lm:explosion_geometric_brownian_motion}
Let $X_t$ be a solution to the following SDE 
\begin{equation}
    dX_t = \sqrt{2} X_t \, dB_t \,, \quad X_0 = x_0 > 0 \,, 
\end{equation}
then we have that $\tau^\ast = \infty$ a.s. 
\end{lem}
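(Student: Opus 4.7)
The plan is to exploit the fact that this SDE admits an explicit closed-form solution via It\^o's formula, which makes the non-explosion property essentially immediate. Specifically, applying It\^o's formula to $f(x) = \log x$ (valid on $(0,\infty)$) and using $dX_t = \sqrt{2}\, X_t\, dB_t$, we get
\begin{equation*}
d\log X_t = \frac{1}{X_t} \cdot \sqrt{2}\, X_t\, dB_t - \frac{1}{2} \cdot \frac{1}{X_t^2} \cdot 2 X_t^2 \, dt = \sqrt{2}\, dB_t - dt,
\end{equation*}
so that $X_t = x_0 \exp\!\bigl( \sqrt{2}\, B_t - t \bigr)$ is the unique strong solution.

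Having the closed form, I would then observe that since Brownian motion has almost surely continuous sample paths, for any fixed $T > 0$ the supremum $\sup_{t \in [0,T]} |B_t|$ is almost surely finite. This immediately implies that $\sup_{t \in [0,T]} X_t < \infty$ and $\inf_{t \in [0,T]} X_t > 0$ almost surely. Consequently, for every $M$ larger than $\sup_{t \in [0,T]}(X_t \vee X_t^{-1})$ we have $\tau_M > T$, so $\tau^\ast = \sup_M \tau_M > T$ a.s.\ for every $T>0$, which yields $\tau^\ast = \infty$ almost surely.

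As a sanity check (and to illustrate the method that will be used for the non-trivial $b \neq 0$ case in \cref{prop:finite_time_explosion}), one could alternatively verify this via Feller's test \cref{prop:feller_test_ver1}. Here $b \equiv 0$ and $\sigma^2(x) = 2x^2$ on $I = (0,\infty)$, so $\frac{2b}{\sigma^2} \equiv 0$ and the scale function is $p(x) = x - c$ with $p'\equiv 1$. The speed measure is then $m(dx) = x^{-2}\, dx$, and a direct computation yields $v(x) = (\ln c - \ln x) + (x-c)/c$ (up to sign depending on whether $x$ is above or below $c$). Both $\lim_{x \to 0^+} v(x) = \infty$ (driven by $-\ln x$) and $\lim_{x \to \infty} v(x) = \infty$ (driven by $x/c$) hold, so Feller's test concludes $\mathbb{P}[\tau^\ast = \infty] = 1$.

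There is essentially no hard step here; this lemma is the boundary case $b=0$ that serves as a warm-up to \cref{prop:finite_time_explosion}. The only thing to be slightly careful about is writing $\tau^\ast = \sup_M \tau_M$ correctly and checking that the closed-form solution never crosses either barrier of $I = (0,\infty)$, both of which follow from the almost sure local boundedness of $B_t$.
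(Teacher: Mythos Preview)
Your proposal is correct. Your primary argument via the explicit solution $X_t = x_0 \exp(\sqrt{2}\,B_t - t)$ is more elementary and direct than the paper's proof, which proceeds entirely through Feller's test: the paper takes $c=1$, computes $p(x)=x-1$, $m(dx)=x^{-2}\,dx$, and $v(x)=x-\log x-1$, then checks both limits are $\infty$. Your ``sanity check'' is exactly the paper's proof (your $v$ agrees with theirs upon setting $c=1$). The explicit-solution route is cleaner for this special case but does not extend to $b\neq 0$, which is presumably why the paper opts for Feller's test here as well---it sets up the machinery used immediately afterward in \cref{prop:explosion_b_minus,prop:explosion_b_plus}.
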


\begin{proof}

Here we observe that 
\begin{equation}
    p'(x) = \exp(0) = 1 \implies 
    p(x) = x-1 \,. 
\end{equation}

Then we have that 
\begin{equation}
    m(dx) = \frac{2 \, dx}{ p'(x) \sigma^2(x) } 
    = \frac{dx}{x^2} \,, 
\end{equation}
which implies 
\begin{equation}
    v(x) = (x-1) \int_1^x \frac{dy}{y^2} - \int_1^x \frac{y-1}{y^2} dy 
    = x - \log x - 1 \,, 
\end{equation}
and therefore 
\begin{equation}
    \lim_{x\to 0} v(x) = \lim_{x\to \infty} v(x) = \infty \,. 
\end{equation}

By Feller's test for explosions \cref{prop:feller_test_ver1}, we have the desired result. 

\end{proof}

\begin{prop}
[Calculate $p(x), m(dx)$ and the $b \leq -1$ Case]
\label{prop:explosion_b_minus}
Suppose $X_t$ is a solution of the following equation 
\begin{equation}
    dX_t = b X_t (X_t - 1) \, dt + \sqrt{2} X_t \, dB_t \,, \quad 
    X_0 = x_0 > 0\,, 
\end{equation}
then for all $b\neq 0$ we have that 
\begin{equation}
    p(x) = e^b \int_1^x e^{-by} y^b dy \,, 
    \quad 
    m(dx) = \frac{ dx }{ 
        e^{b} e^{-bx} x^{b+2} 
        }\,.  
\end{equation}
This implies that 
\begin{equation}
    \lim_{x\to 0} p(x) = 
    \begin{cases}
        -\infty \,, & b \leq -1 \,, \\ 
        \text{finite} \,, & b > -1 \,, 
    \end{cases}
    \quad 
    \lim_{x\to\infty} p(x) = 
    \begin{cases}
        \infty \,, & b \leq 0 \,, \\ 
        \text{finite} \,, & b > 0 \,. 
    \end{cases}
\end{equation}
In particular, when $b \leq -1$, we have that 
$\lim_{x\to 0} v(x) = \lim_{x\to \infty} v(x) = \infty$. 
\end{prop}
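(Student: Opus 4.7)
The plan is to compute $p(x)$ and $m(dx)$ directly from the definitions, then analyze the boundary behavior of $p(x)$ case-by-case in $b$, and finally invoke \cref{lm:feller_p_implies_v} to conclude $v(x)\to\infty$ at both boundaries when $b\leq -1$.

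First I would compute the integrand of $p(x)$ by observing that with $b(z)=bz(z-1)$ and $\sigma^2(z)=2z^2$ one has
\begin{equation*}
\frac{2b(z)}{\sigma^2(z)} = \frac{b(z-1)}{z} = b - \frac{b}{z}.
\end{equation*}
Choosing the normalization point $c=1$, a direct antiderivative gives $\int_1^\xi (b - b/z)\, dz = b(\xi-1) - b\log\xi$, and exponentiating with the minus sign produces $p'(\xi) = e^b e^{-b\xi}\xi^b$. Integrating from $1$ to $x$ yields the claimed formula for $p(x)$, and substituting this into the definition $m(dx)=2\,dx/(p'(x)\sigma^2(x))$ immediately gives $m(dx)=dx/(e^b e^{-bx} x^{b+2})$, since the factor of $2$ cancels $\sigma^2(x)=2x^2$.

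Next I would analyze the two boundary limits of $p(x)$. For $x\to 0^+$, the key observation is that $e^{-by}$ is bounded on $(0,1]$, so the convergence of $\int_0^1 e^{-by} y^b\, dy$ is governed by the power $y^b$: it is finite iff $b>-1$, diverges logarithmically when $b=-1$, and diverges like $x^{b+1}\to\infty$ (with the correct sign once divided by $b+1<0$) when $b<-1$. Since the integral in $p(x) = -e^b\int_x^1 e^{-by}y^b\, dy$ is subtracted, this yields $\lim_{x\to 0}p(x)=-\infty$ precisely when $b\leq -1$, and finite otherwise. For $x\to\infty$, the integrand $e^{-by}y^b$ has Gaussian-like decay when $b>0$ (so the integral is finite), and blows up super-polynomially when $b<0$ (so the integral diverges to $+\infty$), giving $\lim_{x\to\infty}p(x)=\infty$ iff $b\leq 0$.

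Finally, for $b\leq -1$ both implications $\lim_{x\to 0}p(x)=-\infty$ and $\lim_{x\to\infty}p(x)=\infty$ hold (since $-1\leq 0$), so \cref{lm:feller_p_implies_v} directly yields $\lim_{x\to 0}v(x)=\lim_{x\to\infty}v(x)=\infty$, completing the proof. The only mildly delicate step is handling the three subregimes $b<-1$, $b=-1$, $b>-1$ at the origin uniformly — the $b=-1$ case gives logarithmic divergence rather than a power, and the sign of $b+1$ must be tracked to conclude that $p(x)\to-\infty$ in the $b<-1$ case — but this is a routine calculus check rather than a conceptual obstacle.
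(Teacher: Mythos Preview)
Your proposal is correct and follows essentially the same approach as the paper: you compute $p'(\xi)=e^b e^{-b\xi}\xi^b$ by direct integration, read off $m(dx)$, analyze the $x\to 0$ limit by noting that $e^{-by}$ is bounded on $(0,1]$ so the power $y^b$ controls integrability, analyze the $x\to\infty$ limit via the sign of $b$ in the exponential, and then invoke \cref{lm:feller_p_implies_v}. One small wording quibble: the decay for $b>0$ is exponential (times a polynomial), not Gaussian-like, but this does not affect the argument.
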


\begin{proof}

We start by writing 
\begin{equation}
    p'(x) 
    = \exp\left( - \int_1^x \frac{2 b(y)}{\sigma^2(y)} dy \right) 
    = \exp\left( - b (x-1) + b \log x \right) 
    = e^b e^{-bx} x^b \,. 
\end{equation}

Then we can also calculate the integral via a substitution of $y=bx$ to get the desired result. 

At this time, we observe that when $b>0$ 
\begin{equation}
\begin{aligned}
    p(x) 
    &= 
    \int_1^x p'(y) \, dy \\ 
    &= e^b \int_1^x e^{-by} y^b \, dy \\ 
    &= e^b b^{-b-1} \int_b^{bx} e^{-z} z^b \, dz \\ 
    &= e^b b^{-b-1} \left( \gamma( b+1, bx ) - \gamma( b+1, b ) \right) \,, 
\end{aligned}
\end{equation}
where $\gamma$ is the lower incomplete gamma function, and therefore finite for all values of $x$ including the limits $x\to 0, \infty$. 

The $b=0$ case follows from \cref{lm:explosion_geometric_brownian_motion}. 
Finally when $b<0$ we can write 
\begin{equation}
    p(x) = e^{-|b|} \int_1^x \frac{ e^{ |b|y } }{ y^{|b|} } \, dy \,, 
\end{equation}
which clearly diverges to $\infty$ as $x\to\infty$. 

On the other hand, we can observe as that as $x\to 0$, we have that $y \in [0,1]$ and therefore $1 \leq e^{|b|y} \leq e^{|b|}$. 
This implies we only need to consider the integral 
$-\int_x^1 y^{- |b|} \, dy$, 
which diverges to $-\infty$ 
if and only if $|b| \geq 1$. 
In other words we have 
\begin{equation}
    \lim_{x\to 0} p(x) = 
    \begin{cases}
        -\infty \,, & b \leq -1 \,, \\ 
        \text{finite} \,, & b > -1 \,. 
    \end{cases}
\end{equation}

The limits on $v(x)$ follows from \cref{lm:feller_p_implies_v}. 

\end{proof}

\begin{prop}
[The $b > -1$ Case]
\label{prop:explosion_b_plus}
Suppose $X_t$ is a solution of the following equation 
\begin{equation}
    dX_t = b X_t (X_t - 1) \, dt + \sqrt{2} X_t \, dB_t \,, \quad 
    X_0 = x_0 > 0\,, 
\end{equation}
then when $b>-1$, we have that 
\begin{equation}
    \lim_{x\to 0} v(x) = \infty \,, \quad 
    \lim_{x\to \infty} v(x) = 
    \begin{cases}
        \infty \,, & b \in (-1,0] \,, \\ 
        <\infty \,, & b > 0 \,. 
    \end{cases}
\end{equation}
\end{prop}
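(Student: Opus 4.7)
The plan is to invoke the explicit formulas for $p$ and $m$ obtained in \cref{prop:explosion_b_minus} and insert them into $v(x) = \int_c^x (p(x) - p(y))\, m(dy)$, analyzing the two endpoints $x\to 0$ and $x\to\infty$ separately. With $c = 1$ we have $p'(y) = e^b e^{-by} y^b$ and $m(dy) = e^{-b} e^{by} y^{-b-2}\, dy$. The case $b \in (-1, 0]$ at $+\infty$ is essentially free: \cref{prop:explosion_b_minus} gives $\lim_{x\to\infty} p(x) = \infty$ whenever $b \leq 0$, and then \cref{lm:feller_p_implies_v} immediately yields $\lim_{x\to\infty} v(x) = \infty$. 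So the real work is at $x \to 0$ (for all $b > -1$) and at $x \to \infty$ in the subcase $b > 0$, where $p$ is bounded at both ends and the divergence/convergence of $v$ must be read off directly.

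For $x \to 0$, the key observation is that the speed measure density $e^{-b} e^{by} y^{-b-2}$ is non-integrable at $0$ when $b > -1$, since $-b - 2 < -1$. I would rewrite $v(x) = \int_x^1 (p(y) - p(x))\, m(dy)$ (the sign flip coming from $x < 1$) and use the pointwise bound $p(y) - p(x) = \int_x^y p'(z)\, dz \geq c_1 (y^{b+1} - x^{b+1})$ on $(0,1)$, for some constant $c_1 > 0$ depending only on $b$, which follows from $e^{-bz} \geq e^{-|b|}$ on $[0,1]$. Similarly $m(dy) \geq c_2 y^{-b-2}\, dy$ on $(0,1)$. Substituting and integrating explicitly then produces a leading term of order $-\log x$ (with bounded corrections from the $x^{b+1}$ piece), which diverges to $+\infty$ as $x \to 0$, proving the first claim.

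For $x \to \infty$ with $b > 0$, I need to show the improper integral $v(\infty) = \int_1^\infty (p(\infty) - p(y))\, m(dy)$ converges. The factor $p(\infty) - p(y) = e^b \int_y^\infty e^{-bz} z^b\, dz$ can be written as $e^b b^{-b-1} \Gamma(b+1, by)$, where $\Gamma(\cdot, \cdot)$ is the upper incomplete gamma function. Using the standard asymptotic $\Gamma(s,x) \sim x^{s-1} e^{-x}$ as $x \to \infty$, one gets $p(\infty) - p(y) \sim b^{-1} e^b y^b e^{-by}$ for large $y$. Multiplying by $m(dy) = e^{-b} e^{by} y^{-b-2}\, dy$ exhibits exact cancellation of the exponentials, leaving an integrand of order $b^{-1} y^{-2}$, which is integrable at infinity; on the compact interval $[1, Y]$ the integrand is continuous and bounded, so the total integral is finite.

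The main obstacle is the last case: the individual factors $p(\infty) - p(y)$ and $m(dy)$ behave like $e^{-by}$ and $e^{by}$, so one must track the cancellation quantitatively rather than bound the factors separately. If one wishes to avoid invoking asymptotics of the incomplete gamma function, a clean alternative is the pointwise inequality
\begin{equation*}
    \int_y^\infty e^{-bz} z^b\, dz \leq \frac{y^{b+1} e^{-by}}{by - b - 1}, \quad y > \tfrac{b+1}{b},
\end{equation*}
obtained by integration by parts (or a monotonicity argument on $z \mapsto z^b e^{-bz/2}$), which yields $(p(\infty) - p(y))\, m(dy) \leq C y^{-2}\, dy$ for $y$ large and hence convergence of $v(\infty)$; I expect this elementary route to be the shortest path to a self-contained proof.
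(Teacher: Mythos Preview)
Your proposal is correct and takes a genuinely different route from the paper. The paper expands $e^{bz}$ in the inner integral as a power series, integrates term by term to obtain $v(x)$ as an infinite sum, and then analyzes the series termwise at both endpoints (with separate handling of the edge cases $k=0$ and $k=b+1$); for $x\to\infty$ with $b>0$ it recognizes each term as an incomplete gamma integral and compares the residual series to $\sum k^{-2}$. You instead work with $v(x)=\int (p(x)-p(y))\,m(dy)$ directly: at $x\to 0$ you bound $p'(z)$ and $m'(y)$ pointwise on $(0,1)$ to extract the $-\log x$ divergence from the explicit integral $\int_x^1(y^{b+1}-x^{b+1})y^{-b-2}\,dy$, and at $x\to\infty$ with $b>0$ you read off the cancellation between $p(\infty)-p(y)\sim b^{-1}e^{b}y^{b}e^{-by}$ and $m'(y)=e^{-b}e^{by}y^{-b-2}$ to get an $O(y^{-2})$ tail. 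Your argument is shorter and more transparent about \emph{why} the logarithmic divergence and the $y^{-2}$ tail appear, and it avoids the series bookkeeping; the paper's approach is more computational but does not require knowing (or rederiving) the incomplete-gamma asymptotic. Your alternative integration-by-parts bound $\int_y^\infty e^{-bz}z^{b}\,dz\le y^{b+1}e^{-by}/(by-b-1)$ is indeed valid for $y>(b+1)/b$ (differentiate the right-hand side and compare), so the self-contained version also goes through.
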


\begin{proof}

We will start by calculating the following integral using the exponential series expansion 
\begin{equation}
\begin{aligned}
    \int_1^y \frac{2 \, dz}{ p'(z) \sigma^2(z) } 
    &= 
    e^{-b} \int_1^y e^{bz} z^{-(b+2)} \, dz 
    \\ 
    &= 
    e^{-b} 
    \int_1^y \sum_{k \geq 0} \frac{(bz)^k}{k!}     z^{-(b+2)} \, dz 
    \\ 
    &= 
    e^{-b} 
    \sum_{k\geq 0, k \neq b+1} 
    \frac{b^k}{k!} \frac{y^{k-b-1} - 1}{k-b-1} 
    + \frac{b^{b+1}}{(b+1)!} \log (y) 
    \mathds{1}_{\{k = b+1\}} \,. 
\end{aligned}
\end{equation}

Now we can compute $v(x)$ 
\begin{equation}
\begin{aligned}
    v(x) 
    &= 
    \int_1^x e^{-by} y^b 
    \left( 
    \sum_{k\geq 0, k \neq b+1} 
    \frac{b^k}{k!} \frac{y^{k-b-1} - 1}{k-b-1} 
    + \frac{b^{b+1}}{(b+1)!} \log (y) 
    \mathds{1}_{\{k = b+1\}} 
    \right) \, dy 
    \\ 
    &= 
    \sum_{k\geq 0, k \neq b+1} 
    \frac{b^k}{k! (k-b-1)} 
    \int_1^x e^{-by} ( y^{k-1} - y^b ) dy 
    + 
    \frac{b^{b+1}}{(b+1)!} 
    \mathds{1}_{\{k = b+1\}} 
    \int_1^x e^{-by} y^b 
    \log y \, dy \,. 
\end{aligned}
\end{equation}

We first consider the case when $x\to 0$, in which case we have $e^{-|b|} \leq e^{-by} \leq e^{|b|}$ and therefore will not affect convergence or divergence, so we can safely ignore the factor $e^{-by}$ and write (for $k>0, x \to 0$) 
\begin{equation}
    \int_1^x e^{-by} ( y^{k-1} - y^b ) \, dy 
    \approx 
    \left[ \frac{y^k}{k} - \frac{y^{b+1}}{b+1} \right]_1^x 
    = \frac{x^k-1}{k} - \frac{x^{b+1} - 1 }{b+1} 
    \to \frac{-1}{k} + \frac{1}{b+1} 
    \,. 
\end{equation}

Since the exponential series $\sum_{k>0} \frac{b^k}{k!} = e^b - 1$ converges, and we have terms strictly smaller than the exponential series, we have convergence of these terms when $k>0$. 
We now return to handle a couple of edge case terms, firstly when $k=0$ 
\begin{equation}
    \frac{1}{-b-1} \int_1^x e^{-by} y^{-1} \, dy 
    \approx 
    \frac{- \log x }{|1+b|} 
    \to \infty \,, 
    \quad 
    \text{ as } x\to\infty \,, 
\end{equation}
which is a desired behaviour. 
Secondly we consider when $k = b+1$
\begin{equation}
    \int_1^x y^b \log y \, dy 
    = \frac{x^{b+1} \left[ (b+1) \log x - 1 \right] + 1 }{ (b+1)^2 } 
    \to (b+1)^{-2} \,, 
    \quad 
    \text{ as } x \to \infty\,, 
\end{equation}
from which we can conclude $\lim_{x\to 0} v(x) = \infty$. 

Next we consider the case when $x\to\infty$. 
Firstly, since we already have that $p(x) \to \infty$ when $b \leq 0$, therefore \cref{lm:feller_p_implies_v} implies $v(x) \to \infty$. 
Therefore we only need to consider when $b>0$. 

Since $b>0$ we will have that $e^{-bx}$ will dominate, and therefore we can safely ignore all the edge case terms and consider the series 
\begin{equation}
    v(x) \approx 
    \sum_{k > 0, k \neq b+1} 
    \frac{b^k}{k!(k-b-1)} 
    \int_1^x e^{-by} ( y^{k-1} - y^b ) \, dy \,. 
\end{equation}

Observe that as $x\to\infty$ we actually recover the gamma integral in the terms i.e. 
\begin{equation}
    \int_1^\infty e^{-by} ( y^{k-1} - y^b ) \, dy 
    = - b^{-k} \Gamma(k) 
    + b^{-b-1} \Gamma (b+1) \,, 
\end{equation}
where we observe the second term is independent of $k$, and therefore the series converges due to comparison with the exponential Taylor series. 
This implies we only need to focus on the first term, which is 
\begin{equation}
    v(x) 
    \approx 
    \sum_{k>0, k \neq b+1} 
    \frac{b^k}{k! (k-b-1)} (-b^{-k}) (k-1)! 
    = 
    \sum_{k>0, k \neq b+1} 
    \frac{-1}{k (k-b-1)} 
    < 
    \infty \,, 
\end{equation}
where the series converges since it's a sum of $k^{-2}$ type. 
This allows us to conclude that 
$\lim_{x\to\infty} v(x) < \infty$ as desired.

\end{proof}

We can now prove the desired result of \cref{prop:finite_time_explosion}, which we restate below. 

\begin{prop}
[Finite Time Explosion]
\label{prop:finite_time_explosion_app}
Let $X_t \in \bR_+$ be a solution to the following SDE 
\begin{equation}
    dX_t = b X_t (X_t - 1) \, dt + \sqrt{2} X_t \, dB_t \,, \quad 
    X_0 = x_0 > 0 \,, b \in \mathbb{R} \,. 
\end{equation}

Let $\ta^\ast = \sup_{M > 0} \inf\{t :X_t \geq M \text{ or } X_t \leq M^{-1} \}$ be the explosion time, and we say $X_t$ has a finite time explosion if $\ta^\ast < \infty$. 
For this equation, $\mathbb{P}[\ta^\ast = \infty] = 1$ if and only if $b \leq 0$. 
\end{prop}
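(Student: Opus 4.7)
The plan is to apply Feller's test for explosions (Theorem~\ref{prop:feller_test_ver1}) directly to the SDE in question, after verifying the non-degeneracy and local integrability conditions \eqref{eq:nd_li}. The drift is $b(x) = bx(x-1)$ and the diffusion coefficient is $\sigma(x) = \sqrt{2}\,x$, so on the interval $I = (0,\infty)$ we have $\sigma^2(x) = 2x^2 > 0$ and $|b(y)|/\sigma^2(y) = |b(y-1)|/(2y)$ is locally integrable, so \eqref{eq:nd_li} holds and the test applies. The strategy then splits into three regimes for the parameter $b$, which I treat separately below.

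First I would compute the scale function. Since $2b(y)/\sigma^2(y) = b(y-1)/y = b - b/y$, a direct integration gives
\begin{equation*}
    p'(x) = \exp\!\left(-\int_1^x \frac{2b(y)}{\sigma^2(y)}\,dy\right) = \exp\bigl(-b(x-1) + b\log x\bigr) = e^b\,e^{-bx}x^b,
\end{equation*}
so that $p(x) = e^b\int_1^x e^{-by}y^b\,dy$ and $m(dx) = \frac{dx}{p'(x)\,x^2} = e^{-b}\frac{e^{bx}}{x^{b+2}}\,dx$. This is the content of Propositions~\ref{prop:explosion_b_minus} and \ref{prop:explosion_b_plus} above, and the rest of the argument is bookkeeping on the two integrals that make up $v(x)$.

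Next, I would handle the regime $b \le 0$. The $b = 0$ case reduces to geometric Brownian motion as in Lemma~\ref{lm:explosion_geometric_brownian_motion} and is immediate. For $-1 \le b < 0$ (resp.\ $b \le -1$), Proposition~\ref{prop:explosion_b_minus} shows that $\lim_{x\to\infty} p(x) = \infty$ and the behavior near $0$ is analyzed by isolating the factor $y^{-|b|}$, which makes $p(x) \to -\infty$ precisely when $|b| \ge 1$. For $-1 < b < 0$, Proposition~\ref{prop:explosion_b_plus} instead verifies $\lim_{x\to 0}v(x) = \infty$ directly by expanding the inner integral in $v(x)$ term-by-term and tracking the $\log$ contribution at $k=0$. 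Invoking Lemma~\ref{lm:feller_p_implies_v} to convert the statements about $p$ into statements about $v$ at whichever boundary is missing, we conclude that for every $b \le 0$, $v(x) \to \infty$ at both boundaries, and hence by Theorem~\ref{prop:feller_test_ver1}, $\mathbb{P}[\tau^* = \infty] = 1$.

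Finally, for the case $b > 0$ I would show that $\lim_{x\to\infty}v(x) < \infty$, which by Feller's test rules out $\mathbb{P}[\tau^* = \infty] = 1$. This is the heart of Proposition~\ref{prop:explosion_b_plus}: expanding $e^{bz}$ as a power series inside $\int_1^y 2\,dz/(p'(z)\sigma^2(z))$ and then integrating against $e^{-by}y^b$ produces, as $x \to \infty$, a convergent $\Gamma$-function contribution plus a series whose $k$-th term is $-1/(k(k-b-1))$. The factor $e^{-by}$ dominates the $\log y$ and constant edge-case terms, leaving a series bounded by a $\sum 1/k^2$ tail, hence summable. This gives $v(\infty) < \infty$ and hence $\mathbb{P}[\tau^* < \infty] > 0$, completing the ``only if'' direction.

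The main obstacle is the $b > 0$ computation: the edge cases $k = 0$ and $k = b+1$ in the series expansion require separate treatment (logarithmic terms), and one must carefully justify that the exponential decay $e^{-by}$ dominates so that these nuisance terms contribute only finitely, while simultaneously showing the main series $\sum b^k/(k!(k-b-1)) \cdot (-b^{-k})(k-1)!$ telescopes to something summable. Care is also needed because $b+1$ need not be an integer, so the ``$k = b+1$'' term does not literally appear unless $b$ is a non-negative integer; the indicator notation in Proposition~\ref{prop:explosion_b_plus} should be read accordingly, and the argument is in fact cleaner when $b \notin \mathbb{Z}_{\ge 0}$.
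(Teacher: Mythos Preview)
Your proposal is correct and follows essentially the same approach as the paper: verify the hypotheses of Feller's test, compute $p'(x) = e^b e^{-bx} x^b$ and $m(dx)$ exactly as in Proposition~\ref{prop:explosion_b_minus}, then split into the regimes $b \le -1$, $-1 < b \le 0$, and $b > 0$ via Lemma~\ref{lm:explosion_geometric_brownian_motion}, Proposition~\ref{prop:explosion_b_minus}, and Proposition~\ref{prop:explosion_b_plus}, invoking Lemma~\ref{lm:feller_p_implies_v} where appropriate. Your remark that the ``$k = b+1$'' edge case only literally occurs when $b+1 \in \mathbb{Z}_{\ge 0}$ is a valid clarification of the indicator notation in the paper's series expansion, but does not alter the argument.
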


\begin{proof}

Putting the results of \cref{prop:explosion_b_minus} and \cref{prop:explosion_b_plus} together, we have the following table 

\begin{center}
\begin{tabular}{ |c|c|c|c|c| } 
 \hline
  & $\lim_{x\to 0} v(x)$ 
  & $\lim_{x\to \infty} v(x)$ 
  & $\lim_{x\to 0} p(x)$
  & $\lim_{x\to \infty} p(x)$ 
  \\ 
 \hline
 $b\leq -1$ & $\infty$ & $\infty$ & $-\infty$ & $\infty$ \\ 
 $-1 < b \leq 0$ & $\infty$ & $\infty$ & finite & $\infty$ \\ 
 $b > 0$ & $\infty$ & finite & finite & finite \\ 
 \hline
\end{tabular}
\end{center}

Therefore, invoking Feller's test for explosions from \cref{prop:feller_test_ver1}, we have that $\mathbb{P}[\tau^\ast = \infty]$ if and only if $b \leq 0$.

\end{proof}

\section{Lower Bound for the \texorpdfstring{Recursion $\rho_{\ell+1} = cK_1(\rho_\ell)$}{Correlation Recursion}}
\label{sec:app_independent_interest}

In this section, we consider a Taylor expansion of $cK_1(\rho)$ around $\rho\to 1$ from the left hand side to get 
\begin{equation}
    \rho_{\ell+1} 
    = c K_1(\rho_\ell) 
    = \rho_\ell + \frac{2\sqrt{2}}{3\pi} (1-\rho_\ell)^{3/2} 
    + O( (1-\rho_\ell)^{5/2} ) \,, 
\end{equation}
which we can rewrite using $r_\ell = 1 - \rho_\ell$ as 
\begin{equation}
    r_{\ell+1}
    = r_\ell - \frac{2\sqrt{2}}{3\pi} r_\ell^{3/2} 
    + O( r_\ell^{3/2} ) \,. 
\end{equation}

We will compute an upper bound on $r_\ell$ inspired by the following result. 

\begin{lem}[Lemma A.6, \cite{campbell2019automated}]
\label{lm:logistic_lb}
The logistic recursion 
\begin{equation}
	x_{n+1} \leq \alpha x_n (1-x_n) \,, 
\end{equation}
for $x_0,\alpha \in [0,1]$ satisfies 
\begin{equation}
	x_n \leq \frac{x_0}{ \alpha^{-n} + x_0 n } \,. 
\end{equation}
\end{lem}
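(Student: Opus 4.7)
The plan is to pass to reciprocals to linearize the recursion. Assuming $x_n > 0$ (the degenerate case $x_n = 0$ forces $x_k = 0$ for all $k \geq n$, for which the bound is trivial), set $u_n \defequal 1/x_n$. I would first show that $u_n$ satisfies an affine lower bound of the form $u_{n+1} \geq (u_n + 1)/\alpha$. This uses the elementary inequality $\frac{1}{1-x_n} \geq 1 + x_n$, which is valid for $x_n \in [0,1)$ by the geometric series, together with positivity of all quantities to flip the inequality upon inversion:
\begin{equation*}
    u_{n+1} \;\geq\; \frac{1}{\alpha x_n (1-x_n)}
    \;=\; \frac{1}{\alpha x_n}\cdot\frac{1}{1-x_n}
    \;\geq\; \frac{1+x_n}{\alpha x_n}
    \;=\; \frac{u_n + 1}{\alpha}.
\end{equation*}

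Next, I would use induction on $n$ to show that $u_n \geq \alpha^{-n} u_0 + n$. The base case $n=0$ holds with equality. For the inductive step, applying the affine bound and using that $1/\alpha \geq 1$ (because $\alpha \in (0,1]$) gives
\begin{equation*}
    u_{n+1} \;\geq\; \frac{u_n + 1}{\alpha}
    \;\geq\; \frac{\alpha^{-n} u_0 + n + 1}{\alpha}
    \;=\; \alpha^{-(n+1)} u_0 + \frac{n+1}{\alpha}
    \;\geq\; \alpha^{-(n+1)} u_0 + (n+1).
\end{equation*}
Inverting yields $x_n \leq 1/(\alpha^{-n} u_0 + n) = x_0/(\alpha^{-n} + x_0 n)$, which is the claim.

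I do not expect a serious obstacle here; the proof is essentially bookkeeping once one makes the substitution $u_n = 1/x_n$. The only subtle points are (i) verifying that the induction is valid uniformly in $\alpha \in (0,1]$, which comes down to the observation that $1/\alpha \geq 1$, and (ii) handling the boundary cases $x_n = 0$ or $x_n = 1$, which can be disposed of separately. The edge case $\alpha = 0$ is vacuous since then $x_n = 0$ for all $n \geq 1$ and one can interpret $\alpha^{-n} = \infty$ in the bound.
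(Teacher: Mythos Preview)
Your argument is correct: the substitution $u_n = 1/x_n$ linearizes the recursion, the inequality $\tfrac{1}{1-x_n}\ge 1+x_n$ gives the affine lower bound $u_{n+1}\ge (u_n+1)/\alpha$, and the induction goes through because $\alpha\le 1$ ensures $\tfrac{n+1}{\alpha}\ge n+1$. The edge cases you flag are handled adequately.

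For comparison: the paper does not actually prove this lemma. It is quoted verbatim from \cite{campbell2019automated} (their Lemma~A.6) and used only as motivation. The paper's own contribution in that appendix is the next lemma, which treats the modified recursion $x_{n+1}\le x_n(1-x_n^{1/2})$; there the authors do \emph{not} pass to reciprocals but instead run a direct induction on the claimed bound $x_n \le x_0/(1+\tfrac{1}{3}n x_0^{1/2})^2$, manipulating the rational expressions by hand. Your reciprocal trick is cleaner for the quadratic logistic map because $u_n x_n = 1$ collapses the nonlinearity exactly; for the $x_n^{1/2}$ variant that collapse does not occur, which is presumably why the paper resorts to the more laborious direct verification there.
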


We will extend the above Lemma to a slightly modified update as well. 

\begin{lem}
Suppose the recursive map satisfies 
\begin{equation}
	x_{n+1} \leq x_n (1 - x_n^{1/2}) \,, 
\end{equation}
for $x_0 \in [0,1]$, then we also have that 
\begin{equation}
	x_n \leq \frac{ x_0 }{ \left(1 + \frac{1}{3} n x_0^{1/2} \right)^2 } \,. 
\end{equation}
\end{lem}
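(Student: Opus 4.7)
My plan is to reduce this to Lemma A.6 by taking square roots. Set $y_n \defequal x_n^{1/2}$. First I would check that the recursion $x_{n+1} \leq x_n(1-x_n^{1/2})$ forces $x_n \in [0,1]$ for all $n$ (otherwise the right-hand side would be negative, contradicting $x_{n+1} \geq 0$ which we implicitly need for the square-root substitution to make sense). Squaring the desired bound, it is equivalent to show
\begin{equation*}
  y_n \leq \frac{y_0}{1 + \tfrac{1}{3} n y_0}.
\end{equation*}

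Next I would rewrite the recursion in terms of $y_n$. Taking the positive square root of $x_{n+1} \leq x_n(1-y_n)$ and using the elementary inequality $\sqrt{1-u} \leq 1 - u/2$ for $u \in [0,1]$ (valid since $(1 - u/2)^2 = 1 - u + u^2/4 \geq 1-u$), I obtain
\begin{equation*}
  y_{n+1} \leq y_n \sqrt{1 - y_n} \leq y_n \left( 1 - \tfrac{1}{2} y_n \right).
\end{equation*}
Now set $z_n \defequal y_n/2$, so that the above becomes $z_{n+1} \leq z_n(1 - z_n)$, i.e.\ the logistic recursion covered by Lemma A.6 with $\alpha = 1$.

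Finally, applying Lemma A.6 yields $z_n \leq z_0/(1 + n z_0)$, equivalently
\begin{equation*}
  y_n \leq \frac{y_0}{1 + \tfrac{1}{2} n y_0},
\end{equation*}
which is actually a sharper bound than the one claimed; squaring gives $x_n \leq x_0 / (1 + \tfrac{1}{2} n x_0^{1/2})^2$. Since $\tfrac{1}{2} \geq \tfrac{1}{3}$, the denominator on the right only gets smaller when $\tfrac{1}{2}$ is replaced by $\tfrac{1}{3}$, so the claimed inequality follows immediately. There is no serious obstacle here; the only point to watch is checking $x_n \in [0,1]$ before taking square roots and confirming that the inequality $\sqrt{1-u} \leq 1 - u/2$ is valid on the full interval of interest, both of which are routine.
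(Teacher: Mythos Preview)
Your proposal is correct and takes a genuinely different, sharper route than the paper. The paper proves the lemma by direct induction on $n$: it handles the base case via $x_1 \leq x_0(1-x_0^{1/2}) \leq x_0/(1+x_0^{1/2})$ together with the check $1+x_0^{1/2} \geq (1+\tfrac13 x_0^{1/2})^2$, and for the inductive step plugs in the hypothesis and compares quadratic polynomials in $x_0^{1/2}$. You instead pass to $y_n=\sqrt{x_n}$, use the elementary bound $\sqrt{1-u}\leq 1-u/2$, rescale by $z_n=y_n/2$ to land exactly in the logistic recursion of \cref{lm:logistic_lb} with $\alpha=1$, and read off the answer. This is both shorter and strictly stronger: it yields the constant $\tfrac12$ in place of $\tfrac13$. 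In fact, immediately after this lemma the paper records the $\tfrac12$-bound only as a conjecture (with a heuristic sketch), so your argument actually establishes the upper-bound direction of that conjecture outright. The one point worth making explicit, which you already flag, is that $x_n\geq 0$ must hold for all $n$ for the square roots to be defined; this is implicit in the statement and automatic in the intended application.
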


\begin{proof}

We will start the induction proof at $n=1$ 
\begin{equation}
	x_1 
	\leq x_0 (1 - x_0^{1/2}) 
	\leq \frac{x_0}{1 + x_0^{1/2}} \,. 
\end{equation}

When $x_0 \leq 9$ we have that 
\begin{equation}
	1 + x_0^{1/2} \geq 1 + \frac{1}{9} x_0 + \frac{2}{3} x_0^{1/2} 
	= \left( 1 + \frac{1}{3} x_0^{1/2} \right)^2 \,, 
\end{equation}
and hence 
\begin{equation}
	x_1 \leq \frac{x_0}{ \left( 1 + \frac{1}{3} x_0^{1/2} \right) } \,, 
\end{equation}
which proves the case for $n=1$. 

Then we assume the inequality holds for $x_n$, we will similarly write 
\begin{equation}
	x_{n+1} \leq x_n( 1 - x_n^{1/2} ) \leq \frac{x_n}{ 1 + x_n^{1/2} } \,, 
\end{equation}
and plugging in the inequality for $x_n$ we get 
\begin{equation}
	x_{n+1} 
	\leq \frac{ \frac{x_0}{\left(1 + \frac{1}{3} n x_0^{1/2} \right)^2} 
		}{ 1 + \frac{x_0^{1/2}}{1 + \frac{1}{3} n x_0^{1/2} } } 
	= \frac{x_0}{ \left( 1 + \left( \frac{n}{3}+1 \right) x_0^{1/2} \right) 
		\left( 1 + \left( \frac{n}{3} \right) x_0^{1/2} \right) 
		}
	\,, 
\end{equation}

To complete the proof it's sufficient to show 
\begin{equation}
	\left( 1 + \left( \frac{n}{3}+1 \right) x_0^{1/2} \right) 
		\left( 1 + \left( \frac{n}{3} \right) x_0^{1/2} \right) 
	\geq 
	\left( 1 + \left( \frac{n+1}{3} \right) x_0^{1/2} \right)^2 \,, 
\end{equation}
which is equivalent to 
\begin{equation}
	\left( \frac{n}{3} + 1 \right) \frac{n}{3} x_0 
	+ \left( \frac{2n}{3} + 1 \right) x_0^{1/2} 
	\geq 
	\frac{(n+1)^2}{9} x_0 + \frac{2(n+1)}{3} x_0^{1/2} \,. 
\end{equation}

Since $\frac{2n}{3} + 1 \geq \frac{2(n+1)}{3}$, 
we only need to compare the first coefficient, which is 
\begin{equation}
	\frac{ n^2 + 3n }{9} \geq \frac{n^2 + 2n + 1}{9} \,, 
\end{equation}
and this is equivalent to $n\geq 1$, 
and therefore satisfied by the induction. 
This completes the proof. 

\end{proof}

At the same time, we also conjecture the following bound. 

\begin{conjecture}
\label{conj:logistic_lb}
Suppose the recursive map satisfies 
\begin{equation}
	x_{n+1} = x_n (1 - x_n^{1/2}) \,, 
\end{equation}
for $x_0 \in [0,1]$, then 
\begin{equation}
	x_n \approx \frac{ x_0 }{ \left(1 + \frac{1}{2} n x_0^{1/2} \right)^2 } \,. 
\end{equation}
\end{conjecture}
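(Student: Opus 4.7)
The plan is to introduce a nonlinear change of variables $u_n \defequal x_n^{-1/2}$ that converts the multiplicative-looking recursion into a nearly-additive one, and then read off the asymptotics from a telescoping sum. First I would verify that $u_n$ is well-defined and tends to infinity. The map $x \mapsto x(1 - x^{1/2})$ preserves $(0,1)$ and strictly decreases $x$, so $x_n$ converges to the unique fixed point $0$ in $[0,1)$; hence $u_n$ is defined for all $n$ and $u_n \nearrow \infty$. The recursion becomes
\begin{equation*}
u_{n+1} \;=\; \frac{u_n}{\sqrt{1 - u_n^{-1}}} \,,
\end{equation*}
and Taylor expanding $(1 - u_n^{-1})^{-1/2}$ for $u_n > 1$ yields the almost-additive update
\begin{equation*}
u_{n+1} - u_n \;=\; \tfrac{1}{2} + \tfrac{3}{8}\, u_n^{-1} + O(u_n^{-2}) \,.
\end{equation*}

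Next I would telescope. Since $u_n \to \infty$, the Ces\`aro theorem applied to the increments $u_{n+1} - u_n \to \tfrac{1}{2}$ immediately gives $u_n/n \to \tfrac{1}{2}$, which translates to $n^2 x_n \to 4$ --- exactly the leading-order content of the conjecture. To match the form with the exact $1 + n x_0^{1/2}/2$ denominator I would bootstrap: choose $N$ large enough that $u_{k+1} - u_k \geq 1/4$ for all $k \geq N$, which gives the crude lower bound $u_k \geq u_0 + (k-N)/4$ for $k \geq N$, and then sum the increments to obtain
\begin{equation*}
u_n \;=\; u_0 + \tfrac{n}{2} + \sum_{k=0}^{n-1} O(u_k^{-1}) \;=\; u_0 + \tfrac{n}{2} + O(\log n) \,.
\end{equation*}
Inverting the substitution and using $u_0 = x_0^{-1/2}$ produces
\begin{equation*}
x_n \;=\; \frac{1}{u_n^2} \;=\; \frac{x_0}{(1 + n x_0^{1/2}/2)^2}\left( 1 + O\!\left( \tfrac{\log n}{n} \right) \right) \,,
\end{equation*}
which establishes the conjecture interpreted as an asymptotic equivalence ($\sim$) as $n\to\infty$.

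The main obstacle I anticipate is reconciling the clean form written in the conjecture with the logarithmic correction forced by the $\tfrac{3}{8} u_n^{-1}$ term in the expansion. A more careful summation shows $u_n = u_0 + \tfrac{n}{2} + \tfrac{3}{4}\log n + O(1)$, so the sharpest statement attainable by this method is the multiplicative $(1 + O(\log n / n))$ bound above, not an exact identity. A secondary but more routine difficulty is handling the initial segment where $u_n$ is not yet large: one must argue that $u_n$ crosses any fixed threshold in finitely many steps (which follows from $x_n \to 0$), so the pre-Taylor regime contributes only an $O(1)$ shift that is absorbed into the error term. Finally, in comparison to the $1/3$ upper bound already established by the preceding Lemma, the essential new step is a \emph{lower} bound on $u_n$ with the sharp coefficient $1/2$, which is exactly what the Taylor expansion above provides once combined with the monotonicity $u_{n+1} > u_n + 1/2$ valid for large $u_n$.
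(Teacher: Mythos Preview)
Your argument is correct and takes a genuinely different route from the paper's. The paper attempts a direct induction in the style of the preceding Lemma, trying to propagate the bound $x_n \leq x_0/(1+bn x_0^{1/2})^2$ and arguing that $b$ can be pushed toward $1/2$; but the induction step only closes for $n \geq b/(1-2b)$, which diverges as $b \to 1/2$, so the paper's argument remains a sketch and the statement a conjecture. Your substitution $u_n = x_n^{-1/2}$ instead linearizes the recursion to $u_{n+1}-u_n = \tfrac12 + \tfrac{3}{8}u_n^{-1} + O(u_n^{-2})$, after which telescoping against a crude a~priori lower bound on $u_k$ yields $u_n = u_0 + n/2 + O(\log n)$ and hence the asymptotic $x_n = x_0\,(1+ n x_0^{1/2}/2)^{-2}\,(1+O(\log n/n))$. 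This actually \emph{proves} the conjecture in the sense of asymptotic equivalence, identifies the secondary $\tfrac34\log n$ term, and explains why no exact identity is possible. The trade-off is that the paper's inductive scheme, were it to close, would give non-asymptotic inequalities valid for all $n$, whereas your method is inherently asymptotic with constants depending on the initial segment; since the paper's induction does not close, your approach is the one that delivers a result.
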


\begin{proof}[Sketch of Conjecture]

Suppose we want to establish the approximation of 
\begin{equation}
	x_n \leq \frac{ x_0 }{ \left(1 + b n x_0^{1/2} \right)^2 } \,. 
\end{equation}

Then for the initial induction $n=1$ step, we only need 
\begin{equation}
	1 + x_0^{1/2} \geq (1 + b x_0^{1/2})^2 \,, 
\end{equation}
which is equivalent to 
\begin{equation}
	x_0 \leq \frac{(1-2b)^2}{b^4} \,. 
\end{equation}

Using WolframAlpha (probably through the quartic formula), we find the desired solution for $b \in (0,1/2)$ is 
\begin{equation}
	b = \frac{ \sqrt{ 1 + \sqrt{x_0} } - 1 }{ \sqrt{x_0} } \,. 
\end{equation}

This function $b(x_0)$ is a strictly decreasing function on $[0,1]$, 
and it satisfies $b(0) = \frac{1}{2}, b(1) = \sqrt{2} - 1$. 
This implies that whenever $x_0$ is small, we can choose $b$ closer to $\frac{1}{2}$ in the $n=1$ step of the induction. 

Similarly, for the induction step, it's sufficient to show 
\begin{equation}
	\left( 1 + (nb+1) x_0^{1/2} \right)
	\left( 1 + nb x_0^{1/2} \right) 
	\geq 
	\left( 1 + (n+1) b x_0^{1/2} \right)^2 \,, 
\end{equation}
which is equivalent to 
\begin{equation}
	nb x_0^{1/2} + 1 \geq (2n+1) b^2 x_0^{1/2} + 2b \,. 
\end{equation}

Again, since we are always choosing $b \leq 1/2$, 
therefore we have $1 \geq 2b$, 
and we will only need to focus on the first coefficient. 
To this end we rewrite the first term as 
\begin{equation}
	b ( n (1-2b) - b ) x_0^{1/2} 
	= b(1-2b) \left( n - \frac{b}{1 - 2b} \right) x_0^{1/2} \,. 
\end{equation}

This implies we require $n \geq \frac{b}{1 - 2b}$, 
which increases as we choose $b$ closer to $1/2$. 
However, if the induction starts the step $\lceil \frac{b}{1 - 2b} \rceil$, 
then this is not a problem, which leads to our conjecture. 

\end{proof}

Using the above results, we can have a similar approximation for 
$r_\ell$ given the infinite-width update 
\begin{equation}
	r_{\ell+1} = r_\ell 
	- \frac{ 2\sqrt{2} }{ 3\pi } r_\ell^{3/2} \,, 
\end{equation}
which we can rewrite using 
$\hat r_\ell := \left( \frac{ 2\sqrt{2} }{ 3\pi } \right)^2 r_\ell$ to get 
\begin{equation}
	\hat r_{\ell+1} = \hat r_\ell ( 1 - \hat r_\ell^{1/2} ) \,. 
\end{equation}

This allows us to consider the upper bound 
\begin{equation}
	\hat r_\ell 
	\leq \frac{ \hat r_0 }{ 
		\left( 1 + \frac{1}{3} \ell \, \hat r_0^{1/2} \right)^2 } \,, 
\end{equation}
or equivalently 
\begin{equation}
\label{eq:logistic_bd_proved}
	r_\ell 
	\leq \frac{ r_0 }{ \left( 
		1 + \frac{2 \sqrt{2}}{ 9 \pi } \ell \, r_0^{1/2}
	\right)^2 } \,. 
\end{equation}

Similarly, the conjecture leads to the following approximation 
\begin{equation}
\label{eq:logistic_bd_conj}
	r_\ell 
	\approx \frac{ r_0 }{ \left( 
		1 + \frac{\sqrt{2}}{ 3 \pi } \ell \, r_0^{1/2}
	\right)^2 } \,. 
\end{equation}

\begin{figure}[ht!]
\centering
\includegraphics[width=0.9\textwidth]{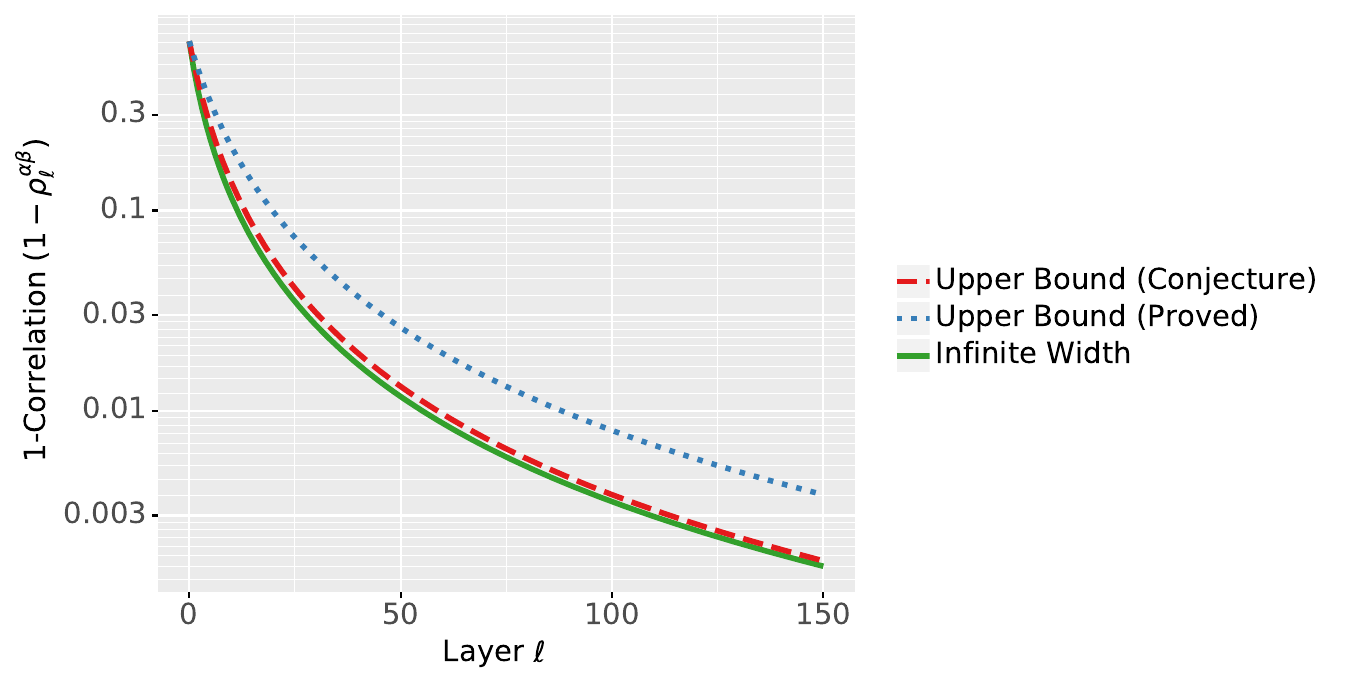}
\caption{
Plot of the convergence of correlation $\rho^{\alpha\beta}_\ell$ to $1$ for a ReLU network, and the lower bounds \cref{eq:logistic_bd_proved} and \cref{eq:logistic_bd_conj}. 
Computed with $d=n=150, \rho^{\alpha\beta}_0 = 0.3$, using the usual ReLU activation i.e. $\varphi_s(x) = \max(x,0)$. 
}
\label{fig:logistic_lb}
\end{figure}

\vfill
\pagebreak

\section{Additional Simulations and Discussions}
\label{sec:app_additional_simulations}

In this section, we have additional simulations plotting the densities of $\rho^{\alpha\beta}_d$ and $V^{\alpha\beta}_d$ for shaped ReLU-like, sigmoid, and softplus networks. 
In particular, the density of $V^{\alpha\beta}_d$ for ReLU-like networks can be found in \cref{fig:relu_cov_density}, the densities for sigmoid in \cref{fig:sigm_cov_rho_density}, and the densities for softplus in \cref{fig:softplus_cov_rho_density}.

\begin{figure}[ht!]
\centering
\includegraphics[width=0.45\textwidth]{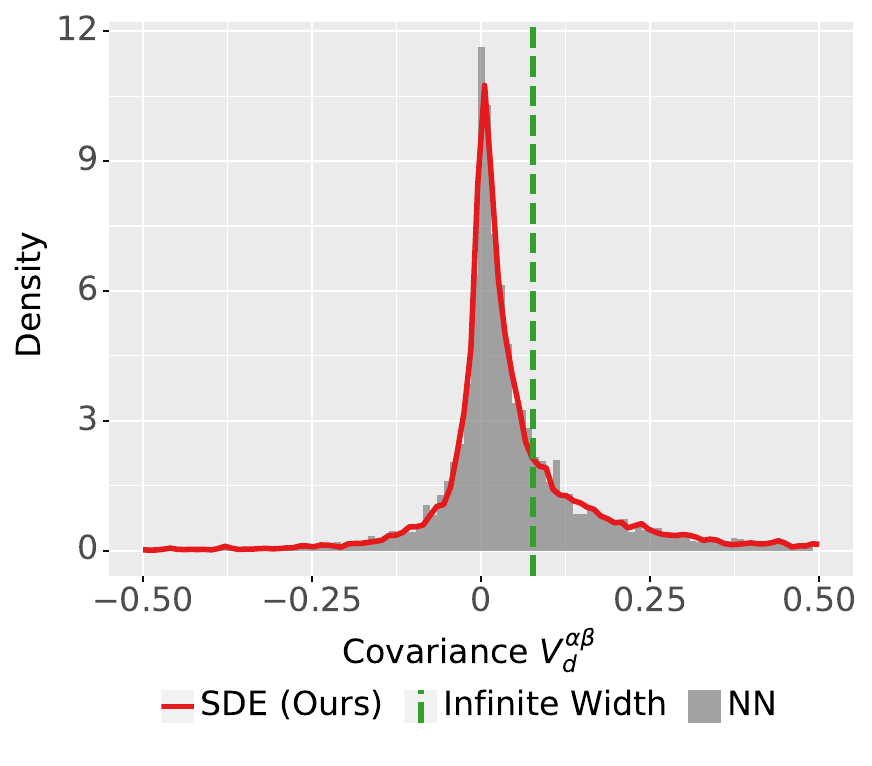}
\caption{
Empirical distribution of the covariance $V^{\alpha\beta}_d$ for a ReLU-like network, SDE sample density computed via kernel density estimation. 
Infinite width prediction simulated from the ODE $\partial_t \rho^{\alpha\beta}_t = \nu(\rho^{\alpha\beta}_t)$, and we note $V^{\alpha\alpha}_t = V^{\alpha\alpha}_0$ in the infinite width limit. 
Simulated with $n = d = 150, c_+ = 0, c_- = -1, \rho^{\alpha\beta}_0 = 0.3$, SDE and ODE step size $10^{-2}$, and $2^{13}$ samples. 
}
\label{fig:relu_cov_density}
\end{figure}

\begin{figure}[ht!]
     \centering
     \begin{subfigure}[b]{0.45\textwidth}
         \centering
          \includegraphics[width=\textwidth]{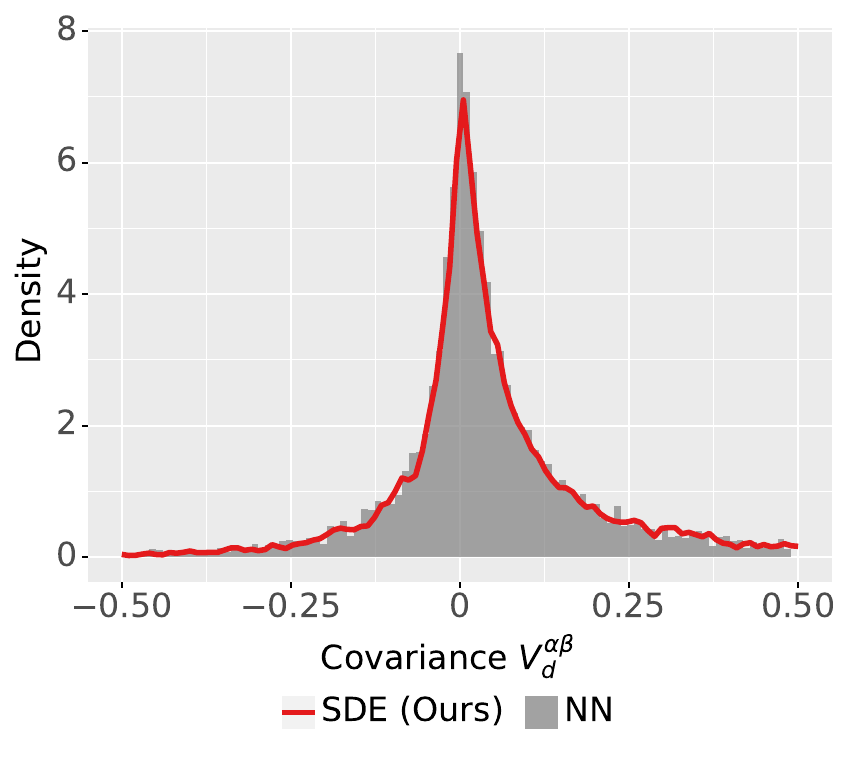}
     \end{subfigure}
     \hfill
     \begin{subfigure}[b]{0.45\textwidth}
         \centering
         \includegraphics[width=\textwidth]{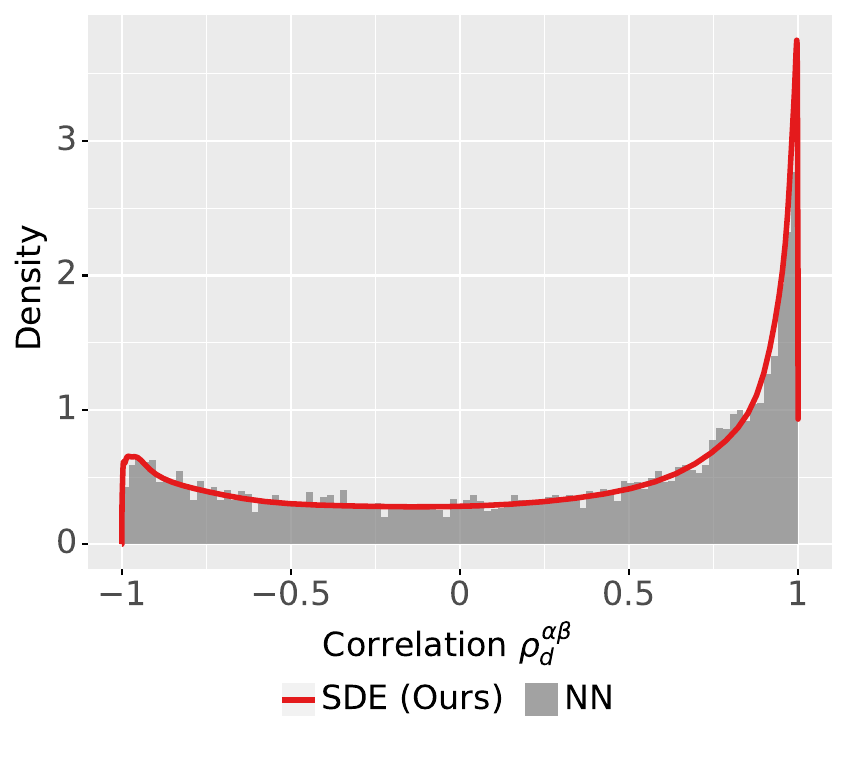}
     \end{subfigure}
\caption{Empirical distribution of the covariance $V^{\alpha\beta}_d$ and correlation $\rho^{\alpha\beta}_d$ for a shaped sigmoid network, SDE sample density computed via kernel density estimation.
Simulated with $n = d = 150, a = 1, \rho^{\alpha\beta}_0 = 0.3$, SDE step size $10^{-2}$, and $2^{13}$ samples. }
\label{fig:sigm_cov_rho_density}
\end{figure}

\begin{figure}[ht!]
     \centering
     \begin{subfigure}[b]{0.45\textwidth}
         \centering
          \includegraphics[width=\textwidth]{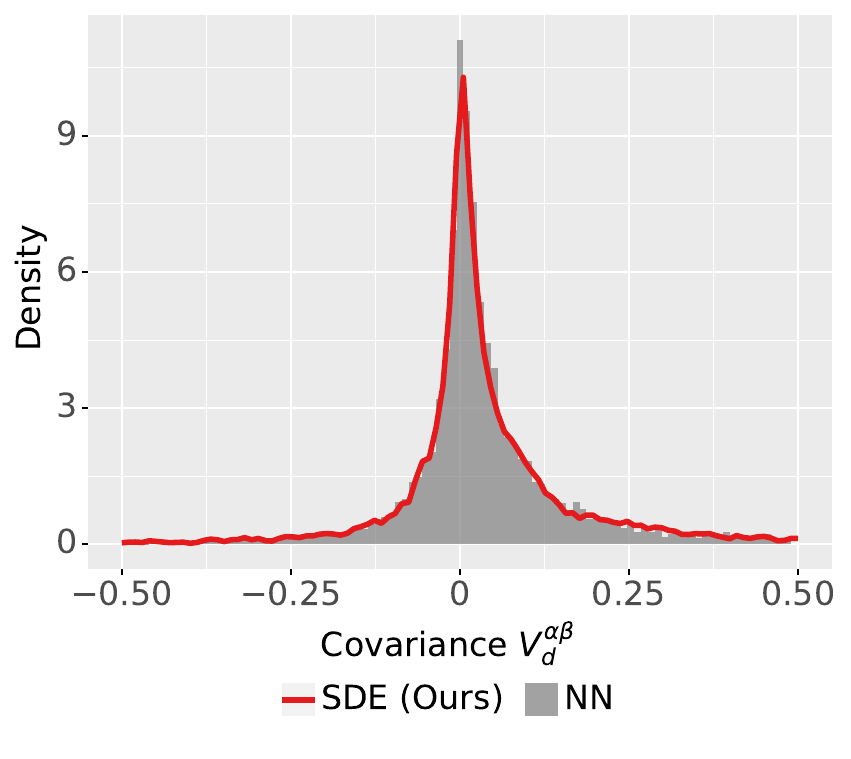}
     \end{subfigure}
     \hfill
     \begin{subfigure}[b]{0.45\textwidth}
         \centering
         \includegraphics[width=\textwidth]{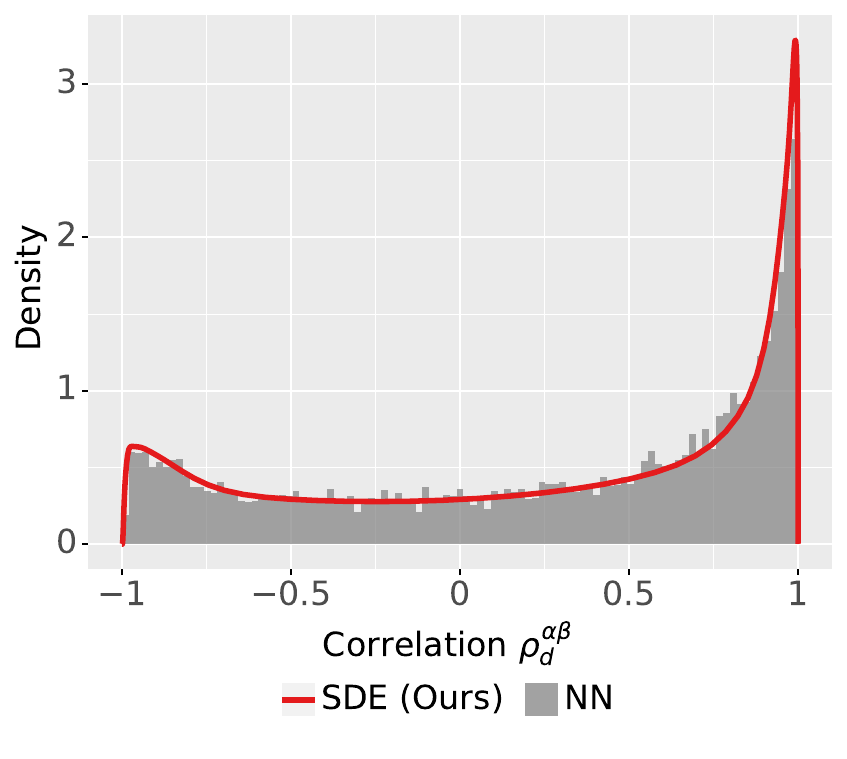}
     \end{subfigure}
\caption{Empirical distribution of the covariance $V^{\alpha\beta}_d$ and correlation $\rho^{\alpha\beta}_d$ for a shaped softplus network (centered at $x_0 = \log 2)$, SDE sample density computed via kernel density estimation.
Simulated with $n = d = 150, a = 1, \rho^{\alpha\beta}_0 = 0.3$, SDE step size $10^{-2}$, and $2^{13}$ samples. }
\label{fig:softplus_cov_rho_density}
\end{figure}

\subsection{Convergence in Kolmogorov--Smirnov Distance}

From \cref{fig:ks_vs_n}, we can show that our results (\cref{thm:relu_corr}) converges at a rate of $n^{-1/2}$ in terms of the KS-distance. 

\begin{figure}[ht!]
\centering
\includegraphics[width=0.45\textwidth]{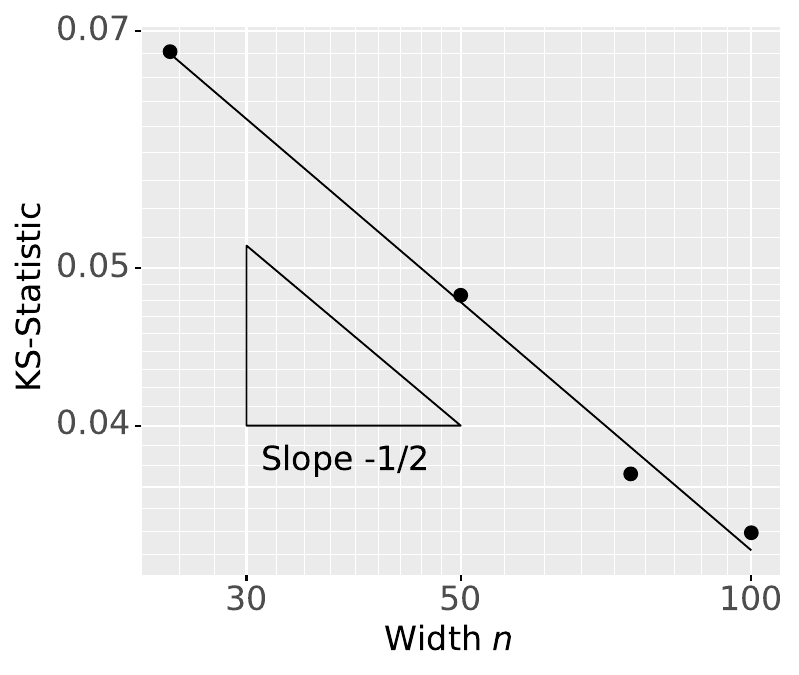}
\caption{
The Kolmogorov--Smirnov statistic (sup norm of the difference between two empirical CDFs) for the empirical samples of the correlation SDE \cref{eq:relu_corr_sde} and from a neural network at initialization. 
Simulated with $c_+ = 0, c_- = -1, \rho^{\alpha\beta}_0 = 0.3, \frac{d}{n} = T = 1$, SDE step size $10^{-2}$, and $2^{13}$ samples. 
}
\label{fig:ks_vs_n}
\end{figure}

\subsection{Tuning Shape and Depth-to-Width Ratio}

\begin{figure}[ht!]
\centering
\includegraphics[width=0.95\textwidth]{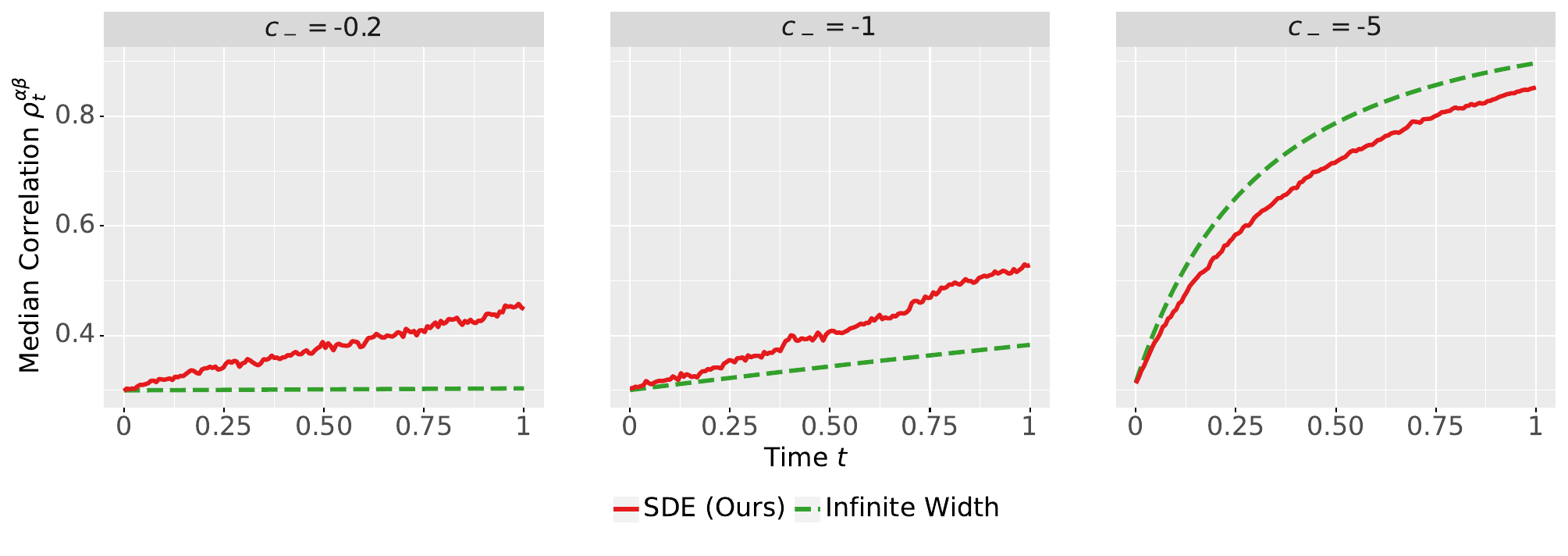}
\caption{
ReLU Correlation SDE \eqref{eq:relu_corr_sde} and ODE simulated with  
$c_+=0, \rho^{\alpha\beta}_0 = 0.3$ 
varying $c_-$ values, $2^{12}$ samples, 
and step size $10^{-2}$. 
infinite-width is from ODE $\partial_t \rho^{\alpha\beta}_t = \nu(\rho^{\alpha\beta}_t)$ with $\nu$. 
}
\label{fig:tuning}
\end{figure}

Since the existing shaping methods \cite{martens2021rapid,zhang2022deep} estimates the output correlation based on the infinite-width limit, we can easily improve the shape tuning based on the covariance SDEs. 
In particular, we consider the example of ReLU-like activations with correlation described by the SDE \cref{eq:relu_corr_sde}. 
By simulating both the SDE and the infinite-width limit ODE, we arrive at the results in \cref{fig:tuning}. 

We observe that simply by increasing $c_-$ towards zero does not automatically reduce effects on the correlation when time $t$ (the depth-to-width ratio) is large. 
In other words, even a linear network will observe an increase in correlation when depth is large enough. 
Therefore \emph{shaping the activation alone is insufficient}, but we also need to account for the depth-to-width ratio. 

We also remark that \cref{fig:tuning} only plotted the median for simplicity, but if we recall the density plots from \cref{fig:density_path_shape}, correlation is heavily skewed and concentrated near $1$. 
More precisely, while the median correlation is approximately $0.55$, roughly $20\%$ of the samples are larger than $0.9$. 
In other words, one in five random initializations will lead to a correlation worse than $0.9$! 
As a consequence, practitioners implementing the shaping methods of \cite{martens2021rapid,zhang2022deep} should consider simulating the correlation SDE to account for the heavy skew.

\end{document}